\newcommand{\bbf}{\bm{f}}
\newcommand{\br}{\bm{r}}
\newcommand{\bu}{\bm{u}}
\newcommand{\bv}{\bm{v}}
\newcommand{\bw}{\bm{w}}
\newcommand{\by}{\bm{y}}
\newcommand{\bA}{\bm{A}}
\newcommand{\bB}{\bm{B}}
\newcommand{\bD}{\bm{D}}
\newcommand{\bF}{\bm{F}}
\newcommand{\bG}{\bm{G}}
\newcommand{\bH}{\bm{H}}
\newcommand{\bI}{\bm{I}}
\newcommand{\bL}{\bm{L}}
\newcommand{\bM}{\bm{M}}
\newcommand{\bP}{\bm{P}}
\newcommand{\bQ}{\bm{Q}}
\newcommand{\bR}{\bm{R}}
\newcommand{\bS}{\bm{S}}
\newcommand{\bU}{\bm{U}}
\newcommand{\bV}{\bm{V}}
\newcommand{\bW}{\bm{W}}
\newcommand{\bX}{\bm{X}}
\newcommand{\bDelta}{\bm{\Delta}}
\newcommand{\bSigma}{\bm{\Sigma}}
\newcommand{\cA}{\mathcal{A}}
\newcommand{\cE}{\mathcal{E}}
\newcommand{\cH}{\mathcal{H}}
\newcommand{\cI}{\mathcal{I}}
\newcommand{\cL}{\mathcal{L}}
\newcommand{\cM}{\mathcal{M}}
\newcommand{\cN}{\mathcal{N}}
\newcommand{\cP}{\mathcal{P}}
\newcommand{\cS}{\mathcal{S}}
\newcommand{\bcA}{\bm{\mathcal{A}}}
\newcommand{\bcE}{\bm{\mathcal{E}}}
\newcommand{\bcJ}{\bm{\mathcal{J}}}
\newcommand{\bcS}{\bm{\mathcal{S}}}
\newcommand{\bcT}{\bm{\mathcal{T}}}
\newcommand{\bcW}{\bm{\mathcal{W}}}
\newcommand{\bcX}{\bm{\mathcal{X}}}
\newcommand{\bcY}{\bm{\mathcal{Y}}}
\newcommand{\EE}{\mathbb{E}}
\newcommand{\PP}{\mathbb{P}}
\newcommand{\RR}{\mathbb{R}}
\newcommand{\mfk}{\mathfrak} 
\newcommand{\one}{\bm{1}}
\newcommand{\zero}{\bm{0}}
\newcommand{\argmin}{\mathop{\mathrm{argmin}}}
\DeclareMathOperator{\bcdot}{\boldsymbol{\cdot}}
\DeclareMathOperator{\diag}{\mathrm{diag}}
\DeclareMathOperator{\dist}{\mathrm{dist}}
\DeclareMathOperator{\fro}{\mathsf{F}}
\DeclareMathOperator{\GL}{\mathrm{GL}}
\DeclareMathOperator{\op}{\mathsf{}}
\DeclareMathOperator{\Pdiag}{\mathcal{P}_{\mathsf{diag}}}
\DeclareMathOperator{\Poffdiag}{\mathcal{P}_{\mathsf{off-diag}}}
\DeclareMathOperator{\rank}{\mathrm{rank}}
\DeclareMathOperator{\vc}{\mathrm{vec}}
\DeclareMathOperator{\ptb}{\mathrm{p}}
\DeclareMathOperator{\main}{\mathrm{m}}
\providecommand{\tabularnewline}{\\}
\setlist[itemize]{leftmargin=1em}
\setlist[enumerate]{leftmargin=1em}
\theoremstyle{plain}\newtheorem{lemma}{\textbf{Lemma}} 
\newtheorem{theorem}{\textbf{Theorem}}\setcounter{theorem}{0}
\newtheorem{claim}{\textbf{Claim}} 
\theoremstyle{definition}\newtheorem{definition}{\textbf{Definition}}
\theoremstyle{remark}
\definecolor{tian}{RGB}{0,150,0}
\definecolor{cm}{RGB}{250,0,200}
\definecolor{yc}{RGB}{255,0,0}
\begin{document}
\title{Scaling and Scalability: Provable Nonconvex Low-Rank Tensor Estimation from Incomplete Measurements}

 \author
 {
 	Tian Tong\thanks{Department of Electrical and Computer Engineering, Carnegie Mellon University, Pittsburgh, PA 15213, USA; Emails:
 		\texttt{\{ttong1,yuejiec\}@andrew.cmu.edu}.} \\
 		CMU \\
 		\and
 	Cong Ma\thanks{Department of Statistics, University of Chicago, Chicago, IL 60637, USA; Email:
 		\texttt{congm@uchicago.edu}.} \\
 		 UChicago \\
 		\and
		Ashley Prater-Bennette\thanks{Air Force Research Laboratory, Rome, NY 13441, USA; Email: \texttt{\{ashley.prater-bennette,erin.tripp.4\}@us.af.mil}.}\\
		AFRL \\
		\and
		Erin Tripp\footnotemark[3]\\
		AFRL \\
		\and
 	Yuejie Chi\footnotemark[1] \\
 	CMU
 }

\date{April 2021; Revised June 2022}

\setcounter{tocdepth}{2}
\maketitle

\begin{abstract}
Tensors, which provide a powerful and flexible model for representing multi-attribute data and multi-way interactions, play an indispensable role in modern data science  across various fields in science and engineering. A fundamental task is to faithfully recover the tensor from highly incomplete measurements in a statistically and computationally efficient manner. Harnessing the low-rank structure of tensors in the Tucker decomposition, this paper develops a scaled gradient descent (ScaledGD) algorithm to directly recover the tensor factors with tailored spectral initializations, and shows that it provably converges at a linear rate independent of the condition number of the ground truth tensor for two canonical problems --- tensor completion and tensor regression --- as soon as the sample size is above the order of $n^{3/2}$ ignoring other parameter dependencies, where $n$ is the dimension of the tensor. This leads to an extremely scalable approach to low-rank tensor estimation compared with prior art, which suffers from at least one of the following drawbacks: extreme sensitivity to ill-conditioning, high per-iteration costs in terms of memory and computation, or poor sample complexity guarantees. To the best of our knowledge, ScaledGD is the first algorithm that achieves near-optimal statistical and computational complexities simultaneously for low-rank tensor completion with the Tucker decomposition. Our algorithm highlights the power of appropriate preconditioning in accelerating nonconvex statistical estimation, where the iteration-varying preconditioners promote desirable invariance properties of the trajectory with respect to the underlying symmetry in low-rank tensor factorization. 
\end{abstract}

\medskip
\noindent\textbf{Keywords:} low-rank tensor completion, low-rank tensor regression, Tucker decomposition, scaled gradient descent, ill-conditioning. \\

\tableofcontents{}


\section{Introduction}

Tensors \cite{kolda2009tensor,sidiropoulos2017tensor}, which provide a powerful and flexible model for representing multi-attribute data and multi-way interactions across various fields, play an indispensable role in modern data science with ubiquitous applications in image inpainting \cite{liu2012tensor}, hyperspectral imaging \cite{dian2017hyperspectral}, collaborative filtering \cite{xiong2010temporal}, topic modeling \cite{anandkumar2014tensor}, network analysis \cite{papalexakis2016tensors}, and many more.

\subsection{Low-rank tensor estimation}

In many problems across science and engineering, the central task can be regarded as tensor estimation from highly incomplete measurements, where the goal is to estimate an order-3 tensor\footnote{For ease of presentation, we focus on 3-way tensors; our algorithm and theory can be generalized to higher-order tensors in a straightforward manner. } $\bcX_{\star} \in \RR^{n_1\times n_2\times n_3}$  from its observations $\by \in \RR^m$ given by
\begin{align*}
\by \approx  \cA(\bcX_{\star}).
\end{align*}
Here, $\cA: \RR^{n_1\times n_2\times n_3} \mapsto \RR^m$ represents a certain linear map modeling the data collection process. Importantly, the number $m$ of observations is often much smaller than the ambient dimension $n_1 n_2 n_3$ of the tensor due to resource or physical constraints, necessitating the need of exploiting low-dimensional structures to allow for meaningful recovery.

One of the most widely adopted low-dimensional structures---which is the focus of this paper---is the low-rank structure under the {\em Tucker} decomposition \cite{tucker1966some}. Specifically, we assume that the ground truth tensor $\bcX_{\star}$ admits the following Tucker decomposition\footnote{Other popular notation for Tucker decomposition in the literature includes $[\![\bcS_{\star};\bU_{\star},\bV_{\star},\bW_{\star}]\!]$ and $\bcS_{\star}\times_{1}\bU_{\star}\times_{2}\bV_{\star}\times_{3}\bW_{\star}$. In this work, we adopt the same notation $(\bU_{\star},\bV_{\star},\bW_{\star})\bcdot\bcS_{\star}$ as in \cite{xia2019polynomial} for convenience of our theoretical developments.}
\begin{align*}
\bcX_{\star} &=  (\bU_{\star},\bV_{\star},\bW_{\star})\bcdot\bcS_{\star},
\end{align*} 
where $\bcS_{\star}\in \RR^{r_1\times r_2\times r_3}$ is the core tensor, and $\bU_{\star} \in\RR^{n_1\times r_1}$, $\bV_{\star}\in\RR^{n_2\times r_2}$, $\bW_{\star} \in\RR^{n_3\times r_3}$ are orthonormal matrices corresponding to the factors of each mode. The tensor $\bcX_{\star} $ is said to be low-multilinear-rank, or simply low-rank, when its multilinear rank $\br=(r_1,r_2,r_3)$ satisfies $r_k\ll n_k$, for all $k=1,2,3$.  Compared with other tensor decompositions such as the CP decomposition \cite{kolda2009tensor} and tensor-SVD \cite{zhang2014novel}, the Tucker decomposition offers several advantages: it allows flexible modeling of  low-rank tensor factors with a small number of parameters, fully exploits the multi-dimensional algebraic structure of a tensor, and admits efficient and stable computation without suffering from degeneracy \cite{paatero2000construction}.

\paragraph{Motivating examples.} We point out two representative settings of tensor recovery that guide our work.
\begin{itemize}
\item {\em Tensor completion.} A widely encountered problem is tensor completion, where one aims to predict the entries in a tensor from only a small subset of its revealed entries. A celebrated application is collaborative filtering, where one aims to predict the users'  evolving preferences from partial observations of a tensor composed of ratings for any triplet of {\em user, item, time} \cite{karatzoglou2010multiverse}. Mathematically, we are given entries
\begin{align*}
\bcX_{\star}(i_1,i_2,i_3), \qquad (i_1,i_2,i_3) \in \Omega,
\end{align*}
in some index set $\Omega$, where $(i_1,i_2,i_3) \in \Omega$ if and only if that entry is observed. The goal is then to recover the low-rank tensor $\bcX_{\star}$ from the observed entries in $\Omega$.

\item {\em Tensor regression.} In machine learning and signal processing, one is often concerned with determining how the covariates relate to the response---a task known as regression. Due to advances in data acquisition, there is no shortage of scenarios where the covariates are available in the form of tensors, for example in medical imaging \cite{zhou2013tensor}. Mathematically, the $i$-th response or observation is given as
\begin{align*}
y_i = \langle \bcA_i, \bcX_{\star} \rangle = \sum_{i_1,i_2,i_3} \bcA_i(i_1,i_2,i_3) \bcX_{\star}(i_1,i_2,i_3), \qquad i =1,2,\ldots, m,
\end{align*}
where $\bcA_i$ is the $i$-th covariate or measurement tensor. The goal is then to recover the low-rank tensor $\bcX_{\star}$ from the responses $\by = \{y_i\}_{i=1}^m$.
\end{itemize}

\subsection{A gradient descent approach?}

Recent years remarkable successes have emerged in developing a plethora of provably efficient algorithms for low-rank \emph{matrix} estimation (i.e.~the special case of order-2 tensors) via both convex and nonconvex optimization. However, unique challenges arise when dealing with tensors, since they have more sophisticated algebraic structures \cite{hackbusch2012tensor}. For instance, while nuclear norm minimization achieves near-optimal statistical guarantees for low-rank matrix estimation \cite{candes2010NearOptimalMC} within a polynomial run time, computing the nuclear norm of a tensor turns out to be NP-hard \cite{friedland2018nuclear}. Therefore, there have been a number of efforts to develop polynomial-time algorithms for tensor recovery, including but not limited to the sum-of-squares hierarchy \cite{barak2016noisy,potechin2017exact}, nuclear norm minimization with unfolding \cite{gandy2011tensor,mu2014square}, regularized gradient descent \cite{han2020optimal}, to name a few; see Section~\ref{sec:prior_arts} for further discussions.

In view of the low-rank Tucker decomposition, a natural approach is to seek to recover the factor quadruple $\bF_{\star} \coloneqq (\bU_{\star},\bV_{\star},\bW_{\star},\bcS_{\star})$ directly by optimizing the unconstrained least-squares loss function: 
\begin{align}\label{eq:loss}
\min_{\bF}\quad \cL(\bF)\coloneqq\frac{1}{2}\left\|\cA\left((\bU,\bV,\bW)\bcdot\bcS \right)-\by\right\|_{2}^2,
\end{align}
where $\bF \coloneqq (\bU,\bV,\bW,\bcS)$ consists of $\bU\in\RR^{n_1\times r_1}$, $\bV\in\RR^{n_2\times r_2}$, $\bW\in\RR^{n_3\times r_3}$, and $\bcS\in\RR^{r_1\times r_2\times r_3}$.  Since the factors have a much lower complexity than the tensor itself due to the low-rank structure, it is expected that manipulating the factors results in more scalable algorithms in terms of both computation and storage. This optimization problem is however, highly nonconvex, since the factors are not uniquely determined.\footnote{For any invertible matrices $\bQ_k \in\RR^{r_k\times r_k}$, $k=1,2,3$, one has $(\bU,\bV,\bW)\bcdot\bcS =(\bU\bQ_1,\bV\bQ_2,\bW\bQ_3)\bcdot ((\bQ_1^{-1},\bQ_2^{-1},\bQ_3^{-1}) \bcdot \bcS)$.}
 Nonetheless, one might be tempted to solve the problem~\eqref{eq:loss} via gradient descent (GD), which updates the factors according to
\begin{align}\label{eq:GD}
\bF_{t+1} =  \bF_t  - \eta \nabla \cL(\bF_t), \qquad t=0,1,\ldots ,
\end{align}
where $\bF_t$ is the estimate at the $t$-th iteration, $\eta>0$ is the step size or learning rate, and $\nabla\cL(\bF)$ is the gradient of $\cL(\bF)$ at $\bF$. Despite a flurry of activities for understanding factored gradient descent in the matrix setting \cite{chi2019nonconvex}, this line of algorithmic thinkings has been severely under-explored for the tensor setting, especially when it comes to provable guarantees for both sample and computational complexities.  

The closest existing theory that one comes across is \cite{han2020optimal} for tensor regression, which adds regularization terms to promote the orthogonality of the factors $\bU,\bV,\bW$: 
\begin{align}\label{eq:reg_loss}
\cL_{\mathsf{reg}}(\bF)\coloneqq\cL(\bF) + \frac{\alpha}{4}\left(\|\bU^{\top}\bU-\beta\bI_{r_1} \|_{\fro}^2 + \| \bV^{\top}\bV-\beta\bI_{r_2} \|_{\fro}^2 + \|\bW^{\top}\bW-\beta\bI_{r_3} \|_{\fro}^2 \right),
\end{align}
and perform GD on the regularized loss. Here, $\alpha,\beta>0$ are two parameters to be specified. While encouraging, theoretical guarantees of this regularized GD algorithm \cite{han2020optimal} still fall short of achieving computational efficiency. In truth, its convergence speed is rather slow:  it takes an order of $\kappa^2\log(1/\varepsilon)$ iterations to attain an $\varepsilon$-accurate estimate of the ground truth tensor, where $\kappa$ is a sort of condition number of  $\bcX_{\star}$ to be defined momentarily. Therefore, the computational efficacy of the regularized GD is severely hampered even when $\bcX_{\star}$ is moderately ill-conditioned, a situation frequently encountered in practice.
In addition, the regularization term introduces additional parameters that may be difficult to tune optimally in practice. 

Turning to  tensor completion, the situation is even worse: to the best of our knowledge, there is {\em no} provably linearly-convergent algorithm that accommodates low-rank tensor completion under the Tucker decomposition. The question is thus:
\begin{itemize}
\item[]{\em Can we develop a factored gradient-based algorithm that converges fast even for highly ill-conditioned tensors with near-optimal sample complexities for tensor completion and tensor regression?}
\end{itemize}
In this paper, we provide an affirmative answer to the above question.

\subsection{A new algorithm: scaled gradient descent}

We propose a novel algorithm---dubbed scaled gradient descent ({ScaledGD})---to solve the tensor recovery problem. More specifically, at the core it performs the following iterative updates\footnote{The matrix inverses in ScaledGD always exist under the assumptions of our theory.} to minimize the loss function~\eqref{eq:loss}:
\begin{align}
\begin{split}
\bU_{t+1} &= \bU_{t} - \eta\nabla_{\bU}\cL(\bF_{t})\big(\breve{\bU}_t^{\top} \breve{\bU}_t \big)^{-1}, \\
\bV_{t+1} &= \bV_{t} - \eta\nabla_{\bV}\cL(\bF_{t})\big(\breve{\bV}_t^{\top} \breve{\bV}_t \big)^{-1}, \\
\bW_{t+1} &= \bW_{t} - \eta\nabla_{\bW}\cL(\bF_{t})\big(\breve{\bW}_t^{\top} \breve{\bW}_t \big)^{-1}, \\
\bcS_{t+1} &= \bcS_{t} - \eta\left((\bU_{t}^{\top}\bU_{t})^{-1},(\bV_{t}^{\top}\bV_{t})^{-1},(\bW_{t}^{\top}\bW_{t})^{-1}\right)\bcdot\nabla_{\bcS}\cL(\bF_{t}),
\end{split}\label{eq:ScaledGD}
\end{align}
where $\nabla_{\bU}\cL(\bF)$, $\nabla_{\bV}\cL(\bF)$, $\nabla_{\bW}\cL(\bF)$, and  $\nabla_{\bcS}\cL(\bF)$ are the partial derivatives of $\cL(\bF)$ with respect to the corresponding variables, and 
\begin{align} \label{eq:breve_uvw}
\begin{split}
\breve{\bU}_{t}&\coloneqq\cM_{1}\left((\bI_{r_1},\bV_{t},\bW_{t})\bcdot\bcS_{t}\right)^{\top} = (\bW_{t}\otimes\bV_{t})\cM_{1}(\bcS_{t})^{\top}, \\
\breve{\bV}_{t}&\coloneqq\cM_{2}\left((\bU_{t},\bI_{r_2},\bW_{t})\bcdot\bcS_{t}\right)^{\top}=(\bW_{t}\otimes\bU_{t})\cM_{2}(\bcS_{t})^{\top},\\
\breve{\bW}_{t}&\coloneqq\cM_{3}\left((\bU_{t},\bV_{t},\bI_{r_3})\bcdot\bcS_{t}\right)^{\top}=(\bV_{t}\otimes\bU_{t})\cM_{3}(\bcS_{t})^{\top}.
\end{split}
\end{align}
Here, $\cM_k(\bcS)$ is the matricization of the tensor $\bcS$ along the $k$-th mode ($k=1,2,3$), and $\otimes$ denotes the Kronecker product. Inspired by its variant in the matrix setting~\cite{tong2021accelerating}, the ScaledGD algorithm \eqref{eq:ScaledGD} exploits the structures of Tucker decomposition and possesses many desirable properties:
\begin{itemize}
\item {\em Low per-iteration cost:} as a preconditioned GD or quasi-Newton algorithm, ScaledGD updates the factors along the descent direction of a scaled gradient, where the preconditioners can be viewed as the inverse of the diagonal blocks of the Hessian for the population loss (i.e.~tensor factorization). As the sizes of the preconditioners are proportional to the multilinear rank, the matrix inverses are cheap to compute with a minimal overhead and the overall per-iteration cost is still low and linear in the time it takes to read the input data.
\item {\em Equivariance to parameterization:} one crucial property of ScaledGD is that if we reparameterize the  factors by some invertible transforms (i.e.~replacing $(\bU_t, \bV_t, \bW_t, \bcS_t)$ by $(\bU_t\bQ_1, \bV_t\bQ_2, \bW_t\bQ_3, (\bQ_1^{-1}, \bQ_2^{-1}, \bQ_3^{-1})\bcdot\bcS_t)$ for some invertible matrices $\{\bQ_k\}_{k=1}^{3}$), the entire trajectory will go through the same reparameterization, leading to an {\em invariant} sequence of low-rank tensor updates $\bcX_t= (\bU_t, \bV_t, \bW_t)\bcdot\bcS_t$ regardless of the parameterization being adopted. 

\item {\em Implicit balancing:} ScaledGD optimizes the natural loss function \eqref{eq:loss} in an {\em unconstrained} manner without requiring additional regularizations or orthogonalizations used in prior literature  \cite{han2020optimal,frandsen2020optimization,kasai2016low}, and the factors stay balanced in an automatic manner---a feature sometimes referred to as implicit regularization \cite{ma2021beyond}.
\end{itemize}

\begin{table}[t]
\centering %
\begin{tabular}{c||c|c|c}
\hline 
Algorithms & Sample complexity & Iteration complexity & Parameter space   \tabularnewline
\hline  \hline 
Unfolding + nuclear norm min.  & \multirow{2}{*}{$n^2 r \log^2 n$} & \multirow{2}{*}{polynomial} & \multirow{2}{*}{tensor}  \tabularnewline 
\cite{huang2015provable} &  &  &  \tabularnewline \hline 
Tensor nuclear norm min.  & \multirow{2}{*}{$ n^{3/2}r^{1/2} \log^{3/2} n$} & \multirow{2}{*}{NP-hard} & \multirow{2}{*}{tensor}  \tabularnewline
\cite{yuan2016tensor} &  &  &  \tabularnewline \hline 
Grassmannian GD  & \multirow{2}{*}{$n^{3/2}r^{7/2}\kappa^4 \log^{7/2}n$} & \multirow{2}{*}{N/A} & \multirow{2}{*}{factor}  \tabularnewline
\cite{xia2019polynomial} &  &  &  \tabularnewline \hline 
 ScaledGD  & \multirow{2}{*}{$n^{3/2}r^{5/2}\kappa^{3}\log^3 n$ } & \multirow{2}{*}{$\log\frac{1}{\varepsilon}$} & \multirow{2}{*}{factor}  \tabularnewline
(this paper)  &  &  &  \tabularnewline
\hline
\end{tabular}\vspace{0.04in}
\caption{Comparisons of ScaledGD with existing algorithms for tensor completion when the tensor is incoherent and low-rank under the Tucker decomposition. Here, we say that the output $\bcX$ of an algorithm reaches $\varepsilon$-accuracy, if it satisfies $\|\bcX-\bcX_{\star}\|_{\fro}\le\varepsilon\sigma_{\min}(\bcX_{\star})$. Here, $\kappa$ and $\sigma_{\min}(\bcX_{\star})$ are the condition number and the minimum singular value of $\bcX_{\star}$ (defined in Section~\ref{sec:models}).  For simplicity, we let $n = \max_{k=1,2,3} n_k$ and $r =\max_{k=1,2,3}r_k$, and assume $r\vee \kappa\ll n^{\delta}$ for some small constant $\delta$ to keep only terms with dominating orders of $n$. 
\label{tab:ScaledGD-tensor-completion}  }
\end{table}

\begin{table}[t]
\centering %
\begin{tabular}{c||c|c|c}
\hline 
Algorithms  & Sample complexity & Iteration complexity & Parameter space   \tabularnewline
\hline  \hline 
Unfolding + nuclear norm min. & \multirow{2}{*}{$n^2 r$} & \multirow{2}{*}{polynomial} & \multirow{2}{*}{tensor}  \tabularnewline  
\cite{mu2014square}  &  &  &  \tabularnewline 
\hline 
Projected GD   & \multirow{2}{*}{$n^2 r$} & \multirow{2}{*}{$\kappa^2\log\frac{1}{\varepsilon}$} & \multirow{2}{*}{tensor}  \tabularnewline
\cite{chen2019non} &  &  &  \tabularnewline
\hline 
Regularized GD  & \multirow{2}{*}{$n^{3/2}r\kappa^{4}$} & \multirow{2}{*}{$\kappa^2\log\frac{1}{\varepsilon}$} & \multirow{2}{*}{factor}  \tabularnewline
\cite{han2020optimal} &  &  &  \tabularnewline 
\hline 
Riemannian Gauss-Newton & \multirow{2}{*}{$n^{3/2}r^{3/2}\kappa^{4}$} & \multirow{2}{*}{$\log\log\frac{1}{\varepsilon}$} & \multirow{2}{*}{tensor}  \tabularnewline
\cite{zhang2021low} (concurrent)\tablefootnote{\cite[Theorem~3]{zhang2021low} states the sample complexity $n^{3/2}\sqrt{r}\kappa^{2} \|\bcX_{\star}\|_{\fro}^{2} /\sigma_{\min}^{2}(\bcX_{\star})$, where $\|\bcX_{\star}\|_{\fro}^{2} /\sigma_{\min}^{2}(\bcX_{\star})$ has an order of $r\kappa^{2}$.} &  &  &  \tabularnewline 
\hline 
ScaledGD  & \multirow{2}{*}{$n^{3/2}r^{3/2}\kappa^{2}$} & \multirow{2}{*}{$\log\frac{1}{\varepsilon}$}   & \multirow{2}{*}{factor}  \tabularnewline
(this paper)  &  &  &  \tabularnewline
\hline
\end{tabular}\vspace{0.04in}
\caption{Comparisons of ScaledGD with existing algorithms for tensor regression when the tensor is low-rank under the Tucker decomposition. Here, we say that the output $\bcX$ of an algorithm reaches $\varepsilon$-accuracy, if it satisfies $\|\bcX-\bcX_{\star}\|_{\fro}\le\varepsilon\sigma_{\min}(\bcX_{\star})$. Here,  $\kappa$ and $\sigma_{\min}(\bcX_{\star})$ are the condition number and minimum singular value of $\bcX_{\star}$ (defined in Section~\ref{sec:models}). For simplicity, we let $n = \max_{k=1,2,3} n_k$, and $r =\max_{k=1,2,3}r_k$, and assume $r\vee \kappa\ll n^{\delta}$ for some small constant $\delta$ to keep only terms with dominating orders of $n$.
\label{tab:ScaledGD-tensor-regression}  }
\end{table}

\paragraph{Theoretical guarantees.} We investigate the theoretical properties of ScaledGD for both tensor completion and tensor regression, which are notably more challenging than the matrix counterpart. It is demonstrated that ScaledGD---when initialized properly using appropriate spectral methods ---achieves linear convergence at a rate {\em independent} of the condition number of the ground truth tensor with near-optimal sample complexities. In other words, ScaledGD needs no more than $O(\log (1/\varepsilon))$ iterations to reach $\varepsilon$-accuracy; together with its low computational and memory costs by operating in the factor space, this makes ScaledGD a highly scalable method for a wide range of low-rank tensor estimation tasks.  More specifically, we have the following guarantees (assume $n = \max_{k=1,2,3} n_k$ and $r =\max_{k=1,2,3}r_k$): 
\begin{itemize}
\item {\em Tensor completion.} Under the Bernoulli sampling model, ScaledGD (with an additional scaled projection step) succeeds with high probability as long as the sample complexity is above the order of $n^{3/2}r^{5/2}\kappa^{3}\log^3 n$. Connected to some well-reckoned conjecture on computational barriers, it is widely believed that no polynomial-time algorithm will be successful if the sample complexity is less than the order of $n^{3/2}$ for tensor completion \cite{barak2016noisy}, which suggests the near-optimality of the sample complexity of ScaledGD. Compared with existing approaches (cf.~Table~\ref{tab:ScaledGD-tensor-completion}), ScaledGD provides the first computationally efficient algorithm with a near-linear run time at the near-optimal sample complexity. 
\item {\em Tensor regression.} Under the Gaussian design, ScaledGD succeeds with high probability as long as the sample complexity is above the order of $n^{3/2} r^{3/2}\kappa^{2}$. Our analysis of local convergence is more general, based on the tensor restricted isometry property (TRIP) \cite{rauhut2017low}, and is therefore applicable to various measurement ensembles that satisfy TRIP. Compared with existing approaches (cf.~Table~\ref{tab:ScaledGD-tensor-regression}), ScaledGD achieves competitive performance guarantees in terms of sample and iteration complexities with a low per-iteration cost in the factor space. 
\end{itemize}  

Figure~\ref{fig:TC_kappa} illustrates the number of iterations needed to achieve a relative error $\|\bcX-\bcX_{\star}\|_{\fro}\le 10^{-3} \|\bcX_{\star}\|_{\fro}$ for ScaledGD and regularized GD \cite{han2020optimal} under different condition numbers for tensor completion under the Bernoulli sampling model (see Section~\ref{sec:numerical} for experimental settings). Clearly, the iteration complexity of GD deteriorates at a super linear rate with respect to the condition number $\kappa$, while ScaledGD enjoys an iteration complexity that is independent of $\kappa$ as predicted by our theory. Indeed, with a seemingly small modification, ScaledGD takes merely $17$ iterations to achieve the desired accuracy over the entire range of $\kappa$, while GD takes thousands of iterations even with a moderate condition number! 
\begin{figure}[t]
\centering
\includegraphics[width=0.5\textwidth]{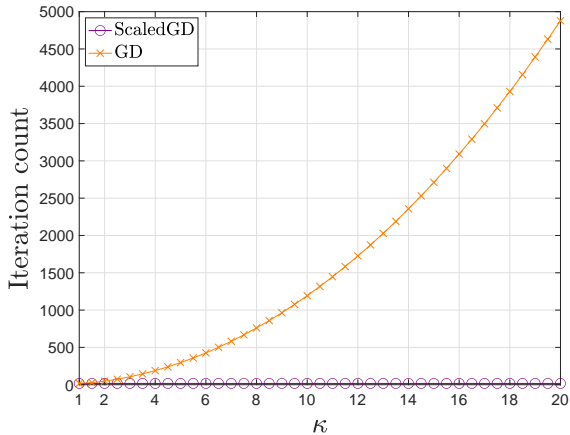} 
\caption{The iteration complexities of ScaledGD (this paper) and regularized GD to achieve $\|\bcX-\bcX_{\star}\|_{\fro}\le 10^{-3} \|\bcX_{\star}\|_{\fro}$ with respect to different condition numbers for low-rank tensor completion with $n_1=n_2=n_3=100$, $r_1=r_2=r_3=5$, and the probability of observation $p=0.1$.}\label{fig:TC_kappa}
\end{figure}

\subsection{Additional related works}\label{sec:prior_arts}

\paragraph{Comparison with \cite{tong2021accelerating}.}
While the proposed ScaledGD algorithm is inspired by its matrix variant in  \cite{tong2021accelerating} by utilizing the same principle of preconditioning, the exact form of preconditioning for tensor factorization needs to be designed carefully and is not trivially obtainable.
There are many technical novelty in our analysis compared to \cite{tong2021accelerating}. In the matrix case, the low-rank matrix is factorized as $\boldsymbol{L}\boldsymbol{R}^{\top}$, and only two factors are needed to be estimated. In contrast, in the tensor case, the low-rank tensor is factorized as $(\boldsymbol{U},\boldsymbol{V},\boldsymbol{W}) \boldsymbol{\cdot} \boldsymbol{\mathcal{S}}$, and four factors are needed to be estimated, leading to a much more complicated nonconvex landscape than the matrix case. In fact, when specialized to matrix completion, our ScaledGD algorithm does not degenerate to the same matrix variant in \cite{tong2021accelerating}, due to overparamterization and estimating four factors at once, but still maintains the near-optimal performance guarantees. In addition, the tensor algebra possesses unique algebraic properties that requires much more delicate treatments in the analysis. For the local convergence, we establish new concentration properties regarding tensors, which are more challenging compared to the matrix counterparts; for spectral initialization, we establish the effectiveness of a second-order spectral method in the Tucker setting for the first time.

\paragraph{Low-rank tensor estimation with Tucker decomposition.} \cite{frandsen2020optimization} analyzed the landscape of Tucker decomposition for tensor factorization, and showed benign landscape properties with suitable regularizations.  \cite{gandy2011tensor,mu2014square} developed convex relaxation algorithms based on minimizing the nuclear norms of unfolded tensors for tensor regression, and similar approaches were developed in \cite{huang2015provable} for robust tensor completion. However, unfolding-based approaches typically result in sub-optimal sample complexities since they do not fully exploit the tensor structure. \cite{yuan2016tensor} studied directly minimizing the nuclear norm of the tensor, which regrettably is not computationally tractable. \cite{xia2019polynomial} proposed a Grassmannian gradient descent algorithm over the factors other than the core tensor for exact tensor completion, whose iteration complexity is not characterized. The statistical rates of tensor completion, together with a spectral method, were investigated in \cite{zhang2018tensor,xia2021statistically}, and uncertainty quantifications were recently dealt with in \cite{xia2020inference}. Besides the entrywise i.i.d.~observation models for tensor completion, \cite{zhang2019cross,krishnamurthy2013low} considered tailored or adaptive observation patterns to improve the sample complexity. In addition, for low-rank tensor regression, \cite{raskutti2019convex} proposed a general convex optimization approach based on decomposable regularizers, and \cite{rauhut2017low} developed an iterative hard thresholding algorithm. \cite{chen2019non} proposed projected gradient descent algorithms with respect to the tensors, which have larger computation and memory footprints than the factored gradient descent approaches taken in this paper. \cite{ahmed2020tensor} proposed a tensor regression model where the tensor is simultaneously low-rank and sparse in the Tucker decomposition. A concurrent work \cite{zhang2021low} proposed a Riemannian Gauss-Newton algorithm, and obtained an impressive quadratic convergence rate for tensor regression (see Table~\ref{tab:ScaledGD-tensor-regression}). Compared with ScaledGD, this algorithm runs in the tensor space, and the update rule is more sophisticated with higher per-iteration cost by solving a least-squares problem and performing a truncated HOSVD every iteration. Another recent work \cite{cai2021generalized} studies the Riemannian gradient descent algorithm which also achieves an iteration complexity free of condition number, however, the initialization scheme was not studied therein. After the initial appearance of the current paper, another work \cite{wang2021implicit} proposes an algorithm based again on Riemmannian gradient descent for low-rank tensor completion with Tucker decomposition, coming with an appealing entrywise convergence guarantee at a constant rate.

Last but not least, many scalable algorithms for low-rank tensor estimation have been proposed in the literature of numerical optimization \cite{xu2013block,goldfarb2014robust}, where preconditioning has long been recognized as a key idea to accelerate convergence \cite{kasai2016low,kressner2014low}. In particular, if we constrain $\bU,\bV,\bW$ to be orthonormal, i.e.~on the Grassmanian manifold, the preconditioners used in ScaledGD degenerate to the ones investigated in \cite{kasai2016low}, which was a Riemannian manifold gradient algorithm under a scaled metric. On the other hand, ScaledGD does not assume orthonormality of the factors, therefore is conceptually simpler to understand and avoids complicated manifold operations (e.g.~geodesics, retraction). Furthermore, none of the prior algorithmic developments 
\cite{kasai2016low,kressner2014low} are endowed with the type of global performance guarantees with linear convergence rate as developed herein.

\paragraph{Provable low-rank tensor estimation with other decompositions.} Complementary to ours, there have also been a growing number of algorithms proposed for estimating a low-rank tensor adopting the CP decomposition. Examples include sum-of-squares hierarchy \cite{barak2016noisy,potechin2017exact}, gradient descent \cite{cai2019nonconvex,cai2020uncertainty,hao2020sparse}, alternating minimization \cite{jain2014provable,liu2020tensor}, spectral methods \cite{montanari2018spectral,chen2021spectral,cai2019subspace}, atomic norm minimization \cite{li2015overcomplete,ghadermarzy2019near}, to name a few. \cite{ge2020optimization} studied the optimization landscape of overcomplete CP tensor decomposition. Beyond the CP decomposition, \cite{zhang2016exact} developed exact tensor completion algorithms under the so-called tensor-SVD \cite{zhang2014novel}, and \cite{liu2019low,lu2018exact} studied low-tubal-rank tensor recovery. We will not elaborate further since these algorithms are not directly comparable to ours due to the difference in models.

\paragraph{Nonconvex optimization for statistical estimation.} Our work contributes to the recent strand of works that develop provable nonconvex methods for statistical estimation, including but not limited to low-rank matrix estimation \cite{sun2016guaranteed,chen2015fast,ma2017implicit,charisopoulos2019low,ma2021beyond,park2017non,chen2019nonconvex,xia2021statistical}, phase retrieval \cite{candes2015phase,wang2017solving,chen2015solving,zhang2017reshaped,zhang2016provable,chen2019gradient}, quadratic sampling \cite{li2019nonconvex}, dictionary learning \cite{sun2017complete,sun2017trust,bai2018subgradient}, neural network training \cite{buchanan2020deep,fu2020guaranteed,hand2019global}, and blind deconvolution \cite{li2019rapid,ma2017implicit,shi2021manifold}; the readers are referred to the overviews \cite{chi2019nonconvex,chen2018harnessing,zhang2020symmetry} for further references. 

\subsection{A primer on tensor algebra and notation}
 \label{sec:notation}

We end this section with a primer on some useful tensor algebra; for a more detailed exposition, see \cite{kolda2009tensor,sidiropoulos2017tensor}. Throughout this paper, we use boldface calligraphic letters (e.g.~$\bcX$) to denote tensors, and boldface capitalized letters (e.g.~$\bX$) to denote matrices. For any matrix $\bM$, we use $\sigma_{i}(\bM)$ to denote its $i$-th largest singular value, and $\sigma_{\max}(\bM)$ (resp.~$\sigma_{\min}(\bM)$) to denote its largest (resp.~smallest) nonzero singular value. $\|\bM\|_{\op}$, $\|\bM\|_{\fro}$, $\|\bM\|_{2,\infty}$, and $\|\bM\|_{\infty}$ stand for the spectral norm (i.e.~the largest singular value), the Frobenius norm, the $\ell_{2,\infty}$ norm (i.e.~the largest $\ell_2$ norm of the rows), and the entrywise $\ell_{\infty}$ norm (the largest magnitude of all entries) of a matrix $\bM$. Let $\Pdiag(\bM)$ denote the projection that keeps only the diagonal entries of $\bM$, and $\Poffdiag(\bM)=\bM-\Pdiag(\bM)$, for a square matrix $\bM$. Let $\bM(i,:)$ and $\bM(:,j)$ denote the $i$-th row and $j$-th column of $\bM$, respectively. The $r\times r$ identity matrix is denoted by $\bI_{r}$. The set of invertible matrices in $\RR^{r\times r}$ is denoted by $\GL(r)$.

We define the unfolding (i.e.~flattening) operations of tensors and matrices as following. 
\begin{itemize}
\item The mode-$1$ matricization $\cM_1(\bcX) \in \RR^{n_1\times (n_2n_3)}$ of a tensor $\bcX \in \RR^{n_1\times n_2\times n_3}$ is given by 
$[\cM_1(\bcX)]\big(i_1, i_2 + (i_3-1)n_2\big) = \bcX(i_1,i_2,i_3)$, for $1\le i_k \le n_k$, $k=1,2,3$; $\cM_2(\bcX)$ and $\cM_3(\bcX)$ can be defined in a similar manner.
\item The vectorization $\vc(\bcX)\in\RR^{n_1n_2n_3}$ of a tensor $\bcX\in\RR^{n_1\times n_2\times n_3}$ is given by $[\vc(\bcX)]\big(i_1 + (i_2-1)n_1 + (i_3-1)n_1n_2\big) = \bcX(i_1,i_2,i_3)$ for $1\le i_k \le n_k$, $k=1,2,3$.
\item The vectorization $\vc(\bM)\in\RR^{n_1n_2}$ of a matrix $\bM\in\RR^{n_1\times n_2}$ is given by $[\vc(\bM)]\big(i_1 + (i_2-1)n_1\big)=\bM (i_1,i_2)$ for $1\le i_k \le n_k$, $k=1,2$.
\end{itemize}
The vectorization of a tensor is related to the Kronecker product as 
\begin{subequations}
\begin{align} \label{eq:tensor_vec}
\vc((\bU,\bV,\bW)\bcdot\bcS) = \vc\left(\bU\cM_{1}(\bcS)(\bW\otimes\bV)^{\top}\right)=(\bW\otimes\bV\otimes\bU)\vc(\bcS).
\end{align}
The inner product between two tensors is defined as 
$$\langle\bcX_{1},\bcX_{2}\rangle = \sum_{i_1,i_2,i_3} \bcX_{1} (i_1,i_2,i_3) \bcX_{2} (i_1,i_2,i_3).$$ A useful relation is that 
\begin{align}\label{eq:tensor_inner}
\langle\bcX_{1},\bcX_{2}\rangle=\langle\cM_{k}(\bcX_{1}),\cM_{k}(\bcX_{2})\rangle , \quad k=1,2,3,
\end{align}
which allows one to move between the tensor representation and the unfolded matrix representation.
The Frobenius norm of a tensor is defined as $\|\bcX\|_{\fro}=\sqrt{\langle\bcX,\bcX\rangle}$. In addition, the following basic relations, which follow straightforwardly from analogous matrix relations after applying matricizations, will be proven useful: 
\begin{align} 
(\bU,\bV,\bW)\bcdot \big((\bQ_{1},\bQ_{2},\bQ_{3})\bcdot\bcS \big) &=(\bU\bQ_{1},\bV\bQ_{2},\bW\bQ_{3})\bcdot\bcS, \label{eq:tensor_properties_c} \\
\left\langle(\bU,\bV,\bW)\bcdot\bcS, \bcX\right\rangle &= \left\langle\bcS, (\bU^{\top},\bV^{\top},\bW^{\top})\bcdot\bcX\right\rangle, \label{eq:tensor_properties_d} \\
\left\|(\bQ_{1},\bQ_{2},\bQ_{3})\bcdot\bcS\right\|_{\fro} &\le \|\bQ_{1}\|_{\op}\|\bQ_{2}\|_{\op}\|\bQ_{3}\|_{\op}\|\bcS\|_{\fro}, \label{eq:tensor_properties_e}
\end{align}
\end{subequations}
where $\bQ_k \in \RR^{r_k\times r_k}$, $k=1,2,3$. 
Define the $\ell_{\infty}$ norm of $\bcX$ as $\|\bcX\|_{\infty} = \max_{i_1,i_2,i_3}|\bcX(i_1,i_2,i_3)|$. With slight abuse of terminology, denote 
\begin{align*}
\sigma_{\max}(\bcX) = \max_{k=1,2,3} \sigma_{\max}(\cM_k(\bcX)), \quad\mbox{ and} \quad \sigma_{\min}(\bcX) = \min_{k=1,2,3} \sigma_{\min}(\cM_k(\bcX))
\end{align*} 
as the maximum and minimum nonzero singular values of $\bcX$.
In addition, define the spectral norm of a tensor $\bcX$ as
\begin{align*}
\|\bcX\|_{\op} = \sup_{\bu_k\in\RR^{n_k}:\, \|\bu_k\|_2\le 1}\left|\left\langle\bcX, (\bu_1,\bu_2,\bu_3)\bcdot 1\right\rangle\right|. 
\end{align*}
Note that $\|\bcX\|_{\op} \neq \sigma_{\max}(\bcX)$ in general. For a tensor $\bcX$ of multilinear rank at most $\br=(r_1,r_2,r_3)$, its spectral norm is related to the Frobenius norm as \cite{jiang2017tensor,li2018orthogonal}
\begin{align} \label{eq:tensor_spec_frob}
\|\bcX\|_{\fro} \le \sqrt{\frac{r_1 r_2 r_3}{r}} \|\bcX\|_{\op}, \qquad\mbox{where}~ r= \max_{k=1,2,3} r_k.
\end{align}


\paragraph{Higher-order SVD.} For a general tensor $\bcX$, define $\cH_{\br}(\bcX)$ as the top-$\br$ higher-order SVD (HOSVD) of $\bcX$ with $\br= (r_1,r_2,r_3)$, given by 
\begin{subequations}
\begin{align} \label{eq:HOSVD}
\cH_{\br}(\bcX)=(\bU,\bV,\bW)\bcdot\bcS,
\end{align}
where $\bU$ is the top-$r_1$ left singular vectors of $\cM_{1}(\bcX)$, $\bV$ is the top-$r_2$ left singular vectors of $\cM_{2}(\bcX)$, $\bW$ is the top-$r_3$ left singular vectors of $\cM_{3}(\bcX)$, and $\bcS = (\bU^{\top},\bV^{\top},\bW^{\top})\bcdot\bcX$ is the core tensor. Equivalently, we denote 
\begin{align} \label{eq:HOSVD_factor}
(\bU,\bV,\bW,\bcS) = \mathrm{HOSVD}_{\br}(\bcX) 
\end{align}
\end{subequations}
as the output to the HOSVD procedure described above with the multilinear rank $\br$. In contrast to the matrix case, HOSVD is not guaranteed to yield the optimal rank-$\br$ approximation of $\bcX$ (which is NP-hard \cite{hillar2013most} to find). Nevertheless, it yields a quasi-optimal approximation \cite{hackbusch2012tensor} in the sense that 
\begin{align} \label{eq:HOSVD_quasi_optimal}
\|\bcX-\cH_{\br}(\bcX)\|_{\fro} \le \sqrt{3}\inf_{\widetilde{\bcX}: \, \rank(\cM_k(\widetilde{\bcX}))\le r_k}\|\bcX-\widetilde{\bcX}\|_{\fro}.
\end{align}
There are many variants or alternatives of HOSVD in the literature, e.g.~successive HOSVD, alternating least squares (ALS), higher-order orthogonal iteration (HOOI) \cite{de2000multilinear,de2000best}, etc. These methods compute truncated singular value decompositions in successive or alternating manners, to either reduce the computational costs or pursue a better (but still quasi-optimal) approximation. We will not delve into the details of these variants; interested readers can consult \cite{hackbusch2012tensor}.

\paragraph{Additional notation.} Let $a\vee b=\max\{a,b\}$ and $a\wedge b=\min\{a,b\}$. Throughout, $f(n)\lesssim g(n)$ or $f(n)=O(g(n))$ means $|f(n)|/|g(n)|\le C$ for some constant $C>0$, $f(n)\gtrsim g(n)$ means $|f(n)|/|g(n)|\ge C$ for some constant $C>0$, and $f(n)\asymp g(n)$ means $C_1\le |f(n)|/|g(n)|\le C_2$ for some constants $C_1,C_2>0$. Additionally, $f(n)\ll g(n)$ indicates $|f(n)|/|g(n)|\le c$ for some sufficient small constant $c>0$, and $f(n)\gg g(n)$ indicates $|f(n)|/|g(n)|\ge C$ for some sufficient large constant $C>0$. We use $C, C_1, C_2, c, c_1, c_2\dots$ to represent positive constants, whose values may differ from line to line. Last but not least, we use the terminology ``with overwhelming probability'' to denote the event happens with probability at least $1-c_{1}n^{-c_{2}}$.


\section{Main Results}

\subsection{Models and assumptions}
\label{sec:models}

We assume the ground truth tensor $\bcX_{\star} = [ \bcX_{\star}(i_1,i_2,i_3)]\in \RR^{n_1\times n_2\times n_3}$ admits the following Tucker decomposition
\begin{align}\label{eq:Tucker_truth}
\bcX_{\star} (i_1,i_2,i_3) = \sum_{j_1=1}^{r_1}\sum_{j_2=1}^{r_2}\sum_{j_3=1}^{r_3} \bU_{\star} (i_1,j_1) \bV_{\star}(i_2,j_2) \bW_{\star} (i_3,j_3) \bcS_{\star}(j_1,j_2,j_3), \quad 1\le i_k \le n_k,
\end{align}
or more compactly,
\begin{align}\label{eq:Tucker_truth_compact}
\bcX_{\star} &=  (\bU_{\star},\bV_{\star},\bW_{\star})\bcdot\bcS_{\star},
\end{align} 
where $\bcS_{\star}=[\bcS_{\star}(j_1,j_2,j_3)]\in \RR^{r_1\times r_2\times r_3}$ is the core tensor of multilinear rank $\br=(r_1,r_2,r_3)$, and $\bU_{\star}=[ \bU_{\star} (i_1,j_1)] \in\RR^{n_1\times r_1}$, $\bV_{\star}=[ \bV_{\star}(i_2,j_2)] \in\RR^{n_2\times r_2}$, $\bW_{\star}=[ \bW_{\star} (i_3,j_3)] \in\RR^{n_3\times r_3}$ are the factor matrices of each mode. Let $\cM_k(\bcX_{\star})$ be the mode-$k$ matricization of $\bcX_{\star}$, we have
\begin{subequations}\label{eq:matricization}
\begin{align}
\cM_1(\bcX_{\star}) &= \bU_{\star}\cM_1(\bcS_{\star})(\bW_{\star}\otimes\bV_{\star})^{\top}, \\ \cM_2(\bcX_{\star}) &= \bV_{\star}\cM_2(\bcS_{\star})(\bW_{\star}\otimes\bU_{\star})^{\top}, \\ \cM_3(\bcX_{\star}) &= \bW_{\star}\cM_3(\bcS_{\star})(\bV_{\star}\otimes\bU_{\star})^{\top}.
\end{align}
\end{subequations}
It is straightforward to see that the Tucker decomposition is not uniquely specified: for any invertible matrices $\bQ_k \in\RR^{r_k\times r_k}$, $k=1,2,3$, one has 
\begin{align*}
(\bU_{\star},\bV_{\star},\bW_{\star})\bcdot\bcS_{\star} =(\bU_{\star}\bQ_1,\bV_{\star}\bQ_2,\bW_{\star}\bQ_3)\bcdot ((\bQ_1^{-1},\bQ_2^{-1},\bQ_3^{-1}) \bcdot \bcS_{\star}).
\end{align*}
We shall fix the ground truth factors such that $\bU_{\star}$, $\bV_{\star}$ and $\bW_{\star}$ are orthonormal matrices consisting of left singular vectors in each mode. Furthermore, the core tensor $\bcS_{\star}$ is related to the singular values in each mode as
\begin{align}\label{eq:ground_truth_condition}
\cM_{k}(\bcS_{\star})\cM_{k}(\bcS_{\star})^{\top} = \bSigma_{\star,k}^2, \qquad k=1,2,3,
\end{align}
where $\bSigma_{\star,k} \coloneqq \diag[\sigma_{1}(\cM_{k}(\bcX_{\star})),\dots,\sigma_{r_k}(\cM_{k}(\bcX_{\star}))]$ is a diagonal matrix where the diagonal elements are composed of the nonzero singular values of $\cM_{k}(\bcX_{\star})$  and $r_k = \rank(\cM_k(\bcX_{\star}))$ for $k=1,2,3$.

\paragraph{Key parameters.}
Of particular interest is a sort of condition number of $\bcX_{\star}$,  which plays an important role in governing the computational efficiency of first-order algorithms. 

\begin{definition}[Condition number]\label{def:kappa} The condition number of $\bcX_{\star}$ is defined as
\begin{align} \label{eq:kappa}
\kappa \coloneqq \frac{\sigma_{\max}(\bcX_{\star})}{\sigma_{\min}(\bcX_{\star})} = \frac{\max_{k=1,2,3}  \sigma_{1}(\cM_k(\bcX_{\star}))}{\min_{k=1,2,3} \sigma_{r_k}(\cM_k(\bcX_{\star}))}.
\end{align}
\end{definition}

Another parameter is the incoherence parameter, which plays an important role in governing the well-posedness of low-rank tensor completion.
\begin{definition}[Incoherence]\label{def:mu} The incoherence parameter of $\bcX_{\star}$ is defined as 
\begin{align}
\mu \coloneqq \max\left\{\frac{n_1}{r_1}\|\bU_{\star}\|_{2,\infty}^2,\, \frac{n_2}{r_2}\|\bV_{\star}\|_{2,\infty}^2, \, \frac{n_3}{r_3}\|\bW_{\star}\|_{2,\infty}^2\right\}.\label{eq:mu}
\end{align}
\end{definition}
Roughly speaking, a small incoherence parameter ensures that the energy of the tensor is evenly distributed across its entries, so that a small random subset of its elements still reveals substantial information about the latent structure of the entire tensor.

\subsection{ScaledGD for tensor completion}
Assume that we have observed a subset of entries in $\bcX_{\star}$, given as 
$\bcY= \cP_{\Omega}(\bcX_{\star})$, where $\cP_{\Omega}:\RR^{n_{1}\times n_{2}\times n_3}\mapsto\RR^{n_{1}\times n_{2}\times n_3}$ is a projection such that
\begin{align}
[\cP_{\Omega}(\bcX_{\star})] (i_1,i_2,i_3) =\begin{cases} \bcX_{\star} (i_1,i_2,i_3), & \mbox{if }(i_1,i_2,i_3)\in\Omega, \\
0, & \mbox{otherwise}.\end{cases}
\end{align}
Here, $\Omega$ is generated according to the Bernoulli observation model in the sense that 
\begin{align} \label{eq:bernoulli_model}
(i_1,i_2,i_3) \in \Omega ~~\mbox{ independently with probability}~p\in (0,1].
\end{align}
The goal of tensor completion is to recover the tensor $\bcX_{\star}$ from its partial observation $\cP_{\Omega}(\bcX_{\star})$, which can be achieved by minimizing the loss function
\begin{align}\label{eq:loss_TC}
\min_{\bF=(\bU,\bV,\bW,\bcS)}\; \cL(\bF)\coloneqq\frac{1}{2p}\left\|\cP_{\Omega}\big((\bU,\bV,\bW)\bcdot\bcS \big) - \bcY \right\|_{\fro}^2. 
\end{align}

\paragraph{Preparation: a scaled projection operator.} 
To guarantee faithful recovery from partial observations, the underlying low-rank tensor $\bcX_{\star}$ needs to be incoherent (cf.~Definition~\ref{def:mu}) to avoid ill-posedness. One typical strategy, frequently employed in the matrix setting, to ensure the incoherence condition is to trim the rows of the factors \cite{chen2015fast} after the gradient update. For ScaledGD, this needs to be done in a careful manner to preserve the equivariance with respect to invertible transforms. Motivated by \cite{tong2021accelerating}, we introduce the scaled projection as follows, 
 \begin{align}\label{eq:scaled_proj}
(\bU, \bV, \bW, \bcS) = \cP_{B}(\bU_{+},{\bV}_{+},{\bW}_{+}, {\bcS}_{+}),\
\end{align}
where $B>0$ is the projection radius, and
\begin{align*}
\bU(i_1,:) & = \left(1 \wedge \frac{B}{\sqrt{n_1}  \| {\bU}_{+}(i_1,:) \breve{\bU}_{+}^{\top}  \|_2}\right)  \bU_+(i_1,:), \qquad 1\le i_1\le n_1; \\
\bV(i_2,:) & = \left(1 \wedge \frac{B}{\sqrt{n_2}\| {\bV}_{+}(i_2,:) \breve{\bV}_{+}^{\top}\|_2}\right) \bV_{+}(i_2,:), \qquad 1\le i_2\le n_2; \\
\bW(i_3,:)  &= \left(1 \wedge \frac{B}{\sqrt{n_3}\| \bW_{+}(i_3,:) \breve{\bW}_{+}^{\top}\|_2}\right) {\bW}_{+}(i_3,:),  \qquad 1\le i_3\le n_3; \\
\bcS & = \bcS_{+}.
\end{align*}
Here, we recall $\breve{\bU}_{+}$, $\breve{\bV}_{+}$, $\breve{\bW}_{+}$ are analogously defined in \eqref{eq:breve_uvw} using $(\bU_{+}, \bV_{+},\bW_{+},\bcS_{+})$.
As can be seen, each row of ${\bU}_{+}$ (resp.~${\bV}_{+}$ and ${\bW}_{+}$) is scaled by a scalar based on the row $\ell_2$ norms of ${\bU}_{+}\breve{\bU}_{+}^{\top}$ (resp.~${\bV}_{+}\breve{\bV}_{+}^{\top}$ and ${\bW}_{+}\breve{\bW}_{+}^{\top}$), which is the mode-1 (resp.~mode-2 and mode-3) matricization of the tensor $(\bU_{+}, \bV_{+}, \bW_{+} )\bcdot\bcS_{+}$. It is a straightforward observation that the projection can be computed efficiently.

\paragraph{Algorithm description.}
With the scaled projection $\cP_B(\cdot)$ defined in hand, we are in a position to describe the details of the proposed ScaledGD algorithm, summarized in Algorithm~\ref{alg:TC}. It consists of two stages: spectral initialization followed by iterative refinements using the scaled projected gradient updates in 
\eqref{eq:iterates_TC}. It is worth emphasizing that all the factors are updated simultaneously, which can be achieved in a parallel manner to accelerate computation run time.

For the spectral initialization, we take advantage of the subspace estimators proposed in \cite{cai2019subspace,xia2021statistically} for highly unbalanced matrices. Specifically, we estimate the subspace spanned by $\bU_{\star}$ by that spanned by top-$r_1$ eigenvectors $\bU_{+}$ of the diagonally-deleted Gram matrix of $p^{-1}\cM_{1}(\bcY)$, denoted as
\begin{align*}
\Poffdiag(p^{-2}\cM_{1}(\bcY)\cM_{1}(\bcY)^{\top}),
\end{align*}
and the other two factors $\bV_{+}$ and $\bW_{+}$ are estimated similarly. The core tensor is then estimated as
\begin{align*}
\bcS_{+} = p^{-1} (\bU_{+}^{\top},\bV_{+}^{\top},\bW_{+}^{\top})\bcdot \bcY,
\end{align*}
which is consistent with its estimation in the HOSVD procedure.
To ensure the initialization is incoherent, we pass it through the scaled projection operator to obtain the final initial estimate:
\begin{align*}
(\bU_{0},\bV_{0},\bW_{0},\bcS_{0})  = \cP_{B} \big(\bU_{+},\bV_{+},\bW_{+},\bcS_{+}\big).
\end{align*}

\begin{algorithm}[t]
\caption{ScaledGD for low-rank tensor completion}\label{alg:TC} 
\begin{algorithmic} 
\STATE \textbf{Input parameters:} step size $\eta$, multilinear rank $\br = (r_1,r_2,r_3)$, probability of observation $p$, projection radius $B$.
\STATE \textbf{Spectral initialization:} Let $\bU_{+}$ be the top-$r_1$ eigenvectors of
$\Poffdiag(p^{-2}\cM_{1}(\bcY)\cM_{1}(\bcY)^{\top})$,
and similarly for $\bV_{+},\bW_{+}$, and $\bcS_{+} = p^{-1} (\bU_{+}^{\top},\bV_{+}^{\top},\bW_{+}^{\top})\bcdot \bcY$. 
Set $(\bU_{0},\bV_{0},\bW_{0},\bcS_{0})  = \cP_{B} \big(\bU_{+},\bV_{+},\bW_{+},\bcS_{+}\big)$.
\STATE \textbf{Scaled projected gradient updates:} for $t=0,1,2,\dots,T-1$ \textbf{do}
\begin{align}
\begin{split} 
{\bU}_{t+} &= \bU_{t}- \frac{\eta}{p}\cM_{1}\left(\cP_{\Omega}\big((\bU_{t},\bV_{t},\bW_{t})\bcdot\bcS_{t} \big) - \bcY \right)\breve{\bU}_t \big(\breve{\bU}_t^{\top} \breve{\bU}_t \big)^{-1}, \\
{\bV}_{t+} &= \bV_{t}- \frac{\eta}{p}\cM_{2}\left(\cP_{\Omega}\big((\bU_{t},\bV_{t},\bW_{t})\bcdot\bcS_{t} \big) - \bcY \right)\breve{\bV}_t\big(\breve{\bV}_t^{\top} \breve{\bV}_t \big)^{-1}, \\
{\bW}_{t+} &= \bW_{t} -  \frac{\eta}{p}\cM_{3}\left(\cP_{\Omega}\big((\bU_{t},\bV_{t},\bW_{t})\bcdot\bcS_{t} \big) - \bcY \right)\breve{\bW}_t\big(\breve{\bW}_t^{\top} \breve{\bW}_t \big)^{-1}, \\
{\bcS}_{t+} &= \bcS_{t} - \frac{\eta}{p}\left((\bU_{t}^{\top}\bU_{t})^{-1}\bU_{t}^{\top},(\bV_{t}^{\top}\bV_{t})^{-1}\bV_{t}^{\top},(\bW_{t}^{\top}\bW_{t})^{-1}\bW_{t}^{\top}\right)\bcdot  \left(\cP_{\Omega}\big((\bU_{t},\bV_{t},\bW_{t})\bcdot\bcS_{t} \big) - \bcY \right),
\end{split}\label{eq:iterates_TC}
\end{align}
where $\breve{\bU}_t$, $\breve{\bV}_t$, and $\breve{\bW}_t$ are defined in \eqref{eq:breve_uvw}. 
Set $(\bU_{t+1},\bV_{t+1},\bW_{t+1},\bcS_{t+1}) = \cP_{B}({\bU}_{t+}, {\bV}_{t+}, {\bW}_{t+}, {\bcS}_{t+})$.
\end{algorithmic} 
\end{algorithm}

\paragraph{Theoretical guarantees.} The following theorem establishes the performance guarantee of ScaledGD for tensor completion, as soon as the sample size is sufficiently large.

\begin{theorem}[ScaledGD for tensor completion]\label{thm:TC} Let $n = \max_{k=1,2,3} n_k$ and $r =\max_{k=1,2,3}r_k$. Suppose that $\bcX_{\star}$ is $\mu$-incoherent, $n_k\gtrsim \epsilon_{0}^{-1}\mu r_k^{3/2}\kappa^2$ for $k=1,2,3$, and that $p$ satisfies 
\begin{align*}
pn_1n_2n_3 \gtrsim \epsilon_{0}^{-1}\sqrt{n_1n_2n_3}\mu^{3/2}r^{5/2}\kappa^{3} \log^{3} n + \epsilon_{0}^{-2}n\mu^{3} r^{4}\kappa^{6}\log^{5} n
\end{align*}
for some small constant $\epsilon_0>0$. Set the projection radius as $B=C_{B}\sqrt{\mu r}\sigma_{\max}(\bcX_{\star})$ for some constant $C_{B}\ge (1+\epsilon_{0})^{3}$. If the step size obeys $0<\eta\le2/5$, then with probability at least $1-c_{1}n^{-c_{2}}$ for universal constants $c_1,c_2>0$, for all $t\ge0$, the iterates of Algorithm~\ref{alg:TC} satisfy
\begin{align*}
\left\|(\bU_{t},\bV_{t},\bW_{t})\bcdot\bcS_{t}-\bcX_{\star}\right\|_{\fro}\le3\epsilon_{0}(1-0.6\eta)^{t}\sigma_{\min}(\bcX_{\star}).
\end{align*} 
\end{theorem}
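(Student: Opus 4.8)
The plan follows the well-trodden two-stage recipe for nonconvex statistical estimation --- show the spectral step lands in a local region of attraction, then show the scaled projected gradient iteration contracts geometrically inside that region --- but executed in a preconditioning-aware distance that is invariant under the gauge symmetry of the Tucker factorization. Concretely, for $\bF = (\bU,\bV,\bW,\bcS)$ I would work with
\[
\dist^2(\bF,\bF_\star) \coloneqq \inf_{\bQ_k\in\GL(r_k)}\Big\{\sum_{k=1,2,3}\big\|(\bZ_k\bQ_k-\bZ_{\star,k})\bSigma_{\star,k}\big\|_\fro^2 + \big\|(\bQ_1^{-1},\bQ_2^{-1},\bQ_3^{-1})\bcdot\bcS-\bcS_\star\big\|_\fro^2\Big\},
\]
where $(\bZ_1,\bZ_2,\bZ_3)\coloneqq(\bU,\bV,\bW)$ and $(\bZ_{\star,1},\bZ_{\star,2},\bZ_{\star,3})\coloneqq(\bU_\star,\bV_\star,\bW_\star)$; the weights $\bSigma_{\star,k}$ are chosen to match the population versions of the preconditioners $(\breve{\bU}_t^\top\breve{\bU}_t)^{-1}$ in \eqref{eq:iterates_TC} (recall $\breve{\bU}_\star^\top\breve{\bU}_\star = \bSigma_{\star,1}^2$, etc.), while the core term is unweighted because the core preconditioner $((\bU_t^\top\bU_t)^{-1},\dots)$ is the identity at the truth. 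The induction carried over $t$ asserts: (i) $\dist(\bF_t,\bF_\star)\le\epsilon_0\sigma_{\min}(\bcX_\star)$, and (ii) an incoherence/spikiness bound $\|\cM_k(\bcX_t)\|_{2,\infty}\lesssim\sqrt{\mu r/n_k}\,\sigma_{\max}(\bcX_\star)$ for $k=1,2,3$, with $\bcX_t\coloneqq(\bU_t,\bV_t,\bW_t)\bcdot\bcS_t$, which is exactly what the scaled projection $\cP_B$ is built to enforce. From (i), standard perturbation bounds show the optimal alignment $\bQ_{k,t}$ is $O(\epsilon_0)$-close to orthonormal and that $\breve{\bU}_t,\breve{\bV}_t,\breve{\bW}_t,\bU_t^\top\bU_t,\dots$ are well-conditioned (so every matrix inverse in the algorithm exists and is benign), and a telescoping identity combined with \eqref{eq:tensor_properties_e} gives $\|\bcX_t-\bcX_\star\|_\fro\le(1+O(\epsilon_0))\,\dist(\bF_t,\bF_\star)$ --- this is what converts the dist bound into the claimed $\fro$ bound. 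The condition $n_k\gtrsim\epsilon_0^{-1}\mu r_k^{3/2}\kappa^2$ is needed here for well-posedness, so that no matricization is too unbalanced.

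For the one-step contraction of the gradient update, I would fix the optimal $\bQ_t=(\bQ_{1,t},\bQ_{2,t},\bQ_{3,t})$ for $\bF_t$ and, using $\bcY=\cP_\Omega(\bcX_\star)$, write for the first factor
\[
\bU_{t+}\bQ_{1,t}-\bU_\star = (\bU_t\bQ_{1,t}-\bU_\star) - \frac{\eta}{p}\,\cM_1\big(\cP_\Omega(\bcX_t-\bcX_\star)\big)\breve{\bU}_t\big(\breve{\bU}_t^\top\breve{\bU}_t\big)^{-1}\bQ_{1,t},
\]
and analogously for $\bV_{t+},\bW_{t+}$, and for the core $\bcS_{t+}$ with its three-mode preconditioner. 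Then split $\tfrac1p\cP_\Omega(\bcX_t-\bcX_\star) = (\bcX_t-\bcX_\star) + \big(\tfrac1p\cP_\Omega-\cI\big)(\bcX_t-\bcX_\star)$. Plugging the ``population'' piece $\bcX_t-\bcX_\star$ into all four updates and expanding $\dist^2$, a lengthy but mechanical algebraic identity --- the Tucker analogue of the matrix ScaledGD contraction, hinging on the preconditioners being exactly the inverses of $\breve{\bU}_t^\top\breve{\bU}_t$ and $\bU_t^\top\bU_t$ --- shows this piece shrinks $\dist$ by a factor $(1-\eta)$ up to higher-order $O(\eta\,\dist(\bF_t,\bF_\star)^2/\sigma_{\min}(\bcX_\star))$ corrections, which are $O(\epsilon_0\eta)\dist(\bF_t,\bF_\star)$ under (i); this is where condition-number independence comes from, since the scaled metric and the preconditioning jointly cancel all dependence on $\bSigma_{\star,k}$. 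The ``perturbation'' piece $\big(\tfrac1p\cP_\Omega-\cI\big)(\bcX_t-\bcX_\star)$ is bounded via tensor concentration: because $\bcX_t-\bcX_\star$ has multilinear rank at most $2\br$ and, by (ii) plus incoherence of $\bcX_\star$, bounded row norms in every matricization, one controls quantities such as $\|\cM_k((\tfrac1p\cP_\Omega-\cI)(\bcZ))\bA\|_\fro$ for $\bcZ$ in this low-rank, incoherent set and $\bA$ ranging over the column spaces of $\breve{\bU}_t(\breve{\bU}_t^\top\breve{\bU}_t)^{-1}$ and its counterparts. Provided $p$ meets the stated threshold, these terms are $O(\epsilon_0\eta)\dist(\bF_t,\bF_\star)$, and altogether $\dist(\bF_{t+},\bF_\star)\le(1-0.7\eta)\dist(\bF_t,\bF_\star)$ for $\eta\le2/5$.

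It remains to handle the projection and the base case. One shows, as in the matrix setting, that since $B=C_B\sqrt{\mu r}\sigma_{\max}(\bcX_\star)$ with $C_B\ge(1+\epsilon_0)^3$ makes $\bF_\star$ strictly feasible (the $(1+\epsilon_0)^3$ slack absorbs the discrepancy between $\breve{\bU}_t$ and $\breve{\bU}_\star$ near the truth), the scaled projection $\cP_B$ does not increase $\dist(\cdot,\bF_\star)$, so $\dist(\bF_{t+1},\bF_\star)\le\dist(\bF_{t+},\bF_\star)\le(1-0.6\eta)\dist(\bF_t,\bF_\star)$, while by construction $\cP_B$ re-establishes (ii). For the base case, I would analyze the diagonal-deletion spectral estimator: applied to $p^{-1}\cM_k(\bcY)$, deleting the diagonal of the sampled Gram matrix removes the $O(p^{-1})$ bias term, so by matrix concentration plus a Davis--Kahan argument the eigenspace $\bU_{+}$ approximates $\mathrm{span}(\bU_\star)$ sharply even when $\cM_k(\bcX_\star)$ is highly unbalanced, following the analyses of \cite{cai2019subspace,xia2021statistically}; feeding these subspace errors and the core estimate $\bcS_{+}=p^{-1}(\bU_+^\top,\bV_+^\top,\bW_+^\top)\bcdot\bcY$ into the definition of $\dist$ and then applying $\cP_B$ yields $\dist(\bF_0,\bF_\star)\le\epsilon_0\sigma_{\min}(\bcX_\star)$ together with (ii), under the stated sample complexity. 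Induction then gives $\dist(\bF_t,\bF_\star)\le\epsilon_0(1-0.6\eta)^t\sigma_{\min}(\bcX_\star)$ for all $t$, and the $\fro$ conversion from the first paragraph finishes the proof with the constant $3$ to spare.

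The main obstacle is the tensor concentration underlying the perturbation step: one needs sharp control of $\big(\tfrac1p\cP_\Omega-\cI\big)$ acting on low-multilinear-rank, incoherent tensors, measured through several matricizations and contracted against the iteration-varying, over-parameterized preconditioned factors --- a family of bounds genuinely harder than the matrix analogues and the source of the extra $r$ and $\kappa$ factors in the sample complexity. A second, more structural subtlety is the four-factor bookkeeping: the population contraction identity now carries cross-terms coupling $\bU$, $\bV$, $\bW$ and the core $\bcS$, and --- unlike the two-factor matrix case --- one must verify that over-parameterizing with four freely-varying factors, rather than three orthonormal ones plus a structured core, does not break the exact cancellation that produces the condition-number-free rate; establishing the effectiveness of the second-order (diagonal-deletion) spectral initialization in the Tucker model, not previously done, is the remaining nontrivial ingredient.
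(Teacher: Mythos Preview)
Your proposal is correct and mirrors the paper's proof almost exactly: the same scaled distance \eqref{eq:dist}, the same population/perturbation split of $p^{-1}\cP_\Omega$, the same non-expansiveness of $\cP_B$ plus incoherence maintenance (Lemma~\ref{lemma:scaled_proj}), and the same diagonal-deletion spectral analysis for initialization (Lemma~\ref{lemma:init_TC}). Two small inaccuracies worth flagging: the Frobenius conversion $\|\bcX_t-\bcX_\star\|_\fro\le c\,\dist(\bF_t,\bF_\star)$ has $c\to 2$ (not $1$) as $\epsilon_0\to 0$ because of the four-term Cauchy--Schwarz, which is why the $3$ is actually needed; and the dimension condition $n_k\gtrsim\epsilon_0^{-1}\mu r_k^{3/2}\kappa^2$ is used not for generic well-posedness but specifically in the initialization step, to absorb the residual diagonal bias $\|\cM_k(\bcX_\star)\|_{2,\infty}^2$ left by $\Poffdiag$ in Lemma~\ref{lemma:P_Omega_quad}.
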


Theorem~\ref{thm:TC} ensures that ScaledGD finds an $\varepsilon$-accurate estimate, i.e.~$\left\|(\bU_{t},\bV_{t},\bW_{t})\bcdot\bcS_{t}-\bcX_{\star}\right\|_{\fro} \le \varepsilon \sigma_{\min}(\bcX_{\star})$, in at most $O(\log (1/\varepsilon))$ iterations, which is independent of the condition number of $\bcX_{\star}$, as long as the sample complexity is large enough. Assuming that $\mu= O(1)$ and $r \vee \kappa\ll n^{\delta}$ for some small constant $\delta$ to keep only terms with dominating orders of $n$, the sample complexity simplifies to
\begin{align*}
pn_1n_2n_3 \gtrsim n^{3/2}r^{5/2}\kappa^{3}\log^3 n,
\end{align*}
which is near-optimal in view of the conjecture that no polynomial-time algorithm will be successful if the sample complexity is less than the order of $n^{3/2}$ for tensor completion \cite{barak2016noisy}. Compared with existing algorithms collected in Table~\ref{tab:ScaledGD-tensor-completion}, ScaledGD is the {\em first} algorithm that simultaneously achieves a near-optimal sample complexity and a near-linear run time complexity in a provable manner. In particular, while \cite{yuan2016tensor,xia2019polynomial} achieve a sample complexity comparable to ours, the tensor nuclear norm minimization algorithm in \cite{yuan2016tensor} is NP-hard to compute, and the Grassmannian GD in \cite{xia2019polynomial} does not offer an explicit iteration complexity, except that each iteration can be computed in a polynomial time.

\subsection{ScaledGD for tensor regression}
Now we move on to another tensor recovery problem---tensor regression with Gaussian design. Assume that we have access to a set of observations given as 
\begin{align}
y_i = \langle\bcA_{i},\bcX_{\star}\rangle, \quad  i=1,\dots,m, \quad \mbox{ or concisely, } \qquad \by = \cA(\bcX_{\star}),
\end{align}
where $\bcA_i \in \RR^{n_1\times n_2 \times n_3}$ is the $i$-th measurement tensor composed of i.i.d.~Gaussian entries drawn from $\cN(0,1/m)$, and $\cA(\bcX) = \{ \langle\bcA_{i},\bcX\rangle \}_{i=1}^m$ is a linear map from $\RR^{n_1\times n_2 \times n_3}$ to $\RR^m$, whose adjoint operator is given by $\cA^*(\by) = \sum_{i=1}^m y_i \bcA_i$. The goal of tensor regression is to recover $\bcX_{\star}$ from $\by$, by leveraging the low-rank structure of $\bcX_{\star}$. This can be achieved by minimizing the following loss function
\begin{align}\label{eq:loss_TS}
\min_{\bF=(\bU,\bV,\bW,\bcS)}\; \cL(\bF)\coloneqq\frac{1}{2}\left\|\cA((\bU,\bV,\bW)\bcdot\bcS)-\by\right\|_{2}^2.
\end{align}

The proposed ScaledGD  algorithm to minimize \eqref{eq:loss_TS} is described in Algorithm~\ref{alg:TR}, where the algorithm is initialized by applying HOSVD to $\cA^*(\by)$, followed by scaled gradient updates given in \eqref{eq:iterates_TR}. 

\begin{algorithm}[t]
\caption{ScaledGD for low-rank tensor regression}\label{alg:TR} 
\begin{algorithmic} 
\STATE \textbf{Input parameters:} step size $\eta$, multilinear rank $\br = (r_1,r_2,r_3)$. 
\STATE \textbf{Spectral initialization:} Let $(\bU_{0},\bV_{0},\bW_{0}, \bcS_{0}) = \mathrm{HOSVD}_{\br}(\cA^{*}(\by))$ defined in \eqref{eq:HOSVD_factor}.
\STATE \textbf{Scaled gradient updates:} for $t=0,1,2,\dots,T-1$  
\begin{align}
\begin{split} 
\bU_{t+1} &= \bU_{t}-\eta \cM_{1}\left(\cA^{*}(\cA((\bU_{t},\bV_{t},\bW_{t})\bcdot\bcS_{t})-\by)\right)\breve{\bU}_t^{\top} \big(\breve{\bU}_t^{\top} \breve{\bU}_t \big)^{-1}, \\
\bV_{t+1} &= \bV_{t}-\eta \cM_{2}\left(\cA^{*}(\cA((\bU_{t},\bV_{t},\bW_{t})\bcdot\bcS_{t})-\by)\right)\breve{\bV}_t^{\top}\big(\breve{\bV}_t^{\top} \breve{\bV}_t \big)^{-1}, \\
\bW_{t+1} &= \bW_{t} - \eta \cM_{3}\left(\cA^{*}(\cA((\bU_{t},\bV_{t},\bW_{t})\bcdot\bcS_{t})-\by)\right)\breve{\bW}_t^{\top}\big(\breve{\bW}_t^{\top} \breve{\bW}_t \big)^{-1}, \\
\bcS_{t+1} &= \bcS_{t} - \eta\left((\bU_{t}^{\top}\bU_{t})^{-1}\bU_{t}^{\top},(\bV_{t}^{\top}\bV_{t})^{-1}\bV_{t}^{\top},(\bW_{t}^{\top}\bW_{t})^{-1}\bW_{t}^{\top}\right)\bcdot \cA^{*}(\cA((\bU_{t},\bV_{t},\bW_{t})\bcdot\bcS_{t})-\by),
\end{split}\label{eq:iterates_TR}
\end{align}
where $\breve{\bU}_t$, $\breve{\bV}_t$, and $\breve{\bW}_t$ are defined in \eqref{eq:breve_uvw}.
\end{algorithmic} 
\end{algorithm}

\paragraph{Theoretical guarantees.} Encouragingly, we can guarantee that ScaledGD provably recovers the ground truth tensor as long as the sample size is sufficiently large, which is given in the following theorem.

\begin{theorem}[ScaledGD for tensor regression]\label{thm:TR} Let $n = \max_{k=1,2,3} n_k$ and $r =\max_{k=1,2,3}r_k$. For tensor regression with Gaussian design, suppose that $m$ satisfies 
\begin{align*}
m \gtrsim \epsilon_{0}^{-1}\sqrt{n_1n_2n_3}r^{3/2}\kappa^2 + \epsilon_{0}^{-2}(nr^2\kappa^4\log n + r^{4}\kappa^{2})
\end{align*}
for some small constant $\epsilon_0>0$. If the step size obeys $0 < \eta \le 2/5$, then with probability at least $1-c_{1}n^{-c_{2}}$ for universal constants $c_1,c_2>0$, for all $t\ge 0$, the iterates of Algorithm~\ref{alg:TR} satisfy
\begin{align*}
\left\|(\bU_{t},\bV_{t},\bW_{t})\bcdot\bcS_{t}-\bcX_{\star}\right\|_{\fro} & \le 3\epsilon_0 (1-0.6\eta)^{t}\sigma_{\min}(\bcX_{\star}).
\end{align*}
\end{theorem}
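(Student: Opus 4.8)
I would argue in two stages, in the spirit of scaled gradient descent analyses but with substantial tensor-specific additions: (i) the HOSVD spectral initialization lands in a local basin of attraction, and (ii) from there ScaledGD contracts at a linear rate independent of $\kappa$. Progress is tracked through a scaled distance of the form
\begin{align*}
\dist^2(\bF,\bF_\star) \coloneqq \inf_{\bQ_k\in\GL(r_k)} \Big\{ & \big\|(\bU\bQ_1-\bU_\star)\bSigma_{\star,1}\big\|_\fro^2 + \big\|(\bV\bQ_2-\bV_\star)\bSigma_{\star,2}\big\|_\fro^2 \\
& + \big\|(\bW\bQ_3-\bW_\star)\bSigma_{\star,3}\big\|_\fro^2 + \big\|(\bQ_1^{-1},\bQ_2^{-1},\bQ_3^{-1})\bcdot\bcS-\bcS_\star\big\|_\fro^2 \Big\},
\end{align*}
which quotients out the $\GL(r_1)\times\GL(r_2)\times\GL(r_3)$ ambiguity of the Tucker factorization. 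A preliminary linear-algebra lemma, built on \eqref{eq:tensor_properties_e} and \eqref{eq:tensor_spec_frob}, records that once $\dist(\bF,\bF_\star)\le\epsilon_0\,\sigma_{\min}(\bcX_\star)$, the optimal alignments $\bQ_k$ are well-conditioned, the factors are nearly balanced, the Gram matrices $\breve{\bU}^\top\breve{\bU}$ in \eqref{eq:breve_uvw} are spectrally comparable to $\bSigma_{\star,1}^2$ (so the preconditioners in \eqref{eq:iterates_TR} exist and are $\asymp\bSigma_{\star,1}^{-2}$, and likewise for the other modes), and $\|(\bU,\bV,\bW)\bcdot\bcS-\bcX_\star\|_\fro\le(1+O(\epsilon_0))\,\dist(\bF,\bF_\star)$; hence the bound claimed for $\|(\bU_t,\bV_t,\bW_t)\bcdot\bcS_t-\bcX_\star\|_\fro$ follows once the same bound is proven for $\dist(\bF_t,\bF_\star)$.

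\textbf{Stage 1: spectral initialization.} Write $\cA^*(\by)=\cA^*\cA(\bcX_\star)=\bcX_\star+\bcE$ with $\bcE\coloneqq(\cA^*\cA-\cI)(\bcX_\star)$. Via an $\epsilon$-net argument over low-multilinear-rank tensors together with Gaussian concentration, I would show the spectral norms of the matricizations $\cM_k(\bcE)$ (and the further quantities needed to align the core) are $\lesssim\epsilon_0\,\sigma_{\min}(\bcX_\star)$ as soon as $m\gtrsim\epsilon_0^{-1}\sqrt{n_1n_2n_3}\,r^{3/2}\kappa^2$ --- precisely the first sample-complexity term. A Davis--Kahan/Wedin-type bound on each mode then controls the subspace errors of $\bU_0,\bV_0,\bW_0$, and combining these with the quasi-optimality \eqref{eq:HOSVD_quasi_optimal} of HOSVD and the explicit core estimate $\bcS_0=(\bU_0^\top,\bV_0^\top,\bW_0^\top)\bcdot\cA^*(\by)$ yields $\dist(\bF_0,\bF_\star)\le\epsilon_0\,\sigma_{\min}(\bcX_\star)$. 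Carrying out this second-order HOSVD perturbation analysis for the Tucker estimator is itself one of the nontrivial technical steps.

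\textbf{Stage 2: local contraction.} First invoke the Tensor Restricted Isometry Property \cite{rauhut2017low}: for Gaussian $\cA$ and $m\gtrsim\epsilon_0^{-2}(n r^2\kappa^4\log n+r^4\kappa^2)$, the map $\cA$ satisfies $\delta$-TRIP on tensors of multilinear rank at most $(3r_1,3r_2,3r_3)$ with $\delta\lesssim\epsilon_0/\kappa^2$ --- the second sample-complexity term. Then argue by induction on $t$: assuming $\dist(\bF_t,\bF_\star)\le\epsilon_0(1-0.6\eta)^t\sigma_{\min}(\bcX_\star)$, let $\bQ_k$ be the optimal alignments at iteration $t$ and substitute the update \eqref{eq:iterates_TR} into $\dist^2(\bF_{t+1},\bF_\star)$ using the feasible (generally suboptimal) alignments $\bQ_k$, which gives the needed upper bound. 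Each preconditioned-gradient block is split into a population part, obtained by replacing $\cA^*\cA$ with $\cI$, plus a TRIP residual. For the population part, the preconditioners ($\asymp\bSigma_{\star,k}^{-2}$) exactly rescale the four blocks so that, as in the matrix case but now with the additional cross-terms among $\bU,\bV,\bW,\bcS$, the sum of the four squared terms contracts by $(1-0.6\eta)^2$ for $\eta\le2/5$ once the cross-terms (all higher-order in $\dist_t$) are absorbed. The residual terms take the form $\|\cM_1((\cA^*\cA-\cI)(\bcX_t-\bcX_\star))\breve{\bU}_t(\breve{\bU}_t^\top\breve{\bU}_t)^{-1}\bSigma_{\star,1}\|_\fro$; writing the argument as a rank-$\le(3r_1,3r_2,3r_3)$ tensor and applying TRIP together with the preconditioner bounds controls them by $O(\delta\kappa^2)\,\dist_t=O(\epsilon_0)\,\dist_t$, harmless for small $\epsilon_0$. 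The contraction of $\dist$ automatically reproduces the inductive hypothesis, including the well-conditioning of the alignments (implicit balancing), closing the loop; the rate $(1-0.6\eta)^t$ and the stated bound then follow from the preliminary lemma.

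\textbf{Main obstacle.} The crux is the population contraction for the four coupled factors including the core $\bcS$: unlike the two-factor matrix case, the cross-terms between the factor updates and the core update proliferate, and one must verify they are all second order in $\dist_t$ and hence negligible after choosing $\epsilon_0$ small. A close second is controlling $\cA^*\cA$ applied to products of the \emph{non-orthonormal} iterate factors in precisely the right (scaled, mode-wise) norms, so that TRIP on rank-$3\br$ tensors --- rather than far costlier operator-norm control --- suffices; this is exactly where the spectral comparability of $\breve{\bU}_t^\top\breve{\bU}_t$ to $\bSigma_{\star,1}^2$ from the preliminary lemma is indispensable.
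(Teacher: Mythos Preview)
Your two-stage architecture and the scaled-distance machinery match the paper, but Stage~1 contains a genuine gap and Stage~2 misallocates the sample-complexity budget.

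\textbf{Stage 1 gap.} The claim that an $\epsilon$-net over low-multilinear-rank tensors yields $\|\cM_k(\bcE)\|_{\op}\lesssim\epsilon_0\sigma_{\min}(\bcX_\star)$ once $m\gtrsim\epsilon_0^{-1}\sqrt{n_1n_2n_3}\,r^{3/2}\kappa^2$ is false. The matricization $\cM_1(\bcE)$ is $n_1\times n_2n_3$, and its spectral norm is a supremum over \emph{arbitrary} unit vectors $\bv\in\RR^{n_2n_3}$, not over vectors with low-rank tensor structure; a low-rank $\epsilon$-net controls $\sup_{\bcT\ \mathrm{low\ rank}}\langle\bcE,\bcT\rangle$ (i.e., TRIP), not $\|\cM_k(\bcE)\|_{\op}$. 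With Gaussian $\cA$ one has $\|\cM_1(\bcE)\|_{\op}\asymp\sqrt{n_2n_3/m}\,\|\bcX_\star\|_\fro$, so Davis--Kahan on the matricizations needs $m\gtrsim n^2$ samples, not $n^{3/2}$. The paper avoids this by never bounding $\|\cM_k(\bcE)\|_{\op}$: it works with the $n_1\times n_1$ debiased Gram matrix $\bG=\cM_1(\cA^*(\by))\cM_1(\cA^*(\by))^\top-\tfrac{\|\by\|_2^2}{m}(n_2n_3-r_1)\bI_{n_1}$, splits $\cM_1(\cA^*(\by))$ along and orthogonal to the true right singular subspace $\bR_\star$, and uses the conditional independence of $\cM_1(\bcA_i)\bR_{\star\perp}$ from $y_i$ to get $\|\bG-\bG_\star\|_{\op}\lesssim(\sqrt{n_1n_2n_3}+n\log n)m^{-1}\|\bcX_\star\|_\fro^2+\sqrt{n\log n/m}\,\|\bcX_\star\|_\fro\sigma_{\max}(\bcX_\star)$. \emph{Both} sample-complexity terms in the theorem come from this initialization bound, not just the first.

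\textbf{Stage 2 misallocation.} Your residual estimate $O(\delta\kappa^2)\dist_t$ is too loose: by your own preliminary lemma $\big\|\breve{\bU}_t(\breve{\bU}_t^\top\breve{\bU}_t)^{-1}\bSigma_{\star,1}\big\|_{\op}\asymp 1$, and decomposing $\bcX_t-\bcX_\star$ into four rank-$\br$ pieces via \eqref{eq:decomp_T} introduces no $\kappa$; combined with the $2\br$-TRIP inner-product bound one gets residuals of size $O(\delta_{2\br})\dist_t$, no $\kappa$ factor. Hence TRIP with a small \emph{constant} $\delta_{2\br}$ (so $m\gtrsim nr+r^3$, subsumed) suffices for contraction, and the term $\epsilon_0^{-2}(nr^2\kappa^4\log n+r^4\kappa^2)$ is \emph{not} a TRIP requirement but part of the initialization cost. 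This is not fatal---your requirement is merely stronger than necessary---but it obscures the point that the preconditioning makes the local phase entirely $\kappa$-free.
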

Theorem~\ref{thm:TR} ensures that ScaledGD finds an $\varepsilon$-accurate estimate, i.e.~$\left\|(\bU_{t},\bV_{t},\bW_{t})\bcdot\bcS_{t}-\bcX_{\star}\right\|_{\fro} \le \varepsilon \sigma_{\min}(\bcX_{\star})$, in at most $O(\log (1/\varepsilon))$ iterations, which is independent of the condition number of $\bcX_{\star}$, as long as the sample complexity satisfies 
\begin{align*}
m \gtrsim n^{3/2}r^{3/2}\kappa^2,
\end{align*}
where again we keep only terms with dominating orders of $n$.
Compared with the regularized GD \cite{han2020optimal}, ScaledGD achieves a low computation complexity with robustness to ill-conditioning, improving its iteration complexity by a factor of $\kappa^2$, and does not require any explicit regularization.

%

\section{Analysis}\label{sec:analysis}

In this section, we provide some intuitions and sketch the proof of our main theorems. Before continuing, we highlight an important property of ScaledGD: if starting from an equivalent estimate 
\begin{align*}
\widetilde{\bU}_t = \bU_{t}\bQ_{1}, \quad \widetilde{\bV}_t =\bV_{t}\bQ_{2}, \quad \widetilde{\bW}_t = \bW_{t}\bQ_{3}, \quad \widetilde{\bcS}_t = (\bQ_{1}^{-1},\bQ_{2}^{-1},\bQ_{3}^{-1})\bcdot\bcS_{t}
\end{align*}
for some invertible matrices $\bQ_{k}\in\GL(r_k)$ (i.e.~replacing $\bU_t$ by $\bU_t\bQ_1$, and so on), by plugging the above estimate in \eqref{eq:ScaledGD} it is easy to check that the next iterate of ScaledGD is covariant with respect to invertible transforms, meaning
\begin{align*}
\widetilde{\bU}_{t+1} = \bU_{t+1}\bQ_{1}, \quad \widetilde{\bV}_{t+1} =\bV_{t+1}\bQ_{2}, \quad \widetilde{\bW}_{t+1} = \bW_{t+1}\bQ_{3}, \quad \widetilde{\bcS}_{t+1} = (\bQ_{1}^{-1},\bQ_{2}^{-1},\bQ_{3}^{-1})\bcdot\bcS_{t+1}.
\end{align*}
In other words, ScaledGD produces an invariant sequence of low-rank tensor estimates
\begin{align*}
\bcX_t =(\bU_t, \bV_t, \bW_t) \bcdot \bcS_t = (\widetilde{\bU}_t , \widetilde{\bV}_t ,\widetilde{\bW}_t ) \bcdot \widetilde{\bcS}_t
\end{align*}
regardless of the representation of the tensor factors with respect to the underlying symmetry group. This is one of the key reasons behind the insensitivity of ScaledGD to ill-conditioning and factor imbalance.

\paragraph{A key scaled distance metric.} To track the progress of ScaledGD throughout the entire trajectory, one needs a distance metric that properly takes account of the factor ambiguity due to invertible transforms, as well as the effect of scaling. To that end, we define the scaled distance between factor quadruples $\bF=(\bU,\bV,\bW,\bcS)$ and $\bF_{\star}=(\bU_{\star},\bV_{\star},\bW_{\star},\bcS_{\star})$ as
\begin{align}
\dist^2(\bF,\bF_{\star}) \coloneqq \inf_{\bQ_{k}\in\GL(r_k)}\; & \left\|(\bU\bQ_{1}-\bU_{\star})\bSigma_{\star,1}\right\|_{\fro}^{2}+\left\|(\bV\bQ_{2}-\bV_{\star})\bSigma_{\star,2}\right\|_{\fro}^{2}+\left\|(\bW\bQ_{3}-\bW_{\star})\bSigma_{\star,3}\right\|_{\fro}^{2} \nonumber\\
&\qquad \qquad+\left\|(\bQ_{1}^{-1},\bQ_{2}^{-1},\bQ_{3}^{-1})\bcdot\bcS-\bcS_{\star}\right\|_{\fro}^2. \label{eq:dist}
\end{align}
The distance is closely related to the $\ell_2$ distances between the corresponding tensors. In fact, it can be shown that as long as $\bF$ and $\bF_{\star}$ are not too far apart, i.e.~$\dist(\bF,\bF_{\star})\le 0.2\sigma_{\min}(\bcX_{\star})$,
it holds that $\dist(\bF,\bF_{\star})\asymp \|(\bU,\bV,\bW)\bcdot\bcS-\bcX_{\star}\|_{\fro}$ in the sense that (see Appendix~\ref{subsec:scaled_distance} for proofs):
\begin{align*}
\tfrac{1}{3} \left\|(\bU,\bV,\bW)\bcdot\bcS-\bcX_{\star}\right\|_{\fro} \le  \dist(\bF,\bF_{\star}) \le (\sqrt{2}+1)^{3/2}\left\|(\bU,\bV,\bW)\bcdot\bcS-\bcX_{\star}\right\|_{\fro}.
\end{align*}

\subsection{A warm-up case: ScaledGD for tensor factorization}

To shed light on the design insights as well as the proof techniques, we now introduce the ScaledGD algorithm for the tensor factorization problem, which aims to minimize the following loss function:
\begin{align}\label{eq:loss_TF}
\cL(\bF)\coloneqq\frac{1}{2}\|(\bU,\bV,\bW)\bcdot\bcS-\bcX_{\star}\|_{\fro}^{2}  = \frac{1}{2} \| \cM_k \left( (\bU,\bV,\bW)\bcdot\bcS-\bcX_{\star} \right) \|_{\fro}^{2} , \quad k=1,2,3,
\end{align}
where the last equality follows from \eqref{eq:tensor_inner}. Recalling the update rule~\eqref{eq:ScaledGD}, ScaledGD proceeds as
\begin{align}
\begin{split}
\bU_{t+1} &= \bU_{t} - \eta\cM_{1}\left( \bcX_t -\bcX_{\star}\right)\breve{\bU}_t^{\top}\big(\breve{\bU}_t^{\top} \breve{\bU}_t \big)^{-1}, \\
\bV_{t+1} &= \bV_{t} - \eta\cM_{2}\left( \bcX_t -\bcX_{\star}\right)\breve{\bV}_t^{\top}\big(\breve{\bV}_t^{\top} \breve{\bV}_t \big)^{-1}, \\
\bW_{t+1} &= \bW_{t} - \eta\cM_{3}\left(\bcX_t -\bcX_{\star}\right)\breve{\bW}_t^{\top}\big(\breve{\bW}_t^{\top} \breve{\bW}_t \big)^{-1}, \\
\bcS_{t+1} &= \bcS_{t} - \eta\left((\bU_{t}^{\top}\bU_{t})^{-1}\bU_{t}^{\top}, (\bV_{t}^{\top}\bV_{t})^{-1}\bV_{t}^{\top}, (\bW_{t}^{\top}\bW_{t})^{-1}\bW_{t}^{\top}\right)\bcdot\left( \bcX_t -\bcX_{\star}\right),
\end{split}\label{eq:iterates_TF}
\end{align}
where $ \bcX_t =(\bU_t, \bV_t, \bW_t) \bcdot \bcS_t$, with $\breve{\bU}_t$, $\breve{\bV}_t$,  and $\breve{\bW}_t$ defined in \eqref{eq:breve_uvw}.

\paragraph{ScaledGD as a quasi-Newton algorithm.} One way to think of ScaledGD is through the lens of quasi-Newton methods, by equivalently rewriting the ScaledGD update \eqref{eq:iterates_TF} as
\begin{align}
\vc(\bF_{t+1}) = \vc(\bF_{t}) - \eta \bH_t^{-1} \nabla_{\vc(\bF)}\cL(\bF_{t}),\label{eq:iterates_TF_vec}
\end{align}
where $\bH_t \coloneqq \diag \big[ \nabla^{2}_{\vc(\bU),\vc(\bU)}\cL(\bF_{t}) ,\, \nabla^{2}_{\vc(\bV),\vc(\bV)}\cL(\bF_{t}) , \, \nabla^{2}_{\vc(\bW),\vc(\bW)}\cL(\bF_{t}) ,\,  \nabla^{2}_{\vc(\bcS),\vc(\bcS)}\cL(\bF_{t}) \big]$.
To see this, it is straightforward to check that the diagonal blocks of the Hessian of the loss function \eqref{eq:loss_TF} are given precisely as
\begin{align}
\begin{split}
\nabla^2_{\vc(\bU),\vc(\bU)} \cL(\bF_t) &= (\breve{\bU}_t^{\top}\breve{\bU}_t)\otimes\bI_{n_1}, \\
\nabla^2_{\vc(\bV),\vc(\bV)} \cL(\bF_t) &= (\breve{\bV}_t^{\top}\breve{\bV}_t)\otimes\bI_{n_2}, \\
\nabla^2_{\vc(\bW),\vc(\bW)} \cL(\bF_t) &= (\breve{\bW}_t^{\top}\breve{\bW}_t)\otimes\bI_{n_3}, \\
\nabla^2_{\vc(\bcS),\vc(\bcS)} \cL(\bF_t) &= (\bW_t^{\top}\bW_t)\otimes(\bV_t^{\top}\bV_t)\otimes(\bU_t^{\top}\bU_t).
\end{split}\label{eq:Hessian_diags}
\end{align}
Therefore, by vectorization of \eqref{eq:iterates_TF}, ScaledGD can be regarded as a quasi-Newton method where the preconditioner is designed as the inverse of the diagonal approximation of the Hessian.

\paragraph{Guarantees for tensor factorization.} Fortunately, ScaledGD admits a $\kappa$-independent convergence rate for tensor factorization, as long as the initialization is not too far from the ground truth. This is summarized in Theorem~\ref{thm:TF}, whose proof can be found in Appendix~\ref{sec:proof_TF}.

\begin{theorem}\label{thm:TF} For tensor factorization \eqref{eq:loss_TF}, suppose that the initialization satisfies $\dist(\bF_{0},\bF_{\star}) \le \epsilon_0\sigma_{\min}(\bcX_{\star})$ for some small constant $\epsilon_0>0$, then for all $t\ge 0$, the iterates of {ScaledGD}  in \eqref{eq:iterates_TF} satisfy
\begin{align*}
\dist(\bF_{t},\bF_{\star}) \le (1-0.7\eta)^{t}\epsilon_0\sigma_{\min}(\bcX_{\star}), \quad\mbox{and}\quad \left\|(\bU_{t},\bV_{t},\bW_{t})\bcdot\bcS_{t}-\bcX_{\star}\right\|_{\fro}\le3\epsilon_{0}(1-0.7\eta)^{t}\sigma_{\min}(\bcX_{\star}),
\end{align*}
as long as the step size satisfies $0 < \eta \le 2/5$.
\end{theorem}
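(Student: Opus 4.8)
# Proof Proposal for Theorem~\ref{thm:TF}

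\textbf{Overall strategy.} The plan is to establish a one-step contraction of the scaled distance $\dist(\bF_t,\bF_\star)$, from which the geometric decay follows by induction, and then deduce the tensor-level error bound from the two-sided equivalence $\tfrac13\|\bcX_t-\bcX_\star\|_\fro \le \dist(\bF_t,\bF_\star) \le (\sqrt2+1)^{3/2}\|\bcX_t-\bcX_\star\|_\fro$ recorded earlier. Throughout I would maintain the induction hypothesis $\dist(\bF_t,\bF_\star)\le\epsilon_0\sigma_{\min}(\bcX_\star)$, which (since $\epsilon_0$ is small) forces the factors to stay well-conditioned: the Gram matrices $\breve{\bU}_t^\top\breve{\bU}_t$, $\bU_t^\top\bU_t$, etc., are all close to their ground-truth counterparts $\bSigma_{\star,k}^2$ up to factors like $(1\pm O(\epsilon_0))$, so the preconditioners are invertible and bounded, and all the matrix inverses in \eqref{eq:iterates_TF} are well-defined. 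The covariance/equivariance property highlighted at the start of Section~\ref{sec:analysis} lets me, at iteration $t$, pick the optimal alignment matrices $\bQ_1,\bQ_2,\bQ_3$ achieving the infimum in $\dist(\bF_t,\bF_\star)$ and work with the aligned factors $\bU_t\bQ_1$, etc.; since ScaledGD commutes with this reparameterization, it suffices to analyze one ScaledGD step starting from the aligned quadruple.

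\textbf{Key steps, in order.}
\begin{enumerate}
\item Reduce to the aligned case: fix optimizers $\{\bQ_k\}$ for $\dist(\bF_t,\bF_\star)$, and write the aligned factors; WLOG rename them so $\bF_t$ is already aligned, with $\|(\bU_t-\bU_\star)\bSigma_{\star,1}\|_\fro^2+\cdots+\|\bcS_t-\bcS_\star\|_\fro^2 = \dist^2(\bF_t,\bF_\star)$.
\item Since $\dist(\bF_{t+1},\bF_\star)$ is an infimum over alignments, upper bound it by plugging in a \emph{specific} convenient choice of alignment for $\bF_{t+1}$ (e.g.~the identity, or a small correction); this converts the problem into bounding the four Frobenius norms $\|(\bU_{t+1}-\bU_\star)\bSigma_{\star,1}\|_\fro$, etc., directly from the update rule \eqref{eq:iterates_TF}.
\item Expand each update. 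For the $\bU$-block, write $\bcX_t-\bcX_\star = \cM_1^{-1}$ of $\cM_1(\bcX_t)-\cM_1(\bcX_\star)$ and use $\cM_1(\bcX_t)=\bU_t\breve{\bU}_t^\top$ (from \eqref{eq:breve_uvw}); the update becomes $\bU_{t+1}-\bU_\star = (\bU_t-\bU_\star) - \eta(\bU_t\breve{\bU}_t^\top-\bU_\star\breve{\bU}_{\star}^\top+\bU_\star\breve{\bU}_\star^\top-\cM_1(\bcX_\star))\breve{\bU}_t(\breve{\bU}_t^\top\breve{\bU}_t)^{-1}$. Right-multiply by $\bSigma_{\star,1}$ and group into a ``contraction'' term $(1-\eta)(\bU_t-\bU_\star)\bSigma_{\star,1}$-like piece plus cross terms plus higher-order terms in the factor perturbations. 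Do the analogous expansion for $\bV$, $\bW$, and for $\bcS$ (where the preconditioner is $(\bU_t^\top\bU_t)^{-1}\bU_t^\top$ etc.~acting on the three modes).
\item Bound the perturbation/cross terms. Each such term carries at least one extra factor of $\dist(\bF_t,\bF_\star)/\sigma_{\min}(\bcX_\star)\le\epsilon_0$, using: the incoherence-free spectral bounds $\|\breve{\bU}_t-\breve{\bU}_\star\|_\op$, $\|(\breve{\bU}_t^\top\breve{\bU}_t)^{-1}-(\breve{\bU}_\star^\top\breve{\bU}_\star)^{-1}\|_\op$ controlled by $\dist$; the identities $\breve{\bU}_\star^\top\breve{\bU}_\star=\cM_1(\bcS_\star)\cM_1(\bcS_\star)^\top=\bSigma_{\star,1}^2$ from \eqref{eq:ground_truth_condition}; $\|\bU_\star\breve{\bU}_\star^\top-\cM_1(\bcX_\star)\|=0$ in the factorization case (this residual vanishes — a simplification over tensor completion/regression); and the submultiplicative tensor bound \eqref{eq:tensor_properties_e}. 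A cross term like $-\eta(\bV_t-\bV_\star)$-induced error in the $\bU$-update needs the mixed estimate relating $\|(\bV_t-\bV_\star)\bSigma_{\star,2}\|_\fro$ to the $\bU$-residual, which is where the three modes couple.
\item Combine the four squared-norm bounds. After Cauchy–Schwarz on the cross terms, obtain $\dist^2(\bF_{t+1},\bF_\star)\le \big((1-\eta)^2 + O(\eta\epsilon_0) + O(\eta^2)\big)\dist^2(\bF_t,\bF_\star)$. For $\eta\le2/5$ and $\epsilon_0$ small, the bracket is $\le(1-0.7\eta)^2$, giving the claimed contraction; then induct.
\item Translate to the tensor bound via the two-sided equivalence: $\|\bcX_t-\bcX_\star\|_\fro\le 3\dist(\bF_t,\bF_\star)\le 3(1-0.7\eta)^t\epsilon_0\sigma_{\min}(\bcX_\star)$, which also re-verifies that the trajectory stays in the region where the equivalence holds (since $\dist$ never increases).
\end{enumerate}

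\textbf{Main obstacle.} The crux is Step~4--5: carefully bookkeeping the cross terms coupling the four blocks and showing every non-contractive contribution carries a gain of $\epsilon_0$ or $\eta$, so that the effective rate is genuinely $(1-0.7\eta)$ with no $\kappa$ dependence. The delicate point — and the whole reason ScaledGD beats vanilla GD — is that the preconditioners $(\breve{\bU}_t^\top\breve{\bU}_t)^{-1}$ etc.~exactly cancel the $\bSigma_{\star,k}^2$ scaling that would otherwise appear in the gradient, so the ``contraction factor'' is $(1-\eta)$ rather than $(1-\eta\sigma_{\min}^2/\sigma_{\max}^2)$; making this cancellation rigorous requires controlling $(\breve{\bU}_t^\top\breve{\bU}_t)^{-1}\bSigma_{\star,1}^2$ and its deviation from identity under the aligned parameterization. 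The tensor-algebraic manipulations — moving between $\bcX_t-\bcX_\star$ and its matricizations, handling the Kronecker structure of $\breve{\bU}_t=(\bW_t\otimes\bV_t)\cM_1(\bcS_t)^\top$ — add bookkeeping but no conceptual difficulty; I would isolate the needed perturbation inequalities as preliminary lemmas so the main contraction argument stays clean.
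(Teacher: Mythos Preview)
Your overall architecture (induction, align at each step, expand the updates, bound terms, translate to the tensor error via Lemma~\ref{lemma:perturb_bounds}) matches the paper's. But Step~4 contains a genuine error that would derail the proof: the claim that ``each such term carries at least one extra factor of $\epsilon_0$'' is false for the leading cross-block term. After you write (in the aligned parameterization)
\[
(\bU_{t+1}-\bU_\star)\bSigma_{\star,1}=(1-\eta)\bDelta_U\bSigma_{\star,1}-\eta\,\bU_\star(\breve{\bU}-\breve{\bU}_\star)^\top\breve{\bU}(\breve{\bU}^\top\breve{\bU})^{-1}\bSigma_{\star,1},
\]
the second piece has Frobenius norm of order $\dist(\bF_t,\bF_\star)$, not $\epsilon_0\cdot\dist(\bF_t,\bF_\star)$: it is a \emph{first-order} quantity in the perturbations $\bDelta_V,\bDelta_W,\bDelta_\cS$ (via $\breve{\bU}-\breve{\bU}_\star$). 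Consequently, when you square, the inner-product cross term $-2\eta(1-\eta)\langle\bDelta_U\bSigma_{\star,1},\,\bU_\star(\breve{\bU}-\breve{\bU}_\star)^\top\cdots\rangle$ is $O(\eta)\dist^2$, not $O(\eta\epsilon_0)\dist^2$, and applying Cauchy--Schwarz to it is fatal (it could flip sign and give you $(1-\eta)^2+2\eta(1-\eta)+O(\eta^2)\ge 1$, no contraction at all). Even if you only Cauchy--Schwarz the $\eta^2$ square, your stated bound $(1-\eta)^2+O(\eta^2)$ with an uncontrolled constant would not yield the advertised range $\eta\le 2/5$.

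What the paper does---and what your outline is missing---is to exploit the \emph{first-order optimality condition} for the alignment (Lemma~\ref{lemma:Q_criterion}), i.e.\ the identities $\bU^\top\bDelta_U\bSigma_{\star,1}^2=\cM_1(\bDelta_\cS)\cM_1(\bcS)^\top$ and their analogues for $\bV,\bW$. These identities are precisely what converts the $\bDelta_\cS$-part of the cross term into $\|\bcT_U\|_\fro^2$ (and the $\bcS$-block cross terms into $\|\bD_U\|_\fro^2$, etc.), so that the leading piece of the $\bU$-cross-term becomes $\langle\bcT_U,\bcT_U+\bcT_V+\bcT_W\rangle$. Summing the four expanded squares then produces
\[
-\eta(2-5\eta)\|\bcT_U+\bcT_V+\bcT_W\|_\fro^2-\eta(2-5\eta)\bigl(\|\bD_U\|_\fro^2+\|\bD_V\|_\fro^2+\|\bD_W\|_\fro^2\bigr)\le 0\quad\text{for }\eta\le 2/5,
\]
which is exactly where the step-size bound $2/5$ comes from; only the \emph{residual} pieces carry the extra $\epsilon_0$. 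In short: you pick the optimal alignment not merely to get a convenient coordinate system, but because its stationarity condition supplies an algebraic identity that makes the cross-block couplings assemble into a complete square. Without invoking Lemma~\ref{lemma:Q_criterion} your Step~4--5 cannot close.
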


\paragraph{Intuition of the proof.} Let us provide some intuitions to facilitate understanding by examining a toy case, where all factors become scalars, and the loss function with respect to the factor $\bbf=[u,v,w,s]^{\top}$ becomes
\begin{align*}
\cL(\bbf)=\frac{1}{2}(uvws- u_{\star} v_{\star} w_{\star }s_{\star})^2 =\frac{1}{2}(uvws-s_{\star})^2,
\end{align*}
where $u_{\star}=v_{\star}=w_{\star}=1$, and the ground truth is $\bbf_{\star}= [1,\,1,\,1,\, s_{\star}]^{\top}$. The gradient and the diagonal entries of the Hessian are given respectively as 
\begin{align*}
\nabla \cL(\bbf) &= (uvws-s_{\star})[vws,\, uws,\, uvs, \, uvw]^{\top}, \\
\Pdiag(\nabla^2\cL(\bbf)) &= \diag[(vws)^2,\, (uws)^2, \,(uvs)^2,\, (uvw)^2].
\end{align*}
Moreover, the Hessian matrix at the ground truth is given by 
\begin{align*}
\nabla^2 \cL(\bbf_{\star})= [s_{\star}, \,s_{\star},\, s_{\star}, \,1]^{\top} [s_{\star}, \,s_{\star}, \,s_{\star}, \,1].
\end{align*}
With these in mind, the ScaledGD update rule in \eqref{eq:iterates_TF} and the scaled distance in \eqref{eq:dist} reduce respectively to 
\begin{align*}
\bbf_{t+1} & = \bbf_{t}-\eta\Pdiag^{-1}(\nabla^2\cL(\bbf_{t}))\nabla\cL(\bbf_{t}),  \\
\dist(\bbf,\bbf_{\star})& =\inf_{\bQ=\diag[q_1,q_2,q_3,(q_1q_2q_3)^{-1}]} \left\|\Pdiag^{1/2}(\nabla^2\cL(\bbf_{\star}))(\bQ\bbf-\bbf_{\star})\right\|_2.
\end{align*}
Consequently, we can bound the distance between $\bbf_{t+1}$ and $\bbf_{\star}$ as
\begin{align*}
\dist(\bbf_{t+1}, \bbf_{\star}) &\overset{\mathrm{(i)}}{\le} 
\left\|\Pdiag^{1/2}(\nabla^2\cL(\bbf_{\star}))\left(\bQ_{t}\left(\bbf_{t}-\eta\Pdiag^{-1}(\nabla^2\cL(\bbf_{t}))\nabla\cL(\bbf_{t})\right)-\bbf_{\star}\right)\right\|_2 \\
&\overset{\mathrm{(ii)}}{=} \left\|\Pdiag^{1/2}(\nabla^2\cL(\bbf_{\star}))\left(\bQ_{t}\bbf_{t}-\eta\Pdiag^{-1}(\nabla^2\cL(\bQ_{t}\bbf_{t}))\nabla\cL(\bQ_{t}\bbf_{t})-\bbf_{\star}\right)\right\|_2 \\
&\overset{\mathrm{(iii)}}{\approx} \left\|\Big(\bI-\eta\Pdiag^{-1/2}(\nabla^2\cL(\bbf_{\star}))\nabla^2\cL(\bbf_{\star})\Pdiag^{-1/2}(\nabla^2\cL(\bbf_{\star}))\Big)\Pdiag^{1/2}(\nabla^2\cL(\bbf_{\star}))(\bQ_{t}\bbf_{t}-\bbf_{\star})\right\|_2\\
&\overset{\mathrm{(iv)}}{=} \left\| (\bI-\eta \one\one^{\top} )\Pdiag^{1/2}(\nabla^2\cL(\bbf_{\star}))(\bQ_{t}\bbf_{t}-\bbf_{\star})\right\|_2
\end{align*}
where (i) follows from replacing $\bQ$ by the optimal alignment matrix $\bQ_{t}$ between $\bbf_{t}$ and $\bbf_{\star}$, (ii) follows from the scaling invariance of the iterates, and (iii) holds approximately  as long as $\bQ_{t}\bbf_{t}$ is sufficiently close to $\bbf_{\star}$, which is made precise in the formal proof. The last line (iv) follows from that the scaled Hessian matrix obeys
\begin{align*}
\Pdiag^{-1/2}(\nabla^2\cL(\bbf_{\star}))\nabla^2\cL(\bbf_{\star}) \Pdiag^{-1/2}(\nabla^2\cL(\bbf_{\star}))=\one\one^{\top}.
\end{align*}
By the optimality condition for $\bQ_t$ (see Lemma~\ref{lemma:Q_criterion}), it follows that $\Pdiag^{1/2}(\nabla^2\cL(\bbf_{\star}))(\bQ_{t}\bbf_{t}-\bbf_{\star})$ is approximately parallel to $\one$. Thus, $\dist(\bbf_{t+1}, \bbf_{\star})$ contracts at a constant rate as long as the step size $\eta$ is set as a small constant obeying $0 < \eta \le 2/5$.

\subsection{Proof outline for tensor completion (Theorem~\ref{thm:TC})}
Armed with the insights from the tensor factorization case, we now provide a proof outline of our main theorems on tensor completion and tensor regression, both of which can be viewed as perturbations of tensor factorization with incomplete measurements, combined with properly designed initialization schemes. 
We start with the guarantee for the spectral initialization for tensor completion. 
\begin{lemma}[Initialization for tensor completion]\label{lemma:init_TC} Suppose that $\bcX_{\star}$ is $\mu$-incoherent, $n_k\gtrsim \epsilon_{0}^{-1}\mu r_k^{3/2}\kappa^2$ for $k=1,2,3$, and that $p$ satisfies
\begin{align*}
pn_1n_2n_3 \gtrsim\epsilon_{0}^{-1}\sqrt{n_1n_2n_3}\mu^{3/2}r^{5/2}\kappa^{2}\log^{3} n + \epsilon_{0}^{-2}n\mu^{2} r^{4}\kappa^{4} \log^{5} n
\end{align*}
for some small constant $\epsilon_{0} > 0$. Then with overwhelming probability (i.e.~at least $1-c_1n^{-c_2}$), the spectral initialization before projection $\bF_{+}=(\bU_{+},\bV_{+},\bW_{+},\bS_{+})$ for low-rank tensor completion in Algorithm~\ref{alg:TC} satisfies
\begin{align*}
\dist(\bF_{+},\bF_{\star}) \le \epsilon_{0}\sigma_{\min}(\bcX_{\star}).
\end{align*}
\end{lemma}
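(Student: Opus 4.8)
The plan is to control the spectral initialization for tensor completion in three stages: a Gram-matrix concentration bound, an application of a Davis--Kahan type argument to pass from eigenspace perturbation to factor-matrix error, and finally a conversion from factor-wise error to the scaled distance $\dist(\bF_+,\bF_\star)$. The key object is the diagonally-deleted Gram matrix $\bG_1 \coloneqq \Poffdiag(p^{-2}\cM_1(\bcY)\cM_1(\bcY)^\top)$, whose population counterpart (in expectation, after removing diagonal bias) is $\cM_1(\bcX_\star)\cM_1(\bcX_\star)^\top = \bU_\star \bSigma_{\star,1}^2 \bU_\star^\top$. I would first show, via a matrix Bernstein / entrywise concentration argument exploiting $\mu$-incoherence, that $\|\bG_1 - \cM_1(\bcX_\star)\cM_1(\bcX_\star)^\top\|_{\op}$ is small relative to $\sigma_{\min}^2(\bcX_\star)$ — this is where the sample-complexity conditions on $p$ enter, the two terms in the bound on $pn_1n_2n_3$ matching the two regimes (the ``cross'' term $\sqrt{n_1n_2n_3}$ and the ``per-row'' term $n$) that govern the spectral norm of the noise in an unbalanced $n_1 \times (n_2 n_3)$ matrix. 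The diagonal deletion is precisely what kills the $O(p^{-1})$ diagonal bias that would otherwise dominate when $p$ is small, following the approach of \cite{cai2019subspace,xia2021statistically}.

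Next, with the Gram perturbation in hand, I would invoke the eigenspace perturbation bound (Davis--Kahan / Wedin) to conclude that the top-$r_1$ eigenspace $\bU_+$ of $\bG_1$ is close to $\bU_\star$, i.e.~$\|\bU_+ \bU_+^\top - \bU_\star \bU_\star^\top\|_{\op} \lesssim \epsilon_0/\kappa$ roughly, and similarly for $\bV_+$, $\bW_+$. I would then choose the optimal orthonormal alignment matrices $\bR_k$ and bound $\|\bU_+\bR_1 - \bU_\star\|_{\fro}$, etc.~in Frobenius norm (gaining a $\sqrt{r}$ factor), and separately control $\|\bcS_+ - (\bR_1^\top,\bR_2^\top,\bR_3^\top)\bcdot\bcS_\star\|_{\fro}$ where $\bcS_+ = p^{-1}(\bU_+^\top,\bV_+^\top,\bW_+^\top)\bcdot\bcY$; the latter follows by writing $p^{-1}\bcY = \bcX_\star + \bcE$ with $\bcE$ the (centered) sampling noise, projecting onto the estimated subspaces, and combining the subspace errors with a bound on $\|(\bU_+^\top,\bV_+^\top,\bW_+^\top)\bcdot\bcE\|_{\fro}$ — this last tensor-noise term requires its own concentration estimate, which is one of the delicate pieces. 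To convert these factor-wise bounds into $\dist(\bF_+,\bF_\star)$, I would use the definition \eqref{eq:dist} with $\bQ_k = \bR_k$, bound each term $\|(\bU_+\bR_1 - \bU_\star)\bSigma_{\star,1}\|_{\fro} \le \|\bU_+\bR_1 - \bU_\star\|_{\fro}\,\sigma_{\max}(\bcX_\star)$, and absorb the resulting $\kappa$ factors into the sample-complexity requirement so that the total is $\le \epsilon_0\sigma_{\min}(\bcX_\star)$.

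I expect the main obstacle to be the sharp concentration analysis of the diagonally-deleted Gram matrix under the Bernoulli model with a genuinely unbalanced matricization. The matrix $\cM_1(\bcY)\cM_1(\bcY)^\top$ is a sum over the $n_2 n_3$ columns, each contributing a rank-one term, and one must carefully separate the diagonal and off-diagonal contributions, track the dependence on $\mu$, $r$, and $\kappa$ through $\|\bU_\star\|_{2,\infty}$, $\|\cM_k(\bcX_\star)\|_{\infty}$, and related incoherence-type quantities, and handle the fact that the matricizations along different modes are correlated. The two-term form of the sample-complexity condition (with the $\epsilon_0^{-1}$ term scaling like $\sqrt{n_1n_2n_3}$ and the $\epsilon_0^{-2}$ term like $n$) is a tell-tale sign that a Bernstein bound with both a variance proxy and an $L^\infty$ bound on the summands is needed, and that the $\ell_{2,\infty}$ (row-wise) control must be propagated through a leave-one-out or similar decoupling argument. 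A secondary technical point is ensuring the pre-projection bound is stated for $\bF_+$ and not $\bF_0 = \cP_B(\bF_+)$; the projection step is handled separately (it only contracts the distance under the incoherence assumption), so here I would only need the clean subspace-plus-core estimate, leaving the non-expansiveness of $\cP_B$ for the main proof of Theorem~\ref{thm:TC}.
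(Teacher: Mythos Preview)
Your approach is broadly correct and shares the essential ingredient with the paper --- the concentration of the diagonally-deleted Gram matrix $\Poffdiag(p^{-2}\cM_k(\bcY)\cM_k(\bcY)^\top)$ around $\cM_k(\bcX_\star)\cM_k(\bcX_\star)^\top$ (the paper invokes this as Lemma~\ref{lemma:P_Omega_quad}, imported from \cite{chen2021spectral,cai2019subspace}), plus a bound on $\|(p^{-1}\cP_\Omega - \cI)(\bcX_\star)\|_{\op}$ (Lemma~\ref{lemma:P_Omega_fixed}). However, the paper takes a cleaner route to pass from these concentration bounds to $\dist(\bF_+,\bF_\star)$. Rather than going factor-by-factor via Davis--Kahan and then plugging alignment matrices into the definition of $\dist$, the paper first applies Lemma~\ref{lemma:Procrustes} to obtain $\dist(\bF_+,\bF_\star)\le(\sqrt{2}+1)^{3/2}\|(\bU_+,\bV_+,\bW_+)\bcdot\bcS_+-\bcX_\star\|_{\fro}$, and then bounds the tensor reconstruction error directly via the orthogonal decomposition
\[
\|(\bP_U,\bP_V,\bP_W)\bcdot(p^{-1}\bcY)-\bcX_\star\|_{\fro}^2 \le \|(\bP_U,\bP_V,\bP_W)\bcdot(p^{-1}\bcY-\bcX_\star)\|_{\fro}^2 + \sum_{k=1}^3\|\bP_{k,\perp}\cM_k(\bcX_\star)\|_{\fro}^2,
\]
where the first term is handled by the rank-$\br$ spectral-to-Frobenius inequality plus Lemma~\ref{lemma:P_Omega_fixed}, and each $\|\bP_{k,\perp}\cM_k(\bcX_\star)\|_{\fro}$ is controlled through $\|\bG-\bG_\star\|_{\op}$ via $\|\bP_{U_\perp}\bG_\star\|_{\op}\le 2\|\bG-\bG_\star\|_{\op}$ (using $\sigma_{r_1+1}(\bG_\star)=0$ and Weyl). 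This bypasses both the separate concentration estimate for the core tensor that you flag as delicate, and the crude step $\|(\bU_+\bR_1-\bU_\star)\bSigma_{\star,1}\|_{\fro}\le\|\bU_+\bR_1-\bU_\star\|_{\fro}\sigma_{\max}(\bcX_\star)$, which would cost you an extra factor of~$\kappa$. Two smaller remarks: no leave-one-out or decoupling is needed here --- the Gram concentration is a direct matrix Bernstein result applied mode by mode with a union bound --- and the condition $n_k\gtrsim\epsilon_0^{-1}\mu r_k^{3/2}\kappa^2$ enters precisely through the $\|\cM_k(\bcX_\star)\|_{2,\infty}^2$ term in Lemma~\ref{lemma:P_Omega_quad}.
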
 

Under a suitable sample size condition, Lemma~\ref{lemma:init_TC} guarantees that $\dist(\bF_{+},\bF_{\star})\le \epsilon_{0} \sigma_{\min}(\bcX_{\star})$ for some small constant $\epsilon_0$. To proceed, we need to know what would happen for the spectral estimate $\bF_0 =\cP_{B} \big(\bF_{+}\big)$ after projection. In fact, the scaled projection is non-expansive w.r.t.~the scaled distance. More importantly, the output is guaranteed to be incoherent. Both properties are stated in the following lemma.

\begin{lemma}[Properties of scaled projection]\label{lemma:scaled_proj} Suppose that $\bcX_{\star}$ is $\mu$-incoherent, and $\dist({\bF}_{+},\bF_{\star})\le \epsilon\sigma_{\min}(\bcX_{\star})$ for some $\epsilon<1$. Set $B = C_B\sqrt{\mu r}\sigma_{\max}(\bcX_{\star})$ for some constant $C_B\ge (1+\epsilon)^{3}$, then $\bF=(\bU,\bV,\bW,\bcS)\coloneqq\cP_{B}({\bF}_{+})$ satisfies the non-expansiveness property
\begin{align*}
\dist(\bF,\bF_{\star})\le\dist({\bF}_{+},\bF_{\star}), 
\end{align*}
and the incoherence condition
\begin{align} \label{eq:TC_cond_2inf}
\sqrt{n_{1}}\|\bU\breve{\bU}^{\top}\|_{2,\infty}\vee \sqrt{n_{2}}\|\bV\breve{\bV}^{\top}\|_{2,\infty} \vee \sqrt{n_{3}}\|\bW\breve{\bW}^{\top}\|_{2,\infty}\le B.
\end{align}
\end{lemma}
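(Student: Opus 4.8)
The plan is to treat the two claims separately, starting with the easier incoherence bound \eqref{eq:TC_cond_2inf}, then establishing non-expansiveness.

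First I would verify the incoherence condition. The key observation is the algebraic identity $\bU\breve{\bU}^\top = \cM_1((\bU,\bV,\bW)\bcdot\bcS)$, so that the rows of $\bU\breve{\bU}^\top$ are the mode-1 fibers of the reconstructed tensor. Now consider two cases for each row $i_1$. If the scaling factor triggered, i.e.~$\sqrt{n_1}\|\bU_+(i_1,:)\breve{\bU}_+^\top\|_2 > B$, then by definition $\bU(i_1,:) = \frac{B}{\sqrt{n_1}\|\bU_+(i_1,:)\breve\bU_+^\top\|_2}\bU_+(i_1,:)$, and since only $\bU$ changed (not $\bV$, $\bW$, $\bcS$), one has $\breve\bU = \breve\bU_+$, so $\sqrt{n_1}\|\bU(i_1,:)\breve\bU^\top\|_2 = B$ exactly. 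If the scaling did not trigger, then $\bU(i_1,:) = \bU_+(i_1,:)$ and $\sqrt{n_1}\|\bU(i_1,:)\breve\bU^\top\|_2 = \sqrt{n_1}\|\bU_+(i_1,:)\breve\bU_+^\top\|_2 \le B$ by assumption. Either way the bound holds; the same argument applies to $\bV$ and $\bW$. One subtlety: scaling $\bU$ also rescales the tensor, so to be careful I should confirm that $\breve\bU$ is computed from $(\bU,\bV,\bW,\bcS)$ but $\breve\bU$ only depends on $\bV,\bW,\bcS$ (see \eqref{eq:breve_uvw}), hence is unchanged --- this is exactly why the scaled projection is designed this way, and makes the rows decouple cleanly.

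Next, for non-expansiveness I would use the variational characterization of $\dist$ in \eqref{eq:dist}. Let $\bQ_1,\bQ_2,\bQ_3$ be the optimal alignment matrices achieving $\dist(\bF_+,\bF_\star)$. Plugging the \emph{same} $\bQ_k$'s into the infimum defining $\dist(\bF,\bF_\star)$ gives an upper bound, so it suffices to show term-by-term contraction, e.g.~$\|(\bU\bQ_1-\bU_\star)\bSigma_{\star,1}\|_\fro \le \|(\bU_+\bQ_1-\bU_\star)\bSigma_{\star,1}\|_\fro$ (and likewise for $\bV,\bW$), while the core-tensor term is literally unchanged since $\bcS=\bcS_+$. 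Since the $\bU$-term decomposes over rows as $\sum_{i_1}\|(\bU(i_1,:)\bQ_1 - \bU_\star(i_1,:))\bSigma_{\star,1}\|_2^2$, it suffices to prove row-wise contraction. This is a statement about projecting onto a scaled Euclidean ball: $\bU(i_1,:)$ is obtained from $\bU_+(i_1,:)$ by a shrinkage toward the origin that is active precisely when $\|\bU_+(i_1,:)\breve\bU_+^\top\|_2 > B/\sqrt{n_1}$. The reason this contracts the distance to $\bU_\star(i_1,:)$ is that $\bU_\star(i_1,:)$ lies inside (or on the boundary of) the relevant ball --- this is where the choice $B = C_B\sqrt{\mu r}\,\sigma_{\max}(\bcX_\star)$ with $C_B \ge (1+\epsilon)^3$ enters: by $\mu$-incoherence and the proximity $\dist(\bF_+,\bF_\star)\le\epsilon\sigma_{\min}(\bcX_\star)$, one controls $\|\bU_\star\breve\bU_{\star}^\top\|_{2,\infty}$ (which is $\sqrt{n_1}^{-1}$ times a quantity of order $\sqrt{\mu r}\,\sigma_{\max}$) and then transfers this to $\breve\bU_+$ via the distance bound, giving $\sqrt{n_1}\|\bU_\star(i_1,:)\breve\bU_+^\top\|_2 \le B$.

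The main obstacle, I expect, is making the row-wise contraction argument rigorous under the \emph{twisted} geometry: the shrinkage is defined using the norm $\|\bU_+(i_1,:)\breve\bU_+^\top\|_2$ but we need contraction in the norm $\|(\cdot)\bQ_1\bSigma_{\star,1}\|_2$ --- these involve different matrices ($\breve\bU_+$ versus $\bQ_1\bSigma_{\star,1}$). The clean way around this is to follow \cite{tong2021accelerating}: show that after the alignment, $\bU\bQ_1\approx\bU_\star$ and $\breve\bU_+\bQ_1^{-\top}\approx\breve\bU_\star$ with $\breve\bU_\star = \bSigma_{\star,1}(\cdots)$ in a suitable sense, so that the scaled projection in the original coordinates corresponds (up to controllable error) to a genuine Euclidean projection in the aligned coordinates onto a ball containing $\bU_\star$, where a projection onto a convex set is non-expansive. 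Quantifying the "up to controllable error" --- i.e.~showing the discrepancy between $\breve\bU_+$ and $\breve\bU_\star\bQ_1^\top$ is small enough not to destroy the contraction, using $\dist(\bF_+,\bF_\star)\le\epsilon\sigma_{\min}$ and $\epsilon<1$ --- is the technical heart of the argument, and is presumably where the condition $C_B\ge(1+\epsilon)^3$ (the cube reflecting the three factors entering $\breve\bU$) is precisely tuned.
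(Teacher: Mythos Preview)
Your incoherence argument has a genuine gap: you claim $\breve{\bU} = \breve{\bU}_+$ because ``only $\bU$ changed'', but the scaled projection $\cP_B$ (see \eqref{eq:scaled_proj}) shrinks \emph{all three} factors $\bU_+, \bV_+, \bW_+$ simultaneously, so $\bV \ne \bV_+$ and $\bW \ne \bW_+$ in general, and hence $\breve{\bU}$ (which depends on $\bV, \bW, \bcS$) differs from $\breve{\bU}_+$. The fix is easy but essential: each row of $\bV$ and $\bW$ is scaled by a factor in $[0,1]$, so expanding $\|\bU(i_1,:)\breve{\bU}^\top\|_2^2 = \sum_{i_2,i_3}\langle \bU(i_1,:)\cM_1(\bcS),\, \bW(i_3,:)\otimes\bV(i_2,:)\rangle^2$ and bounding those factors by $1$ yields $\|\bU(i_1,:)\breve{\bU}^\top\|_2 \le \|\bU(i_1,:)\breve{\bU}_+^\top\|_2$, after which your two-case analysis (combined into one line) gives the bound $B/\sqrt{n_1}$.

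For non-expansiveness your outline is correct in spirit, but the resolution of the ``twisted geometry'' obstacle is cleaner and \emph{exact}, not approximate. The key observation you are missing is that the shrinkage acts on each row of $\bU_+$ by a \emph{scalar} multiple, and scalar multiplication commutes with any linear map; hence $\bU(i_1,:)\bQ_1\bSigma_{\star,1} = (1\wedge\lambda_{i_1})\,\bU_+(i_1,:)\bQ_1\bSigma_{\star,1}$ identically, with no error term whatsoever. One then invokes the elementary fact (Claim~5 in \cite{tong2021accelerating}): if $\lambda \ge \|\bu_\star\|_2/\|\bu\|_2$ then $\|(1\wedge\lambda)\bu - \bu_\star\|_2 \le \|\bu - \bu_\star\|_2$. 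The bound $\|\breve{\bU}_+\bQ_1^{-\top}\bSigma_{\star,1}^{-1}\|_{\op} \le (1+\epsilon)^3$ is used only to verify this hypothesis---it gives $\|\bU_+(i_1,:)\breve{\bU}_+^\top\|_2 \le (1+\epsilon)^3\|\bU_+(i_1,:)\bQ_1\bSigma_{\star,1}\|_2$, which combined with $\sqrt{n_1}\|\bU_\star(i_1,:)\bSigma_{\star,1}\|_2 \le \sqrt{\mu r}\,\sigma_{\max}(\bcX_\star) \le B(1+\epsilon)^{-3}$ (from incoherence and $C_B \ge (1+\epsilon)^3$) yields exactly $\lambda_{i_1} \ge \|\bu_\star\|_2/\|\bu\|_2$. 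So there is no ``discrepancy to quantify'' in the contraction step itself; the perturbation bounds enter only to check the scalar hypothesis.
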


Now we are ready to state the following lemma that ensures the linear contraction of the iterative refinements given by the ScaledGD updates.

\begin{lemma}[Local refinements for tensor completion]\label{lemma:contraction_TC} Suppose that $\bcX_{\star}$ is $\mu$-incoherent, and that $p$ satisfies
\begin{align*}
pn_1n_2n_3 \gtrsim \sqrt{n_1n_2n_3}\mu^{3/2}r^{2}\kappa^{3}\log^3 n + n\mu^3 r^4\kappa^{6} \log^5 n.
\end{align*}
Under an event $\cE$ which happens with overwhelming probability, for all $t\geq 0$, if the $t$-th iterate satisfies $\dist(\bF_{t},\bF_{\star}) \le \epsilon\sigma_{\min}(\bcX_{\star})$ for some small constant $\epsilon$, then $\|(\bU_{t},\bV_{t},\bW_{t})\bcdot\bcS_{t}-\bcX_{\star}\|_{\fro} \le 3\dist(\bF_{t},\bF_{\star})$. In addition, if the $t$-th iterate satisfies the incoherence condition 
\begin{align*}
\sqrt{n_1}\|\bU_{t}\breve{\bU}_{t}^{\top}\|_{2,\infty} \vee \sqrt{n_2}\|\bV_{t}\breve{\bV}_{t}^{\top}\|_{2,\infty} \vee \sqrt{n_3}\|\bW_{t}\breve{\bW}_{t}^{\top}\|_{2,\infty} \le B,
\end{align*}
with $B = C_B\sqrt{\mu r}\sigma_{\max}(\bcX_{\star})$ for some constant $C_B\ge (1+\epsilon)^{3}$, then the $(t+1)$-th iterate of Algorithm~\ref{alg:TC} satisfies
\begin{align*}
\dist(\bF_{t+1},\bF_{\star}) \le (1-0.6\eta) \dist(\bF_{t},\bF_{\star}),
\end{align*}
and the incoherence condition
\begin{align*}
\sqrt{n_1}\|\bU_{t+1}\breve{\bU}_{t+1}^{\top}\|_{2,\infty} \vee \sqrt{n_2}\|\bV_{t+1}\breve{\bV}_{t+1}^{\top}\|_{2,\infty} \vee \sqrt{n_3}\|\bW_{t+1}\breve{\bW}_{t+1}^{\top}\|_{2,\infty} \le B.
\end{align*}
\end{lemma}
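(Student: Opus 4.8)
The first claim is immediate: by the scaled-distance equivalence recorded in Section~\ref{sec:analysis}, whenever $\epsilon\le 0.2$ one has $\|(\bU_t,\bV_t,\bW_t)\bcdot\bcS_t-\bcX_\star\|_\fro\le 3\dist(\bF_t,\bF_\star)$, and the constant $3$ is exactly the one appearing there. For the remaining conclusions, note that $\bF_{t+1}=\cP_B(\bF_{t+})$ where $\bF_{t+}=({\bU}_{t+},{\bV}_{t+},{\bW}_{t+},{\bcS}_{t+})$ is the pre-projection iterate; hence it suffices to prove the one-step contraction $\dist(\bF_{t+},\bF_\star)\le(1-0.6\eta)\dist(\bF_t,\bF_\star)$ for $\bF_{t+}$, since $(1-0.6\eta)\epsilon<1$ and Lemma~\ref{lemma:scaled_proj} then yields both $\dist(\bF_{t+1},\bF_\star)\le\dist(\bF_{t+},\bF_\star)\le(1-0.6\eta)\dist(\bF_t,\bF_\star)$ and the incoherence condition for $\bF_{t+1}$ for free. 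Finally, since ScaledGD is covariant under the reparameterization $(\bU_t,\bV_t,\bW_t,\bcS_t)\mapsto(\bU_t\bQ_1,\bV_t\bQ_2,\bW_t\bQ_3,(\bQ_1^{-1},\bQ_2^{-1},\bQ_3^{-1})\bcdot\bcS_t)$ --- which I would verify directly, using $\breve{\bU}_t\mapsto\breve{\bU}_t\bQ_1^{-\top}$ and the resulting cancellation with the preconditioner --- I may assume $\bF_t$ is optimally aligned with $\bF_\star$ and upper-bound $\dist^2(\bF_{t+},\bF_\star)$ by its value at the identity alignment.

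\paragraph{Decomposition into factorization plus perturbation.} The plan is to write $\tfrac1p\cP_\Omega(\bcX_t)-\bcY=(\bcX_t-\bcX_\star)+\bcE_t$ with $\bcE_t\coloneqq(\tfrac1p\cP_\Omega-\cI)(\bcX_t-\bcX_\star)$ ($\cI$ the identity), so each updated factor splits as the tensor-factorization ScaledGD update of \eqref{eq:iterates_TF} minus an $\eta$-scaled perturbation, e.g.
\begin{align*}
\bU_{t+}-\bU_\star=\big(\bU_t-\bU_\star-\eta\cM_1(\bcX_t-\bcX_\star)\breve{\bU}_t(\breve{\bU}_t^\top\breve{\bU}_t)^{-1}\big)-\eta\,\cM_1(\bcE_t)\breve{\bU}_t(\breve{\bU}_t^\top\breve{\bU}_t)^{-1},
\end{align*}
and analogously for $\bV,\bW$ and the core $\bcS$. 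For the factorization part I would reuse the estimates behind Theorem~\ref{thm:TF}: measured in the scaled norms $\|(\cdot)\bSigma_{\star,k}\|_\fro$ for the mode factors and the plain Frobenius norm for the $\bcS$-block, the sum of squared factorization terms is at most $(1-0.7\eta)^2\dist^2(\bF_t,\bF_\star)$ up to a benign higher-order term of order $\eta\,\dist^3(\bF_t,\bF_\star)/\sigma_{\min}^2(\bcX_\star)$, absorbed when $\epsilon$ is small. It then remains to bound the perturbation terms in the same norms. Using $\dist(\bF_t,\bF_\star)\le\epsilon\sigma_{\min}(\bcX_\star)$ to get $\breve{\bU}_t^\top\breve{\bU}_t\approx\breve{\bU}_\star^\top\breve{\bU}_\star=\bSigma_{\star,1}^2$ (the last equality being \eqref{eq:ground_truth_condition}) and $\breve{\bU}_t\approx\breve{\bU}_\star$ up to $(1+O(\epsilon))$ factors, the $\bU$-perturbation in scaled norm reduces, up to lower order, to $\eta\|\cM_1(\bcE_t)\breve{\bU}_\star\bSigma_{\star,1}^{-1}\|_\fro$, and similarly the $\bcS$-perturbation reduces to $\eta\|(\bU_\star^\top,\bV_\star^\top,\bW_\star^\top)\bcdot\bcE_t\|_\fro$.

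\paragraph{Controlling $\bcE_t$ under the event $\cE$.} The event $\cE$ bundles the concentration bounds needed for $\tfrac1p\cP_\Omega-\cI$. The key point is that $\bcX_t-\bcX_\star$ has multilinear rank at most $2\br$ and is incoherent --- with parameter of order $\mu r\kappa^2$ up to constants, since the maintained incoherence condition on $\bF_t$ and the incoherence of $\bcX_\star$ together bound $\|\bcX_t-\bcX_\star\|_\infty$ through the projection radius $B\asymp\sqrt{\mu r}\sigma_{\max}(\bcX_\star)$. For such tensors I would establish bounds of the schematic form
\begin{align*}
\max_k\|\cM_k(\bcE_t)\breve{\bU}_\star\bSigma_{\star,k}^{-1}\|_\fro\;\vee\;\|(\bU_\star^\top,\bV_\star^\top,\bW_\star^\top)\bcdot\bcE_t\|_\fro\;\le\;\delta\,\|\bcX_t-\bcX_\star\|_\fro
\end{align*}
with $\delta$ a small absolute fraction, valid as soon as $pn_1n_2n_3\gtrsim\sqrt{n_1n_2n_3}\mu^{3/2}r^2\kappa^3\log^3 n+n\mu^3r^4\kappa^6\log^5 n$; these would come from decoupling followed by matrix Bernstein applied mode by mode (or an entropy/chaining argument over the set of incoherent low-rank tensors), using incoherence to control both $\|\bcX_t-\bcX_\star\|_\infty$ and the operator norms of partially-observed factor products. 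Combining with $\|\bcX_t-\bcX_\star\|_\fro\le 3\dist(\bF_t,\bF_\star)$ makes each perturbation term at most about $0.1\eta\,\dist(\bF_t,\bF_\star)$, so that $\dist^2(\bF_{t+},\bF_\star)\le\big((1-0.7\eta)+0.1\eta\big)^2\dist^2(\bF_t,\bF_\star)\le(1-0.6\eta)^2\dist^2(\bF_t,\bF_\star)$ for $0<\eta\le 2/5$; taking square roots and invoking Lemma~\ref{lemma:scaled_proj} closes all three conclusions.

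\paragraph{Main obstacle.} The hard part --- and the chief source of difficulty relative to the matrix setting --- is establishing the tensor concentration inequalities comprising $\cE$: near-isometry of $\tfrac1p\cP_\Omega-\cI$ on (differences of) incoherent low-rank Tucker tensors, in the mixed and projected norms dictated by the four factor updates. One must track incoherence separately in all three unfoldings; handle that $\bcX_t-\bcX_\star$ has multilinear rank only $\le 2\br$, so $\bU_\star$ no longer spans its mode-$1$ range and auxiliary subspaces must enter the argument; and, most delicately, control the genuinely trilinear quantity $(\bU_\star^\top,\bV_\star^\top,\bW_\star^\top)\bcdot\bcE_t$ from the core update, which has no matrix analogue and is what drives the second term in the sample-complexity condition. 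A further subtlety is that the iterate's incoherence is inflated by $B\asymp\sqrt{\mu r}\sigma_{\max}(\bcX_\star)$, so the concentration estimates must remain strong enough with this inflated parameter to beat the $(1-0.7\eta)$ factorization contraction; and obtaining the $n^{3/2}$ rather than $n^2$ scaling requires exploiting the full multilinear structure rather than unfolding crudely.
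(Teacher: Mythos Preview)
Your high-level architecture is correct and matches the paper: reduce via Lemma~\ref{lemma:scaled_proj} to the pre-projection iterate $\bF_{t+}$, split each factor update into the tensor-factorization part plus an $\eta$-scaled perturbation driven by $\bcE_t=(p^{-1}\cP_\Omega-\cI)(\bcX_t-\bcX_\star)$, reuse the analysis of Theorem~\ref{thm:TF} for the factorization part, and close with Lemma~\ref{lemma:scaled_proj}. The paper also expands the squared norms term by term (rather than your triangle-inequality route), but either works.

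The gap is in how you propose to control the perturbation. Your reduction ``$\breve{\bU}_t\approx\breve{\bU}_\star$ up to $(1+O(\epsilon))$ factors, so the $\bU$-perturbation reduces, up to lower order, to $\eta\|\cM_1(\bcE_t)\breve{\bU}_\star\bSigma_{\star,1}^{-1}\|_\fro$'' does not go through: the discarded error is $\cM_1(\bcE_t)\big[\breve{\bU}_t(\breve{\bU}_t^\top\breve{\bU}_t)^{-1}\bSigma_{\star,1}-\breve{\bU}_\star\bSigma_{\star,1}^{-1}\big]$, and while the bracket has operator norm $O(\epsilon)$, the factor $\cM_1(\bcE_t)$ is \emph{not} small (its Frobenius norm scales like $p^{-1/2}\|\bcX_t-\bcX_\star\|_\fro$). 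The error term has the same structure as the ``main'' term and needs the same uniform concentration machinery, so nothing is gained. The paper never attempts this reduction; it keeps the iterate-dependent $\breve{\bU},\bV,\bW$ throughout.

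What the paper actually does for the perturbation is more specific than ``decoupling plus Bernstein'' or ``chaining over incoherent tensors.'' It writes each perturbation term (including the squared one, via the variational form) as an inner product $\langle\bcZ_1,(p^{-1}\cP_\Omega-\cI)(\bcZ_2)\rangle$ with $\bcZ_2=\bcX_t-\bcX_\star$, and then decomposes \emph{both} $\bcZ_1$ and $\bcZ_2$ along the lines of \eqref{eq:decomp_T}, peeling off $\bDelta_V,\bDelta_W,\bDelta_\cS$ so that each resulting piece falls under one of two concentration lemmas: (i)~Lemma~\ref{lemma:P_Omega_tangent}, a tangent-space RIP proved by matrix Bernstein, which applies only when \emph{both} tensors have two of their three factor slots equal to $\bU_\star,\bV_\star,\bW_\star$; and (ii)~Lemma~\ref{lemma:P_Omega_2inf}, which holds simultaneously for \emph{arbitrary} factored tensors with the bound controlled by $\ell_{2,\infty}$ norms of the factors, and whose proof reduces to the operator norm of $(p^{-1}\cP_\Omega-\cI)$ applied to the all-ones tensor. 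The incoherence hypothesis on $\bF_t$ feeds into (ii) through the $\ell_{2,\infty}$ bounds of Lemma~\ref{lemma:TC_perturb_bounds}. This two-lemma decomposition is the technical heart, and its output is the composite parameter $\delta$ in \eqref{eq:delta_Def}, whose smallness is exactly the stated sample-complexity condition. A minor correction: the core-update term is not singled out as the driver of the second sample-complexity term; all four factor updates contribute symmetrically, and that term arises from the $\ell_{2,\infty}$ lemma across all of them.
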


By combining Lemma~\ref{lemma:init_TC} and Lemma~\ref{lemma:scaled_proj}, we can ensure that the spectral initialization $\bF_{0}=\cP_{B}(\bF_{+})$ satisfies the conditions required in Lemma~\ref{lemma:contraction_TC}, which further enables us to repetitively apply Lemma~\ref{lemma:contraction_TC} to finish the proof of Theorem~\ref{thm:TC}.
The proofs of the above three lemmas are provided in Appendix~\ref{sec:proof_TC}.

\subsection{Proof outline for tensor regression (Theorem~\ref{thm:TR})}

Now we turn to the proof outline for tensor regression (cf.~Theorem~\ref{thm:TR}). To begin with, we show that the local linear convergence of ScaledGD can be guaranteed more generally, as long as the measurement operator $\cA(\cdot)$ satisfies the so-called tensor restricted isometry property (TRIP), which is formally defined as follows.

\begin{definition}[TRIP~\cite{rauhut2017low}]\label{def:TRIP} The linear map $\cA:\RR^{n_1\times n_2\times n_3}\mapsto\RR^{m}$ is said to obey the rank-$\br$ TRIP with $\delta_{\br}\in(0,1)$, if for all tensor $\bcX\in\RR^{n_1\times n_2\times n_3}$ of multilinear rank at most $\br=(r_1,r_2,r_3)$, one has
\begin{align*}
(1-\delta_{\br})\|\bcX\|_{\fro}^2 \le \|\cA(\bcX)\|_{\fro}^2 \le (1+\delta_{\br})\|\bcX\|_{\fro}^2.
\end{align*}
\end{definition}
If $\cA(\cdot)$ satisfies rank-$2\br$ TRIP with $\delta_{2\br}\in (0,1)$, then for any two tensors $\bcX_1, \bcX_2 \in\RR^{n_1\times n_2\times n_3}$ of multilinear rank at most $\br=(r_1,r_2,r_3)$, we have 
\begin{align*}
(1-\delta_{2\br})\|\bcX_1 -\bcX_2 \|_{\fro}^2 \le \|\cA(\bcX_1 -\bcX_2)\|_{\fro}^2 \le (1+\delta_{2\br})\|\bcX_1 -\bcX_2 \|_{\fro}^2.
\end{align*}
In other words, the distance between any pair of rank-$\br$ tensors $\bcX_1$ and $\bcX_2$ is approximately preserved after the linear map $\cA(\cdot)$. The TRIP has been investigated extensively, where \cite[Theorem~2]{rauhut2017low} stated that if $\bcA_i$'s are composed of i.i.d.~sub-Gaussian entries, TRIP holds with high probability provided that $m \gtrsim \delta_{\br}^{-2}  (nr+r^3)$. TRIP also holds for more structured measurement ensembles such as the random Fourier mapping \cite{rauhut2017low}. With the TRIP of $\cA(\cdot)$ in hand, we have the following theorem regarding the local linear convergence of ScaledGD as long as the iterates are close to the ground truth.
\begin{lemma}[Local refinements for tensor regression]\label{lemma:contraction_TR} Suppose that $\cA(\cdot)$ obeys the $2\br$-TRIP with a small constant $\delta_{2\br}\lesssim 1$. If the $t$-th iterate satisfies $\dist(\bF_{t},\bF_{\star}) \le \epsilon \sigma_{\min}(\bcX_{\star})$ for some small constant $\epsilon$, then $\|(\bU_{t},\bV_{t},\bW_{t})\bcdot\bcS_{t}-\bcX_{\star}\|_{\fro} \le 3\dist(\bF_{t},\bF_{\star})$. In addition, if the step size obeys $0 < \eta < 2/5$, then the $(t+1)$-th iterate of Algorithm~\ref{alg:TR} satisfies 
\begin{align*}
\dist(\bF_{t+1},\bF_{\star}) \le (1-0.6\eta) \dist(\bF_{t},\bF_{\star}).
\end{align*}
\end{lemma}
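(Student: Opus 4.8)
The plan is to dispatch the first assertion from the scaled-distance comparison and to obtain the contraction by treating the tensor-regression step as a TRIP-controlled perturbation of the tensor-factorization step already analyzed in Theorem~\ref{thm:TF}. The first claim is immediate: since $\epsilon$ is a small constant we have $\dist(\bF_t,\bF_\star)\le0.2\,\sigma_{\min}(\bcX_\star)$, so the two-sided comparison stated in Section~\ref{sec:analysis} and proven in Appendix~\ref{subsec:scaled_distance} gives $\tfrac13\|(\bU_t,\bV_t,\bW_t)\bcdot\bcS_t-\bcX_\star\|_\fro\le\dist(\bF_t,\bF_\star)$, i.e.\ $\|(\bU_t,\bV_t,\bW_t)\bcdot\bcS_t-\bcX_\star\|_\fro\le3\,\dist(\bF_t,\bF_\star)$.

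For the contraction, I would first normalize the parameterization: letting $\bQ_{k,t}\in\GL(r_k)$ attain $\dist(\bF_t,\bF_\star)$, replacing $\bF_t$ by its aligned copy, and invoking the covariance of ScaledGD recalled at the start of Section~\ref{sec:analysis}, the next iterate $\bF_{t+1}$ is aligned by the same $\{\bQ_{k,t}\}$, so after relabeling the identity is the optimal alignment for $\bF_t$ and $\dist(\bF_{t+1},\bF_\star)$ is controlled by that same fixed alignment. I would then split the gradient as $\cA^*\big(\cA((\bU_t,\bV_t,\bW_t)\bcdot\bcS_t)-\by\big)=(\bcX_t-\bcX_\star)+\bcR_t$ with $\bcX_t=(\bU_t,\bV_t,\bW_t)\bcdot\bcS_t$ and $\bcR_t\coloneqq(\cA^*\cA-\cI)(\bcX_t-\bcX_\star)$. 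Feeding the first piece through the preconditioners of \eqref{eq:iterates_TR} reproduces exactly one step of the factorization iteration \eqref{eq:iterates_TF} from $\bF_t$ (output $\bF_{t+1}^{\main}$), while feeding $\bcR_t$ through the same preconditioners defines $\bF_{t+1}^{\ptb}$, with $\bF_{t+1}=\bF_{t+1}^{\main}-\eta\,\bF_{t+1}^{\ptb}$ blockwise. Minkowski's inequality on the four-block weighted norm of \eqref{eq:dist} gives $\dist(\bF_{t+1},\bF_\star)\le D_{\main}+\eta\,D_{\ptb}$, with $D_{\main}$, $D_{\ptb}$ the weighted norms of the deviations of $\bF_{t+1}^{\main}$ and of $\bF_{t+1}^{\ptb}$. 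The factorization analysis behind Theorem~\ref{thm:TF} — which proves exactly a one-step contraction with the optimal alignment of $\bF_t$ carried forward — gives $D_{\main}\le(1-0.7\eta)\dist(\bF_t,\bF_\star)$ for $0<\eta\le2/5$, so everything reduces to $D_{\ptb}\lesssim\delta_{2\br}\,\dist(\bF_t,\bF_\star)$.

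Bounding $D_{\ptb}$ is, I expect, the main obstacle. Each of its blocks is $\cM_k(\bcR_t)\,\breve{\bU}_t(\breve{\bU}_t^\top\breve{\bU}_t)^{-1}\bSigma_{\star,1}$ (for the $\bU$-block, analogously for $\bV,\bW$) or $\big((\bU_t^\top\bU_t)^{-1}\bU_t^\top,(\bV_t^\top\bV_t)^{-1}\bV_t^\top,(\bW_t^\top\bW_t)^{-1}\bW_t^\top\big)\bcdot\bcR_t$ (for the core); writing $\|\bM\|_\fro=\sup_{\|\bG\|_\fro\le1}\langle\bM,\bG\rangle$ and using \eqref{eq:tensor_inner} and \eqref{eq:tensor_properties_d}, the norm of each block equals $\sup_{\|\bG\|_\fro\le1}\langle\bcR_t,\bcW_\bG\rangle$ over test tensors $\bcW_\bG$ whose mode-$2$ and mode-$3$ factors lie in $\mathrm{span}(\bV_t)$, $\mathrm{span}(\bW_t)$ and whose mode-$1$ rank is at most $r_1$ (so of multilinear rank at most $\br$), with $\|\bcW_\bG\|_\fro\le C\|\bG\|_\fro$ once one checks that $\|\breve{\bU}_t(\breve{\bU}_t^\top\breve{\bU}_t)^{-1}\bSigma_{\star,1}\|_\op$, $\|\bU_t(\bU_t^\top\bU_t)^{-1}\|_\op$ and their analogues are $O(1)$ — which follows from $\dist(\bF_t,\bF_\star)\le\epsilon\,\sigma_{\min}(\bcX_\star)$ together with $\breve{\bU}_\star^\top\breve{\bU}_\star=\bSigma_{\star,1}^2$ and $\|\bU_t-\bU_\star\|_\op\le\epsilon$. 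To bound $|\langle\bcR_t,\bcW_\bG\rangle|$ using only the hypothesized $2\br$-TRIP, I would decompose $\bcX_t-\bcX_\star$ telescopically as $(\bU_t-\bU_\star,\bV_t,\bW_t)\bcdot\bcS_t+(\bU_\star,\bV_t-\bV_\star,\bW_t)\bcdot\bcS_t+(\bU_\star,\bV_\star,\bW_t-\bW_\star)\bcdot\bcS_t+(\bU_\star,\bV_\star,\bW_\star)\bcdot(\bcS_t-\bcS_\star)$, a sum of four tensors of multilinear rank at most $\br$; since polarization together with $2\br$-TRIP yields $|\langle(\cA^*\cA-\cI)\bcZ,\bcW\rangle|\le\delta_{2\br}\|\bcZ\|_\fro\|\bcW\|_\fro$ whenever $\bcZ,\bcW$ both have multilinear rank at most $\br$, this leaves $|\langle\bcR_t,\bcW_\bG\rangle|\le\delta_{2\br}\|\bcW_\bG\|_\fro$ times a sum of four Frobenius norms, each of which is $O(1)$ times a summand of $\dist(\bF_t,\bF_\star)$ (e.g.\ $\|(\bU_t-\bU_\star,\bV_t,\bW_t)\bcdot\bcS_t\|_\fro=\|(\bU_t-\bU_\star)\breve{\bU}_t^\top\|_\fro\le\|(\bU_t-\bU_\star)\bSigma_{\star,1}\|_\fro\,\|\breve{\bU}_t\bSigma_{\star,1}^{-1}\|_\op$). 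Cauchy--Schwarz then gives $D_{\ptb}\le C_0\,\delta_{2\br}\,\dist(\bF_t,\bF_\star)$, and choosing $\delta_{2\br}$ small enough that $C_0\delta_{2\br}\le0.1$ turns $\dist(\bF_{t+1},\bF_\star)\le(1-0.7\eta+C_0\eta\,\delta_{2\br})\dist(\bF_t,\bF_\star)$ into the claimed $(1-0.6\eta)$ contraction. The delicate point is precisely this rank bookkeeping — telescoping the source tensor and shaping the test tensors so that multilinear rank $\br$ (hence only $2\br$-TRIP) suffices — along with the routine but numerous $O(1)$ operator-norm estimates on the preconditioners.
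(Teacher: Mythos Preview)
Your proposal is correct and follows essentially the same route as the paper: split $\cA^*\cA$ into $\cI+(\cA^*\cA-\cI)$, recognize the first piece as the tensor-factorization step of Theorem~\ref{thm:TF}, and control the second piece by rewriting each block's Frobenius norm variationally as an inner product against a rank-$\br$ test tensor, then decomposing $\bcX_t-\bcX_\star$ telescopically into four rank-$\br$ pieces and applying the polarization form of $2\br$-TRIP (which is exactly the paper's Lemma~\ref{lemma:2r-TRIP}). The only stylistic difference is that the paper expands the squared norm of each block into $\|\text{main}\|^2$ plus cross terms plus $\|\text{ptb}\|^2$ and bounds them separately (yielding perturbations of size $\delta_{2\br}$ and $\delta_{2\br}^2$ respectively, as in \eqref{eq:TR_bound}), whereas you apply Minkowski first to get $D_{\main}+\eta D_{\ptb}$ and then invoke the one-step bound from the proof of Theorem~\ref{thm:TF} as a black box for $D_{\main}$; both reach the same $(1-0.6\eta)$ contraction under the same smallness conditions on $\epsilon$ and $\delta_{2\br}$.
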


Therefore, ScaledGD converges linearly as long as the sample size $m\gtrsim nr+r^3$ under the Gaussian design, when initialized properly. Unfortunately, obtaining a desired initialization turns out to be a major roadblock and requires a substantially higher sample size, which has been studied extensively for tensor regression \cite{zhang2021low,han2020optimal,zhang2020islet}. Under the Gaussian design, we have the following guarantee for the spectral initialization scheme that invokes HOSVD in Algorithm~\ref{alg:TR}.

\begin{lemma}[Initialization for tensor regression]\label{lemma:init_TR} Suppose that $\{\bcA_{i}\}_{i=1}^{m}$ are composed of i.i.d.~$\cN(0,1/m)$ entries, and that $m$ satisfies 
\begin{align*}
m \gtrsim \epsilon_{0}^{-1}\sqrt{n_1n_2n_3}r^{3/2}\kappa^2 + \epsilon_{0}^{-2}(nr^2\kappa^4\log n + r^{4}\kappa^{2})
\end{align*}
for some small constant $\epsilon_{0} > 0$. Then with overwhelming probability, the spectral initialization for low-rank tensor regression in Algorithm~\ref{alg:TR} satisfies
\begin{align*}
\dist(\bF_{0},\bF_{\star}) \le \epsilon_{0}\sigma_{\min}(\bcX_{\star}). 
\end{align*}
\end{lemma}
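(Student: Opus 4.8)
The plan is to regard the spectral initialization as the HOSVD of a lightly perturbed copy of $\bcX_{\star}$, and to analyze it in two stages---a deterministic perturbation bound and a concentration bound. Since the measurement tensors have i.i.d.\ $\cN(0,1/m)$ entries, $\cA^{*}\cA$ is unbiased, so we may write
\begin{align*}
\cA^{*}(\by) = \bcX_{\star} + \bE, \qquad \bE := \cA^{*}\cA(\bcX_{\star}) - \bcX_{\star}, \qquad \EE[\bE] = \zero,
\end{align*}
and $\bF_{0} = \mathrm{HOSVD}_{\br}(\bcX_{\star} + \bE)$. Stage one is the deterministic implication ``$\bE$ small in the appropriate norms $\implies \dist(\bF_{0},\bF_{\star}) \le \epsilon_{0}\sigma_{\min}(\bcX_{\star})$''; stage two supplies the tail bounds on those norms of $\bE$ under the stated sample size.

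For stage one, since $\bF_{0} = (\bU_{0},\bV_{0},\bW_{0},\bcS_{0})$ has orthonormal mode factors, I would first invoke the Davis--Kahan / Wedin $\sin\Theta$ theorem on each matricization $\cM_{k}(\cA^{*}(\by)) = \cM_{k}(\bcX_{\star}) + \cM_{k}(\bE)$ to bound $\|\sin\Theta(\bU_{0},\bU_{\star})\|_{\op}$ and its mode-$2$, mode-$3$ analogues (abbreviated $\|\sin\Theta_{k}\|_{\op}$) by $\lesssim \mathsf{N}_{k}(\bE)/\sigma_{\min}(\bcX_{\star})$, where $\mathsf{N}_{k}(\bE)$ is a spectral norm of $\bE$ restricted so that its ``right factor'' ranges only over the $r_{j}r_{l}$-dimensional row space of $\cM_{k}(\bcX_{\star})$ rather than all of $\RR^{n_{j}n_{l}}$; this restriction is what keeps the final sample size at order $n^{3/2}$ rather than $n^{2}$. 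Next I would write $\bcX_{0}-\bcX_{\star} = (\bU_{0}\bU_{0}^{\top},\bV_{0}\bV_{0}^{\top},\bW_{0}\bW_{0}^{\top})\bcdot(\bcX_{\star}+\bE) - \bcX_{\star}$ and split it into (i) a ``projected-signal'' part $(\bU_{0}\bU_{0}^{\top},\bV_{0}\bV_{0}^{\top},\bW_{0}\bW_{0}^{\top})\bcdot\bcX_{\star}-\bcX_{\star}$, telescoped over the three modes with each piece bounded via \eqref{eq:tensor_properties_e} by $\lesssim\|\sin\Theta_{k}\|_{\op}\|\bcX_{\star}\|_{\fro}$, and (ii) a ``projected-noise'' part $(\bU_{0}\bU_{0}^{\top},\bV_{0}\bV_{0}^{\top},\bW_{0}\bW_{0}^{\top})\bcdot\bE$, which has multilinear rank at most $\br$ and hence, by \eqref{eq:tensor_spec_frob}, Frobenius norm within a $\sqrt{r_{1}r_{2}r_{3}/r}$ factor of a restricted spectral norm of $\bE$ (alternatively one bounds the tensor spectral norm of $\bcX_{0}-\bcX_{\star}$, whose multilinear rank is at most $2\br$, and applies \eqref{eq:tensor_spec_frob} again). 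Finally I would convert to the scaled distance by upper-bounding the infimum in \eqref{eq:dist} at the choice $\bQ_{k} = $ the orthogonal polar factor of $\bU_{0}^{\top}\bU_{\star}$ (similarly for $\bV,\bW$): then $\|(\bU_{0}\bQ_{1}-\bU_{\star})\bSigma_{\star,1}\|_{\fro}\le\sigma_{\max}(\bcX_{\star})\|\bU_{0}\bQ_{1}-\bU_{\star}\|_{\fro}\lesssim\sigma_{\max}(\bcX_{\star})\sqrt{r_{1}}\,\|\sin\Theta_{1}\|_{\op}$, and the core term $\|(\bQ_{1}^{-1},\bQ_{2}^{-1},\bQ_{3}^{-1})\bcdot\bcS_{0}-\bcS_{\star}\|_{\fro}$ follows from the bound on $\|\bcX_{0}-\bcX_{\star}\|_{\fro}$ plus another telescoping term $\lesssim\max_{k}\|\sin\Theta_{k}\|_{\op}\|\bcX_{\star}\|_{\fro}$, yielding $\dist(\bF_{0},\bF_{\star})\lesssim\kappa\sqrt{r}\,\max_{k}\|\sin\Theta_{k}\|_{\op}\,\sigma_{\min}(\bcX_{\star}) + \|\bcX_{0}-\bcX_{\star}\|_{\fro}$.

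For stage two, I would use the orthogonal decomposition $\bcA_{i} = h_{i}\,\bcX_{\star}/\|\bcX_{\star}\|_{\fro} + \bcA_{i}^{\perp}$ with $h_{i} := \langle\bcA_{i},\bcX_{\star}\rangle/\|\bcX_{\star}\|_{\fro}\sim\cN(0,1/m)$ independent of $\bcA_{i}^{\perp}$, which turns $\bE$ into $\big(\sum_{i}h_{i}^{2}-1\big)\bcX_{\star} + \|\bcX_{\star}\|_{\fro}\sum_{i}h_{i}\bcA_{i}^{\perp}$. The first summand is $\lesssim m^{-1/2}\sigma_{\max}(\bcX_{\star})$ in every spectral norm by a $\chi^{2}$ tail bound. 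For the second I would (a) bound its tensor spectral norm by $\lesssim\sqrt{n/m}\,\|\bcX_{\star}\|_{\fro}$ using the standard $\asymp\sqrt{n_{1}}+\sqrt{n_{2}}+\sqrt{n_{3}}$ bound for the spectral norm of a Gaussian tensor; (b) bound the restricted matricized spectral norms $\mathsf{N}_{k}(\bE)$ by a covering-number argument over a Grassmannian of $r_{j}r_{l}$-dimensional subspaces combined with a decoupling / matrix-Bernstein inequality for the bilinear forms $\sum_{i}h_{i}\,\ab^{\top}\cM_{k}(\bcA_{i}^{\perp})\,\bb$ with $\bb$ confined to a low-dimensional subspace; and (c) bound the fully projected core $(\bU_{\star}^{\top},\bV_{\star}^{\top},\bW_{\star}^{\top})\bcdot\bE$, an effectively $r_{1}\times r_{2}\times r_{3}$ object, directly. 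Tracking the powers of $r$ and $\kappa$ through these estimates and translating the requirement that $\mathsf{N}_{k}(\bE)$ and the relevant Frobenius quantities be $\lesssim\epsilon_{0}\sigma_{\min}(\bcX_{\star})$ up to $\mathrm{poly}(r,\kappa)$ factors into a condition on $m$ produces the two stated terms: $\sqrt{n_{1}n_{2}n_{3}}\,r^{3/2}\kappa^{2}$ from the leading ``one-projection'' contribution, which enters the final bound only linearly (hence the $\epsilon_{0}^{-1}$), and $nr^{2}\kappa^{4}\log n + r^{4}\kappa^{2}$ from the remaining Gaussian-width and net contributions, which enter quadratically (hence the $\epsilon_{0}^{-2}$).

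I expect the main obstacle to be stage two---specifically, obtaining the $n^{3/2}$ rather than $n^{2}$ scaling. A crude treatment of $\|\cM_{k}(\bE)\|_{\op}$ regards $\cM_{k}(\bE)$ as a generic $n_{k}\times(n_{j}n_{l})$ random matrix and forces $m\gtrsim n^{2}$; avoiding this requires never taking the unrestricted matricized spectral norm of the noise, instead (1) isolating the component of $\bE$ living in the $r_{j}r_{l}$-dimensional signal row spaces and controlling it by a net over a low-dimensional Grassmannian, and (2) absorbing the ``off-signal'' leakage either into the tensor spectral norm of $\bE$ (which is only of order $\sqrt{n}$, so that multiplying by the $\|\bcX_{\star}\|_{\fro}$ scaling contributes $n^{3/2}$) or into the $\sin\Theta$ denominator through a one-sided Wedin bound. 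The recurring technical nuisance throughout is that $\bE$ is not a clean sum of independent mean-zero terms, because $\langle\bcA_{i},\bcX_{\star}\rangle$ and $\bcA_{i}$ are correlated; the colinear/orthogonal split above is exactly what tames this, after which one is left with lengthy but routine sub-exponential concentration arguments.
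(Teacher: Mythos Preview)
Your two-stage plan is right, and so is your identification of the $n^{3/2}$-versus-$n^{2}$ bottleneck, but the mechanism you propose for the subspace step has a gap. You assert that Davis--Kahan/Wedin on $\cM_{k}(\bcX_{\star})+\cM_{k}(\bE)$ yields $\|\sin\Theta_{k}\|_{\op}\lesssim\mathsf{N}_{k}(\bE)/\sigma_{\min}(\bcX_{\star})$ with $\mathsf{N}_{k}(\bE)$ restricted to the $r_{j}r_{l}$-dimensional signal row space. Wedin does have a version with such a restricted numerator, but its denominator is the gap $\sigma_{r_{k}}(\cM_{k}(\bcX_{\star}))-\sigma_{r_{k}+1}(\cM_{k}(\cA^{*}(\by)))$, and certifying that this gap is comparable to $\sigma_{\min}$ via Weyl still requires $\|\cM_{k}(\bE)\|_{\op}<\sigma_{\min}(\bcX_{\star})$; since $\cM_{k}(\bE)$ is, conditioned on $\by$, essentially an $n_{k}\times n_{j}n_{l}$ Gaussian matrix with entry variance $\asymp\|\bcX_{\star}\|_{\fro}^{2}/m$, this already forces $m\gtrsim n^{2}r\kappa^{2}$. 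Neither of your fixes escapes this: the \emph{tensor} spectral norm of $\bE$ is the wrong object for controlling singular subspaces of a \emph{matricization}, and ``one-sided Wedin'' still needs the singular-value gap. (A separate, minor point: your conversion to $\dist(\bF_{0},\bF_{\star})$ via explicit polar alignments is unnecessary---Lemma~\ref{lemma:Procrustes} gives $\dist(\bF_{0},\bF_{\star})\le(\sqrt{2}+1)^{3/2}\|(\bU_{0},\bV_{0},\bW_{0})\bcdot\bcS_{0}-\bcX_{\star}\|_{\fro}$ in one stroke.)

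The paper's route is to abandon $\cM_{k}(\bE)$ altogether and pass to the \emph{debiased Gram matrix}
\[
\bG \;=\; \cM_{k}\big(\cA^{*}(\by)\big)\,\cM_{k}\big(\cA^{*}(\by)\big)^{\top}\;-\;\frac{\|\by\|_{2}^{2}}{m}\bigl(n_{j}n_{l}-r_{k}\bigr)\,\bI_{n_{k}}.
\]
Subtracting a scalar multiple of the identity leaves eigenvectors unchanged, so $\bU_{0}$ is still the top-$r_{k}$ eigenspace of $\bG$. Splitting the Gram matrix along $\bR_{\star}$ and $\bR_{\star\perp}$, the off-signal block $\cM_{k}(\cA^{*}(\by))\bR_{\star\perp}$ is, conditioned on $\by$ (of which it is independent), an i.i.d.\ Gaussian $n_{k}\times(n_{j}n_{l}-r_{k})$ matrix, whose sample covariance concentrates around its mean---exactly the subtracted identity---at rate $\tfrac{\|\by\|_{2}^{2}}{m}\sqrt{n_{k}\,n_{j}n_{l}}$ rather than $\tfrac{\|\by\|_{2}^{2}}{m}\,n_{j}n_{l}$. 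Together with an $n_{k}\times r_{k}$ Gaussian-ensemble bound on the on-signal block $\cM_{k}(\cA^{*}(\by))\bR_{\star}-\bU_{\star}\bSigma_{\star,k}$, this gives $\|\bG-\cM_{k}(\bcX_{\star})\cM_{k}(\bcX_{\star})^{\top}\|_{\op}\lesssim m^{-1}\sqrt{n_{1}n_{2}n_{3}}\,\|\bcX_{\star}\|_{\fro}^{2}+\sqrt{m^{-1}n\log n}\;\|\bcX_{\star}\|_{\fro}\,\sigma_{\max}(\bcX_{\star})$, after which $\|\bP_{U_{\perp}}\cM_{k}(\bcX_{\star})\|_{\fro}\le 2\sqrt{r_{k}}\,\sigma_{\min}^{-1}(\bcX_{\star})\,\|\bG-\cM_{k}(\bcX_{\star})\cM_{k}(\bcX_{\star})^{\top}\|_{\op}$ follows from Weyl. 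This debiasing---trading the spectral norm of a tall Gaussian for the deviation of its Wishart from the identity---is the missing idea.
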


Combining Lemma~\ref{lemma:contraction_TR} and Lemma~\ref{lemma:init_TR} finishes the proof of Theorem~\ref{thm:TR}. Their proofs can be found in Appendix~\ref{sec:proof_TR}.


\section{Numerical Experiments}
\label{sec:numerical}

In this section, we provide numerical experiments to corroborate our theoretical findings, with the codes available at 
\begin{center}
\url{https://github.com/Titan-Tong/ScaledGD}.
\end{center} 
The simulations are performed in Matlab with a 3.6 GHz Intel Xeon Gold 6244 CPU.

We illustrate the numerical performance of ScaledGD for tensor completion to corroborate our findings, especially its computational advantage over the regularized GD algorithm \cite{han2020optimal} that is closest to our design. Their algorithm was originally proposed for tensor regression, nevertheless, it naturally applies to tensor completion and exhibits similar results. 
Since the scaled projection does not visibly impact the performance, we implement ScaledGD without performing the projection. Also, we empirically find that the regularization used in \cite{han2020optimal} has no visible benefits, hence we implement GD without the regularization.
For simplicity, we set 
$n_1=n_2=n_3=n$, and $r_1=r_2=r_3=r$. Each entry of the tensor is observed i.i.d.~with probability $p\in (0,1]$. 

\paragraph{Phase transition of ScaledGD.}

We construct the ground truth tensor $\bcX_{\star}=(\bU_{\star},\bV_{\star},\bW_{\star})\bcdot\bcS_{\star}$ by generating $\bU_{\star}$, $\bV_{\star}$ and $\bW_{\star}$ as random orthonormal matrices, and the core tensor $\bcS_{\star}$ composed of i.i.d.~standard Gaussian entries, i.e. $\bcS_{\star}(j_1,j_2,j_3)\sim \cN(0,1)$ for $1\le j_k \le r$, $k=1,2,3$. For each set of parameters, we run $100$ random tests and count the success rate, where the recovery is regarded as successful if the recovered tensor has a relative error $\|\bcX_{T}-\bcX_{\star}\|_{\fro}/\|\bcX_{\star}\|_{\fro}\le 10^{-3}$. Figure~\ref{fig:TC_success} illustrates the success rate with respect to the (scaled) sample size for different tensor sizes $n$, which implies that the recovery is successful when the sample size is moderately large. 

\begin{figure}[ht]
\centering
\includegraphics[width=0.5\textwidth]{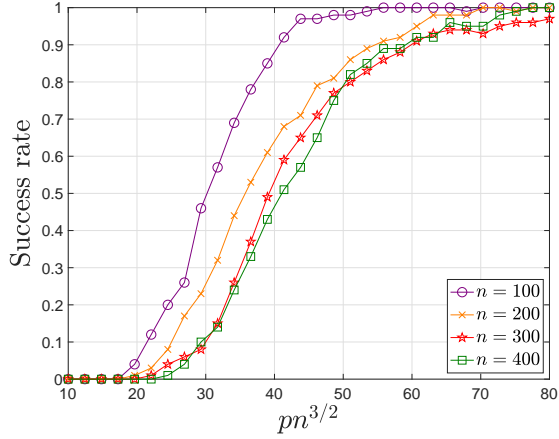} 
\caption{The success rate of ScaledGD with respect to the scaled sample size for tensor completion with $r=5$, when the core tensor is composed of i.i.d.~standard Gaussian entries, for various tensor size $n$. }\label{fig:TC_success}
\end{figure}

\paragraph{Comparison with GD.} 

We next compare the performance of ScaledGD with GD. For a fair comparison, both ScaledGD and GD start from the same spectral initialization, and we use the following update rule of GD as
\begin{align}
\begin{split}
\bU_{t+1} &= \bU_{t} - \eta \sigma^{-2}_{\max}(\bcX_{\star})\nabla_{\bU}\cL(\bF_{t}),  \\
\bV_{t+1} &= \bV_{t} - \eta \sigma^{-2}_{\max}(\bcX_{\star})\nabla_{\bV}\cL(\bF_{t}),  \\
\bW_{t+1} &= \bW_{t} - \eta \sigma^{-2}_{\max}(\bcX_{\star})\nabla_{\bW}\cL(\bF_{t}),  \\
\bcS_{t+1} &= \bcS_{t} - \eta\nabla_{\bcS}\cL(\bF_{t}).
\end{split}\label{eq:iterate_GD}
\end{align}
Throughout the experiments, we used the ground truth value $\sigma_{\max}(\bcX_{\star})$ in running \eqref{eq:iterate_GD}, while in practice, this parameter needs to estimated; to put it differently, the step size of GD is not {\em scale-invariant}, whereas the step size of ScaledGD is.

To ensure the ground truth tensor $\bcX_{\star}=(\bU_{\star},\bV_{\star},\bW_{\star})\bcdot\bcS_{\star}$ has a prescribed condition number $\kappa$, we generate the core tensor $\bcS_{\star}\in\mathbb{R}^{r\times r\times r}$ according to $\bcS_{\star}(j_1,j_2,j_3)=\sigma_{j_1}/\sqrt{r}$ if $j_1+j_2+j_3 \equiv 0 \pmod{r}$ and $0$ otherwise, where $\{\sigma_{j_1}\}_{1\le j_1 \le r}$ take values spaced equally from $1$ to $1/\kappa$. It then follows that $\sigma_{\max}(\bcX_{\star})=1$, $\sigma_{\min}(\bcX_{\star})=1/\kappa$, and the condition number of $\bcX_{\star}$ is exactly $\kappa$.  

\begin{figure}[!ht]
\centering
\includegraphics[width=0.5\textwidth]{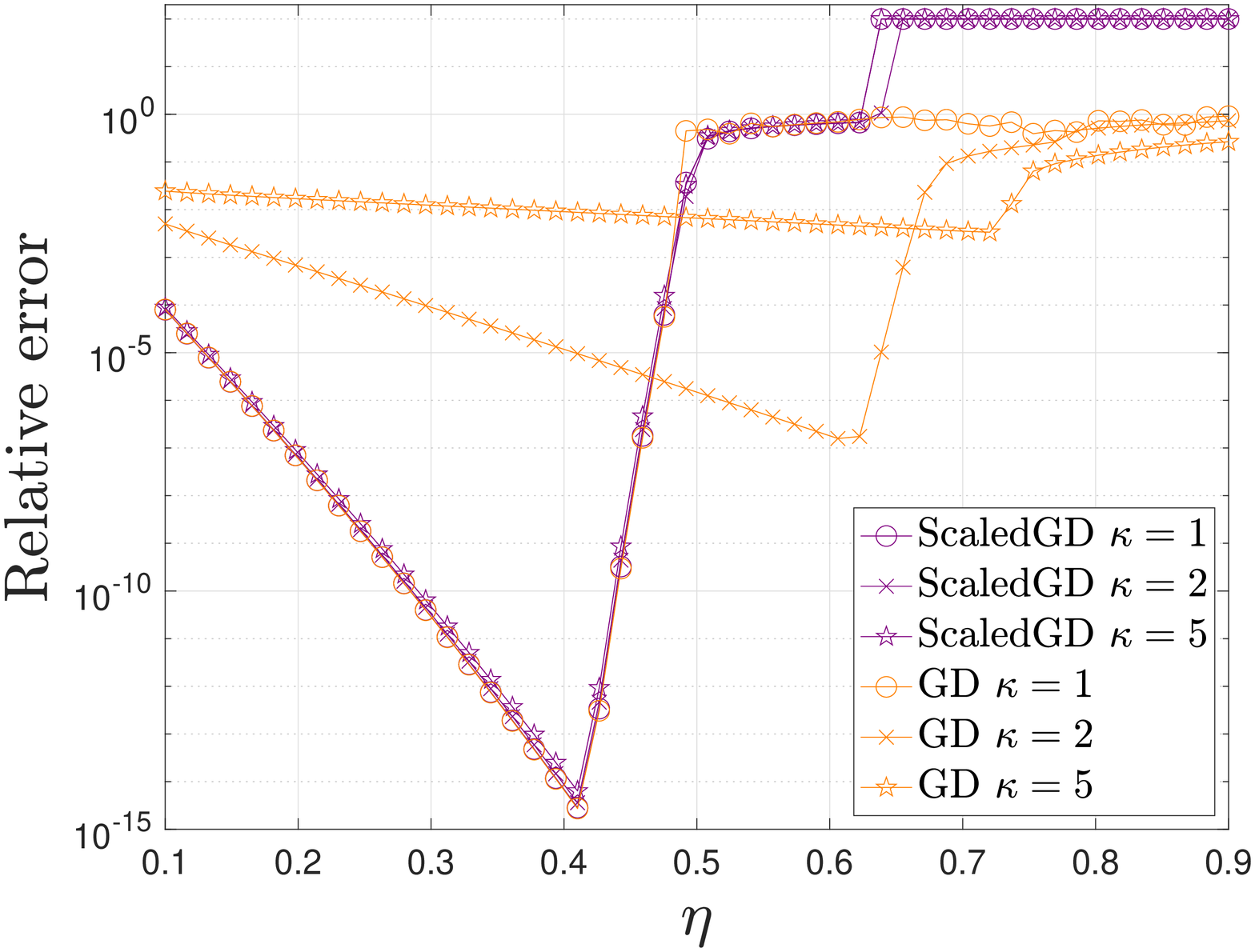} 
\caption{The relative errors of ScaledGD and GD after $80$ iterations with respect to different step sizes $\eta$ from $0.1$ to $0.9$ for tensor completion with $n=100$, $r=5$, $p=0.1$.}\label{fig:TC_stepsizes}
\end{figure}  

Figure~\ref{fig:TC_stepsizes} illustrates the convergence speed of ScaledGD and GD under different step sizes, where we plot the relative error after at most $80$ iterations (the algorithm is terminated if the relative error exceeds $10^2$ following an excessive step size). It can be seen that ScaledGD outperforms GD quite significantly even when the step size of GD is optimized for its performance. Hence, we will fix $\eta = 0.3$ for the rest of the comparisons for both ScaledGD and GD without hurting the conclusions.

\begin{figure}[!ht]
\centering
\begin{tabular}{cc}
 \includegraphics[width=0.5\textwidth]{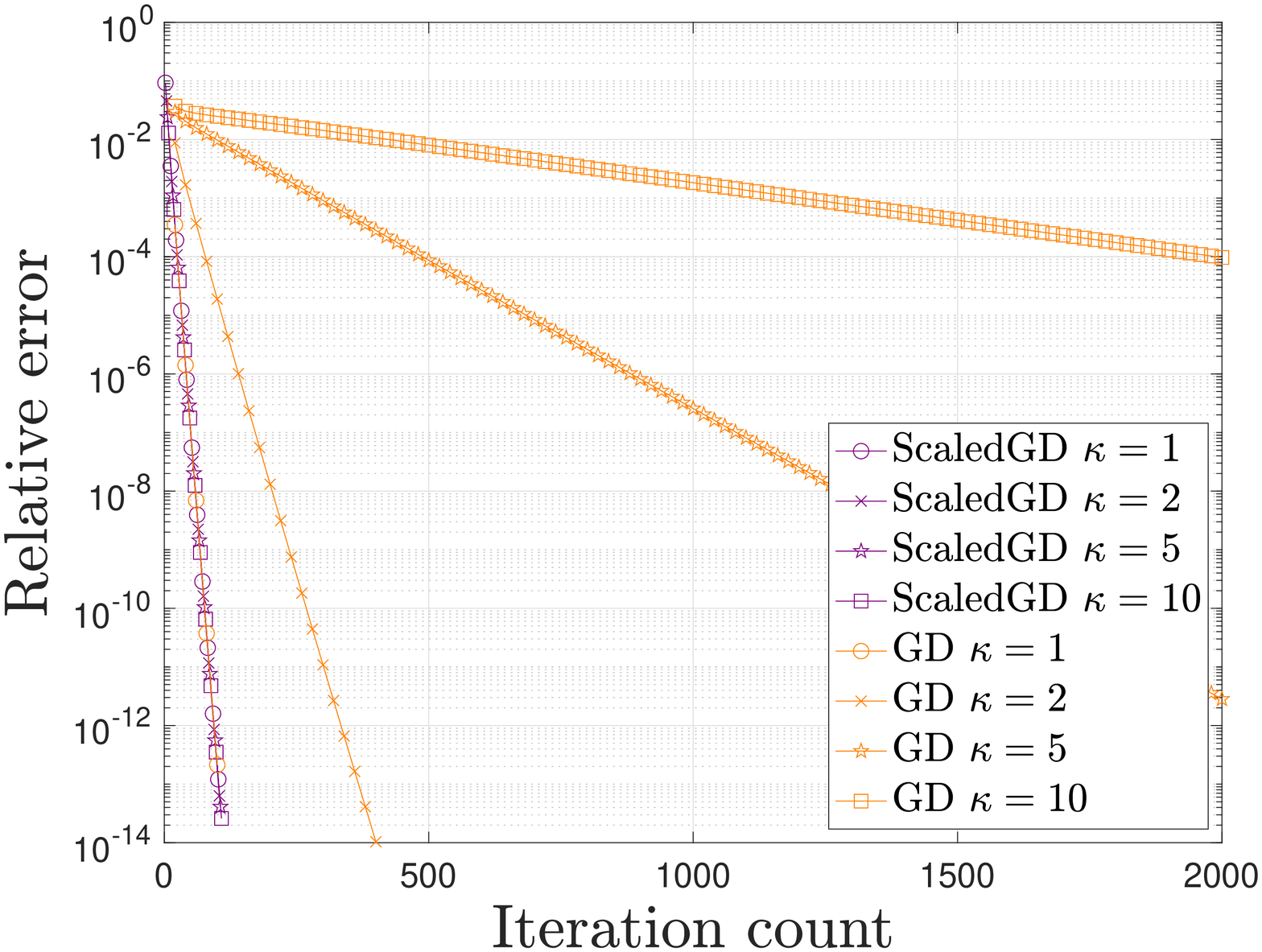} & 
 \includegraphics[width=0.5\textwidth]{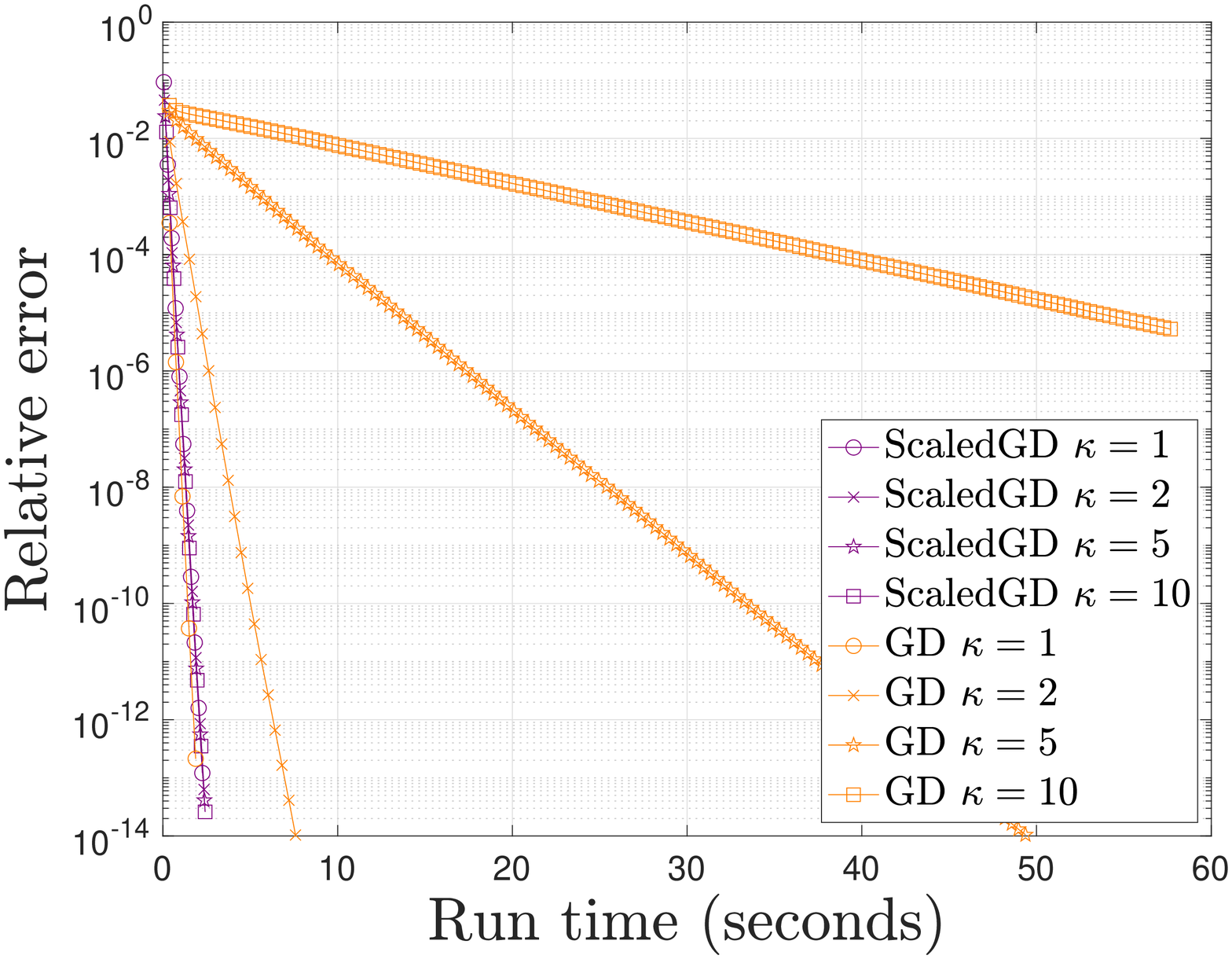} \\
	(a)    & (b) 
\end{tabular}
\caption{The relative errors of ScaledGD and GD with respect to (a) the iteration count and (b) run time (in seconds) under different condition numbers $\kappa=1,2,5,10$ for tensor completion with $n=100$, $r=5$, and $p=0.1$.} \label{fig:TC_time}
\end{figure}

Figure~\ref{fig:TC_time} compares the relative errors of ScaledGD and GD for tensor completion with respect to the iteration count and run time (in seconds) under different condition numbers $\kappa = 1,2,5,10$. This experiment verifies that ScaledGD converges rapidly at a rate independent of the condition number, and matches the fastest rate of GD with perfect conditioning $\kappa=1$. In contrast, the convergence rate of GD deteriorates quickly with the increase of $\kappa$ even at a moderate level. The advantage of ScaledGD carries over to the run time as well, since the scaled gradient only adds a negligible overhead to the gradient computation. 

\begin{figure}[!ht]
\centering
\includegraphics[width=0.5\textwidth]{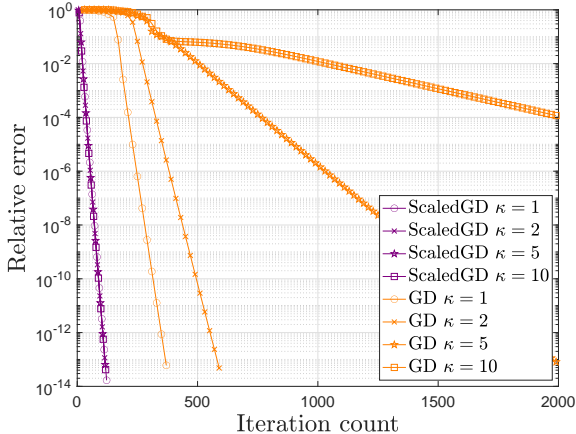} 
\caption{The relative errors of random-initialized ScaledGD and GD with respect to the iteration count under different condition numbers $\kappa=1,2,5,10$  for tensor completion with $n=100$, $r=5$, $p=0.1$.} \label{fig:TC_init} 
\end{figure} 

We next examine the performance of ScaledGD and GD when randomly initialized.
Here, we initialize $\bU_{0},\bV_{0},\bW_{0}$ composed of i.i.d.~entries sampled from $\cN(0, 1/n)$, and $\bcS_{0}$ composed of i.i.d.~entries sampled from $\cN(0, \|\bcY\|_{\fro}^{2}/(pr^3))$. Figure~\ref{fig:TC_init} plots the relative errors of ScaledGD and GD under different condition numbers $\kappa = 1,2,5,10$, using the same random initialization. Surprisingly, while GD gets stuck in a flat region before entering the phase of linear convergence, ScaledGD seems to be quite insensitive to the choice of initialization, and converges almost in the same fashion as the case with spectral initialization.

\begin{figure}[!ht]
\centering
\includegraphics[width=0.5\textwidth]{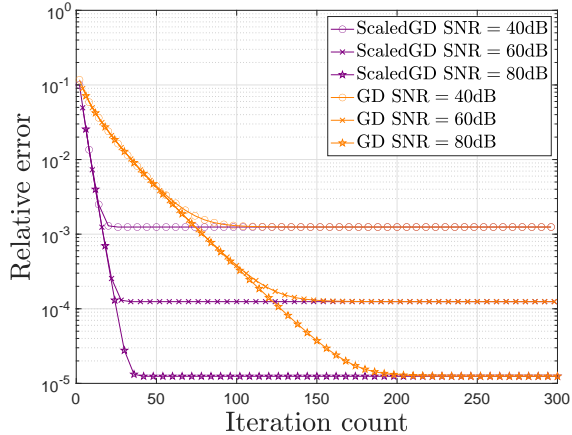} 
\caption{The relative errors of ScaledGD and GD with respect to the iteration count under signal-to-noise ratios $\mathrm{SNR}=40,60,80\mathrm{dB}$ for tensor completion with $n=100$, $r=5$, and $p=0.1$.} \label{fig:TC_noise}
\end{figure}

Finally, we examine the performance of ScaledGD when the observations are corrupted by additive noise, where we assume the noisy observations are given by $\bcY = \cP_{\Omega}(\bcX_{\star} + \bcW)$, with $\bcW(i_1,i_2,i_3)\sim \cN(0, \sigma_w^2)$ composed of i.i.d.~Gaussian entries. Denote the signal-to-noise ratio as 
$\mathrm{SNR}\coloneqq 10\log_{10}\tfrac{\|\bcX_{\star}\|_{\fro}^2}{n^3\sigma_{w}^2}$ in dB.
Figure~\ref{fig:TC_noise} demonstrates the robustness of ScaledGD, by plotting 
the relative errors with respect to the iteration count under $\mathrm{SNR}=40,60,80\mathrm{dB}$. Here, the ground truth tensor $\bcX_{\star}$ is constructed in the same manner as Figure~\ref{fig:TC_success}, where its condition number is approximately $\kappa\approx 2.6$.
It can been seen that ScaledGD reaches the same statistical error as GD, but at a much faster rate. In addition, the convergence speeds are not impacted by the noise levels.


\section{Discussions}

This paper develops a scaled gradient descent algorithm over the factor space for low-rank tensor estimation (i.e.~completion and regression) with provable sample and computational guarantees, leading to a highly scalable approach especially when the ground truth tensor is ill-conditioned and high-dimensional. There are several future directions that are worth exploring, which we briefly discuss below.

\begin{itemize}

\item {\em Preconditioning for other tensor decompositions.} The use of preconditioning will likely also accelerate vanilla gradient descent for low-rank tensor estimation using other decomposition models, such as CP decomposition \cite{cai2019nonconvex}, which is worth investigating.

\item {\em Entrywise error control for tensor completion.} In this paper, we focused on controlling the $\ell_2$ error of the reconstructed tensor in tensor completion, whereas another strong form of statistical guarantees deals with the $\ell_\infty$ error, as done in \cite{ma2017implicit} for matrix completion and in \cite{cai2019nonconvex} for tensor completion with CP decomposition. It is hence of interest to develop similar strong entrywise error guarantees of ScaledGD for tensor completion with Tucker decomposition.

\item {\em Stable and robust low-rank tensor estimation.} In practice, the observations are corrupted by noise and even outliers \cite{li2020non}, therefore, it is necessary to examine the stability and robustness of ScaledGD in more depths; see some initial efforts in \cite{tong2022accelerating} on extending the scaled subgradient algorithm \cite{tong2021low} for robust low-rank tensor regression, and in \cite{dong2022fast} on tensor robust principal component analysis.

\item {\em Random initialization?} As evident from the numerical experiment in Figure~\ref{fig:TC_init}, ScaledGD works remarkably well even from a random initialization, which requires us to go beyond the local geometry and pursue a further understanding of the global landscape of the optimization geometry.

\end{itemize}

\section*{Acknowledgements}
 
The work of T.~Tong and Y.~Chi is supported in part by Office of Naval Research under N00014-18-1-2142 and N00014-19-1-2404, by Army Research Office under W911NF-18-1-0303, by Air Force Research Laboratory under FA8750-20-2-0504, and by National Science Foundation under CAREER ECCS-1818571, CCF-1806154, CCF-1901199 and ECCS-2126634. The U.S. Government is authorized to reproduce and distribute reprints for Governmental purposes notwithstanding any copyright notation thereon.

\bibliographystyle{alphaabbr}
\bibliography{bibfileTensor,bibfileNonconvexScaledGD}

\newcommand{\etalchar}[1]{$^{#1}$}
\begin{thebibliography}{DLDMV00b}

\bibitem[AGH{\etalchar{+}}14]{anandkumar2014tensor}
A.~Anandkumar, R.~Ge, D.~Hsu, S.~Kakade, and M.~Telgarsky.
\newblock Tensor decompositions for learning latent variable models.
\newblock {\em Journal of Machine Learning Research}, 15:2773--2832, 2014.

\bibitem[ARB20]{ahmed2020tensor}
T.~Ahmed, H.~Raja, and W.~U. Bajwa.
\newblock Tensor regression using low-rank and sparse {T}ucker decompositions.
\newblock {\em SIAM Journal on Mathematics of Data Science}, 2(4):944--966,
  2020.

\bibitem[BGW20]{buchanan2020deep}
S.~Buchanan, D.~Gilboa, and J.~Wright.
\newblock Deep networks and the multiple manifold problem.
\newblock {\em arXiv preprint arXiv:2008.11245}, 2020.

\bibitem[BJS18]{bai2018subgradient}
Y.~Bai, Q.~Jiang, and J.~Sun.
\newblock Subgradient descent learns orthogonal dictionaries.
\newblock {\em arXiv preprint arXiv:1810.10702}, 2018.

\bibitem[BM16]{barak2016noisy}
B.~Barak and A.~Moitra.
\newblock Noisy tensor completion via the sum-of-squares hierarchy.
\newblock In {\em Conference on Learning Theory}, pages 417--445. PMLR, 2016.

\bibitem[CC17]{chen2015solving}
Y.~Chen and E.~Cand\`es.
\newblock Solving random quadratic systems of equations is nearly as easy as
  solving linear systems.
\newblock {\em Communications on Pure and Applied Mathematics}, 70(5):822--883,
  2017.

\bibitem[CC18]{chen2018harnessing}
Y.~Chen and Y.~Chi.
\newblock Harnessing structures in big data via guaranteed low-rank matrix
  estimation: Recent theory and fast algorithms via convex and nonconvex
  optimization.
\newblock {\em IEEE Signal Processing Magazine}, 35(4):14 -- 31, 2018.

\bibitem[CCD{\etalchar{+}}21]{charisopoulos2019low}
V.~Charisopoulos, Y.~Chen, D.~Davis, M.~D{\'\i}az, L.~Ding, and
  D.~Drusvyatskiy.
\newblock Low-rank matrix recovery with composite optimization: good
  conditioning and rapid convergence.
\newblock {\em Foundations of Computational Mathematics}, pages 1--89, 2021.

\bibitem[CCFM19]{chen2019gradient}
Y.~Chen, Y.~Chi, J.~Fan, and C.~Ma.
\newblock Gradient descent with random initialization: Fast global convergence
  for nonconvex phase retrieval.
\newblock {\em Mathematical Programming}, 176(1-2):5--37, 2019.

\bibitem[CCFM21]{chen2021spectral}
Y.~Chen, Y.~Chi, J.~Fan, and C.~Ma.
\newblock Spectral methods for data science: A statistical perspective.
\newblock {\em Foundations and Trends{\textregistered} in Machine Learning},
  14(5):566--806, 2021.

\bibitem[CHZ20]{cai2020non}
T.~T. Cai, R.~Han, and A.~R. Zhang.
\newblock On the non-asymptotic concentration of heteroskedastic {W}ishart-type
  matrix.
\newblock {\em arXiv preprint arXiv:2008.12434}, 2020.

\bibitem[CL19]{chen2019model}
J.~Chen and X.~Li.
\newblock Model-free nonconvex matrix completion: Local minima analysis and
  applications in memory-efficient kernel {PCA}.
\newblock {\em Journal of Machine Learning Research}, 20(142):1--39, 2019.

\bibitem[CLC19]{chi2019nonconvex}
Y.~Chi, Y.~M. Lu, and Y.~Chen.
\newblock Nonconvex optimization meets low-rank matrix factorization: An
  overview.
\newblock {\em IEEE Transactions on Signal Processing}, 67(20):5239--5269,
  2019.

\bibitem[CLC{\etalchar{+}}21]{cai2019subspace}
C.~Cai, G.~Li, Y.~Chi, H.~V. Poor, and Y.~Chen.
\newblock Subspace estimation from unbalanced and incomplete data matrices:
  $\ell_{2,\infty}$ statistical guarantees.
\newblock {\em The Annals of Statistics}, 49(2):944--967, 2021.

\bibitem[CLL20]{chen2019nonconvex}
J.~Chen, D.~Liu, and X.~Li.
\newblock Nonconvex rectangular matrix completion via gradient descent without
  $\ell_{2,\infty}$ regularization.
\newblock {\em IEEE Transactions on Information Theory}, 66(9):5806--5841,
  2020.

\bibitem[CLPC19]{cai2019nonconvex}
C.~Cai, G.~Li, H.~V. Poor, and Y.~Chen.
\newblock Nonconvex low-rank tensor completion from noisy data.
\newblock In {\em Advances in Neural Information Processing Systems}, pages
  1863--1874, 2019.

\bibitem[CLS15]{candes2015phase}
E.~Cand\`es, X.~Li, and M.~Soltanolkotabi.
\newblock Phase retrieval via {W}irtinger flow: Theory and algorithms.
\newblock {\em IEEE Transactions on Information Theory}, 61(4):1985--2007,
  2015.

\bibitem[CLX21]{cai2021generalized}
J.-F. Cai, J.~Li, and D.~Xia.
\newblock Generalized low-rank plus sparse tensor estimation by fast
  {R}iemannian optimization.
\newblock {\em arXiv preprint arXiv:2103.08895}, 2021.

\bibitem[CPC20]{cai2020uncertainty}
C.~Cai, H.~V. Poor, and Y.~Chen.
\newblock Uncertainty quantification for nonconvex tensor completion:
  Confidence intervals, heteroscedasticity and optimality.
\newblock In {\em International Conference on Machine Learning}, pages
  1271--1282. PMLR, 2020.

\bibitem[CRY19]{chen2019non}
H.~Chen, G.~Raskutti, and M.~Yuan.
\newblock Non-convex projected gradient descent for generalized low-rank tensor
  regression.
\newblock {\em Journal of Machine Learning Research}, 20(1):172--208, 2019.

\bibitem[CT10]{candes2010NearOptimalMC}
E.~J. Cand{\`e}s and T.~Tao.
\newblock The power of convex relaxation: Near-optimal matrix completion.
\newblock {\em IEEE Transactions on Information Theory}, 56(5):2053--2080,
  2010.

\bibitem[CW15]{chen2015fast}
Y.~Chen and M.~J. Wainwright.
\newblock Fast low-rank estimation by projected gradient descent: General
  statistical and algorithmic guarantees.
\newblock {\em arXiv preprint arXiv:1509.03025}, 2015.

\bibitem[DFL17]{dian2017hyperspectral}
R.~Dian, L.~Fang, and S.~Li.
\newblock Hyperspectral image super-resolution via non-local sparse tensor
  factorization.
\newblock In {\em Proceedings of the IEEE Conference on Computer Vision and
  Pattern Recognition}, pages 5344--5353, 2017.

\bibitem[DLDMV00a]{de2000multilinear}
L.~De~Lathauwer, B.~De~Moor, and J.~Vandewalle.
\newblock A multilinear singular value decomposition.
\newblock {\em SIAM Journal on Matrix Analysis and Applications},
  21(4):1253--1278, 2000.

\bibitem[DLDMV00b]{de2000best}
L.~De~Lathauwer, B.~De~Moor, and J.~Vandewalle.
\newblock On the best rank-1 and rank-{$(R_1, R_2, \dots, R_N)$} approximation
  of higher-order tensors.
\newblock {\em SIAM Journal on Matrix Analysis and Applications},
  21(4):1324--1342, 2000.

\bibitem[DTMC22]{dong2022fast}
H.~Dong, T.~Tong, C.~Ma, and Y.~Chi.
\newblock Fast and provable tensor robust principal component analysis via
  scaled gradient descent.
\newblock {\em arXiv preprint arXiv:2206.09109}, 2022.

\bibitem[FCL20]{fu2020guaranteed}
H.~Fu, Y.~Chi, and Y.~Liang.
\newblock Guaranteed recovery of one-hidden-layer neural networks via cross
  entropy.
\newblock {\em IEEE Transactions on Signal Processing}, 68:3225--3235, 2020.

\bibitem[FG20]{frandsen2020optimization}
A.~Frandsen and R.~Ge.
\newblock Optimization landscape of {T}ucker decomposition.
\newblock {\em Mathematical Programming}, pages 1--26, 2020.

\bibitem[FL18]{friedland2018nuclear}
S.~Friedland and L.-H. Lim.
\newblock Nuclear norm of higher-order tensors.
\newblock {\em Mathematics of Computation}, 87(311):1255--1281, 2018.

\bibitem[GM20]{ge2020optimization}
R.~Ge and T.~Ma.
\newblock On the optimization landscape of tensor decompositions.
\newblock {\em Mathematical Programming}, pages 1--47, 2020.

\bibitem[GPY19]{ghadermarzy2019near}
N.~Ghadermarzy, Y.~Plan, and {\"O}.~Yilmaz.
\newblock Near-optimal sample complexity for convex tensor completion.
\newblock {\em Information and Inference: A Journal of the IMA}, 8(3):577--619,
  2019.

\bibitem[GQ14]{goldfarb2014robust}
D.~Goldfarb and Z.~Qin.
\newblock Robust low-rank tensor recovery: Models and algorithms.
\newblock {\em SIAM Journal on Matrix Analysis and Applications},
  35(1):225--253, 2014.

\bibitem[GRY11]{gandy2011tensor}
S.~Gandy, B.~Recht, and I.~Yamada.
\newblock Tensor completion and low-n-rank tensor recovery via convex
  optimization.
\newblock {\em Inverse Problems}, 27(2):025010, 2011.

\bibitem[Hac12]{hackbusch2012tensor}
W.~Hackbusch.
\newblock {\em Tensor spaces and numerical tensor calculus}, volume~42.
\newblock Springer, 2012.

\bibitem[HL13]{hillar2013most}
C.~J. Hillar and L.-H. Lim.
\newblock Most tensor problems are {NP}-hard.
\newblock {\em Journal of the ACM}, 60(6):1--39, 2013.

\bibitem[HMGW15]{huang2015provable}
B.~Huang, C.~Mu, D.~Goldfarb, and J.~Wright.
\newblock Provable models for robust low-rank tensor completion.
\newblock {\em Pacific Journal of Optimization}, 11(2):339--364, 2015.

\bibitem[HV19]{hand2019global}
P.~Hand and V.~Voroninski.
\newblock Global guarantees for enforcing deep generative priors by empirical
  risk.
\newblock {\em IEEE Transactions on Information Theory}, 66(1):401--418, 2019.

\bibitem[HWZ20]{han2020optimal}
R.~Han, R.~Willett, and A.~Zhang.
\newblock An optimal statistical and computational framework for generalized
  tensor estimation.
\newblock {\em arXiv preprint arXiv:2002.11255}, 2020.

\bibitem[HZC20]{hao2020sparse}
B.~Hao, A.~Zhang, and G.~Cheng.
\newblock Sparse and low-rank tensor estimation via cubic sketchings.
\newblock {\em IEEE Transactions on Information Theory}, 66(9):5927--5964,
  2020.

\bibitem[JO14]{jain2014provable}
P.~Jain and S.~Oh.
\newblock Provable tensor factorization with missing data.
\newblock {\em Advances in Neural Information Processing Systems},
  2:1431--1439, 2014.

\bibitem[JYZ17]{jiang2017tensor}
B.~Jiang, F.~Yang, and S.~Zhang.
\newblock Tensor and its {T}ucker core: the invariance relationships.
\newblock {\em Numerical Linear Algebra with Applications}, 24(3):e2086, 2017.

\bibitem[KABO10]{karatzoglou2010multiverse}
A.~Karatzoglou, X.~Amatriain, L.~Baltrunas, and N.~Oliver.
\newblock Multiverse recommendation: $n$-dimensional tensor factorization for
  context-aware collaborative filtering.
\newblock In {\em Proceedings of the 4th ACM Conference on Recommender
  Systems}, pages 79--86, 2010.

\bibitem[KB09]{kolda2009tensor}
T.~G. Kolda and B.~W. Bader.
\newblock Tensor decompositions and applications.
\newblock {\em SIAM Review}, 51(3):455--500, 2009.

\bibitem[KM16]{kasai2016low}
H.~Kasai and B.~Mishra.
\newblock Low-rank tensor completion: a {R}iemannian manifold preconditioning
  approach.
\newblock In {\em International Conference on Machine Learning}, pages
  1012--1021, 2016.

\bibitem[KS13]{krishnamurthy2013low}
A.~Krishnamurthy and A.~Singh.
\newblock Low-rank matrix and tensor completion via adaptive sampling.
\newblock {\em Advances in Neural Information Processing Systems}, 26:836--844,
  2013.

\bibitem[KSV14]{kressner2014low}
D.~Kressner, M.~Steinlechner, and B.~Vandereycken.
\newblock Low-rank tensor completion by {R}iemannian optimization.
\newblock {\em BIT Numerical Mathematics}, 54(2):447--468, 2014.

\bibitem[LAAW19]{liu2019low}
X.-Y. Liu, S.~Aeron, V.~Aggarwal, and X.~Wang.
\newblock Low-tubal-rank tensor completion using alternating minimization.
\newblock {\em IEEE Transactions on Information Theory}, 66(3):1714--1737,
  2019.

\bibitem[LCZL20]{li2020non}
Y.~Li, Y.~Chi, H.~Zhang, and Y.~Liang.
\newblock Non-convex low-rank matrix recovery with arbitrary outliers via
  median-truncated gradient descent.
\newblock {\em Information and Inference: A Journal of the IMA}, 9(2):289--325,
  2020.

\bibitem[LFLY18]{lu2018exact}
C.~Lu, J.~Feng, Z.~Lin, and S.~Yan.
\newblock Exact low tubal rank tensor recovery from {G}aussian measurements.
\newblock In {\em Proceedings of the 27th International Joint Conference on
  Artificial Intelligence}, pages 2504--2510, 2018.

\bibitem[LLSW19]{li2019rapid}
X.~Li, S.~Ling, T.~Strohmer, and K.~Wei.
\newblock Rapid, robust, and reliable blind deconvolution via nonconvex
  optimization.
\newblock {\em Applied and Computational Harmonic Analysis}, 47(3):893--934,
  2019.

\bibitem[LM00]{laurent2000adaptive}
B.~Laurent and P.~Massart.
\newblock Adaptive estimation of a quadratic functional by model selection.
\newblock {\em Annals of Statistics}, pages 1302--1338, 2000.

\bibitem[LM20]{liu2020tensor}
A.~Liu and A.~Moitra.
\newblock Tensor completion made practical.
\newblock {\em Advances in Neural Information Processing Systems}, 33, 2020.

\bibitem[LMCC19]{li2019nonconvex}
Y.~Li, C.~Ma, Y.~Chen, and Y.~Chi.
\newblock Nonconvex matrix factorization from rank-one measurements.
\newblock In {\em The 22nd International Conference on Artificial Intelligence
  and Statistics}, pages 1496--1505, 2019.

\bibitem[LMWY12]{liu2012tensor}
J.~Liu, P.~Musialski, P.~Wonka, and J.~Ye.
\newblock Tensor completion for estimating missing values in visual data.
\newblock {\em IEEE Transactions on Pattern Analysis and Machine Intelligence},
  35(1):208--220, 2012.

\bibitem[LNSU18]{li2018orthogonal}
Z.~Li, Y.~Nakatsukasa, T.~Soma, and A.~Uschmajew.
\newblock On orthogonal tensors and best rank-one approximation ratio.
\newblock {\em SIAM Journal on Matrix Analysis and Applications},
  39(1):400--425, 2018.

\bibitem[LPST15]{li2015overcomplete}
Q.~Li, A.~Prater, L.~Shen, and G.~Tang.
\newblock Overcomplete tensor decomposition via convex optimization.
\newblock In {\em 2015 IEEE 6th International Workshop on Computational
  Advances in Multi-Sensor Adaptive Processing (CAMSAP)}, pages 53--56. IEEE,
  2015.

\bibitem[LZ21]{zhang2021low}
Y.~Luo and A.~R. Zhang.
\newblock Low-rank tensor estimation via {R}iemannian {G}auss-{N}ewton:
  Statistical optimality and second-order convergence.
\newblock {\em arXiv preprint arXiv:2104.12031}, 2021.

\bibitem[MHWG14]{mu2014square}
C.~Mu, B.~Huang, J.~Wright, and D.~Goldfarb.
\newblock Square deal: Lower bounds and improved relaxations for tensor
  recovery.
\newblock In {\em International Conference on Machine Learning}, pages 73--81.
  PMLR, 2014.

\bibitem[MLC21]{ma2021beyond}
C.~Ma, Y.~Li, and Y.~Chi.
\newblock Beyond {P}rocrustes: Balancing-free gradient descent for asymmetric
  low-rank matrix sensing.
\newblock {\em IEEE Transactions on Signal Processing}, 69:867--877, 2021.

\bibitem[MS18]{montanari2018spectral}
A.~Montanari and N.~Sun.
\newblock Spectral algorithms for tensor completion.
\newblock {\em Communications on Pure and Applied Mathematics},
  71(11):2381--2425, 2018.

\bibitem[MWCC19]{ma2017implicit}
C.~Ma, K.~Wang, Y.~Chi, and Y.~Chen.
\newblock Implicit regularization in nonconvex statistical estimation: Gradient
  descent converges linearly for phase retrieval, matrix completion, and blind
  deconvolution.
\newblock {\em Foundations of Computational Mathematics}, pages 1--182, 2019.

\bibitem[Paa00]{paatero2000construction}
P.~Paatero.
\newblock Construction and analysis of degenerate {PARAFAC} models.
\newblock {\em Journal of Chemometrics: A Journal of the Chemometrics Society},
  14(3):285--299, 2000.

\bibitem[PFS16]{papalexakis2016tensors}
E.~E. Papalexakis, C.~Faloutsos, and N.~D. Sidiropoulos.
\newblock Tensors for data mining and data fusion: Models, applications, and
  scalable algorithms.
\newblock {\em ACM Transactions on Intelligent Systems and Technology (TIST)},
  8(2):1--44, 2016.

\bibitem[PKCS17]{park2017non}
D.~Park, A.~Kyrillidis, C.~Carmanis, and S.~Sanghavi.
\newblock Non-square matrix sensing without spurious local minima via the
  {Burer-Monteiro} approach.
\newblock In {\em Artificial Intelligence and Statistics}, pages 65--74, 2017.

\bibitem[PS17]{potechin2017exact}
A.~Potechin and D.~Steurer.
\newblock Exact tensor completion with sum-of-squares.
\newblock In {\em Conference on Learning Theory}, pages 1619--1673. PMLR, 2017.

\bibitem[RSS17]{rauhut2017low}
H.~Rauhut, R.~Schneider, and {\v{Z}}.~Stojanac.
\newblock Low rank tensor recovery via iterative hard thresholding.
\newblock {\em Linear Algebra and its Applications}, 523:220--262, 2017.

\bibitem[RYC19]{raskutti2019convex}
G.~Raskutti, M.~Yuan, and H.~Chen.
\newblock Convex regularization for high-dimensional multiresponse tensor
  regression.
\newblock {\em The Annals of Statistics}, 47(3):1554--1584, 2019.

\bibitem[SC21]{shi2021manifold}
L.~Shi and Y.~Chi.
\newblock Manifold gradient descent solves multi-channel sparse blind
  deconvolution provably and efficiently.
\newblock {\em IEEE Transactions on Information Theory}, 67(7):4784--4811,
  2021.

\bibitem[SDLF{\etalchar{+}}17]{sidiropoulos2017tensor}
N.~D. Sidiropoulos, L.~De~Lathauwer, X.~Fu, K.~Huang, E.~E. Papalexakis, and
  C.~Faloutsos.
\newblock Tensor decomposition for signal processing and machine learning.
\newblock {\em IEEE Transactions on Signal Processing}, 65(13):3551--3582,
  2017.

\bibitem[SL16]{sun2016guaranteed}
R.~Sun and Z.-Q. Luo.
\newblock Guaranteed matrix completion via non-convex factorization.
\newblock {\em IEEE Transactions on Information Theory}, 62(11):6535--6579,
  2016.

\bibitem[SQW17a]{sun2017complete}
J.~Sun, Q.~Qu, and J.~Wright.
\newblock Complete dictionary recovery over the sphere i: Overview and the
  geometric picture.
\newblock {\em IEEE Transactions on Information Theory}, 63(2):853--884, 2017.

\bibitem[SQW17b]{sun2017trust}
J.~Sun, Q.~Qu, and J.~Wright.
\newblock Complete dictionary recovery over the sphere ii: Recovery by
  {R}iemannian trust-region method.
\newblock {\em IEEE Transactions on Information Theory}, 63(2):885--914, 2017.

\bibitem[TMC21a]{tong2021accelerating}
T.~Tong, C.~Ma, and Y.~Chi.
\newblock Accelerating ill-conditioned low-rank matrix estimation via scaled
  gradient descent.
\newblock {\em Journal of Machine Learning Research}, 22(150):1--63, 2021.

\bibitem[TMC21b]{tong2021low}
T.~Tong, C.~Ma, and Y.~Chi.
\newblock Low-rank matrix recovery with scaled subgradient methods: Fast and
  robust convergence without the condition number.
\newblock {\em IEEE Transactions on Signal Processing}, 69:2396--2409, 2021.

\bibitem[TMC22]{tong2022accelerating}
T.~Tong, C.~Ma, and Y.~Chi.
\newblock Accelerating ill-conditioned robust low-rank tensor regression.
\newblock In {\em 2022 IEEE International Conference on Acoustics, Speech and
  Signal Processing (ICASSP)}. IEEE, 2022.

\bibitem[Tuc66]{tucker1966some}
L.~R. Tucker.
\newblock Some mathematical notes on three-mode factor analysis.
\newblock {\em Psychometrika}, 31(3):279--311, 1966.

\bibitem[WCW21]{wang2021implicit}
H.~Wang, J.~Chen, and K.~Wei.
\newblock Implicit regularization and entrywise convergence of {R}iemannian
  optimization for low tucker-rank tensor completion.
\newblock {\em arXiv preprint arXiv:2108.07899}, 2021.

\bibitem[WGE18]{wang2017solving}
G.~Wang, G.~B. Giannakis, and Y.~C. Eldar.
\newblock Solving systems of random quadratic equations via truncated amplitude
  flow.
\newblock {\em IEEE Transactions on Information Theory}, 64(2):773--794, 2018.

\bibitem[XCH{\etalchar{+}}10]{xiong2010temporal}
L.~Xiong, X.~Chen, T.-K. Huang, J.~Schneider, and J.~G. Carbonell.
\newblock Temporal collaborative filtering with bayesian probabilistic tensor
  factorization.
\newblock In {\em Proceedings of the 2010 SIAM International Conference on Data
  Mining}, pages 211--222. SIAM, 2010.

\bibitem[XY13]{xu2013block}
Y.~Xu and W.~Yin.
\newblock A block coordinate descent method for regularized multiconvex
  optimization with applications to nonnegative tensor factorization and
  completion.
\newblock {\em SIAM Journal on Imaging Sciences}, 6(3):1758--1789, 2013.

\bibitem[XY19]{xia2019polynomial}
D.~Xia and M.~Yuan.
\newblock On polynomial time methods for exact low-rank tensor completion.
\newblock {\em Foundations of Computational Mathematics}, 19(6):1265--1313,
  2019.

\bibitem[XY21]{xia2021statistical}
D.~Xia and M.~Yuan.
\newblock Statistical inferences of linear forms for noisy matrix completion.
\newblock {\em Journal of the Royal Statistical Society: Series B (Statistical
  Methodology)}, 83(1):58--77, 2021.

\bibitem[XYZ21]{xia2021statistically}
D.~Xia, M.~Yuan, and C.-H. Zhang.
\newblock Statistically optimal and computationally efficient low rank tensor
  completion from noisy entries.
\newblock {\em The Annals of Statistics}, 49(1):76--99, 2021.

\bibitem[XZZ20]{xia2020inference}
D.~Xia, A.~R. Zhang, and Y.~Zhou.
\newblock Inference for low-rank tensors--no need to debias.
\newblock {\em arXiv preprint arXiv:2012.14844}, 2020.

\bibitem[YZ16]{yuan2016tensor}
M.~Yuan and C.-H. Zhang.
\newblock On tensor completion via nuclear norm minimization.
\newblock {\em Foundations of Computational Mathematics}, 16(4):1031--1068,
  2016.

\bibitem[ZA16]{zhang2016exact}
Z.~Zhang and S.~Aeron.
\newblock Exact tensor completion using t-{SVD}.
\newblock {\em IEEE Transactions on Signal Processing}, 65(6):1511--1526, 2016.

\bibitem[ZCL16]{zhang2016provable}
H.~Zhang, Y.~Chi, and Y.~Liang.
\newblock Provable non-convex phase retrieval with outliers: Median truncated
  {W}irtinger flow.
\newblock In {\em International Conference on Machine Learning}, pages
  1022--1031, 2016.

\bibitem[ZEA{\etalchar{+}}14]{zhang2014novel}
Z.~Zhang, G.~Ely, S.~Aeron, N.~Hao, and M.~Kilmer.
\newblock Novel methods for multilinear data completion and de-noising based on
  tensor-{SVD}.
\newblock In {\em Proceedings of the IEEE Conference on Computer Vision and
  Pattern Recognition}, pages 3842--3849, 2014.

\bibitem[Zha19]{zhang2019cross}
A.~Zhang.
\newblock Cross: Efficient low-rank tensor completion.
\newblock {\em The Annals of Statistics}, 47(2):936--964, 2019.

\bibitem[ZLRY20]{zhang2020islet}
A.~Zhang, Y.~Luo, G.~Raskutti, and M.~Yuan.
\newblock {ISLET}: Fast and optimal low-rank tensor regression via importance
  sketching.
\newblock {\em SIAM Journal on Mathematics of Data Science}, 2(2):444--479,
  2020.

\bibitem[ZLZ13]{zhou2013tensor}
H.~Zhou, L.~Li, and H.~Zhu.
\newblock Tensor regression with applications in neuroimaging data analysis.
\newblock {\em Journal of the American Statistical Association},
  108(502):540--552, 2013.

\bibitem[ZQW20]{zhang2020symmetry}
Y.~Zhang, Q.~Qu, and J.~Wright.
\newblock From symmetry to geometry: Tractable nonconvex problems.
\newblock {\em arXiv preprint arXiv:2007.06753}, 2020.

\bibitem[ZX18]{zhang2018tensor}
A.~Zhang and D.~Xia.
\newblock Tensor {SVD}: Statistical and computational limits.
\newblock {\em IEEE Transactions on Information Theory}, 64(11):7311--7338,
  2018.

\bibitem[ZZLC17]{zhang2017reshaped}
H.~Zhang, Y.~Zhou, Y.~Liang, and Y.~Chi.
\newblock A nonconvex approach for phase retrieval: Reshaped {W}irtinger flow
  and incremental algorithms.
\newblock {\em Journal of Machine Learning Research}, 18(141):1--35, 2017.

\end{thebibliography}

\appendix
\section{Preliminaries}

This section gathers several technical lemmas that will be used later in the proof. More specifically, Section~\ref{subsec:scaled_distance} is devoted to understanding the scaled distance defined in the equation~\eqref{eq:dist}, and in Section~\ref{subsec:perturbation_bounds}, we derive several useful perturbation bounds related to the tensor factors and the tensor itself. All the proofs are collected in the end of each subsection. 

\subsection{Understanding the scaled distance} \label{subsec:scaled_distance}

To begin, recall the scaled distance between $\bF=(\bU,\bV,\bW,\bcS)$ and $\bF_{\star}=(\bU_{\star},\bV_{\star},\bW_{\star},\bcS_{\star})$:
\begin{align}
\dist^{2}(\bF,\bF_{\star})\coloneqq\inf_{\bQ_{k}\in\GL(r_{k})}\; & \left\Vert (\bU\bQ_{1}-\bU_{\star})\bSigma_{\star,1}\right\Vert _{\fro}^{2}+\left\Vert (\bV\bQ_{2}-\bV_{\star})\bSigma_{\star,2}\right\Vert _{\fro}^{2}+\left\Vert (\bW\bQ_{3}-\bW_{\star})\bSigma_{\star,3}\right\Vert _{\fro}^{2}\nonumber \\
 & \qquad\qquad+\left\Vert (\bQ_{1}^{-1},\bQ_{2}^{-1},\bQ_{3}^{-1})\bcdot\bcS-\bcS_{\star}\right\Vert _{\fro}^{2},\label{eq:dist-appendix}
\end{align}
where we call the matrices $\{\bQ_{k}\}_{k=1,2,3}$ (if exist) that attain the infimum the optimal alignment matrices between $\bF$ and $\bF_{\star}$; in particular, $\bF$ and $\bF_{\star}$ are said to be aligned if the optimal alignment matrices are identity matrices. 

In what follows, we provide several useful lemmas whose proof can be found at the end of this subsection. We start with a lemma that ensures the attainability of the infimum in the definition~\eqref{eq:dist-appendix} as long as $\dist(\bF,\bF_{\star})$ is sufficiently small. 

\begin{lemma}\label{lemma:Q_existence} Fix any factor quadruple $\bF=(\bU,\bV,\bW,\bcS)$. Suppose that $\dist(\bF,\bF_{\star})<\sigma_{\min}(\bcX_{\star})$, then the infimum of \eqref{eq:dist-appendix} is attained at some
$\bQ_{k}\in\GL(r_{k})$, i.e.~the alignment matrices between $\bF$ and $\bF_{\star}$ exist. 
\end{lemma}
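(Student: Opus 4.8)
The plan is a coercivity-plus-compactness argument for the map
\[
g(\bQ_1,\bQ_2,\bQ_3) \coloneqq \left\|(\bU\bQ_1-\bU_{\star})\bSigma_{\star,1}\right\|_{\fro}^2 + \left\|(\bV\bQ_2-\bV_{\star})\bSigma_{\star,2}\right\|_{\fro}^2 + \left\|(\bW\bQ_3-\bW_{\star})\bSigma_{\star,3}\right\|_{\fro}^2 + \left\|(\bQ_1^{-1},\bQ_2^{-1},\bQ_3^{-1})\bcdot\bcS-\bcS_{\star}\right\|_{\fro}^2,
\]
which is continuous on the open set $\GL(r_1)\times\GL(r_2)\times\GL(r_3)$ since matrix inversion (hence also the multilinear map $(\bQ_1^{-1},\bQ_2^{-1},\bQ_3^{-1})\bcdot\bcS$) is continuous there. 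Write $d^2 \coloneqq \dist^2(\bF,\bF_{\star}) < \sigma_{\min}^2(\bcX_{\star})$, fix a constant $C_0$ with $d^2 < C_0 < \sigma_{\min}^2(\bcX_{\star})$, and take a minimizing sequence $\{(\bQ_1^{(j)},\bQ_2^{(j)},\bQ_3^{(j)})\}_j$ with $g(\bQ_1^{(j)},\bQ_2^{(j)},\bQ_3^{(j)}) \le C_0$ for all $j$. It suffices to confine this sequence to a compact subset of $\GL(r_1)\times\GL(r_2)\times\GL(r_3)$; then a convergent subsequence together with continuity of $g$ shows that the infimum in \eqref{eq:dist-appendix} is attained.

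First I would establish that $\bU,\bV,\bW$ have full column rank and that each $\cM_k(\bcS)$ has full row rank $r_k$. Indeed, the first term of $g$ dominates $\|(\bU\bQ_1-\bU_{\star})\bSigma_{\star,1}\|_{\op}$; since $\bU_{\star}$ is orthonormal, $\bU_{\star}\bSigma_{\star,1}$ has smallest singular value $\sigma_{\min}(\bSigma_{\star,1}) \ge \sigma_{\min}(\bcX_{\star})$ (Definition~\ref{def:kappa}), so Weyl's inequality gives $\sigma_{r_1}(\bU\bQ_1^{(j)}\bSigma_{\star,1}) \ge \sigma_{\min}(\bcX_{\star}) - \sqrt{C_0} > 0$ along the minimizing sequence; were $\bU$ rank-deficient this would be impossible, and in fact $g$ would then be bounded below by $\sigma_{\min}^2(\bcX_{\star})$ over all of $\GL(r_1)\times\GL(r_2)\times\GL(r_3)$, contradicting $d < \sigma_{\min}(\bcX_{\star})$. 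The same reasoning applied to the last term of $g$ after matricizing along mode $k$, using $\cM_k(\bcS_{\star})\cM_k(\bcS_{\star})^{\top}=\bSigma_{\star,k}^2$ and the identity $\|\bcT\|_{\fro}=\|\cM_k(\bcT)\|_{\fro}$ from \eqref{eq:tensor_inner}, shows that $\cM_k(\bcS)$ has full row rank $r_k$.

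Next come the two uniform bounds. For the boundedness of $\|\bQ_k^{(j)}\|_{\fro}$: from the first term, $\|\bU\bQ_1^{(j)}\bSigma_{\star,1}\|_{\fro} \le \sqrt{C_0}+\|\bU_{\star}\bSigma_{\star,1}\|_{\fro}$, while the full column rank of $\bU$ and invertibility of $\bSigma_{\star,1}$ yield $\|\bU\bQ_1^{(j)}\bSigma_{\star,1}\|_{\fro} \ge \sigma_{\min}(\bU)\,\sigma_{\min}(\bSigma_{\star,1})\,\|\bQ_1^{(j)}\|_{\fro}$, so $\|\bQ_k^{(j)}\|_{\fro} \le M$ for some finite $M$ and all $k=1,2,3$. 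For the boundedness of $\|(\bQ_k^{(j)})^{-1}\|_{\fro}$ (i.e.\ keeping the iterates away from the singular locus): matricizing the last term along mode $1$ gives $\|(\bQ_1^{(j)})^{-1}\cM_1(\bcS)(\bQ_3^{(j)}\otimes\bQ_2^{(j)})^{-\top}\|_{\fro} \le \sqrt{C_0}+\|\bcS_{\star}\|_{\fro}$; removing the invertible right factor (whose smallest singular value is at least $M^{-2}$ by the previous bound) via $\|\bA\bR\|_{\fro}\ge\sigma_{\min}(\bR)\|\bA\|_{\fro}$, and then using the full row rank of $\cM_1(\bcS)$ via $\|\bB\bC\|_{\fro}\ge\sigma_{\min}(\cM_1(\bcS))\|\bB\|_{\fro}$ with $\bB=(\bQ_1^{(j)})^{-1}$, yields $\|(\bQ_1^{(j)})^{-1}\|_{\fro}\le M'$ for some finite $M'$, and likewise for $k=2,3$.

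Finally, for each $k$ the set $\{\bQ\in\RR^{r_k\times r_k}: \|\bQ\|_{\fro}\le M,\ \|\bQ^{-1}\|_{\fro}\le M'\}$ is closed and bounded, and is contained in $\GL(r_k)$ because the bound on $\|\bQ^{-1}\|_{\fro}$ keeps $\sigma_{\min}(\bQ)$ bounded away from $0$; hence it is compact. Extracting a convergent subsequence $(\bQ_1^{(j_l)},\bQ_2^{(j_l)},\bQ_3^{(j_l)}) \to (\widehat{\bQ}_1,\widehat{\bQ}_2,\widehat{\bQ}_3)$ with each $\widehat{\bQ}_k\in\GL(r_k)$ and invoking continuity of $g$ gives $g(\widehat{\bQ}_1,\widehat{\bQ}_2,\widehat{\bQ}_3) = \lim_l g(\bQ_1^{(j_l)},\bQ_2^{(j_l)},\bQ_3^{(j_l)}) = d^2$, so the infimum is attained at $(\widehat{\bQ}_1,\widehat{\bQ}_2,\widehat{\bQ}_3)$. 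The delicate point—and the main obstacle—is precisely this confinement step: a minimizing sequence could a priori escape either to infinity or to the boundary where some $\bQ_k$ degenerates, and both escape routes are closed off only because the hypothesis $\dist(\bF,\bF_{\star})<\sigma_{\min}(\bcX_{\star})$ forces (via Weyl) the relevant factor matrices to be full rank and supplies the quantitative norm bounds above; the remaining manipulations are routine matrix and tensor norm inequalities.
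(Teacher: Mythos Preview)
Your argument is correct. Both your proof and the paper's reduce to a continuous minimization over a compact subset of $\GL(r_1)\times\GL(r_2)\times\GL(r_3)$ and invoke the Weierstrass extreme value theorem, so the high-level strategy is the same. The execution, however, differs: the paper first fixes a near-optimal triple $(\bar{\bQ}_1,\bar{\bQ}_2,\bar{\bQ}_3)$ with value below $\epsilon\sigma_{\min}(\bcX_{\star})$, reparametrizes as $\bQ_k=\bar{\bQ}_k\bH_k$, and shows that any improving $\bH_k$ must satisfy explicit two-sided singular-value bounds (this mirrors the matrix argument in \cite{tong2021accelerating}). You instead work directly with a minimizing sequence in the original variables, first deducing from the hypothesis that $\bU,\bV,\bW$ and each $\cM_k(\bcS)$ are full rank, and then leveraging those positive $\sigma_{\min}$'s to bound $\|\bQ_k^{(j)}\|_{\fro}$ and $\|(\bQ_k^{(j)})^{-1}\|_{\fro}$ separately. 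Your route is slightly more elementary in that it avoids the reparametrization device and makes the role of full-rankness explicit; the paper's route is a bit more streamlined once one has the matrix analogue in hand and yields slightly sharper (operator-norm) constraints on the feasible region. Either way the conclusion follows.
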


With the existence of the optimal alignment matrices in place, the following lemma delineates the optimality conditions they need to satisfy.

\begin{lemma}\label{lemma:Q_criterion} The optimal alignment matrices $\{\bQ_{k}\}_{k=1,2,3}$ between $\bF$ and $\bF_{\star}$, if exist, must satisfy 
\begin{align*}
(\bU\bQ_{1})^{\top}(\bU\bQ_{1}-\bU_{\star})\bSigma_{\star,1}^{2}=\cM_{1}\left((\bQ_{1}^{-1},\bQ_{2}^{-1},\bQ_{3}^{-1})\bcdot\bcS-\bcS_{\star}\right)\cM_{1}\left((\bQ_{1}^{-1},\bQ_{2}^{-1},\bQ_{3}^{-1})\bcdot\bcS\right)^{\top},\\
(\bV\bQ_{2})^{\top}(\bV\bQ_{2}-\bV_{\star})\bSigma_{\star,2}^{2}=\cM_{2}\left((\bQ_{1}^{-1},\bQ_{2}^{-1},\bQ_{3}^{-1})\bcdot\bcS-\bcS_{\star}\right)\cM_{2}\left((\bQ_{1}^{-1},\bQ_{2}^{-1},\bQ_{3}^{-1})\bcdot\bcS\right)^{\top},\\
(\bW\bQ_{3})^{\top}(\bW\bQ_{3}-\bW_{\star})\bSigma_{\star,3}^{2}=\cM_{3}\left((\bQ_{1}^{-1},\bQ_{2}^{-1},\bQ_{3}^{-1})\bcdot\bcS-\bcS_{\star}\right)\cM_{3}\left((\bQ_{1}^{-1},\bQ_{2}^{-1},\bQ_{3}^{-1})\bcdot\bcS\right)^{\top}.
\end{align*}
\end{lemma}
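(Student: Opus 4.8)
The plan is to obtain these identities as the first–order stationarity conditions of the minimization problem defining $\dist^{2}(\bF,\bF_{\star})$ in \eqref{eq:dist-appendix}. By Lemma~\ref{lemma:Q_existence}, whenever $\dist(\bF,\bF_{\star})<\sigma_{\min}(\bcX_{\star})$ the infimum is attained at some $(\bQ_{1},\bQ_{2},\bQ_{3})\in\GL(r_{1})\times\GL(r_{2})\times\GL(r_{3})$; since each $\GL(r_{k})$ is an open subset of $\RR^{r_{k}\times r_{k}}$, such a minimizer is an interior stationary point, so the partial gradients of the objective with respect to $\bQ_{1},\bQ_{2},\bQ_{3}$ all vanish. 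It therefore suffices to compute these partial gradients and set them to zero.

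Consider the gradient with respect to $\bQ_{1}$; the cases of $\bQ_{2},\bQ_{3}$ follow by the evident symmetry of the objective. Only the first term $\|(\bU\bQ_{1}-\bU_{\star})\bSigma_{\star,1}\|_{\fro}^{2}$ and the last term $\|(\bQ_{1}^{-1},\bQ_{2}^{-1},\bQ_{3}^{-1})\bcdot\bcS-\bcS_{\star}\|_{\fro}^{2}$ depend on $\bQ_{1}$. For the first term, a direct matrix-calculus computation gives $\nabla_{\bQ_{1}}=2\bU^{\top}(\bU\bQ_{1}-\bU_{\star})\bSigma_{\star,1}^{2}$. For the last term, I would first use \eqref{eq:tensor_inner} to rewrite it along the mode-$1$ matricization as $\|\bQ_{1}^{-1}\bM-\bN\|_{\fro}^{2}$, where $\bM\coloneqq\cM_{1}\big((\bI_{r_{1}},\bQ_{2}^{-1},\bQ_{3}^{-1})\bcdot\bcS\big)$ and $\bN\coloneqq\cM_{1}(\bcS_{\star})$, both independent of $\bQ_{1}$ (here I use the standard matricization identity $\cM_{1}\big((\bQ_{1}^{-1},\bI,\bI)\bcdot\bcZ\big)=\bQ_{1}^{-1}\cM_{1}(\bcZ)$, a special case of \eqref{eq:matricization}). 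Then applying the differential identity $\mathrm{d}(\bQ_{1}^{-1})=-\bQ_{1}^{-1}(\mathrm{d}\bQ_{1})\bQ_{1}^{-1}$ together with the cyclic property of the trace yields $\nabla_{\bQ_{1}}=-2\bQ_{1}^{-\top}(\bQ_{1}^{-1}\bM-\bN)\bM^{\top}\bQ_{1}^{-\top}$.

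Setting the sum of these two gradients to zero, dividing by $2$, and left-multiplying by $\bQ_{1}^{\top}$ gives $(\bU\bQ_{1})^{\top}(\bU\bQ_{1}-\bU_{\star})\bSigma_{\star,1}^{2}=(\bQ_{1}^{-1}\bM-\bN)(\bQ_{1}^{-1}\bM)^{\top}$. It then remains to recognize, again via the matricization identity, that $\bQ_{1}^{-1}\bM=\cM_{1}\big((\bQ_{1}^{-1},\bQ_{2}^{-1},\bQ_{3}^{-1})\bcdot\bcS\big)$, so that $\bQ_{1}^{-1}\bM-\bN=\cM_{1}\big((\bQ_{1}^{-1},\bQ_{2}^{-1},\bQ_{3}^{-1})\bcdot\bcS-\bcS_{\star}\big)$; substituting this into the previous display produces precisely the first claimed equation. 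The identical argument applied to $\bQ_{2}$ and $\bQ_{3}$, matricizing the last term along modes $2$ and $3$ respectively, yields the other two identities.

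I expect the only mildly delicate point to be the differentiation of the quartic-in-$\bQ_{1}$ term through the tensor/Kronecker structure and the careful bookkeeping of which mode's matricization appears; once one commits to matricizing the core-mismatch term along the same mode as the factor being differentiated, everything reduces to the elementary identity for $\mathrm{d}(\bQ^{-1})$ and the trace identity $\langle\bcX_{1},\bcX_{2}\rangle=\langle\cM_{k}(\bcX_{1}),\cM_{k}(\bcX_{2})\rangle$. No compactness or second-order information is needed beyond the existence of the minimizer already supplied by Lemma~\ref{lemma:Q_existence}.
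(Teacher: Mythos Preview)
Your approach is correct and essentially identical to the paper's: both derive the identities as the first-order stationarity conditions of the objective in \eqref{eq:dist-appendix} by setting the partial gradient with respect to each $\bQ_{k}$ to zero. The paper's proof is simply more terse (it states the resulting gradient equation without the intermediate matricization/differential computations you carefully spell out), and then left-multiplies by $\bQ_{1}^{\top}$ exactly as you do.
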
 

The next lemma relates the scaled distance between the factors to the Euclidean distance between the tensors.

\begin{lemma}\label{lemma:Procrustes} For any factor quadruple $\bF=(\bU,\bV,\bW,\bcS)$, the scaled distance \eqref{eq:dist-appendix} satisfies 
\begin{align*}
\dist(\bF,\bF_{\star})\le(\sqrt{2}+1)^{3/2}\left\Vert (\bU,\bV,\bW)\bcdot\bcS-\bcX_{\star}\right\Vert _{\fro}.
\end{align*}
\end{lemma}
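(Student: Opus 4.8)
### Proof proposal for Lemma~\ref{lemma:Procrustes}

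The plan is to exhibit a \emph{specific} (not necessarily optimal) choice of alignment matrices $\bQ_k$ for which the four Frobenius-norm terms in the definition~\eqref{eq:dist-appendix} are collectively bounded by $(\sqrt 2+1)^{3}\|(\bU,\bV,\bW)\bcdot\bcS-\bcX_{\star}\|_{\fro}^2$; since $\dist^2$ is the infimum over all invertible $\bQ_k$, this immediately yields the claim after taking square roots. Write $\bcX\coloneqq(\bU,\bV,\bW)\bcdot\bcS$ and $\bDelta\coloneqq\bcX-\bcX_{\star}$. The natural candidate for $\bQ_k$ is the one that aligns the column space of the $k$-th factor with that of $\bU_{\star}$ (resp.~$\bV_{\star},\bW_{\star}$): concretely, let $\bQ_k$ be chosen so that $\bU\bQ_1=\cM_1(\bcX)\breve\bU^{\top}(\cdot)^{-1}$-type object collapses to the best approximation of $\bU_{\star}\bSigma_{\star,1}$ in the appropriate scaled sense. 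Since $\cM_1(\bcX)=\bU\cM_1(\bcS)(\bW\otimes\bV)^{\top}$ has column space contained in $\mathrm{col}(\bU)$, one can take $\bU\bQ_1$ to be the ``scaled factor'' whose columns span the same space and whose Gram structure matches $\bSigma_{\star,1}^2$, e.g.\ via a polar-type decomposition of an appropriate product; the point is that with this choice $(\bU\bQ_1)\bSigma_{\star,1}$ is the orthogonal projection of $\bcX$'s mode-1 action onto a convenient target.

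The key computation is a mode-by-mode telescoping. Introduce the three hybrid tensors
\[
\bcX_0=\bcX_{\star},\quad \bcX_1=(\bU\bQ_1,\bV_{\star},\bW_{\star})\bcdot\bcS_{\star}',\quad \bcX_2=(\bU\bQ_1,\bV\bQ_2,\bW_{\star})\bcdot\bcS_{\star}'',\quad \bcX_3=\bcX,
\]
where the intermediate core tensors are the appropriate restrictions. Then
\[
\|(\bU\bQ_1-\bU_{\star})\bSigma_{\star,1}\|_{\fro}=\|\cM_1(\bcX_1-\bcX_0)\|_{\fro},
\]
and similarly for the $\bV$ and $\bW$ terms, while the core term equals $\|\bcX_3-\bcX_2$ measured after undoing the $\bQ$'s$\|_{\fro}$; the crucial structural fact, inherited from the analogous two-factor matrix Procrustes bound in \cite{tong2021accelerating} and provable here via \eqref{eq:tensor_properties_e}, is that each successive replacement $\bcX_{j-1}\to\bcX_j$ changes the tensor by an amount controlled by $\|\bDelta\|_{\fro}$, with the constant $\sqrt 2+1$ arising exactly as in the matrix case from the inequality $\|\bA-\bB\bR\|_{\fro}\le(\sqrt 2+1)^{1/2}\|\bA\bA^{\top}-\bB\bB^{\top}\|^{1/2}$-style bound (or more precisely from bounding $\|\bU\bQ_1-\bU_{\star}\|$ in terms of $\|\cM_1(\bcX)\cM_1(\bcX)^{\top}-\cM_1(\bcX_{\star})\cM_1(\bcX_{\star})^{\top}\|$). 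Applying this three times, once per mode, and collecting the core-tensor remainder produces the cube, i.e.~$(\sqrt 2+1)^{3/2}$ after the square root.

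I expect the main obstacle to be \textbf{bookkeeping the intermediate core tensors and their singular-value structure} so that each of the four terms is genuinely bounded by a \emph{single} copy of $\|\bDelta\|_{\fro}$ rather than accumulating cross-terms: unlike the matrix case with two factors, here a replacement in mode~1 perturbs the natural target for the mode-2 alignment, so one must argue that after re-aligning, the ``residual'' seen by mode~2 is still $\|\bDelta\|_{\fro}$-controlled, using that $\bQ_1$ was chosen to be a near-isometry on the relevant subspace (which itself needs the matrix-Procrustes estimate to show $\bQ_1$ has bounded condition number). A clean way around the accumulation is to not iterate perturbations at all but instead bound each term directly: for the mode-$k$ term, use the matrix Procrustes inequality applied to $\cM_k(\bcX)$ versus $\cM_k(\bcX_{\star})$, which gives $\|(\bU\bQ_1-\bU_{\star})\bSigma_{\star,1}\|_{\fro}\lesssim\|\cM_1(\bcX)-\cM_1(\bcX_{\star})\|_{\fro}=\|\bDelta\|_{\fro}$ with a dimension-free constant; then the core term is handled by writing $(\bQ_1^{-1},\bQ_2^{-1},\bQ_3^{-1})\bcdot\bcS-\bcS_{\star}=(\bU\bQ_1)^{\top}(\cdots)$ expansions and \eqref{eq:tensor_properties_d}--\eqref{eq:tensor_properties_e}, absorbing everything into the remaining factor of $(\sqrt 2+1)^{3/2}$. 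This should be essentially the unconditional statement desired, with no smallness assumption on $\dist$, exactly as stated.
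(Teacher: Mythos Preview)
Your proposal has the right core idea and is essentially the paper's approach: iterate the two-factor matrix Procrustes bound (the paper's Lemma~\ref{lemma:Procrustes_matrix}, taken from \cite{tong2021accelerating}) three times, once per mode, picking up a factor $(\sqrt{2}+1)$ each time. Concretely, the paper applies that lemma to the mode-$1$ matricization with $\bL=\bU$ and $\bR=(\bW\otimes\bV)\cM_1(\bcS)^{\top}$, obtaining
\[
\|\bcX-\bcX_{\star}\|_{\fro}^{2}\ \ge\ (\sqrt{2}-1)\inf_{\bQ_1}\Big[\|(\bU\bQ_1-\bU_{\star})\bSigma_{\star,1}\|_{\fro}^{2}+\big\|(\bQ_1^{-1},\bV,\bW)\bcdot\bcS-(\bI_{r_1},\bV_{\star},\bW_{\star})\bcdot\bcS_{\star}\big\|_{\fro}^{2}\Big],
\]
and then repeats the same step on the \emph{residual} tensor in brackets via mode-$2$, then mode-$3$. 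This is precisely your ``mode-by-mode telescoping'', and the bookkeeping concern you flag is resolved automatically because each new application is to the residual left by the previous one, not to $\bcX-\bcX_{\star}$ again.

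One caution: the ``clean way around'' you suggest at the end --- bounding each factor term by an \emph{independent} application of the matrix Procrustes lemma to $\cM_k(\bcX)$ versus $\cM_k(\bcX_{\star})$ --- would not close. Each such application produces a $\bQ_k$ paired with a companion alignment on the \emph{other} factor of that matricization (i.e.\ on $(\bW\otimes\bV)\cM_k(\bcS)^{\top}$, etc.), and these three $\bQ_k$'s are in general incompatible: there is no reason the companion in mode~$1$ agrees with the choices of $\bQ_2,\bQ_3$ made in modes~$2,3$, so you cannot assemble a single bound on $\|(\bQ_1^{-1},\bQ_2^{-1},\bQ_3^{-1})\bcdot\bcS-\bcS_{\star}\|_{\fro}$ from them. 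The sequential peeling avoids this because the $\bQ_k$'s are nested by construction and the core term is literally the final residual. Stick with the iterative scheme.
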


\subsubsection{Proof of Lemma~\ref{lemma:Q_existence}}

This proof mimics that of \cite[Lemma 9]{tong2021accelerating}. The high level idea is to translate the optimization problem \eqref{eq:dist-appendix} into an equivalent continuous optimization problem over a \emph{compact} set. Then an application of the Weierstrass extreme value theorem ensures the existence of the minimizer. 

Under the condition $\dist(\bF,\bF_{\star})<\sigma_{\min}(\bcX_{\star})$, one knows that there exist matrices $\bar{\bQ}_{k}\in\GL(r_{k})$ such that 
\begin{multline*}
\Big(\left\Vert (\bU\bar{\bQ}_{1}-\bU_{\star})\bSigma_{\star,1}\right\Vert _{\fro}^{2}+\left\Vert (\bV\bar{\bQ}_{2}-\bV_{\star})\bSigma_{\star,2}\right\Vert _{\fro}^{2}+\left\Vert (\bW\bar{\bQ}_{3}-\bW_{\star})\bSigma_{\star,3}\right\Vert _{\fro}^{2}\\
+\left\Vert (\bar{\bQ}_{1}^{-1},\bar{\bQ}_{2}^{-1},\bar{\bQ}_{3}^{-1})\bcdot\bcS-\bcS_{\star}\right\Vert _{\fro}^{2}\Big)^{1/2}\le\epsilon\sigma_{\min}(\bcX_{\star}),
\end{multline*}
for some $\epsilon$ obeying $0<\epsilon<1$. The above relation further implies that 
\begin{align*}
\left\Vert \bU\bar{\bQ}_{1}-\bU_{\star}\right\Vert _{\op}\vee\left\Vert \bV\bar{\bQ}_{2}-\bV_{\star}\right\Vert _{\op}\vee\left\Vert \bW\bar{\bQ}_{3}-\bW_{\star}\right\Vert _{\op} \vee \left\Vert (\bar{\bQ}_{3}^{-1}\otimes\bar{\bQ}_{2}^{-1})\cM_{1}(\bcS)^{\top}\bar{\bQ}_{1}^{-\top}\bSigma_{\star,1}^{-1}-\cM_{1}(\bcS_{\star})^{\top}\bSigma_{\star,1}^{-1}\right\Vert _{\op} & \le\epsilon.
\end{align*}
Invoke Weyl's inequality, and use the fact that $\bU_{\star},\bV_{\star},\bW_{\star},\cM_{1}(\bcS_{\star})^{\top}\bSigma_{\star,1}^{-1}$ have orthonormal columns to obtain 
\begin{align}
\sigma_{\min}(\bU\bar{\bQ}_{1})\wedge\sigma_{\min}(\bV\bar{\bQ}_{2})\wedge\sigma_{\min}(\bW\bar{\bQ}_{3})\wedge\sigma_{\min}\left((\bar{\bQ}_{3}^{-1}\otimes\bar{\bQ}_{2}^{-1})\cM_{1}(\bcS)^{\top}\bar{\bQ}_{1}^{-\top}\bSigma_{\star,1}^{-1}\right)\ge1-\epsilon.\label{eq:sigma_min_UQ1}
\end{align}
In addition, it is straightforward to see that the minimization problem on the right hand side of \eqref{eq:dist-appendix} is equivalent to 
\begin{multline}
\inf_{\bH_{k}\in\GL(r_{k})}\;\left\Vert (\bU\bar{\bQ}_{1}\bH_{1}-\bU_{\star})\bSigma_{\star,1}\right\Vert _{\fro}^{2}+\left\Vert (\bV\bar{\bQ}_{2}\bH_{2}-\bV_{\star})\bSigma_{\star,2}\right\Vert _{\fro}^{2}+\left\Vert (\bW\bar{\bQ}_{3}\bH_{3}-\bW_{\star})\bSigma_{\star,3}\right\Vert _{\fro}^{2}\\
+\left\Vert \left(\bH_{1}^{-1}\bar{\bQ}_{1}^{-1},\bH_{2}^{-1}\bar{\bQ}_{2}^{-1},\bH_{3}^{-1}\bar{\bQ}_{3}^{-1}\right)\bcdot\bcS-\bcS_{\star}\right\Vert _{\fro}^{2}.\label{eq:second_inf}
\end{multline}
Therefore, it suffices to establish the infimum is attainable for the above problem instead. By the optimality of $\bar{\bQ}_{k}\bH_{k}$ over $\bar{\bQ}_{k}$, to yield a smaller distance than $\bar{\bQ}_{k}$, $\bH_{k}$ must obey 
\begin{multline*}
\Big(\left\Vert (\bU\bar{\bQ}_{1}\bH_{1}-\bU_{\star})\bSigma_{\star,1}\right\Vert _{\fro}^{2}+\left\Vert (\bV\bar{\bQ}_{2}\bH_{2}-\bV_{\star})\bSigma_{\star,2}\right\Vert _{\fro}^{2}+\left\Vert (\bW\bar{\bQ}_{3}\bH_{3}-\bW_{\star})\bSigma_{\star,3}\right\Vert _{\fro}^{2}\\
+\left\Vert \left(\bH_{1}^{-1}\bar{\bQ}_{1}^{-1},\bH_{2}^{-1}\bar{\bQ}_{2}^{-1},\bH_{3}^{-1}\bar{\bQ}_{3}^{-1}\right)\bcdot\bcS-\bcS_{\star}\right\Vert _{\fro}^{2}\Big)^{1/2}\le\epsilon\sigma_{\min}(\bcX_{\star}).
\end{multline*}
Follow similar reasoning and invoke Weyl's inequality again to obtain 
\begin{multline*}
\sigma_{\max}(\bU\bar{\bQ}_{1}\bH_{1})\vee\sigma_{\max}(\bV\bar{\bQ}_{2}\bH_{2})\vee\sigma_{\max}(\bW\bar{\bQ}_{3}\bH_{3}) \\
\vee \sigma_{\max}\left((\bH_{3}^{-1}\otimes\bH_{2}^{-1})(\bar{\bQ}_{3}^{-1}\otimes\bar{\bQ}_{2}^{-1})\cM_{1}(\bcS)^{\top}\bar{\bQ}_{1}^{-\top}\bH_{1}^{-\top}\bSigma_{\star,1}^{-1}\right) \le 1+\epsilon.
\end{multline*}
Use the relation $\sigma_{\min}(\bA)\sigma_{\max}(\bB)\le\sigma_{\max}(\bA\bB)$, combined with \eqref{eq:sigma_min_UQ1}, to further obtain 
\begin{align*}
\sigma_{\max}(\bH_{k}) & \le\frac{1+\epsilon}{1-\epsilon},\quad k=1,2,3,\\
\sigma_{\max}\left(\bSigma_{\star,1}\bH_{1}^{-\top}\bSigma_{\star,1}^{-1}\right)\sigma_{\max}(\bH_{2}^{-1})\sigma_{\max}(\bH_{3}^{-1})\le\frac{1+\epsilon}{1-\epsilon} & \;\implies\;\sigma_{\min}\left(\bSigma_{\star,1}\bH_{1}\bSigma_{\star,1}^{-1}\right)\sigma_{\min}(\bH_{2})\sigma_{\min}(\bH_{3})\ge\frac{1-\epsilon}{1+\epsilon}.
\end{align*}
As a result, the minimization problem \eqref{eq:second_inf} is equivalent to the constrained problem: 
\begin{align*}
 & \min_{\bH_{k}\in\GL(r_{k})}\;\left\Vert (\bU\bar{\bQ}_{1}\bH_{1}-\bU_{\star})\bSigma_{\star,1}\right\Vert _{\fro}^{2}+\left\Vert (\bV\bar{\bQ}_{2}\bH_{2}-\bV_{\star})\bSigma_{\star,2}\right\Vert _{\fro}^{2}+\left\Vert (\bW\bar{\bQ}_{3}\bH_{3}-\bW_{\star})\bSigma_{\star,3}\right\Vert _{\fro}^{2}\\
 & \qquad\qquad\qquad\qquad\qquad+\left\Vert \left(\bH_{1}^{-1}\bar{\bQ}_{1}^{-1},\bH_{2}^{-1}\bar{\bQ}_{2}^{-1},\bH_{3}^{-1}\bar{\bQ}_{3}^{-1}\right)\bcdot\bcS-\bcS_{\star}\right\Vert _{\fro}^{2}\\
 & \quad\mbox{s.t.}\quad\sigma_{\max}(\bH_{k})\le\frac{1+\epsilon}{1-\epsilon},\quad\sigma_{\min}\left(\bSigma_{\star,1}\bH_{1}\bSigma_{\star,1}^{-1}\right)\sigma_{\min}(\bH_{2})\sigma_{\min}(\bH_{3})\ge\frac{1-\epsilon}{1+\epsilon},\quad k=1,2,3.
\end{align*}
Since this is a continuous optimization problem over a compact set, applying the Weierstrass extreme value theorem finishes the proof.

\subsubsection{Proof of Lemma \ref{lemma:Q_criterion}}

Set the gradient of the expression on the right hand side of \eqref{eq:dist-appendix} with respect to $\bQ_{1}$ as zero to see 
\begin{align*}
\bU^{\top}(\bU\bQ_{1}-\bU_{\star})\bSigma_{\star,1}^{2}-\bQ_{1}^{-\top}\cM_{1}\left((\bQ_{1}^{-1},\bQ_{2}^{-1},\bQ_{3}^{-1})\bcdot\bcS-\bcS_{\star}\right)\cM_{1}\left((\bQ_{1}^{-1},\bQ_{2}^{-1},\bQ_{3}^{-1})\bcdot\bcS\right)^{\top}=\zero.
\end{align*}
We conclude the proof by similarly setting the gradient with respect to $\bQ_{2}$ or $\bQ_{3}$ to zero. 

\subsubsection{Proof of Lemma~\ref{lemma:Procrustes}}

We first state a lemma from \cite[Lemma 11]{tong2021accelerating}, which will be used repeatedly for matricization over different modes.
\begin{lemma}[\cite{tong2021accelerating}]\label{lemma:Procrustes_matrix} Suppose that $\bX_{\star}\in\RR^{n_{1}\times n_{2}}$ has the compact rank-$r$ SVD $\bX_{\star}=\bU_{\star}\bSigma_{\star}\bV_{\star}^{\top}$. For any $\bL\in\RR^{n_{1}\times r}$ and $\bR\in\RR^{n_{2}\times r}$, one has 
\begin{align*}
\inf_{\bQ\in\GL(r)}\left\Vert \bL\bQ\bSigma_{\star}^{1/2}-\bU_{\star}\bSigma_{\star}\right\Vert _{\fro}^{2}+\left\Vert \bR\bQ^{-\top}\bSigma_{\star}^{1/2}-\bV_{\star}\bSigma_{\star}\right\Vert _{\fro}^{2}\le(\sqrt{2}+1)\|\bL\bR^{\top}-\bX_{\star}\|_{\fro}^{2}.
\end{align*}
\end{lemma}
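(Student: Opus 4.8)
The plan is to use the reparametrization freedom in $\bQ$ to reduce the infimum over all of $\GL(r)$ to an infimum over \emph{orthogonal} matrices acting on a balanced factorization of $\bL\bR^{\top}$, and then to establish a scaled orthogonal-Procrustes inequality between balanced factor pairs, which is where the constant $\sqrt{2}+1$ is produced. For the reduction, note first that the left-hand side of the claimed inequality depends on $(\bL,\bR)$ only through the product $\bZ:=\bL\bR^{\top}$, since any replacement $(\bL,\bR)\mapsto(\bL\bQ_0,\bR\bQ_0^{-\top})$ is absorbed by the infimum over $\bQ$. Assume $\bZ$ has rank exactly $r$ (the rank-deficient case requires only a separate, routine direct estimate), and write its compact SVD $\bZ=\bU\bSigma\bV^{\top}$ with $\bSigma\in\GL(r)$. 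Put $\bL_0:=\bU\bSigma^{1/2}$ and $\bR_0:=\bV\bSigma^{1/2}$, so that $\bL_0^{\top}\bL_0=\bR_0^{\top}\bR_0=\bSigma$ and $\bL_0\bR_0^{\top}=\bZ$. Since $\bL$ and $\bL_0$ have full column rank and the same column space, there is $\bM\in\GL(r)$ with $\bL=\bL_0\bM^{-1}$, and then $\bL\bR^{\top}=\bZ$ forces $\bR=\bR_0\bM^{\top}$. A one-line computation shows that for every orthogonal $\bO$ the choice $\bQ:=\bM\bO$ gives $\bL\bQ=\bL_0\bO$ and $\bR\bQ^{-\top}=\bR_0\bO$. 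Hence, writing $\bL_{\star}:=\bU_{\star}\bSigma_{\star}^{1/2}$ and $\bR_{\star}:=\bV_{\star}\bSigma_{\star}^{1/2}$ for the balanced factors of $\bX_{\star}$, the left-hand side is at most
\[
\inf_{\bO\bO^{\top}=\bI_{r}}\;\big\|(\bL_0\bO-\bL_{\star})\bSigma_{\star}^{1/2}\big\|_{\fro}^{2}+\big\|(\bR_0\bO-\bR_{\star})\bSigma_{\star}^{1/2}\big\|_{\fro}^{2}.
\]

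It then remains to bound this by $(\sqrt{2}+1)\|\bL_0\bR_0^{\top}-\bL_{\star}\bR_{\star}^{\top}\|_{\fro}^{2}=(\sqrt{2}+1)\|\bZ-\bX_{\star}\|_{\fro}^{2}$. I would expand the squared Frobenius norms using the balanced identities $\bL_0^{\top}\bL_0=\bR_0^{\top}\bR_0=\bSigma$ and $\bL_{\star}^{\top}\bL_{\star}=\bR_{\star}^{\top}\bR_{\star}=\bSigma_{\star}$: the objective becomes $2\tr(\bO^{\top}\bSigma\bO\bSigma_{\star})-2\tr\!\big(\bO^{\top}(\bL_0^{\top}\bL_{\star}+\bR_0^{\top}\bR_{\star})\bSigma_{\star}\big)+2\tr(\bSigma_{\star}^{2})$, while $\|\bZ-\bX_{\star}\|_{\fro}^{2}=\tr(\bSigma^{2})-2\tr(\bL_0^{\top}\bL_{\star}\bR_{\star}^{\top}\bR_0)+\tr(\bSigma_{\star}^{2})$. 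Stacking $\bG_0:=[\bL_0;\bR_0]$ and $\bG_{\star}:=[\bL_{\star};\bR_{\star}]$, I would choose $\bO$ to be the orthogonal polar factor of $\bG_0^{\top}\bG_{\star}\bSigma_{\star}=(\bL_0^{\top}\bL_{\star}+\bR_0^{\top}\bR_{\star})\bSigma_{\star}$, making the linear term as negative as possible, and then control the residual quadratic mismatch by decomposing $\bZ-\bX_{\star}$ into the part whose row and column spaces lie within those of $\bX_{\star}$ (treated by linearization) and its orthogonal complement (treated by an Eckart--Young/best-rank-$r$ estimate), closing with the elementary split $2ab\le\lambda a^{2}+\lambda^{-1}b^{2}$ whose free parameter $\lambda$ is optimized to produce exactly $\sqrt{2}+1=1/(\sqrt{2}-1)$.

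The main obstacle is this second stage: the scaled orthogonal-Procrustes inequality with the sharp constant. The delicate point is that the weighting by $\bSigma_{\star}^{1/2}$ must be carried through the rotation $\bO$ correctly, and its entire role is to \emph{exactly} cancel the $\sigma_{\min}(\bX_{\star})$ factor that otherwise sits in the denominator of the unweighted bound $\min_{\bO}\|\bG_0\bO-\bG_{\star}\|_{\fro}^{2}\lesssim\sigma_{\min}(\bX_{\star})^{-1}\|\bZ-\bX_{\star}\|_{\fro}^{2}$; making this cancellation rigorous calls for a careful, weighted reworking of the classical factorization-perturbation argument rather than a black-box citation, and is the step demanding the most bookkeeping. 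The structural fact underlying it is that, for balanced factor pairs, the quantity $\inf_{\bO}\|(\bG_0\bO-\bG_{\star})\bSigma_{\star}^{1/2}\|_{\fro}$ is governed by the product mismatch $\|\bL_0\bR_0^{\top}-\bL_{\star}\bR_{\star}^{\top}\|_{\fro}$ even though the individual Gram mismatches $\|\bL_0\bL_0^{\top}-\bL_{\star}\bL_{\star}^{\top}\|_{\fro}$ and $\|\bR_0\bR_0^{\top}-\bR_{\star}\bR_{\star}^{\top}\|_{\fro}$ need not be, and it is this subordination that makes the estimate close up with the stated universal constant.
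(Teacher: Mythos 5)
The paper never proves this statement: it is imported wholesale as \cite[Lemma~11]{tong2021accelerating}, so your attempt has to stand on its own. Your first stage---passing to the balanced factorization $\bL_0=\bU\bSigma^{1/2}$, $\bR_0=\bV\bSigma^{1/2}$ of $\bZ=\bL\bR^{\top}$ and to alignment matrices of the form $\bQ=\bM\bO$ with $\bO$ orthogonal---is correct as far as it goes, with two caveats. First, the rank-deficient case is not quite ``routine'': the natural limiting argument fails in the needed direction, because an infimum of continuous functions of $(\bL,\bR)$ is only upper semicontinuous, so the bound on a dense set of full-rank pairs does not automatically pass to a degenerate limit point; you would need to exhibit a concrete $\bQ$ there. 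Second, restricting to orthogonal $\bO$ strictly shrinks the feasible set, so you are committing to a formally \emph{stronger} inequality than the lemma; the optimality conditions for the weighted alignment problem (Lemma~2 of this paper) show the optimal $\bQ$ is generally not of the form ``balancing transform times rotation,'' so it is not obvious a priori that the constant $\sqrt{2}+1$ survives this restriction.

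The decisive gap is the second stage, which is where all of the content of the lemma lives and which you explicitly leave undone. The expansion of the objective and the polar-factor choice of $\bO$ are fine, but the ensuing plan (``decompose $\bZ-\bX_{\star}$\dots, linearization, Eckart--Young, optimize $\lambda$'') never identifies the quantities $a$ and $b$ entering $2ab\le\lambda a^{2}+\lambda^{-1}b^{2}$, never shows how the quadratic term $2\tr(\bO^{\top}\bSigma\bO\bSigma_{\star})$ combines with the linear term so that only product-mismatch contributions remain, and never explains how the $\bSigma_{\star}^{1/2}$ weighting eliminates both the $\sigma_{\min}(\bX_{\star})^{-1}$ factor and the Gram mismatches $\|\bL_0\bL_0^{\top}-\bL_{\star}\bL_{\star}^{\top}\|_{\fro}$, $\|\bR_0\bR_0^{\top}-\bR_{\star}\bR_{\star}^{\top}\|_{\fro}$ that appear in the classical balanced-Procrustes bounds (e.g., Tu et al.'s Lemma~5.14). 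Your closing ``structural fact''---that the weighted orthogonal discrepancy of balanced factors is subordinate to the product mismatch alone---is a restatement of the inequality to be proved, not an argument for it. As written, the proposal establishes the easy reduction and defers the entire quantitative heart of the lemma, so it cannot be accepted as a proof.
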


We begin by applying the mode-$1$ matricization (see \eqref{eq:matricization}), and invoking Lemma~\ref{lemma:Procrustes_matrix} with $\bL\coloneqq\bU$, $\bR\coloneqq(\bW\otimes\bV)\cM_{1}(\bcS)^{\top}$, $\bX_{\star}\coloneqq\bU_{\star}\cM_{1}(\bcS_{\star})(\bW_{\star}\otimes\bV_{\star})^{\top}$ 
to arrive at 
\begin{align*}
 & \left\Vert (\bU,\bV,\bW)\bcdot\bcS-\bcX_{\star}\right\Vert _{\fro}^{2}=\left\Vert \bU\cM_{1}(\bcS)(\bW\otimes\bV)^{\top}-\bU_{\star}\cM_{1}(\bcS_{\star})(\bW_{\star}\otimes\bV_{\star})^{\top}\right\Vert _{\fro}^{2}\\
 & \quad\ge(\sqrt{2}-1)\inf_{\bQ\in\GL(r_{1})}\left\Vert \bU\bQ\bSigma_{\star,1}^{1/2}-\bU_{\star}\bSigma_{\star,1}\right\Vert _{\fro}^{2}+\left\Vert (\bW\otimes\bV)\cM_{1}(\bcS)^{\top}\bQ^{-\top}\bSigma_{\star,1}^{1/2}-(\bW_{\star}\otimes\bV_{\star})\cM_{1}(\bcS_{\star})^{\top}\right\Vert _{\fro}^{2}\\
 & \quad=(\sqrt{2}-1)\inf_{\bQ_{1}\in\GL(r_{1})}\left\Vert (\bU\bQ_{1}-\bU_{\star})\bSigma_{\star,1}\right\Vert _{\fro}^{2}+\left\Vert (\bW\otimes\bV)\cM_{1}(\bcS)^{\top}\bQ_{1}^{-\top}-(\bW_{\star}\otimes\bV_{\star})\cM_{1}(\bcS_{\star})^{\top}\right\Vert _{\fro}^{2}\\
 & \quad=(\sqrt{2}-1)\inf_{\bQ_{1}\in\GL(r_{1})}\left\Vert (\bU\bQ_{1}-\bU_{\star})\bSigma_{\star,1}\right\Vert _{\fro}^{2}+\left\Vert (\bQ_{1}^{-1},\bV,\bW)\bcdot\bcS-(\bI_{r_{1}},\bV_{\star},\bW_{\star})\bcdot\bcS_{\star}\right\Vert _{\fro}^{2},
\end{align*}
where we have applied a change-of-variable as $\bQ_{1}=\bQ\bSigma_{\star,1}^{-1/2}$ in the third line, and converted back to the tensor space in the last line. Continue in a similar manner, by applying the mode-$2$ matricization to the second term (see \eqref{eq:matricization}), and invoke Lemma~\ref{lemma:Procrustes_matrix} with $\bL\coloneqq\bV$, $\bR\coloneqq(\bW\otimes\bQ_{1}^{-1})\cM_{2}(\bcS)^{\top}$, $\bX_{\star}\coloneqq\bV_{\star}\cM_{2}(\bcS_{\star})(\bW_{\star}\otimes\bI_{r_{1}})^{\top}$ to arrive at 
\begin{align*}
 & \left\Vert (\bQ_{1}^{-1},\bV,\bW)\bcdot\bcS-(\bI_{r_{1}},\bV_{\star},\bW_{\star})\bcdot\bcS_{\star}\right\Vert _{\fro}^{2}=\left\Vert \bV\cM_{2}(\bcS)(\bW\otimes\bQ_{1}^{-1})^{\top}-\bV_{\star}\cM_{2}(\bcS_{\star})(\bW_{\star}\otimes\bI_{r_{1}})^{\top}\right\Vert _{\fro}^{2}\\
 & \quad\ge(\sqrt{2}-1)\inf_{\bQ\in\GL(r_{2})}\left\Vert \bV\bQ\bSigma_{\star,2}^{1/2}-\bV_{\star}\bSigma_{\star,2}\right\Vert _{\fro}^{2}+\left\Vert (\bW\otimes\bQ_{1}^{-1})\cM_{2}(\bcS)^{\top}\bQ^{-\top}\bSigma_{\star,2}^{1/2}-(\bW_{\star}\otimes\bI_{r_{1}})\cM_{2}(\bcS_{\star})^{\top}\right\Vert _{\fro}^{2}\\
 & \quad=(\sqrt{2}-1)\inf_{\bQ_{2}\in\GL(r_{2})}\left\Vert (\bV\bQ_{2}-\bV_{\star})\bSigma_{\star,2}\right\Vert _{\fro}^{2}+\left\Vert (\bQ_{1}^{-1},\bQ_{2}^{-1},\bW)\bcdot\bcS-(\bI_{r_{1}},\bI_{r_{2}},\bW_{\star})\bcdot\bcS_{\star}\right\Vert _{\fro}^{2}.
\end{align*}
where we have applied a change-of-variable as $\bQ_{2}=\bQ\bSigma_{\star,2}^{-1/2}$ as well as tensorization in the last line. Repeating the same argument by applying the mode-$3$ matricization to the second term, we obtain
\begin{align*}
 & \left\Vert (\bQ_{1}^{-1},\bQ_{2}^{-1},\bW)\bcdot\bcS-(\bI_{r_{1}},\bI_{r_{2}},\bW_{\star})\bcdot\bcS_{\star}\right\Vert _{\fro}^{2}=\left\Vert \bW\cM_{3}(\bcS)(\bQ_{2}^{-1}\otimes\bQ_{1}^{-1})^{\top}-\bW_{\star}\cM_{3}(\bcS_{\star})\right\Vert _{\fro}^{2}\\
 & \quad\ge(\sqrt{2}-1)\inf_{\bQ_{3}\in\GL(r_{3})}\left\Vert (\bW\bQ_{3}-\bW_{\star})\bSigma_{\star,3}\right\Vert _{\fro}^{2}+\left\Vert (\bQ_{1}^{-1},\bQ_{2}^{-1},\bQ_{3}^{-1})\bcdot\bcS-\bcS_{\star}\right\Vert _{\fro}^{2}.
\end{align*}
Finally, combine these results to conclude 
\begin{align*}
\left\Vert (\bU,\bV,\bW)\bcdot\bcS-\bcX_{\star}\right\Vert _{\fro}^{2} & \ge\inf_{\bQ_{k}\in\GL(r_{k})}(\sqrt{2}-1)\left\Vert (\bU\bQ_{1}-\bU_{\star})\bSigma_{\star,1}\right\Vert _{\fro}^{2}+(\sqrt{2}-1)^{2}\left\Vert (\bV\bQ_{2}-\bV_{\star})\bSigma_{\star,2}\right\Vert _{\fro}^{2}\\
 & \qquad+(\sqrt{2}-1)^{3}\left\Vert (\bW\bQ_{3}-\bW_{\star})\bSigma_{\star,3}\right\Vert _{\fro}^{2}+(\sqrt{2}-1)^{3}\left\Vert (\bQ_{1}^{-1},\bQ_{2}^{-1},\bQ_{3}^{-1})\bcdot\bcS-\bcS_{\star}\right\Vert _{\fro}^{2}\\
 & \ge(\sqrt{2}-1)^{3}\dist^{2}(\bF,\bF_{\star}),
\end{align*}
where the last relation uses the definition of $\dist^{2}(\bF,\bF_{\star})$.

\subsection{Several perturbation bounds}\label{subsec:perturbation_bounds}

We now collect several perturbation bounds that will be used repeatedly in the proof. Without loss of generality, assume that $\bF=(\bU,\bV,\bW,\bcS)$ and $\bF_{\star}=(\bU_{\star},\bV_{\star},\bW_{\star},\bcS_{\star})$
are aligned, and introduce the following notation that will be used repeatedly:
\begin{align}
\bDelta_{U} & \coloneqq\bU-\bU_{\star},\qquad\bDelta_{V}\coloneqq\bV-\bV_{\star}, &  & \quad\bDelta_{W}\coloneqq\bW-\bW_{\star}, &  & \bDelta_{\cS}\coloneqq\bcS-\bcS_{\star},\nonumber \\
\breve{\bU} & \coloneqq(\bW\otimes\bV)\cM_{1}(\bcS)^{\top}, & \breve{\bV} & \coloneqq(\bW\otimes\bU)\cM_{2}(\bcS)^{\top}, & \breve{\bW} & \coloneqq(\bV\otimes\bU)\cM_{3}(\bcS)^{\top},\nonumber \\
\breve{\bU}_{\star} & \coloneqq(\bW_{\star}\otimes\bV_{\star})\cM_{1}(\bcS_{\star})^{\top}, & \breve{\bV}_{\star} & \coloneqq(\bW_{\star}\otimes\bU_{\star})\cM_{2}(\bcS_{\star})^{\top}, & \breve{\bW}_{\star} & \coloneqq(\bV_{\star}\otimes\bU_{\star})\cM_{3}(\bcS_{\star})^{\top},\label{eq:short_notations}\\
\bcT_{U} & \coloneqq(\bU_{\star}^{\top}\bDelta_{U},\bI_{r_{2}},\bI_{r_{3}})\bcdot\bcS_{\star}, & \bcT_{V} & \coloneqq(\bI_{r_{1}},\bV_{\star}^{\top}\bDelta_{V},\bI_{r_{3}})\bcdot\bcS_{\star}, & \bcT_{W} & \coloneqq(\bI_{r_{1}},\bI_{r_{2}},\bW_{\star}^{\top}\bDelta_{W})\bcdot\bcS_{\star},\nonumber \\
\bD_{U} & \coloneqq(\bU^{\top}\bU)^{-1/2}\bU^{\top}\bDelta_{U}\bSigma_{\star,1}, & \bD_{V} & \coloneqq(\bV^{\top}\bV)^{-1/2}\bV^{\top}\bDelta_{V}\bSigma_{\star,2}, & \bD_{W} & \coloneqq(\bW^{\top}\bW)^{-1/2}\bW^{\top}\bDelta_{W}\bSigma_{\star,3}.\nonumber 
\end{align}

Now we are ready to state the lemma on perturbation bounds. 
\begin{lemma}\label{lemma:perturb_bounds}
Suppose $\bF=(\bU,\bV,\bW,\bcS)$ and $\bF_{\star}=(\bU_{\star},\bV_{\star},\bW_{\star},\bcS_{\star})$ are aligned and satisfy $\dist(\bF,\bF_{\star})\le\epsilon\sigma_{\min}(\bcX_{\star})$ for some $\epsilon<1$. Then the following bounds hold regarding the spectral norm: \begin{subequations} 
\begin{align}
\|\bDelta_{U}\|_{\op}\vee\|\bDelta_{V}\|_{\op}\vee\|\bDelta_{W}\|_{\op} & \vee\|\cM_{k}(\bDelta_{\cS})^{\top}\bSigma_{\star,k}^{-1}\|_{\op}\le\epsilon,\qquad k=1,2,3;\label{eq:perturb}\\
\|\bU(\bU^{\top}\bU)^{-1}\|_{\op} & \le(1-\epsilon)^{-1};\label{eq:perturb_Uinv}\\
\left\Vert \bU(\bU^{\top}\bU)^{-1}-\bU_{\star}\right\Vert _{\op} & \le\frac{\sqrt{2}\epsilon}{1-\epsilon};\label{eq:perturb_Uinv_d}\\
\left\Vert (\bU^{\top}\bU)^{-1}\right\Vert _{\op} & \le(1-\epsilon)^{-2};\label{eq:perturb_Uinv_2}\\
\left\Vert (\breve{\bU}-\breve{\bU}_{\star})\bSigma_{\star,1}^{-1}\right\Vert _{\op} & \le3\epsilon+3\epsilon^{2}+\epsilon^{3};\label{eq:perturb_R}\\
\left\Vert \breve{\bU}(\breve{\bU}^{\top}\breve{\bU})^{-1}\bSigma_{\star,1}\right\Vert _{\op} & \le(1-\epsilon)^{-3};\label{eq:perturb_Rinv}\\
\left\Vert \breve{\bU}(\breve{\bU}^{\top}\breve{\bU})^{-1}\bSigma_{\star,1}-\breve{\bU}_{\star}\bSigma_{\star,1}^{-1}\right\Vert _{\op} & \le\frac{\sqrt{2}(3\epsilon+3\epsilon^{2}+\epsilon^{3})}{(1-\epsilon)^{3}};\label{eq:perturb_Rinv_d}\\
\left\Vert \bSigma_{\star,1}(\breve{\bU}^{\top}\breve{\bU})^{-1}\bSigma_{\star,1}\right\Vert _{\op} & \le(1-\epsilon)^{-6};\label{eq:perturb_Rinv_2} \\
\left\Vert \bSigma_{\star,1}(\breve{\bU}^{\top}\breve{\bU})^{-1}\cM_{1}(\bcS) \right\Vert _{\op} &\le (1-\epsilon)^{-5}. \label{eq:TC_SRinv}
\end{align}
\end{subequations} By symmetry, a corresponding set of bounds holds for $\bV,\breve{\bV}$ and $\bW,\breve{\bW}$. 

In addition, the following bounds hold regarding the Frobenius norm:
\begin{subequations} 
\begin{align}
\left\Vert (\bU,\bV,\bW)\bcdot\bcS-\bcX_{\star}\right\Vert _{\fro} & \le(1+\frac{3}{2}\epsilon+\epsilon^{2}+\frac{\epsilon^{3}}{4})\left(\|\bDelta_{U}\bSigma_{\star,1}\|_{\fro}+\|\bDelta_{V}\bSigma_{\star,2}\|_{\fro}+\|\bDelta_{W}\bSigma_{\star,3}\|_{\fro}+\|\bDelta_{\cS}\|_{\fro}\right);\label{eq:perturb_T_fro}\\
\left\Vert (\bU,\bV,\bW)\bcdot\bcS_{\star}-\bcX_{\star}\right\Vert _{\fro} & \le(1+\epsilon+\frac{\epsilon^{2}}{3})\left(\|\bDelta_{U}\bSigma_{\star,1}\|_{\fro}+\|\bDelta_{V}\bSigma_{\star,2}\|_{\fro}+\|\bDelta_{W}\bSigma_{\star,3}\|_{\fro}\right);\label{eq:perturb_S_fro}\\
\left\Vert \breve{\bU}-\breve{\bU}_{\star}\right\Vert _{\fro} & \le(1+\epsilon+\frac{\epsilon^{2}}{3})\left(\|\bDelta_{V}\bSigma_{\star,2}\|_{\fro}+\|\bDelta_{W}\bSigma_{\star,3}\|_{\fro}+\|\bDelta_{\cS}\|_{\fro}\right).\label{eq:perturb_R_fro}
\end{align}
\end{subequations} 
As a straightforward consequence of \eqref{eq:perturb_T_fro}, the following important relation holds when $\epsilon \le 0.2$:
\begin{align}
\left\Vert (\bU,\bV,\bW)\bcdot\bcS-\bcX_{\star}\right\Vert _{\fro}\le2(1+\frac{3}{2}\epsilon+\epsilon^{2}+\frac{\epsilon^{3}}{4})\dist(\bF,\bF_{\star})\le3\dist(\bF,\bF_{\star}).\label{eq:tensor2factors}
\end{align}
\end{lemma}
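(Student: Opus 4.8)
The plan is to derive the spectral-norm estimates first, then bootstrap them into the bounds for the auxiliary factors $\breve{\bU}$ (and, by symmetry, $\breve{\bV},\breve{\bW}$), and finally read the Frobenius-norm estimates off a multilinear expansion.

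\emph{Base spectral bounds and the $\bU$-block.} Since $\bF$ and $\bF_\star$ are aligned, the infimum defining $\dist(\bF,\bF_\star)$ is attained at $\bQ_k=\bI_{r_k}$, so $\dist^2(\bF,\bF_\star)=\|\bDelta_{U}\bSigma_{\star,1}\|_\fro^2+\|\bDelta_{V}\bSigma_{\star,2}\|_\fro^2+\|\bDelta_{W}\bSigma_{\star,3}\|_\fro^2+\|\bDelta_{\cS}\|_\fro^2$, and each summand is $\le\epsilon^2\sigma_{\min}^2(\bcX_\star)$. Because $\sigma_{\min}(\bSigma_{\star,k})=\sigma_{r_k}(\cM_k(\bcX_\star))\ge\sigma_{\min}(\bcX_\star)$, dividing through by $\bSigma_{\star,k}$ (and using that matricization is a Frobenius isometry) yields \eqref{eq:perturb}. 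Since $\bU_\star$ has orthonormal columns, Weyl's inequality forces $\sigma_i(\bU)\in[1-\epsilon,1+\epsilon]$, which gives \eqref{eq:perturb_Uinv} and \eqref{eq:perturb_Uinv_2} at once, while \eqref{eq:perturb_Uinv_d} is the standard perturbation bound for the dual factor $\bU(\bU^\top\bU)^{-1}$ (cf.\ the matrix lemmas in \cite{tong2021accelerating}); the $\bV$- and $\bW$-blocks are identical.

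\emph{The $\breve{\bU}$-block.} The organizing fact is that $\breve{\bU}_\star\bSigma_{\star,1}^{-1}=(\bW_\star\otimes\bV_\star)\cM_1(\bcS_\star)^\top\bSigma_{\star,1}^{-1}$ has orthonormal columns, because $(\bW_\star\otimes\bV_\star)^\top(\bW_\star\otimes\bV_\star)=\bI$ and $\cM_1(\bcS_\star)\cM_1(\bcS_\star)^\top=\bSigma_{\star,1}^2$ by \eqref{eq:ground_truth_condition}; in particular $\|\cM_1(\bcS_\star)^\top\bSigma_{\star,1}^{-1}\|_\op=1$. For \eqref{eq:perturb_R} I would expand
\[
\breve{\bU}-\breve{\bU}_\star=(\bW\otimes\bV)\cM_1(\bDelta_{\cS})^\top+(\bDelta_{W}\otimes\bV)\cM_1(\bcS_\star)^\top+(\bW_\star\otimes\bDelta_{V})\cM_1(\bcS_\star)^\top,
\]
multiply on the right by $\bSigma_{\star,1}^{-1}$, and bound term by term via \eqref{eq:perturb} and $\|\bW\otimes\bV\|_\op\le(1+\epsilon)^2$; the three contributions $(1+\epsilon)^2\epsilon$, $(1+\epsilon)\epsilon$, $\epsilon$ sum to exactly $3\epsilon+3\epsilon^2+\epsilon^3$. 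For the bounds involving $(\breve{\bU}^\top\breve{\bU})^{-1}$ I would write $\breve{\bU}^\top\breve{\bU}=\bSigma_{\star,1}\bN\bP\bN^\top\bSigma_{\star,1}$ with $\bN\coloneqq\bSigma_{\star,1}^{-1}\cM_1(\bcS)$ and $\bP\coloneqq\bW^\top\bW\otimes\bV^\top\bV$; Weyl applied to $\bN=\bSigma_{\star,1}^{-1}\cM_1(\bcS_\star)+\bSigma_{\star,1}^{-1}\cM_1(\bDelta_{\cS})$ (the first term a partial isometry) gives $\sigma_{\min}(\bN)\ge1-\epsilon$, and $\sigma_{\min}(\bP)\ge(1-\epsilon)^4$, so $\lambda_{\min}(\bN\bP\bN^\top)\ge\sigma_{\min}(\bP)\sigma_{\min}^2(\bN)\ge(1-\epsilon)^6$. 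Equivalently, with $\bA\coloneqq\breve{\bU}\bSigma_{\star,1}^{-1}$ one has $\sigma_{\min}(\bA)\ge(1-\epsilon)^3$ and $\sigma_{\max}(\bA)\le(1+\epsilon)^3$; substituting gives \eqref{eq:perturb_Rinv}, \eqref{eq:perturb_Rinv_2}, and re-running the dual-factor bound with $(\bA,\breve{\bU}_\star\bSigma_{\star,1}^{-1})$ in place of $(\bU,\bU_\star)$ gives \eqref{eq:perturb_Rinv_d}. The one delicate constant is \eqref{eq:TC_SRinv}: the naive product split only yields $(1-\epsilon)^{-7}$, so instead I would factor $\bN\bP\bN^\top=\bM\bM^\top$ with $\bM\coloneqq\bN\bP^{1/2}$, note $\bSigma_{\star,1}(\breve{\bU}^\top\breve{\bU})^{-1}\cM_1(\bcS)=(\bM\bM^\top)^{-1}\bM\,\bP^{-1/2}$, and combine $\sigma_{\min}(\bM)\ge\sigma_{\min}(\bN)\sigma_{\min}(\bP)^{1/2}\ge(1-\epsilon)^3$ with $\|\bP^{-1/2}\|_\op\le(1-\epsilon)^{-2}$ to land on $(1-\epsilon)^{-5}$.

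\emph{Frobenius bounds and the final inequality.} I would expand $(\bU,\bV,\bW)\bcdot\bcS=((\bU_\star+\bDelta_{U}),(\bV_\star+\bDelta_{V}),(\bW_\star+\bDelta_{W}))\bcdot(\bcS_\star+\bDelta_{\cS})$ into $2^4$ multilinear terms and subtract $\bcX_\star$. The four single-$\bDelta$ terms evaluate exactly, e.g.\ $\|(\bDelta_{U},\bV_\star,\bW_\star)\bcdot\bcS_\star\|_\fro=\|\bDelta_{U}\bSigma_{\star,1}\|_\fro$ since $\bV_\star,\bW_\star$ and $\bSigma_{\star,1}^{-1}\cM_1(\bcS_\star)$ are partial isometries, giving the leading $1\times$ contribution. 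Each of the $\binom{4}{2}$ double-$\bDelta$ terms I would bound by extracting one $\bDelta$ in operator norm ($\le\epsilon$) and keeping the weighted Frobenius norm of the other, then \emph{averaging} over the two choices, so each contributes $\tfrac{\epsilon}{2}$ times the sum of the two relevant weighted norms; as each weighted norm occurs in three of the six pairs, these total $\tfrac32\epsilon$. The same symmetric accounting over the $\binom{4}{3}$ triple-$\bDelta$ terms yields $\epsilon^2$, and the lone quadruple-$\bDelta$ term yields $\tfrac14\epsilon^3$, which is \eqref{eq:perturb_T_fro}; \eqref{eq:perturb_S_fro} and \eqref{eq:perturb_R_fro} are the same computation with three factors rather than four ($2^3$ terms, coefficient $1+\epsilon+\tfrac13\epsilon^2$). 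Finally \eqref{eq:tensor2factors} follows from \eqref{eq:perturb_T_fro} together with $\|\bDelta_{U}\bSigma_{\star,1}\|_\fro+\|\bDelta_{V}\bSigma_{\star,2}\|_\fro+\|\bDelta_{W}\bSigma_{\star,3}\|_\fro+\|\bDelta_{\cS}\|_\fro\le2\dist(\bF,\bF_\star)$ by Cauchy--Schwarz and plugging in $\epsilon\le0.2$.

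\emph{Where the difficulty lies.} Most of this is routine Kronecker/matricization algebra plus Weyl's inequality, but two points need care. First, keeping the constants in the $(\breve{\bU}^\top\breve{\bU})^{-1}$ block sharp: \eqref{eq:TC_SRinv} genuinely needs the re-bracketing $\bM=\bN\bP^{1/2}$ rather than the obvious submultiplicative estimate, and $\sigma_{\min}(\bN\bP\bN^\top)\ge\sigma_{\min}(\bP)\sigma_{\min}^2(\bN)$ must be applied carefully to the fat matrix $\bN$. Second, the symmetric/averaged bookkeeping in the multilinear expansion: a lazier bound produces $(1+\epsilon)^3$-type constants that do not chain properly in the downstream convergence analysis, so each cross term has to be distributed over exactly the factors it touches. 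Everything else reduces to the matrix perturbation lemmas already imported from \cite{tong2021accelerating}, applied mode by mode.
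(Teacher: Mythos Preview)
Your proposal is correct and tracks the paper's proof closely for the $\bU$-block and the $\breve{\bU}$-block, but two tactical choices differ and are worth knowing about.

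For \eqref{eq:TC_SRinv}, your $\bM=\bN\bP^{1/2}$ re-bracketing works, but the paper gets there more directly: from $\breve{\bU}^{\top}=\cM_1(\bcS)(\bW\otimes\bV)^{\top}$ one has $\cM_1(\bcS)=\breve{\bU}^{\top}\bigl(\bW(\bW^{\top}\bW)^{-1}\otimes\bV(\bV^{\top}\bV)^{-1}\bigr)$, so
\[
\bSigma_{\star,1}(\breve{\bU}^{\top}\breve{\bU})^{-1}\cM_1(\bcS)=\bigl(\breve{\bU}(\breve{\bU}^{\top}\breve{\bU})^{-1}\bSigma_{\star,1}\bigr)^{\top}\bigl(\bW(\bW^{\top}\bW)^{-1}\otimes\bV(\bV^{\top}\bV)^{-1}\bigr),
\]
and the bound $(1-\epsilon)^{-3}\cdot(1-\epsilon)^{-1}\cdot(1-\epsilon)^{-1}$ falls out of \eqref{eq:perturb_Uinv} and \eqref{eq:perturb_Rinv} already in hand. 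This avoids having to reason about $\sigma_{\min}$ of the fat matrix $\bN$ and the square-root of $\bP$.

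For the Frobenius bounds, the paper does \emph{not} expand into $2^4$ terms. Instead it writes a single telescoping decomposition such as
\[
(\bU,\bV,\bW)\bcdot\bcS-\bcX_\star=(\bU,\bV,\bW)\bcdot\bDelta_{\cS}+(\bDelta_U,\bV,\bW)\bcdot\bcS_\star+(\bU_\star,\bDelta_V,\bW)\bcdot\bcS_\star+(\bU_\star,\bV_\star,\bDelta_W)\bcdot\bcS_\star,
\]
bounds this as $(1+\epsilon)^3\|\bDelta_{\cS}\|_{\fro}+(1+\epsilon)^2\|\bDelta_U\bSigma_{\star,1}\|_{\fro}+(1+\epsilon)\|\bDelta_V\bSigma_{\star,2}\|_{\fro}+\|\bDelta_W\bSigma_{\star,3}\|_{\fro}$, and then averages over the $4!$ orderings of which factor telescopes last; the averaged coefficient is exactly $\tfrac14\bigl((1+\epsilon)^3+(1+\epsilon)^2+(1+\epsilon)+1\bigr)=1+\tfrac32\epsilon+\epsilon^2+\tfrac14\epsilon^3$. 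Your per-term averaging over the $2^4$ expansion is an equally valid route to the same constant (each bound is $\ge\min$, so averaging is legitimate), and arguably makes the combinatorics more transparent; the paper's telescoping has the advantage that every intermediate term is already rank-$\br$, which is convenient when the same decomposition is reused downstream under the TRIP in the tensor-regression proof.
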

Hence, the scaled distance serves as a metric to gauge the quality of the tensor recovery.

\subsubsection{Proof of Lemma~\ref{lemma:perturb_bounds}}
\paragraph{Proof of spectral norm perturbation bounds.}

To begin, recalling the notation in \eqref{eq:short_notations}, \eqref{eq:perturb} follows directly from the definition 
\begin{align*}
\dist(\bF_{t},\bF_{\star})=\sqrt{\|\bDelta_{U}\bSigma_{\star,1}\|_{\fro}^{2}+\|\bDelta_{V}\bSigma_{\star,2}\|_{\fro}^{2}+\|\bDelta_{W}\bSigma_{\star,3}\|_{\fro}^{2}+\|\bDelta_{\cS}\|_{\fro}^{2}}\le\epsilon\sigma_{\min}(\bcX_{\star})
\end{align*}
together with the relation $\|\bA\bB\|_{\fro}\ge\|\bA\|_{\fro}\sigma_{\min}(\bB)$.

For \eqref{eq:perturb_Uinv}, Weyl's inequality tells $\sigma_{\min}(\bU) \ge \sigma_{\min}(\bU_{\star}) - \|\bDelta_{U}\|_{\op} \ge 1-\epsilon$, and use that 
\begin{align*}
\left\Vert \bU(\bU^{\top}\bU)^{-1}\right\Vert _{\op} = \frac{1}{\sigma_{\min}(\bU)} \le \frac{1}{1-\epsilon}. 
\end{align*}
For \eqref{eq:perturb_Uinv_d}, decompose
\begin{align*}
\bU(\bU^{\top}\bU)^{-1}-\bU_{\star} = -\bU(\bU^{\top}\bU)^{-1}\bDelta_{U}^{\top}\bU_{\star} + \left(\bI_{n_1}-\bU(\bU^{\top}\bU)^{-1}\bU^{\top}\right)\bDelta_{U},
\end{align*}
and use that the two terms are orthogonal to obtain
\begin{align*}
\left\|\bU(\bU^{\top}\bU)^{-1}-\bU_{\star}\right\|_{\op}^{2} &\le \left\|\bU(\bU^{\top}\bU)^{-1}\bDelta_{U}^{\top}\bU_{\star}\right\|_{\op}^{2} + \left\|\left(\bI_{n_1}-\bU(\bU^{\top}\bU)^{-1}\bU^{\top}\right)\bDelta_{U}\right\|_{\op}^{2} \\
&\le \|\bU(\bU^{\top}\bU)^{-1}\|_{\op}^{2}\|\bDelta_{U}\|_{\op}^{2} + \|\bDelta_{U}\|_{\op}^{2} \\
&\le \left((1-\epsilon)^{-2}+1\right)\epsilon^{2}.
\end{align*}
It follows from $\epsilon<1$ that
\begin{align*}
\left\|\bU(\bU^{\top}\bU)^{-1}-\bU_{\star}\right\|_{\op} \le \frac{\sqrt{2}\epsilon}{1-\epsilon}.
\end{align*}

For \eqref{eq:perturb_Uinv_2}, recognizing that 
\begin{align*}
(\bU^{\top}\bU)^{-1}=(\bU(\bU^{\top}\bU)^{-1})^{\top}\bU(\bU^{\top}\bU)^{-1}\qquad\implies\qquad\|(\bU^{\top}\bU)^{-1}\|_{\op}=\|\bU(\bU^{\top}\bU)^{-1}\|_{\op}^{2}\leq\frac{1}{(1-\epsilon)^{2}},
\end{align*}
where the last inequality follows from \eqref{eq:perturb_Uinv}.

For \eqref{eq:perturb_R}, we first expand the expression as 
\begin{align}
\breve{\bU}-\breve{\bU}_{\star} & =(\bW\otimes\bV)\cM_{1}(\bcS)^{\top}-(\bW_{\star}\otimes\bV_{\star})\cM_{1}(\bcS_{\star})^{\top}\nonumber \\
 & =(\bW\otimes\bV-\bW_{\star}\otimes\bV_{\star})\cM_{1}(\bcS_{\star})^{\top}+(\bW\otimes\bV)\cM_{1}(\bcS)^{\top}-(\bW\otimes\bV)\cM_{1}(\bcS_{\star})^{\top}\nonumber \\
 & =(\bW\otimes\bDelta_{V}+\bDelta_{W}\otimes\bV_{\star})\cM_{1}(\bcS_{\star})^{\top}+(\bW\otimes\bV)\cM_{1}(\bDelta_{\cS})^{\top}.\label{eq:decomp_R}
\end{align}
Apply the triangle inequality to obtain 
\begin{align*}
\|(\breve{\bU}-\breve{\bU}_{\star})\bSigma_{\star,1}^{-1}\|_{\op} & \le\left\Vert (\bW\otimes\bDelta_{V}+\bDelta_{W}\otimes\bV_{\star})\cM_{1}(\bcS_{\star})^{\top}\bSigma_{\star,1}^{-1}\right\Vert _{\op}+\left\Vert (\bW\otimes\bV)\cM_{1}(\bDelta_{\cS})^{\top}\bSigma_{\star,1}^{-1}\right\Vert _{\op}\\
 & \le\left(\|\bW\|_{\op}\|\bDelta_{V}\|_{\op}+\|\bDelta_{W}\|_{\op}\|\bV_{\star}\|_{\op}\right)\|\cM_{1}(\bcS_{\star})^{\top}\bSigma_{\star,1}^{-1}\|_{\op}+\|\bW\|_{\op}\|\bV\|_{\op}\|\cM_{1}(\bDelta_{\cS})^{\top}\bSigma_{\star,1}^{-1}\|_{\op}\\
 & \le(1+\epsilon)\epsilon+\epsilon+(1+\epsilon)^{2}\epsilon=3\epsilon+3\epsilon^{2}+\epsilon^{3},
\end{align*}
where we have used $\eqref{eq:perturb}$ and the fact $\|\cM_{1}(\bcS_{\star})^{\top}\bSigma_{\star,1}^{-1}\|_{\op}=1$
(see \eqref{eq:ground_truth_condition}) in the last line.

\eqref{eq:perturb_Rinv} follows from combining 
\begin{align*}
\sigma_{\min}\big(\breve{\bU}\bSigma_{\star,1}^{-1}\big) \ge \sigma_{\min}(\bV)\sigma_{\min}(\bW)\sigma_{\min}\left(\cM_{1}(\bcS)\bSigma_{\star,1}^{-1}\right) &\ge (1-\epsilon)^{3}, \\
\quad\mbox{and}\quad \left\Vert \breve{\bU}(\breve{\bU}^{\top}\breve{\bU})^{-1}\bSigma_{\star,1}\right\Vert _{\op} = \frac{1}{\sigma_{\min}\big(\breve{\bU}\bSigma_{\star,1}^{-1}\big)} &\le \frac{1}{(1-\epsilon)^{3}}.
\end{align*}
With regard to \eqref{eq:perturb_Rinv_d}, repeat the same proof as \eqref{eq:perturb_Uinv_d}, decompose 
\begin{align*}
\breve{\bU}(\breve{\bU}^{\top}\breve{\bU})^{-1}\bSigma_{\star,1}-\breve{\bU}_{\star}\bSigma_{\star,1}^{-1} = -\breve{\bU}(\breve{\bU}^{\top}\breve{\bU})^{-1}(\breve{\bU}-\breve{\bU}_{\star})^{\top}\breve{\bU}_{\star}\bSigma_{\star,1}^{-1} + \left(\bI_{n_2n_3}-\breve{\bU}(\breve{\bU}^{\top}\breve{\bU})^{-1}\breve{\bU}^{\top}\right)(\breve{\bU}-\breve{\bU}_{\star})\bSigma_{\star,1}^{-1},
\end{align*}
and use that the two terms are orthogonal to obtain
\begin{align*}
\left\Vert \breve{\bU}(\breve{\bU}^{\top}\breve{\bU})^{-1}\bSigma_{\star,1}-\breve{\bU}_{\star}\bSigma_{\star,1}^{-1}\right\Vert _{\op}^{2} &\le \left\Vert \breve{\bU}(\breve{\bU}^{\top}\breve{\bU})^{-1}(\breve{\bU}-\breve{\bU}_{\star})^{\top}\breve{\bU}_{\star}\bSigma_{\star,1}^{-1}\right\Vert _{\op}^{2} + \left\Vert \big(\bI_{n_2n_3}-\breve{\bU}(\breve{\bU}^{\top}\breve{\bU})^{-1}\breve{\bU}^{\top}\big)(\breve{\bU}-\breve{\bU}_{\star})\bSigma_{\star,1}^{-1}\right\Vert _{\op}^{2} \\
&\le \|\breve{\bU}(\breve{\bU}^{\top}\breve{\bU})^{-1}\bSigma_{\star,1}\|_{\op}^{2}\|(\breve{\bU}-\breve{\bU}_{\star})\bSigma_{\star,1}^{-1}\|_{\op}^{2} + \|(\breve{\bU}-\breve{\bU}_{\star})\bSigma_{\star,1}^{-1}\|_{\op}^{2} \\
&\le \left((1-\epsilon)^{-6}+1\right)(3\epsilon+3\epsilon^2+\epsilon^3)^2.
\end{align*}
It follows from $\epsilon<1$ that
\begin{align*}
\left\Vert \breve{\bU}(\breve{\bU}^{\top}\breve{\bU})^{-1}\bSigma_{\star,1}-\breve{\bU}_{\star}\bSigma_{\star,1}^{-1}\right\Vert _{\op} \le \frac{\sqrt{2}(3\epsilon+3\epsilon^2+\epsilon^3)}{(1-\epsilon)^{3}}.
\end{align*}
The relation \eqref{eq:perturb_Rinv_2} follows from \eqref{eq:perturb_Rinv} and the relation: 
\begin{align*}
\left\Vert \bSigma_{\star,1}(\breve{\bU}^{\top}\breve{\bU})^{-1}\bSigma_{\star,1}\right\Vert _{\op}=\left\Vert \bSigma_{\star,1}(\breve{\bU}^{\top}\breve{\bU})^{-1}\breve{\bU}^{\top}\breve{\bU}(\breve{\bU}^{\top}\breve{\bU})^{-1}\bSigma_{\star,1}\right\Vert _{\op}=\left\Vert \breve{\bU}(\breve{\bU}^{\top}\breve{\bU})^{-1}\bSigma_{\star,1}\right\Vert _{\op}^{2}.
\end{align*}
With regard to \eqref{eq:TC_SRinv}, we have  
\begin{align*}
\left\Vert \bSigma_{\star,1}(\breve{\bU}^{\top}\breve{\bU})^{-1}\cM_{1}(\bcS) \right\Vert _{\op} &= \left\Vert \bSigma_{\star,1}(\breve{\bU}^{\top}\breve{\bU})^{-1}\breve{\bU}^{\top}\left(\bW(\bW^{\top}\bW)^{-1}\otimes\bV(\bV^{\top}\bV)^{-1}\right) \right\Vert _{\op} \\
&\le \left\Vert \breve{\bU}(\breve{\bU}^{\top}\breve{\bU})^{-1}\bSigma_{\star,1} \right\Vert _{\op} \left\Vert \bW(\bW^{\top}\bW)^{-1}\right\Vert _{\op} \left\Vert \bV(\bV^{\top}\bV)^{-1}\right\Vert _{\op} \\
&\le (1-\epsilon)^{-5},
\end{align*}
where the first line follows from 
\begin{align}
\breve{\bU}^{\top}=\cM_{1}(\bS)(\bW\otimes\bV)^{\top} \qquad\implies\qquad \cM_{1}(\bcS)=\breve{\bU}^{\top}\left(\bW(\bW^{\top}\bW)^{-1}\otimes\bV(\bV^{\top}\bV)^{-1}\right),\label{eq:breve_U_top}
\end{align} 
and the last inequality uses \eqref{eq:perturb_Uinv_d} and \eqref{eq:perturb_Rinv}.

\paragraph{Proof of Frobenius norm perturbation bounds.}

We proceed to prove the perturbation bounds regarding the Frobenius norm. For \eqref{eq:perturb_T_fro}, we begin with the following decomposition
\begin{align}
(\bU,\bV,\bW)\bcdot\bcS-\bcX_{\star} & =(\bU,\bV,\bW)\bcdot\bcS-(\bU_{\star},\bV_{\star},\bW_{\star})\bcdot\bcS_{\star}\nonumber \\
 & =(\bU,\bV,\bW)\bcdot\bDelta_{\cS}+(\bDelta_{U},\bV,\bW)\bcdot\bcS_{\star}+(\bU_{\star},\bDelta_{V},\bW)\bcdot\bcS_{\star}+(\bU_{\star},\bV_{\star},\bDelta_{W})\bcdot\bcS_{\star}.\label{eq:decomp_T}
\end{align}
Apply the triangle inequality, together with the invariance of the
Frobenius norm to matricization, to obtain 
\begin{align*}
\left\Vert (\bU,\bV,\bW)\bcdot\bcS-\bcX_{\star}\right\Vert _{\fro} & \le\left\Vert (\bU,\bV,\bW)\bcdot\bDelta_{\cS}\right\Vert _{\fro}+\left\Vert \bDelta_{U}\cM_{1}(\bcS_{\star})(\bW\otimes\bV)^{\top}\right\Vert _{\fro}\\
 & \qquad\qquad+\left\Vert \bDelta_{V}\cM_{2}(\bcS_{\star})(\bW\otimes\bU_{\star})^{\top}\right\Vert _{\fro}+\left\Vert \bDelta_{W}\cM_{3}(\bcS_{\star})(\bV_{\star}\otimes\bU_{\star})^{\top}\right\Vert _{\fro}\\
 & \le\|\bU\|_{\op}\|\bV\|_{\op}\|\bW\|_{\op}\|\bDelta_{\cS}\|_{\fro}+\|\bDelta_{U}\cM_{1}(\bcS_{\star})\|_{\fro}\|\bW\|_{\op}\|\bV\|_{\op}\\
 & \qquad\qquad+\|\bDelta_{V}\cM_{2}(\bcS_{\star})\|_{\fro}\|\bW\|_{\op}\|\bU_{\star}\|_{\op}+\|\bDelta_{W}\cM_{3}(\bcS_{\star})\|_{\fro}\|\bV_{\star}\|_{\op}\|\bU_{\star}\|_{\op}\\
 & \le(1+\epsilon)^{3}\|\bDelta_{\cS}\|_{\fro}+(1+\epsilon)^{2}\|\bDelta_{U}\bSigma_{\star,1}\|_{\fro}+(1+\epsilon)\|\bDelta_{V}\bSigma_{\star,2}\|_{\fro}+\|\bDelta_{W}\bSigma_{\star,3}\|_{\fro},
\end{align*}
where the second inequality follows from \eqref{eq:tensor_properties_e}, and the last inequality follows from \eqref{eq:ground_truth_condition} and \eqref{eq:perturb}. By symmetry, one can permute the occurrence of $\bDelta_{U},\bDelta_{V},\bDelta_{W},\bDelta_{\cS}$ in the decomposition \eqref{eq:decomp_T}. For example, invoking another viable decomposition
of $(\bU,\bV,\bW)\bcdot\bcS-\bcX_{\star}$ as
\begin{align*}
(\bU,\bV,\bW)\bcdot\bcS-\bcX_{\star} & =(\bU,\bDelta_{V},\bW)\bcdot\bcS+(\bU,\bV_{\star},\bDelta_{W})\bcdot\bcS+(\bU,\bV_{\star},\bW_{\star})\bcdot\bDelta_{\cS}+(\bDelta_{U},\bV_{\star},\bW_{\star})\bcdot\bcS_{\star}
\end{align*}
leads to the perturbation bound 
\begin{align*}
\left\Vert (\bU,\bV,\bW)\bcdot\bcS-\bcX_{\star}\right\Vert _{\fro}\le(1+\epsilon)^{3}\|\bDelta_{V}\bSigma_{\star,2}\|_{\fro}+(1+\epsilon)^{2}\|\bDelta_{W}\bSigma_{\star,3}\|_{\fro}+(1+\epsilon)\|\bDelta_{\cS}\|_{\fro}+\|\bDelta_{U}\bSigma_{\star,1}\|_{\fro}.
\end{align*}
To complete the proof of \eqref{eq:perturb_T_fro}, we take an average of all viable bounds from $4!=24$ permutations to balance their coefficients as 
\begin{align*}
\frac{1}{4}\left((1+\epsilon)^{3}+(1+\epsilon)^{2}+(1+\epsilon)+1\right)=1+\frac{3}{2}\epsilon+\epsilon^{2}+\frac{1}{4}\epsilon^{3},
\end{align*}
thus we obtain 
\begin{align*}
\left\Vert (\bU,\bV,\bW)\bcdot\bcS-\bcX_{\star}\right\Vert _{\fro}\le(1+\frac{3}{2}\epsilon+\epsilon^{2}+\frac{1}{4}\epsilon^{3})\big(\|\bDelta_{U}\bSigma_{\star,1}\|_{\fro}+\|\bDelta_{V}\bSigma_{\star,2}\|_{\fro}+\|\bDelta_{W}\bSigma_{\star,3}\|_{\fro}+\|\bDelta_{\cS}\|_{\fro}\big).
\end{align*}
The relation \eqref{eq:perturb_S_fro} can be proved in a similar fashion; for the sake of brevity, we omit its proof.

Turning to \eqref{eq:perturb_R_fro}, apply the triangle inequality to \eqref{eq:decomp_R} to obtain 
\begin{align}
\|\breve{\bU}-\breve{\bU}_{\star}\|_{\fro}\le\left\Vert (\bW\otimes\bDelta_{V})\cM_{1}(\bcS_{\star})^{\top}\right\Vert _{\fro}+\left\Vert (\bDelta_{W}\otimes\bV_{\star})\cM_{1}(\bcS_{\star})^{\top}\right\Vert _{\fro}+\left\Vert (\bW\otimes\bV)\cM_{1}(\bDelta_{\cS})\right\Vert _{\fro}.\label{eq:breveU_frob_error}
\end{align}
To bound the first term, change the mode of matricization (see \eqref{eq:matricization}) to arrive at 
\begin{align*}
\left\Vert (\bW\otimes\bDelta_{V})\cM_{1}(\bcS_{\star})^{\top}\right\Vert _{\fro} & =\left\Vert (\bI_{r_{1}},\bDelta_{V},\bW)\bcdot\bcS_{\star}\right\Vert _{\fro}=\left\Vert \bDelta_{V}\cM_{2}(\bcS_{\star})(\bW\otimes\bI_{r_{1}})^{\top}\right\Vert _{\fro}\\
 & \le\|\bDelta_{V}\cM_{2}(\bcS_{\star})\|_{\fro}\|\bW\|_{\op}\le(1+\epsilon)\|\bDelta_{V}\cM_{2}(\bcS_{\star})\|_{\fro},
\end{align*}
where the last inequality uses \eqref{eq:perturb}. Similarly, the last two terms in \eqref{eq:breveU_frob_error} can be bounded as
\begin{align*}
\left\Vert (\bDelta_{W}\otimes\bV_{\star})\cM_{1}(\bcS_{\star})^{\top}\right\Vert _{\fro} & \le\|\bDelta_{W}\cM_{3}(\bcS_{\star})\|_{\fro},\quad\mbox{and}\quad\left\Vert (\bW\otimes\bV)\cM_{1}(\bDelta_{\cS})\right\Vert _{\fro}\le(1+\epsilon)^{2}\|\bDelta_{\cS}\|_{\fro}.
\end{align*}
Plugging the above bounds back to \eqref{eq:breveU_frob_error}, we have 
\begin{align*}
\|\breve{\bU}-\breve{\bU}_{\star}\|_{\fro}\le(1+\epsilon)\|\bDelta_{V}\cM_{2}(\bcS_{\star})\|_{\fro}+\|\bDelta_{W}\cM_{3}(\bcS_{\star})\|_{\fro}+(1+\epsilon)^{2}\|\bDelta_{\cS}\|_{\fro}.
\end{align*}
Using a similar symmetrization trick as earlier, by permuting the occurrences of $\bDelta_{V},\bDelta_{W},\bDelta_{\cS}$ in the decomposition \eqref{eq:decomp_R}, we arrive at the final advertised bound \eqref{eq:perturb_R_fro}.

\section{Proof for Tensor Factorization (Theorem~\ref{thm:TF})}

\label{sec:proof_TF}

We prove Theorem~\ref{thm:TF} via induction. Suppose that for some
$t\ge0$, one has $\dist(\bF_{t},\bF_{\star})\le\epsilon\sigma_{\min}(\bcX_{\star})$
for some sufficiently small $\epsilon$ whose size will be specified
later in the proof. 
Our goal is to bound the scaled distance from
the ground truth to the next iterate, i.e.~$\dist(\bF_{t+1},\bF_{\star})$.

Since $\dist(\bF_{t},\bF_{\star})\le\epsilon\sigma_{\min}(\bcX_{\star})$,
Lemma~\ref{lemma:Q_existence} ensures that the optimal alignment
matrices $\{\bQ_{t,k}\}_{k=1,2,3}$ between $\bF_{t}$ and $\bF_{\star}$
exist. Therefore, in view of the definition of $\dist(\bF_{t+1},\bF_{\star})$,
one has 
\begin{align}
\dist^{2}(\bF_{t+1},\bF_{\star}) & \le\left\Vert (\bU_{t+1}\bQ_{t,1}-\bU_{\star})\bSigma_{\star,1}\right\Vert _{\fro}^{2}+\left\Vert (\bV_{t+1}\bQ_{t,2}-\bV_{\star})\bSigma_{\star,2}\right\Vert _{\fro}^{2}+\left\Vert (\bW_{t+1}\bQ_{t,3}-\bW_{\star})\bSigma_{\star,3}\right\Vert _{\fro}^{2}\nonumber \\
 & \quad+\left\Vert (\bQ_{t,1}^{-1},\bQ_{t,2}^{-1},\bQ_{t,3}^{-1})\bcdot\bcS_{t+1}-\bcS_{\star}\right\Vert _{\fro}^{2}.\label{eq:TF_expand}
\end{align}
To avoid notational clutter, we denote $\bF\coloneqq(\bU,\bV,\bW,\bcS)$
with 
\begin{align} \label{eq:additional_notation_aligned}
\bU & \coloneqq\bU_{t}\bQ_{t,1},\qquad\bV\coloneqq\bV_{t}\bQ_{t,2},\qquad\bW\coloneqq\bW_{t}\bQ_{t,3},\qquad\bcS\coloneqq(\bQ_{t,1}^{-1},\bQ_{t,2}^{-1},\bQ_{t,3}^{-1})\bcdot\bcS_{t},
\end{align}
and adopt the set of notation defined in \eqref{eq:short_notations}
for the rest of the proof. Clearly, $\bm{F}$ is aligned with $\bm{F}_{\star}$. With these notation, we can rephrase the consequences of Lemma~\ref{lemma:Q_criterion} as:
\begin{align}
\begin{split}
\bU^{\top}\bDelta_{U}\bSigma_{\star,1}^{2} &=\cM_{1}(\bDelta_{\cS})\cM_{1}(\bcS)^{\top},\\ \bV^{\top}\bDelta_{V}\bSigma_{\star,2}^{2} & =\cM_{2}(\bDelta_{\cS})\cM_{2}(\bcS)^{\top}, \\
 \bW^{\top}\bDelta_{W}\bSigma_{\star,3}^{2}& =\cM_{3}(\bDelta_{\cS})\cM_{3}(\bcS)^{\top}.
\end{split}\label{eq:alignment}
\end{align}

We aim to establish the following bounds for the four terms in \eqref{eq:TF_expand} as long as $\eta<1$: \begin{subequations}\label{eq:final-bounds}
\begin{align}
\left\Vert (\bU_{t+1}\bQ_{t,1}-\bU_{\star})\bSigma_{\star,1}\right\Vert _{\fro}^{2} & \le(1-\eta)^{2}\|\bDelta_{U}\bSigma_{\star,1}\|_{\fro}^{2}\nonumber \\
 & \qquad-2\eta(1-\eta)\left\langle \bcT_{U},\bcT_{U}+\bcT_{V}+\bcT_{W}\right\rangle +\eta^{2}\left\Vert \bcT_{U}+\bcT_{V}+\bcT_{W}\right\Vert _{\fro}^{2}\nonumber \\
 & \qquad+2\eta(1-\eta)C_{1}\epsilon\dist^{2}(\bF_{t},\bF_{\star})+\eta^{2}C_{2}\epsilon\dist^{2}(\bF_{t},\bF_{\star});\label{eq:U-bound}\\
\left\Vert (\bV_{t+1}\bQ_{t,2}-\bV_{\star})\bSigma_{\star,2}\right\Vert _{\fro}^{2} & \le(1-\eta)^{2}\|\bDelta_{V}\bSigma_{\star,2}\|_{\fro}^{2}\nonumber \\
 & \qquad-2\eta(1-\eta)\left\langle \bcT_{V},\bcT_{U}+\bcT_{V}+\bcT_{W}\right\rangle +\eta^{2}\left\Vert \bcT_{U}+\bcT_{V}+\bcT_{W}\right\Vert _{\fro}^{2}\nonumber \\
 & \qquad+2\eta(1-\eta)C_{1}\epsilon\dist^{2}(\bF_{t},\bF_{\star})+\eta^{2}C_{2}\epsilon\dist^{2}(\bF_{t},\bF_{\star});\label{eq:V-bound}\\
\left\Vert (\bW_{t+1}\bQ_{t,3}-\bW_{\star})\bSigma_{\star,3}\right\Vert _{\fro}^{2} & \le(1-\eta)^{2}\|\bDelta_{W}\bSigma_{\star,3}\|_{\fro}^{2}\nonumber \\
 & \qquad-2\eta(1-\eta)\left\langle \bcT_{W},\bcT_{U}+\bcT_{V}+\bcT_{W}\right\rangle +\eta^{2}\left\Vert \bcT_{U}+\bcT_{V}+\bcT_{W}\right\Vert _{\fro}^{2}\nonumber \\
 & \qquad+2\eta(1-\eta)C_{1}\epsilon\dist^{2}(\bF_{t},\bF_{\star})+\eta^{2}C_{2}\epsilon\dist^{2}(\bF_{t},\bF_{\star});\label{eq:W-bound}\\
\left\Vert (\bQ_{t,1}^{-1},\bQ_{t,2}^{-1},\bQ{}_{t,3}^{-1})\bcdot\bcS_{t+1}-\bcS_{\star}\right\Vert _{\fro}^{2} & \le(1-\eta)^{2}\|\bDelta_{\cS}\|_{\fro}^{2}-\eta(2-5\eta)\left(\left\Vert \bD_{U}\right\Vert _{\fro}^{2}+\left\Vert \bD_{V}\right\Vert _{\fro}^{2}+\left\Vert \bD_{W}\right\Vert _{\fro}^{2}\right)\nonumber \\
 & \qquad+2\eta(1-\eta)C_{1}\epsilon\dist^{2}(\bF_{t},\bF_{\star})+\eta^{2}C_{2}\epsilon\dist^{2}(\bF_{t},\bF_{\star}),\label{eq:S-bound}
\end{align}
\end{subequations} where $C_{1},C_{2}>1$ are two universal constants.
Suppose for the moment that the four bounds~\eqref{eq:final-bounds}
hold. We can then combine them all to deduce 
\begin{align}
\dist^{2}(\bF_{t+1},\bF_{\star}) & \le(1-\eta)^{2}\Big(\left\Vert \bDelta_{U}\bSigma_{\star,1}\right\Vert _{\fro}^{2}+\left\Vert \bDelta_{V}\bSigma_{\star,2}\right\Vert _{\fro}^{2}+\left\Vert \bDelta_{W}\bSigma_{\star,3}\right\Vert _{\fro}^{2}+\|\bDelta_{\cS}\|_{\fro}^{2}\Big)\nonumber \\
 & \qquad-\eta(2-5\eta)\left\Vert \bcT_{U}+\bcT_{V}+\bcT_{W}\right\Vert _{\fro}^{2}-\eta(2-5\eta)\Big(\left\Vert \bD_{U}\right\Vert _{\fro}^{2}+\left\Vert \bD_{V}\right\Vert _{\fro}^{2}+\left\Vert \bD_{W}\right\Vert _{\fro}^{2}\Big)\nonumber \\
 & \qquad+2\eta(1-\eta)C\epsilon\dist^{2}(\bF_{t},\bF_{\star})+\eta^{2}C\epsilon\dist^{2}(\bF_{t},\bF_{\star}).\label{eq:TF_bound}
\end{align}
Here $C\coloneqq4(C_{1}\vee C_{2})$. As long as $\eta\le2/5$ and
$\epsilon\le0.2/C$, one has 
\begin{align*}
\dist^{2}(\bF_{t+1},\bF_{\star})\le\left((1-\eta)^{2}+2\eta(1-\eta)C\epsilon+\eta^{2}C\epsilon\right)\dist^{2}(\bF_{t},\bF_{\star})\le(1-0.7\eta)^{2}\dist^{2}(\bF_{t},\bF_{\star}),
\end{align*}
and therefore we arrive at the conclusion that $\dist(\bF_{t+1},\bF_{\star})\le(1-0.7\eta)\dist(\bF_{t},\bF_{\star})$. In addition, the relation \eqref{eq:tensor2factors} in Lemma~\ref{lemma:perturb_bounds} guarantees that $\|(\bU_{t},\bV_{t},\bW_{t})\bcdot\bcS_{t}-\bcX_{\star}\|_{\fro}\le 3\dist(\bF_{t},\bF_{\star})$.

It then boils down to demonstrating the four bounds~\eqref{eq:final-bounds}.
Due to the symmetry among $\bU,\bV$ and $\bW$, we will focus on
proving the bounds~\eqref{eq:U-bound} and~\eqref{eq:S-bound},
omitting the proofs for the other two.

\paragraph{Proof of \eqref{eq:U-bound}.}

Utilize the ScaledGD update rule~\eqref{eq:iterates_TF} to write
\begin{align}
(\bU_{t+1}\bQ_{t,1}-\bU_{\star})\bSigma_{\star,1} & =\left(\bU-\eta\cM_{1}\left((\bU,\bV,\bW)\bcdot\bcS-\bcX_{\star}\right)\breve{\bU}(\breve{\bU}^{\top}\breve{\bU})^{-1}-\bU_{\star}\right)\bSigma_{\star,1}\nonumber \\
 & =(1-\eta)\bDelta_{U}\bSigma_{\star,1}-\eta\bU_{\star}(\breve{\bU}-\breve{\bU}_{\star})^{\top}\breve{\bU}(\breve{\bU}^{\top}\breve{\bU})^{-1}\bSigma_{\star,1},\label{eq:first_square}
\end{align}
where we use the decomposition of the mode-$1$ matricization 
\begin{align*}
\cM_{1}\left((\bU,\bV,\bW)\bcdot\bcS-\bcX_{\star}\right) & =\bU\cM_{1}(\bcS)(\bW\otimes\bV)^{\top}-\bU_{\star}\cM_{1}(\bcS_{\star})(\bW_{\star}\otimes\bV_{\star})^{\top}\\
 & =\bDelta_{U}\cM_{1}(\bcS)(\bW\otimes\bV)^{\top}+\bU_{\star}\left(\cM_{1}(\bcS)(\bW\otimes\bV)^{\top}-\cM_{1}(\bcS_{\star})(\bW_{\star}\otimes\bV_{\star})^{\top}\right)\\
 & =\bDelta_{U}\breve{\bU}^{\top}+\bU_{\star}(\breve{\bU}-\breve{\bU}_{\star})^{\top}.
\end{align*}
Take the squared norm of both sides of the identity~\eqref{eq:first_square}
to obtain 
\begin{align*}
\left\Vert (\bU_{t+1}\bQ_{t,1}-\bU_{\star})\bSigma_{\star,1}\right\Vert _{\fro}^{2} & =(1-\eta)^{2}\|\bDelta_{U}\bSigma_{\star,1}\|_{\fro}^{2}-2\eta(1-\eta)\underbrace{\big\langle\bDelta_{U}\bSigma_{\star,1},\bU_{\star}(\breve{\bU}-\breve{\bU}_{\star})^{\top}\breve{\bU}(\breve{\bU}^{\top}\breve{\bU})^{-1}\bSigma_{\star,1}\big\rangle}_{\eqqcolon\mfk{U}_{1}}\\
 & \qquad+\eta^{2}\underbrace{\big\|\bU_{\star}(\breve{\bU}-\breve{\bU}_{\star})^{\top}\breve{\bU}(\breve{\bU}^{\top}\breve{\bU})^{-1}\bSigma_{\star,1}\big\|_{\fro}^{2}}_{\eqqcolon\mfk{U}_{2}}.
\end{align*}

The following two claims bound the two terms $\mfk{U}_{1}$ and $\mfk{U}_{2}$,
whose proofs can be found in Appendix~\ref{proof:claim_U1} and Appendix~\ref{proof:claim_U2},
respectively.

\begin{claim}\label{claim:U1} $\mfk{U}_{1}\geq \left\langle \bcT_{U},\bcT_{U}+\bcT_{V}+\bcT_{W}\right\rangle  - C_{1}\epsilon\dist^{2}(\bF_{t},\bF_{\star})$. \end{claim}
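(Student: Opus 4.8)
\textbf{Proof proposal for Claim~\ref{claim:U1}.}

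The plan is to analyze the inner product $\mfk{U}_{1} = \big\langle \bDelta_{U}\bSigma_{\star,1}, \bU_{\star}(\breve{\bU}-\breve{\bU}_{\star})^{\top}\breve{\bU}(\breve{\bU}^{\top}\breve{\bU})^{-1}\bSigma_{\star,1}\big\rangle$ by replacing, one factor at a time, the perturbed quantities $\breve{\bU}(\breve{\bU}^{\top}\breve{\bU})^{-1}\bSigma_{\star,1}$ and $\breve{\bU}-\breve{\bU}_{\star}$ with their ``population'' counterparts, and controlling each replacement error. First I would use the alignment optimality condition~\eqref{eq:alignment}, namely $\bU^{\top}\bDelta_{U}\bSigma_{\star,1}^{2} = \cM_{1}(\bDelta_{\cS})\cM_{1}(\bcS)^{\top}$, rewritten as $\bU_{\star}^{\top}\bDelta_U\bSigma_{\star,1}^2 = \cM_1(\bDelta_{\cS})\cM_1(\bcS)^\top - \bDelta_U^\top\bDelta_U\bSigma_{\star,1}^2$, to express the ``main'' part of $\mfk{U}_1$ — the one obtained after replacing $\breve{\bU}(\breve{\bU}^{\top}\breve{\bU})^{-1}\bSigma_{\star,1}$ by $\breve{\bU}_{\star}\bSigma_{\star,1}^{-1}$ — in terms of $\langle \bU_\star^\top\bDelta_U\bSigma_{\star,1}, (\breve{\bU}-\breve{\bU}_\star)^\top\breve{\bU}_\star\bSigma_{\star,1}^{-1}\rangle$ and then simplify using $\breve{\bU}_\star^\top\breve{\bU}_\star = \bSigma_{\star,1}^2$.

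Next I would expand $\breve{\bU}-\breve{\bU}_\star$ via the decomposition~\eqref{eq:decomp_R}, i.e.~$\breve{\bU}-\breve{\bU}_\star = (\bW\otimes\bDelta_V + \bDelta_W\otimes\bV_\star)\cM_1(\bcS_\star)^\top + (\bW\otimes\bV)\cM_1(\bDelta_{\cS})^\top$, and identify the leading-order contribution. The key observation is that the dominant (bilinear-in-the-errors-free) part should reproduce exactly $\langle \bcT_U, \bcT_U + \bcT_V + \bcT_W\rangle$: recalling $\bcT_U = (\bU_\star^\top\bDelta_U, \bI_{r_2}, \bI_{r_3})\bcdot\bcS_\star$ and analogously for $\bcT_V, \bcT_W$, and that $\langle\bcT_U,\bcT_V\rangle$, $\langle\bcT_U,\bcT_W\rangle$, $\langle\bcT_U,\bcT_U\rangle$ can each be written through matricization and the relation $\breve{\bU}_\star\bSigma_{\star,1}^{-1}$ has orthonormal columns, the main term collapses to $\langle\bcT_U,\bcT_U+\bcT_V+\bcT_W\rangle$. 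The $\bcT_U$ piece comes from combining the alignment identity with the $\cM_1(\bDelta_{\cS})$ term, and the $\bcT_V$, $\bcT_W$ pieces come from the first two summands of~\eqref{eq:decomp_R}.

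The main obstacle — and the bulk of the work — is bounding all the lower-order cross terms by $C_1\epsilon\dist^2(\bF_t,\bF_\star)$. These arise from three sources: (i) the replacement of $\breve{\bU}(\breve{\bU}^{\top}\breve{\bU})^{-1}\bSigma_{\star,1}$ by $\breve{\bU}_\star\bSigma_{\star,1}^{-1}$, controlled by~\eqref{eq:perturb_Rinv_d}, which gives an operator-norm bound of order $\epsilon$; (ii) the $\bDelta_U^\top\bDelta_U\bSigma_{\star,1}^2$ correction to the alignment identity, which is quadratic in $\bDelta_U$ hence of order $\epsilon$ times $\|\bDelta_U\bSigma_{\star,1}\|_{\fro}^2$; and (iii) quadratic-in-error products among the three summands of~\eqref{eq:decomp_R} paired against $\bDelta_U\bSigma_{\star,1}$. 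For each such term the strategy is uniform: pull out one factor in operator norm (which is $O(\epsilon)$ by the perturbation bounds~\eqref{eq:perturb}, \eqref{eq:perturb_R}, \eqref{eq:perturb_Rinv}, \eqref{eq:perturb_Rinv_d}), bound the remaining factors in Frobenius norm by $\dist(\bF_t,\bF_\star)$ using that $\|\bDelta_U\bSigma_{\star,1}\|_{\fro}, \|\bDelta_V\bSigma_{\star,2}\|_{\fro}, \|\bDelta_W\bSigma_{\star,3}\|_{\fro}, \|\bDelta_{\cS}\|_{\fro} \le \dist(\bF_t,\bF_\star)$, and invoke Cauchy–Schwarz; then sum the finitely many terms and absorb the constants into $C_1$. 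Care is needed to route each Frobenius-norm bound through the correct mode of matricization (using~\eqref{eq:matricization} and~\eqref{eq:tensor_inner}) so that the $\bSigma_{\star,k}$ weights land on the right $\bDelta$, and to verify that no genuinely second-order-in-$\epsilon$-free term is accidentally discarded.
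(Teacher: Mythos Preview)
Your proposal is correct and follows essentially the same approach as the paper's proof. The only cosmetic difference is the order of telescoping: the paper first splits $\mfk{U}_1$ via the decomposition~\eqref{eq:decomp_R} of $\breve{\bU}-\breve{\bU}_\star$ into $\mfk{U}_{1,1}$ and $\mfk{U}_{1,2}$ and then, within each piece, replaces $\breve{\bU}(\breve{\bU}^\top\breve{\bU})^{-1}\bSigma_{\star,1}$ by $\breve{\bU}_\star\bSigma_{\star,1}^{-1}$, whereas you propose doing the global replacement first and then expanding---but the resulting main term $\langle\bcT_U,\bcT_U+\bcT_V+\bcT_W\rangle$ and the collection of $O(\epsilon)\dist^2$ perturbation terms are the same, with the $\bcT_U$ contribution arising exactly as you describe from the alignment identity applied to the $\cM_1(\bDelta_{\cS})$ summand.
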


\begin{claim}\label{claim:U2} $\mfk{U}_{2}\le\left\Vert \bcT_{U}+\bcT_{V}+\bcT_{W}\right\Vert _{\fro}^{2}+C_{2}\epsilon\dist^{2}(\bF_{t},\bF_{\star})$. \end{claim}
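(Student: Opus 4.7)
The plan is to exploit the orthonormality of $\bU_\star$ to peel it off the Frobenius norm, obtaining $\mfk{U}_2 = \|(\breve{\bU}-\breve{\bU}_\star)^\top \breve{\bU}(\breve{\bU}^\top\breve{\bU})^{-1}\bSigma_{\star,1}\|_\fro^2$, and then to replace the factor $\breve{\bU}(\breve{\bU}^\top\breve{\bU})^{-1}\bSigma_{\star,1}$ by its unperturbed counterpart $\breve{\bU}_\star\bSigma_{\star,1}^{-1}$. By \eqref{eq:perturb_Rinv_d} this substitution costs $O(\epsilon)$ in operator norm, and \eqref{eq:perturb_R_fro} bounds $\|\breve{\bU}-\breve{\bU}_\star\|_\fro = O(\dist(\bF_t,\bF_\star))$; expanding the square and repeatedly using $\|\bA\bB\|_\fro\le\|\bA\|_\fro\|\bB\|_\op$ reduces the claim to
\begin{align*}
\bigl\|(\breve{\bU}-\breve{\bU}_\star)^\top \breve{\bU}_\star \bSigma_{\star,1}^{-1}\bigr\|_\fro^2 \le \|\bcT_U+\bcT_V+\bcT_W\|_\fro^2 + O(\epsilon\,\dist^2(\bF_t,\bF_\star)).
\end{align*}

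To analyze the leading term, I would substitute the decomposition~\eqref{eq:decomp_R} for $\breve{\bU}-\breve{\bU}_\star$ and apply the Kronecker identity $(\bA\otimes\bB)^\top(\bC\otimes\bD) = (\bA^\top\bC)\otimes(\bB^\top\bD)$, which writes $(\breve{\bU}-\breve{\bU}_\star)^\top \breve{\bU}_\star\bSigma_{\star,1}^{-1}$ as
\begin{align*}
 & \cM_1(\bcS_\star)\bigl[(\bW^\top\bW_\star)\otimes(\bDelta_V^\top\bV_\star) + (\bDelta_W^\top\bW_\star)\otimes\bI_{r_2}\bigr]\cM_1(\bcS_\star)^\top\bSigma_{\star,1}^{-1} \\
 & \qquad + \cM_1(\bDelta_\cS)\bigl[(\bW^\top\bW_\star)\otimes(\bV^\top\bV_\star)\bigr]\cM_1(\bcS_\star)^\top\bSigma_{\star,1}^{-1},
\end{align*}
where I used $\bV_\star^\top\bV_\star = \bI_{r_2}$. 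Since $\bV^\top\bV_\star - \bI_{r_2} = \bDelta_V^\top\bV_\star$ and $\bW^\top\bW_\star - \bI_{r_3} = \bDelta_W^\top\bW_\star$ are each $O(\epsilon)$ in operator norm, replacing every such Kronecker factor by its identity part costs only $O(\epsilon\,\dist(\bF_t,\bF_\star))$ in Frobenius norm. The resulting leading pieces $\cM_1(\bcS_\star)[\bI_{r_3}\otimes(\bDelta_V^\top\bV_\star)]\cM_1(\bcS_\star)^\top\bSigma_{\star,1}^{-1}$ and $\cM_1(\bcS_\star)[(\bDelta_W^\top\bW_\star)\otimes\bI_{r_2}]\cM_1(\bcS_\star)^\top\bSigma_{\star,1}^{-1}$ match $\cM_1(\bcT_V)\cM_1(\bcS_\star)^\top\bSigma_{\star,1}^{-1}$ and $\cM_1(\bcT_W)\cM_1(\bcS_\star)^\top\bSigma_{\star,1}^{-1}$ respectively by the definitions of $\bcT_V,\bcT_W$ in~\eqref{eq:short_notations}. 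For the $\bDelta_\cS$ piece, I would invoke the alignment condition \eqref{eq:alignment}, $\cM_1(\bDelta_\cS)\cM_1(\bcS)^\top = \bU^\top\bDelta_U\bSigma_{\star,1}^2$, and swap $\cM_1(\bcS)\to\cM_1(\bcS_\star)$ and $\bU\to\bU_\star$ modulo $O(\epsilon)$, yielding $\cM_1(\bDelta_\cS)\cM_1(\bcS_\star)^\top\bSigma_{\star,1}^{-1} \approx \bU_\star^\top\bDelta_U\bSigma_{\star,1} = \cM_1(\bcT_U)\cM_1(\bcS_\star)^\top\bSigma_{\star,1}^{-1}$.

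Summing the three contributions, the leading term equals $\cM_1(\bcT_U+\bcT_V+\bcT_W)\cM_1(\bcS_\star)^\top\bSigma_{\star,1}^{-1}$ up to $O(\epsilon\,\dist(\bF_t,\bF_\star))$. Since $\cM_1(\bcS_\star)^\top\bSigma_{\star,1}^{-1}$ has orthonormal columns by~\eqref{eq:ground_truth_condition}, the inequality $\|\bA\bB\|_\fro\le\|\bA\|_\fro\|\bB\|_\op$ bounds its squared Frobenius norm by $\|\bcT_U+\bcT_V+\bcT_W\|_\fro^2$, delivering the advertised main term. The hard part will be the bookkeeping: every substitution ($\breve{\bU}(\breve{\bU}^\top\breve{\bU})^{-1}\bSigma_{\star,1}\to\breve{\bU}_\star\bSigma_{\star,1}^{-1}$, $\bW^\top\bW_\star\to\bI_{r_3}$, $\bV^\top\bV_\star\to\bI_{r_2}$, $\cM_1(\bcS)\to\cM_1(\bcS_\star)$, $\bU\to\bU_\star$) introduces an $\epsilon$-sized error attached to a factor of size $O(\dist(\bF_t,\bF_\star))$, so the expanded squared norm accumulates cross terms that must be controlled uniformly via the triangle inequality $\|\bDelta_U\bSigma_{\star,1}\|_\fro+\|\bDelta_V\bSigma_{\star,2}\|_\fro+\|\bDelta_W\bSigma_{\star,3}\|_\fro+\|\bDelta_\cS\|_\fro\le 2\dist(\bF_t,\bF_\star)$ and the spectral estimates of Lemma~\ref{lemma:perturb_bounds}.
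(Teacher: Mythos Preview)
Your proposal is correct and follows essentially the same route as the paper. The only organizational difference is in the very first reduction: you replace $\breve{\bU}(\breve{\bU}^{\top}\breve{\bU})^{-1}\bSigma_{\star,1}$ by $\breve{\bU}_{\star}\bSigma_{\star,1}^{-1}$ \emph{additively} via \eqref{eq:perturb_Rinv_d} and then analyze $(\breve{\bU}-\breve{\bU}_{\star})^{\top}\breve{\bU}_{\star}\bSigma_{\star,1}^{-1}$, whereas the paper factors $\breve{\bU}(\breve{\bU}^{\top}\breve{\bU})^{-1}\bSigma_{\star,1}=\breve{\bU}\bSigma_{\star,1}^{-1}\cdot\bSigma_{\star,1}(\breve{\bU}^{\top}\breve{\bU})^{-1}\bSigma_{\star,1}$, pulls out the second factor as an operator-norm bound $(1-\epsilon)^{-12}$ via \eqref{eq:perturb_Rinv_2}, and analyzes $(\breve{\bU}-\breve{\bU}_{\star})^{\top}\breve{\bU}\bSigma_{\star,1}^{-1}$ instead. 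The two target quantities differ only by $(\breve{\bU}-\breve{\bU}_{\star})^{\top}(\breve{\bU}-\breve{\bU}_{\star})\bSigma_{\star,1}^{-1}$, which is $O(\epsilon\,\dist(\bF_t,\bF_\star))$, so the remaining decomposition, the use of the alignment identity~\eqref{eq:alignment} to turn the $\bDelta_{\cS}$ piece into $\bcT_{U}$, and the final appeal to $\|\cM_{1}(\bcS_{\star})^{\top}\bSigma_{\star,1}^{-1}\|_{\op}=1$ are identical in both arguments.
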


We can combine the above two claims to obtain that  
\begin{multline*}
\left\Vert (\bU_{t+1}\bQ_{t,1}-\bU_{\star})\bSigma_{\star,1}\right\Vert _{\fro}^{2}\le(1-\eta)^{2}\|\bDelta_{U}\bSigma_{\star,1}\|_{\fro}^{2}-2\eta(1-\eta)\left\langle \bcT_{U},\bcT_{U}+\bcT_{V}+\bcT_{W}\right\rangle \\
+\eta^{2}\left\Vert \bcT_{U}+\bcT_{V}+\bcT_{W}\right\Vert _{\fro}^{2}+2\eta(1-\eta)C_{1}\epsilon\dist^{2}(\bF_{t},\bF_{\star})+\eta^{2}C_{2}\epsilon\dist^{2}(\bF_{t},\bF_{\star}),
\end{multline*}
as long as $\eta <1$. This proves the bound~\eqref{eq:U-bound}.

\paragraph{Proof of \eqref{eq:S-bound}.}

Again, we use the ScaledGD update rule~\eqref{eq:iterates_TF}
and the decomposition $\bcS=\bDelta_{\cS}+\bcS_{\star}$ to obtain
\begin{align}
 & (\bQ_{t,1}^{-1},\bQ_{t,2}^{-1},\bQ_{t,3}^{-1})\bcdot\bcS_{t+1}-\bcS_{\star} \nonumber \\
 & \quad=\bcS-\eta\left((\bU^{\top}\bU)^{-1}\bU^{\top},(\bV^{\top}\bV)^{-1}\bV^{\top},(\bW^{\top}\bW)^{-1}\bW^{\top}\right)\bcdot \big((\bU,\bV,\bW)\bcdot\bcS-\bcX_{\star} \big)-\bcS_{\star} \nonumber\\
 & \quad=(1-\eta)\bDelta_{\cS}-\eta\left((\bU^{\top}\bU)^{-1}\bU^{\top},(\bV^{\top}\bV)^{-1}\bV^{\top},(\bW^{\top}\bW)^{-1}\bW^{\top}\right)\bcdot\big((\bU,\bV,\bW)\bcdot\bcS_{\star}-\bcX_{\star}\big), \label{eq:last_square}
\end{align}
where we used \eqref{eq:tensor_properties_c} in the last line. Expand the squared norm of both sides to reach 
\begin{align*}
 & \left\Vert (\bQ_{t,1}^{-1},\bQ_{t,2}^{-1},\bQ_{t,3}^{-1})\bcdot\bcS_{t+1}-\bcS_{\star}\right\Vert _{\fro}^{2}=(1-\eta)^{2}\|\bDelta_{\cS}\|_{\fro}^{2}\\
 & \qquad-2\eta(1-\eta)\underbrace{\left\langle \bDelta_{\cS},\left((\bU^{\top}\bU)^{-1}\bU^{\top},(\bV^{\top}\bV)^{-1}\bV^{\top},(\bW^{\top}\bW)^{-1}\bW^{\top}\right)\bcdot\left((\bU,\bV,\bW)\bcdot\bcS_{\star}-\bcX_{\star}\right)\right\rangle }_{\eqqcolon\mfk{S}_{1}}\\
 & \qquad+\eta^{2}\underbrace{\left\Vert \left((\bU^{\top}\bU)^{-1}\bU^{\top},(\bV^{\top}\bV)^{-1}\bV^{\top},(\bW^{\top}\bW)^{-1}\bW^{\top}\right)\bcdot\left((\bU,\bV,\bW)\bcdot\bcS_{\star}-\bcX_{\star}\right)\right\Vert _{\fro}^{2}}_{\eqqcolon\mfk{S}_{2}}.
\end{align*}

We collect the bounds of the two relevant terms $\mfk{S}_{1}$ and
$\mfk{S}_{2}$ in the following two claims, whose proofs can be found in Appendix~\ref{proof:claim_S1} and Appendix~\ref{proof:claim_S2},
respectively.

\begin{claim}\label{claim:S1} $\mfk{S}_{1} \geq \|\bD_{U}\|_{\fro}^{2}+\|\bD_{V}\|_{\fro}^{2}+\|\bD_{W}\|_{\fro}^{2} - C_{1}\epsilon\dist^{2}(\bF_{t},\bF_{\star})$. \end{claim}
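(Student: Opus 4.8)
The plan is to expand $\mfk{S}_1$ about the ground truth, peel off a first-order piece that reproduces $\|\bD_U\|_{\fro}^2+\|\bD_V\|_{\fro}^2+\|\bD_W\|_{\fro}^2$ \emph{exactly} via the alignment conditions \eqref{eq:alignment}, and absorb everything else into the $O(\epsilon)\dist^{2}(\bF_t,\bF_{\star})$ slack using the perturbation bounds in Lemma~\ref{lemma:perturb_bounds}. First, since $(\bU^{\top}\bU)^{-1}\bU^{\top}\bU=\bI_{r_1}$ and likewise for the $\bV,\bW$ blocks, \eqref{eq:tensor_properties_c} gives $\big((\bU^{\top}\bU)^{-1}\bU^{\top},(\bV^{\top}\bV)^{-1}\bV^{\top},(\bW^{\top}\bW)^{-1}\bW^{\top}\big)\bcdot\big((\bU,\bV,\bW)\bcdot\bcS_{\star}\big)=\bcS_{\star}$, so that
\[\mfk{S}_1=\big\langle\bDelta_{\cS},\ \bcS_{\star}-\big((\bU^{\top}\bU)^{-1}\bU^{\top}\bU_{\star},\,(\bV^{\top}\bV)^{-1}\bV^{\top}\bV_{\star},\,(\bW^{\top}\bW)^{-1}\bW^{\top}\bW_{\star}\big)\bcdot\bcS_{\star}\big\rangle.\]
Writing $(\bU^{\top}\bU)^{-1}\bU^{\top}\bU_{\star}=\bI_{r_1}-(\bU^{\top}\bU)^{-1}\bU^{\top}\bDelta_{U}$ and similarly for the other two modes, and multilinearly expanding the product of the three factors, splits $\bcS_{\star}-(\cdots)\bcdot\bcS_{\star}$ into a first-order part $\bcG_1:=\big((\bU^{\top}\bU)^{-1}\bU^{\top}\bDelta_{U},\bI,\bI\big)\bcdot\bcS_{\star}+\big(\bI,(\bV^{\top}\bV)^{-1}\bV^{\top}\bDelta_{V},\bI\big)\bcdot\bcS_{\star}+\big(\bI,\bI,(\bW^{\top}\bW)^{-1}\bW^{\top}\bDelta_{W}\big)\bcdot\bcS_{\star}$ and a remainder $\bcG_{\ge2}$ collecting the three terms quadratic in the $\bDelta$'s and the single cubic term.

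Next I would evaluate $\langle\bDelta_{\cS},\bcG_1\rangle$. For the $\bU$-term, \eqref{eq:tensor_inner} gives $\big\langle\bDelta_{\cS},\big((\bU^{\top}\bU)^{-1}\bU^{\top}\bDelta_{U},\bI,\bI\big)\bcdot\bcS_{\star}\big\rangle=\big\langle\cM_1(\bDelta_{\cS}),(\bU^{\top}\bU)^{-1}\bU^{\top}\bDelta_{U}\cM_1(\bcS_{\star})\big\rangle$; substituting $\cM_1(\bcS_{\star})=\cM_1(\bcS)-\cM_1(\bDelta_{\cS})$ and invoking the alignment identity $\cM_1(\bDelta_{\cS})\cM_1(\bcS)^{\top}=\bU^{\top}\bDelta_{U}\bSigma_{\star,1}^{2}$ from \eqref{eq:alignment} collapses the $\cM_1(\bcS)$-piece to $\tr\big(\bSigma_{\star,1}^{2}\bDelta_{U}^{\top}\bU(\bU^{\top}\bU)^{-1}\bU^{\top}\bDelta_{U}\big)=\|\bD_{U}\|_{\fro}^{2}$, leaving only $-\big\langle\cM_1(\bDelta_{\cS}),(\bU^{\top}\bU)^{-1}\bU^{\top}\bDelta_{U}\cM_1(\bDelta_{\cS})\big\rangle$, which carries two copies of $\bDelta_{\cS}$ and is bounded in magnitude by $\|(\bU^{\top}\bU)^{-1}\bU^{\top}\bDelta_{U}\|_{\op}\|\bDelta_{\cS}\|_{\fro}^{2}\le(1-\epsilon)^{-1}\epsilon\,\dist^{2}(\bF_t,\bF_{\star})$ using \eqref{eq:perturb} and \eqref{eq:perturb_Uinv}. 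The $\bV$- and $\bW$-terms are handled identically by symmetry, yielding $\langle\bDelta_{\cS},\bcG_1\rangle\ge\|\bD_{U}\|_{\fro}^{2}+\|\bD_{V}\|_{\fro}^{2}+\|\bD_{W}\|_{\fro}^{2}-O(\epsilon)\dist^{2}(\bF_t,\bF_{\star})$.

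Then I would bound $|\langle\bDelta_{\cS},\bcG_{\ge2}\rangle|$ by $O(\epsilon)\dist^{2}(\bF_t,\bF_{\star})$. The enabling observation is that $\bSigma_{\star,1}^{-1}\cM_1(\bcS_{\star})$ has orthonormal rows by \eqref{eq:ground_truth_condition}, whence $\|(\bB_1,\bB_2,\bB_3)\bcdot\bcS_{\star}\|_{\fro}\le\|\bB_1\bSigma_{\star,1}\|_{\fro}\|\bB_2\|_{\op}\|\bB_3\|_{\op}$ for any conformable $\bB_1,\bB_2,\bB_3$ (and the cyclic variants that route $\bSigma_{\star,2}$ or $\bSigma_{\star,3}$ into the Frobenius slot). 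Applying this to each term of $\bcG_{\ge2}$ with one $\bDelta$-factor placed in the Frobenius slot, and then using $\|(\bU^{\top}\bU)^{-1}\bU^{\top}\bDelta_{U}\bSigma_{\star,1}\|_{\fro}\le(1-\epsilon)^{-1}\|\bDelta_{U}\bSigma_{\star,1}\|_{\fro}$ together with $\|(\bV^{\top}\bV)^{-1}\bV^{\top}\bDelta_{V}\|_{\op}\le(1-\epsilon)^{-1}\epsilon$ (and the analogous bounds from Lemma~\ref{lemma:perturb_bounds}), Cauchy--Schwarz against $\|\bDelta_{\cS}\|_{\fro}\le\dist(\bF_t,\bF_{\star})$ shows every such term is $\le(1-\epsilon)^{-k}\epsilon\,\dist^{2}(\bF_t,\bF_{\star})$ for a small integer $k$. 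Summing the $O(1)$ many slack terms produced in the last two steps and absorbing the finitely many $(1-\epsilon)^{-k}$ factors into a single universal constant $C_1$ (legitimate because $\epsilon$ is a fixed small constant) gives $\mfk{S}_1\ge\|\bD_{U}\|_{\fro}^{2}+\|\bD_{V}\|_{\fro}^{2}+\|\bD_{W}\|_{\fro}^{2}-C_1\epsilon\,\dist^{2}(\bF_t,\bF_{\star})$.

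I expect the bookkeeping in the last step to be the main obstacle: the remainder $\bcG_{\ge2}$ contains several structurally distinct tensors, and in each one must be careful to feed a $\bDelta$-factor into the $\bSigma_{\star}$-weighted Frobenius slot of the inequality $\|(\bB_1,\bB_2,\bB_3)\bcdot\bcS_{\star}\|_{\fro}\le\|\bB_1\bSigma_{\star,1}\|_{\fro}\|\bB_2\|_{\op}\|\bB_3\|_{\op}$ rather than bounding $\|\bcS_{\star}\|_{\fro}\lesssim\sqrt{r}\,\sigma_{\max}(\bcX_{\star})$ directly---the latter would inject a spurious factor of $\sqrt{r}\,\kappa$ and break the target bound. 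Everything else is a routine recombination of the perturbation estimates already recorded in Lemma~\ref{lemma:perturb_bounds}.
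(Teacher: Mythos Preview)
Your proposal is correct and follows essentially the same route as the paper: both extract the three main terms $\|\bD_U\|_{\fro}^2,\|\bD_V\|_{\fro}^2,\|\bD_W\|_{\fro}^2$ by writing $\cM_k(\bcS_{\star})=\cM_k(\bcS)-\cM_k(\bDelta_{\cS})$ and invoking the alignment identities~\eqref{eq:alignment}, and both bound every remaining piece by $O(\epsilon)\dist^2(\bF_t,\bF_{\star})$ via Lemma~\ref{lemma:perturb_bounds}. The only difference is organizational: the paper uses the telescoping decomposition~\eqref{eq:TF_decomp_S} to split $\mfk{S}_1=\mfk{S}_{1,1}+\mfk{S}_{1,2}+\mfk{S}_{1,3}$ (so that the perturbation pieces appear as $\bU_{\star}^{\top}\bU(\bU^{\top}\bU)^{-1}-\bI_{r_1}$ type corrections inside $\mfk{S}_{1,2},\mfk{S}_{1,3}$), whereas you first collapse $\mfk{S}_1$ to $\langle\bDelta_{\cS},\,\bcS_{\star}-(\bI-\bA_1,\bI-\bA_2,\bI-\bA_3)\bcdot\bcS_{\star}\rangle$ and then multilinearly expand into $\bcG_1+\bcG_{\ge2}$; these are two bookkeepings of the same algebra.
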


\begin{claim}\label{claim:S2} $\mfk{S}_{2}\le3\left(\|\bD_{U}\|_{\fro}^{2}+\|\bD_{V}\|_{\fro}^{2}+\|\bD_{W}\|_{\fro}^{2}\right)+C_{2}\epsilon\dist^{2}(\bF_{t},\bF_{\star}).$\end{claim}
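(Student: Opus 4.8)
The plan is to prove Claim~\ref{claim:S2} by expanding $(\bU,\bV,\bW)\bcdot\bcS_{\star}-\bcX_{\star}$ through the staircase decomposition
\begin{align*}
(\bU,\bV,\bW)\bcdot\bcS_{\star}-\bcX_{\star}=(\bDelta_{U},\bV,\bW)\bcdot\bcS_{\star}+(\bU_{\star},\bDelta_{V},\bW)\bcdot\bcS_{\star}+(\bU_{\star},\bV_{\star},\bDelta_{W})\bcdot\bcS_{\star},
\end{align*}
which follows by substituting $\bU=\bU_{\star}+\bDelta_{U}$, $\bV=\bV_{\star}+\bDelta_{V}$, $\bW=\bW_{\star}+\bDelta_{W}$ and expanding multilinearly (note that the non-$\star$ factors are pushed into the non-leading slots, so there are exactly three summands and no residual cross terms). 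Applying the preconditioner $(\bP_{U},\bP_{V},\bP_{W})\bcdot$ with $\bP_{U}\coloneqq(\bU^{\top}\bU)^{-1}\bU^{\top}$ and $\bP_{V},\bP_{W}$ defined analogously, and using \eqref{eq:tensor_properties_c} together with the exact identities $\bP_{U}\bU=\bI_{r_{1}}$, $\bP_{V}\bV=\bI_{r_{2}}$, $\bP_{W}\bW=\bI_{r_{3}}$, the three summands collapse to $(\bP_{U}\bDelta_{U},\bI_{r_{2}},\bI_{r_{3}})\bcdot\bcS_{\star}$, $(\bP_{U}\bU_{\star},\bP_{V}\bDelta_{V},\bI_{r_{3}})\bcdot\bcS_{\star}$, and $(\bP_{U}\bU_{\star},\bP_{V}\bV_{\star},\bP_{W}\bDelta_{W})\bcdot\bcS_{\star}$. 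The elementary bound $\|a+b+c\|_{\fro}^{2}\le3(\|a\|_{\fro}^{2}+\|b\|_{\fro}^{2}+\|c\|_{\fro}^{2})$ then reduces the problem to controlling these three Frobenius norms one at a time.

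For each summand I would matricize along the mode carrying the $\bDelta$ factor. The key identity is $\bP_{U}\bDelta_{U}\bSigma_{\star,1}=(\bU^{\top}\bU)^{-1/2}\bD_{U}$, which is exactly the definition of $\bD_{U}$ in \eqref{eq:short_notations} — this is precisely why $\mfk{S}_{2}$ is governed by the smaller quantity $\bD_{U}$ rather than $\bDelta_{U}\bSigma_{\star,1}$, since the preconditioner $(\bU^{\top}\bU)^{-1}\bU^{\top}$ discards the component of $\bDelta_{U}$ orthogonal to the column space of $\bU$. Because $\bSigma_{\star,k}^{-1}\cM_{k}(\bcS_{\star})$ has orthonormal rows by \eqref{eq:ground_truth_condition}, the first summand satisfies $\|(\bP_{U}\bDelta_{U},\bI_{r_{2}},\bI_{r_{3}})\bcdot\bcS_{\star}\|_{\fro}=\|\bP_{U}\bDelta_{U}\cM_{1}(\bcS_{\star})\|_{\fro}=\|(\bU^{\top}\bU)^{-1/2}\bD_{U}\|_{\fro}\le(1-\epsilon)^{-1}\|\bD_{U}\|_{\fro}$ by \eqref{eq:perturb_Uinv_2}. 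For the other two summands, after matricizing along modes $2$ and $3$ I would peel off the trailing Kronecker factors $\bI_{r_{3}}\otimes\bP_{U}\bU_{\star}$ and $\bP_{V}\bV_{\star}\otimes\bP_{U}\bU_{\star}$ via $\|\bA\bB\|_{\fro}\le\|\bA\|_{\fro}\|\bB\|_{\op}$, reducing them to $\|(\bV^{\top}\bV)^{-1/2}\bD_{V}\|_{\fro}$ and $\|(\bW^{\top}\bW)^{-1/2}\bD_{W}\|_{\fro}$ multiplied by operator norms $\|\bP_{U}\bU_{\star}\|_{\op},\|\bP_{V}\bV_{\star}\|_{\op}$; using $\bP_{U}\bU_{\star}=\bI_{r_{1}}-\bP_{U}\bDelta_{U}$ together with \eqref{eq:perturb} and \eqref{eq:perturb_Uinv} bounds each such operator norm by $1+\epsilon(1-\epsilon)^{-1}$, so the second and third summands are at most $(1+O(\epsilon))\|\bD_{V}\|_{\fro}^{2}$ and $(1+O(\epsilon))\|\bD_{W}\|_{\fro}^{2}$.

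Finally I would assemble the three bounds and absorb the $1+O(\epsilon)$ overhead into the error term. Since $\bU(\bU^{\top}\bU)^{-1/2}$ has orthonormal columns, $\|\bD_{U}\|_{\fro}\le\|\bDelta_{U}\bSigma_{\star,1}\|_{\fro}\le\dist(\bF_{t},\bF_{\star})$ and likewise for $\bD_{V},\bD_{W}$, so each excess contribution such as $[(1-\epsilon)^{-2}-1]\|\bD_{U}\|_{\fro}^{2}$ is bounded by $C\epsilon\,\dist^{2}(\bF_{t},\bF_{\star})$ for $\epsilon$ sufficiently small, yielding $\mfk{S}_{2}\le3(\|\bD_{U}\|_{\fro}^{2}+\|\bD_{V}\|_{\fro}^{2}+\|\bD_{W}\|_{\fro}^{2})+C_{2}\epsilon\,\dist^{2}(\bF_{t},\bF_{\star})$ for a suitable universal constant $C_{2}$. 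I expect no serious obstacle here: the only delicate point is keeping the matricization/Kronecker bookkeeping consistent so that the preconditioner cleanly collapses exactly two of the three mode factors in each summand, and everything downstream is routine norm estimation built on Lemma~\ref{lemma:perturb_bounds}. The conceptual content — that $\mfk{S}_{2}$ is controlled by $\bD_{U},\bD_{V},\bD_{W}$ with leading constant $3$ — is exactly what pairs with Claim~\ref{claim:S1} to produce the net $-\eta(2-5\eta)(\|\bD_{U}\|_{\fro}^{2}+\|\bD_{V}\|_{\fro}^{2}+\|\bD_{W}\|_{\fro}^{2})$ contraction in \eqref{eq:S-bound}.
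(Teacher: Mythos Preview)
Your proposal is correct and follows essentially the same approach as the paper: the same staircase decomposition \eqref{eq:TF_decomp_S}, the same $(a+b+c)^{2}\le3(a^{2}+b^{2}+c^{2})$ split, the same mode-$k$ matricization of each summand, and the same reduction to $\|(\bU^{\top}\bU)^{-1/2}\bD_{U}\|_{\fro}$ etc.\ via \eqref{eq:ground_truth_condition} and Lemma~\ref{lemma:perturb_bounds}. The only cosmetic difference is that the paper bounds the trailing factor $\bU_{\star}^{\top}\bU(\bU^{\top}\bU)^{-1}$ directly by $\|\bU(\bU^{\top}\bU)^{-1}\|_{\op}\le(1-\epsilon)^{-1}$, yielding asymmetric coefficients $(1-\epsilon)^{-2},(1-\epsilon)^{-4},(1-\epsilon)^{-6}$ which it then balances by a permutation-and-averaging trick, whereas you already get near-balanced $1+O(\epsilon)$ coefficients and skip that step; both routes absorb the excess into $C_{2}\epsilon\,\dist^{2}(\bF_{t},\bF_{\star})$ the same way.
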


Take the bounds on $\mfk{S}_{1}$ and $\mfk{S}_{2}$ collectively
to reach 
\begin{align*}
 & \left\Vert (\bQ_{t,1}^{-1},\bQ_{t,2}^{-1},\bQ_{t,3}^{-1})\bcdot\bcS_{t+1}-\bcS_{\star}\right\Vert _{\fro}^{2}\le(1-\eta)^{2}\|\bDelta_{\cS}\|_{\fro}^{2}-\eta(2-5\eta)\left(\|\bD_{U}\|_{\fro}^{2}+\|\bD_{V}\|_{\fro}^{2}+\|\bD_{W}\|_{\fro}^{2}\right)\nonumber \\
 & \qquad+2\eta(1-\eta)C_{1}\epsilon\dist^{2}(\bF_{t},\bF_{\star})+\eta^{2}C_{2}\epsilon\dist^{2}(\bF_{t},\bF_{\star})
\end{align*}
as long as $\eta<1$. This recovers the bound~\eqref{eq:S-bound}. 


\subsection{Proof of Claim~\ref{claim:U1}}

\label{proof:claim_U1}

Use the relation \eqref{eq:decomp_R} to decompose $\mfk{U}_{1}$
as 
\begin{align*}
\mfk{U}_{1} & = \big\langle\bU_{\star}^{\top}\bDelta_{U}\bSigma_{\star,1},(\breve{\bU}-\breve{\bU}_{\star})^{\top}\breve{\bU}(\breve{\bU}^{\top}\breve{\bU})^{-1}\bSigma_{\star,1}\big\rangle \\
 & =\underbrace{\big\langle \bU_{\star}^{\top}\bDelta_{U}\bSigma_{\star,1},\cM_{1}(\bcS_{\star})(\bW\otimes\bDelta_{V}+\bDelta_{W}\otimes\bV_{\star})^{\top}\breve{\bU}(\breve{\bU}^{\top}\breve{\bU})^{-1}\bSigma_{\star,1}\big\rangle }_{ \eqqcolon   \mfk{U}_{1,1}}\\
 & \qquad+\underbrace{\big\langle \bU_{\star}^{\top}\bDelta_{U}\bSigma_{\star,1},\cM_{1}(\bDelta_{\cS})(\bW\otimes\bV)^{\top}\breve{\bU}(\breve{\bU}^{\top}\breve{\bU})^{-1}\bSigma_{\star,1}\big\rangle }_{ \eqqcolon  \mfk{U}_{1,2}}.
\end{align*}
In what follows, we bound $\mfk{U}_{1,1}$ and $\mfk{U}_{1,2}$ separately.

\paragraph{Step 1: tackling $\mfk{U}_{1,1}$.}
We can further decompose $\mfk{U}_{1,1}$ into the following four
terms 
\begin{align*}
\mfk{U}_{1,1} & =\underbrace{\left\langle \bU_{\star}^{\top}\bDelta_{U}\bSigma_{\star,1},\cM_{1}(\bcS_{\star})(\bW_{\star}\otimes\bDelta_{V}+\bDelta_{W}\otimes\bV_{\star})^{\top}\breve{\bU}_{\star}\bSigma_{\star,1}^{-1} \right\rangle }_{\eqqcolon  \mfk{U}_{1,1}^{\main}}\\
 & \qquad+\underbrace{\left\langle \bU_{\star}^{\top}\bDelta_{U}\bSigma_{\star,1},\cM_{1}(\bcS_{\star})(\bW_{\star}\otimes\bDelta_{V})^{\top}\left(\breve{\bU}(\breve{\bU}^{\top}\breve{\bU})^{-1}\bSigma_{\star,1}-\breve{\bU}_{\star}\bSigma_{\star,1}^{-1}\right)\right\rangle }_{\eqqcolon \mfk{U}_{1,1}^{\ptb, 1}}\\
 & \qquad+\underbrace{\left\langle \bU_{\star}^{\top}\bDelta_{U}\bSigma_{\star,1},\cM_{1}(\bcS_{\star})(\bDelta_{W}\otimes\bV_{\star})^{\top}\left(\breve{\bU}(\breve{\bU}^{\top}\breve{\bU})^{-1}\bSigma_{\star,1}-\breve{\bU}_{\star}\bSigma_{\star,1}^{-1}\right)\right\rangle }_{\eqqcolon \mfk{U}_{1,1}^{\ptb,2}}\\
 & \qquad+\underbrace{\left\langle \bU_{\star}^{\top}\bDelta_{U}\bSigma_{\star,1},\cM_{1}(\bcS_{\star})(\bDelta_{W}\otimes\bDelta_{V})^{\top}\breve{\bU}(\breve{\bU}^{\top}\breve{\bU})^{-1}\bSigma_{\star,1}\right\rangle }_{\eqqcolon\mfk{U}_{1,1}^{\ptb,3}},
\end{align*}
where $\mfk{U}_{1,1}^{\main}$ denotes the main term and the remaining
ones are perturbation terms. 

Utilizing the definition of $\breve{\bU}_{\star}$ in \eqref{eq:short_notations} and the relation \eqref{eq:matricization}, the main term $\mfk{U}_{1,1}^{\main}$ can be rewritten as an inner product
in the tensor space: 
\begin{align*}
\mfk{U}_{1,1}^{\main} & =\left\langle \bU_{\star}^{\top}\bDelta_{U}\cM_{1}(\bcS_{\star}),\cM_{1}(\bcS_{\star})(\bI_{r_{3}}\otimes\bDelta_{V}^{\top}\bV_{\star}+\bDelta_{W}^{\top}\bW_{\star}\otimes\bI_{r_{2}})\right\rangle \\
 & =\left\langle \bcT_{U},\bcT_{V}+\bcT_{W}\right\rangle .
\end{align*}
To control the other three perturbation terms, Lemma~\ref{lemma:perturb_bounds}
turns out to be extremely useful. For instance, the perturbation term $\mfk{U}_{1,1}^{\ptb,1}$
is bounded by 
\begin{align*}
|\mfk{U}_{1,1}^{\ptb,1}| & \le\left\Vert \bU_{\star}^{\top}\bDelta_{U}\bSigma_{\star,1}\right\Vert _{\fro}\left\Vert \cM_{1}(\bcS_{\star})(\bW_{\star}\otimes\bDelta_{V})^{\top}\right\Vert _{\fro}\left\Vert \breve{\bU}(\breve{\bU}^{\top}\breve{\bU})^{-1}\bSigma_{\star,1}-\breve{\bU}_{\star}\bSigma_{\star,1}^{-1} \right\Vert _{\op}\\
 & \le\frac{\sqrt{2}(3\epsilon+3\epsilon^{2}+\epsilon^{3})}{(1-\epsilon)^{3}}\|\bDelta_{U}\bSigma_{\star,1}\|_{\fro}\|\bDelta_{V}\bSigma_{\star,2}\|_{\fro}.
\end{align*}
Here in the last inequality, we used the upper bound~\eqref{eq:perturb_Rinv_d}
and changed the matricization mode to obtain 
\begin{align*}
\left\Vert \cM_{1}(\bcS_{\star})(\bW_{\star}\otimes\bDelta_{V})^{\top}\right\Vert _{\fro} & =\|(\bI_{r_{1}},\bDelta_{V},\bW_{\star})\bcdot\bcS_{\star}\|_{\fro}=\left\Vert \bDelta_{V}\cM_{2}(\bcS_{\star})(\bW_{\star}\otimes\bI_{r_{1}})^{\top}\right\Vert _{\fro}\le\|\bDelta_{V}\bSigma_{\star,2}\|_{\fro}.
\end{align*}
Similarly, the remaining two perturbation terms $\mfk{U}_{1,1}^{\ptb,2}$
and $\mfk{U}_{1,1}^{\ptb,3}$ obey 
\begin{align*}
|\mfk{U}_{1,1}^{\ptb,2}| & \le\frac{\sqrt{2}(3\epsilon+3\epsilon^{2}+\epsilon^{3})}{(1-\epsilon)^{3}}\|\bDelta_{U}\bSigma_{\star,1}\|_{\fro}\|\bDelta_{W}\bSigma_{\star,3}\|_{\fro},\\
|\mfk{U}_{1,1}^{\ptb,3}| & \le\frac{\epsilon}{(1-\epsilon)^{3}}\|\bDelta_{U}\bSigma_{\star,1}\|_{\fro}\|\bDelta_{V}\bSigma_{\star,2}\|_{\fro}.
\end{align*}

\paragraph{Step 2: tackling $\mfk{U}_{1,2}$.}
Now we move on to $\mfk{U}_{1,2}$, which can be decomposed as 
\begin{align*}
\mfk{U}_{1,2} & =\left\langle \bU_{\star}^{\top}\bDelta_{U}\bSigma_{\star,1},\cM_{1}(\bDelta_{\cS})\cM_{1}(\bcS_{\star})^{\top}\bSigma_{\star,1}^{-1}\right\rangle \\
 & \qquad+\underbrace{\left\langle \bU_{\star}^{\top}\bDelta_{U}\bSigma_{\star,1},\cM_{1}(\bDelta_{\cS})(\bW_{\star}\otimes\bV_{\star})^{\top}\left(\breve{\bU}(\breve{\bU}^{\top}\breve{\bU})^{-1}\bSigma_{\star,1}-\breve{\bU}_{\star}\bSigma_{\star,1}^{-1}\right)\right\rangle }_{ \eqqcolon \mfk{U}_{1,2}^{\ptb,1}}\\
 & \qquad+\underbrace{\left\langle \bU_{\star}^{\top}\bDelta_{U}\bSigma_{\star,1},\cM_{1}(\bDelta_{\cS})(\bW\otimes\bV-\bW_{\star}\otimes\bV_{\star})^{\top}\breve{\bU}(\breve{\bU}^{\top}\breve{\bU})^{-1}\bSigma_{\star,1}\right\rangle }_{\eqqcolon \mfk{U}_{1,2}^{\ptb,2}}\\
 & =\left\langle \bU_{\star}^{\top}\bDelta_{U}\bSigma_{\star,1},\cM_{1}(\bDelta_{\cS})\cM_{1}(\bcS)^{\top}\bSigma_{\star,1}^{-1}\right\rangle \underbrace{ -\left\langle \bU_{\star}^{\top}\bDelta_{U}\bSigma_{\star,1},\cM_{1}(\bDelta_{\cS})\cM_{1}(\bDelta_{\cS})^{\top}\bSigma_{\star,1}^{-1}\right\rangle }_{\eqqcolon\mfk{U}_{1,2}^{\ptb,3}}+\mfk{U}_{1,2}^{\ptb,1}+\mfk{U}_{1,2}^{\ptb,2}\\
 & =\left\langle \bU_{\star}^{\top}\bDelta_{U}\bSigma_{\star,1},\bU^{\top}\bDelta_{U}\bSigma_{\star,1}\right\rangle +\mfk{U}_{1,2}^{\ptb,1}+\mfk{U}_{1,2}^{\ptb,2} + \mfk{U}_{1,2}^{\ptb,3}\\
 & =\left\Vert \bcT_{U}\right\Vert _{\fro}^{2}+\mfk{U}_{1,2}^{\ptb,1}+\mfk{U}_{1,2}^{\ptb,2} + \mfk{U}_{1,2}^{\ptb,3}+\underbrace{\left\langle \bU_{\star}^{\top}\bDelta_{U}\bSigma_{\star,1},\bDelta_{U}^{\top}\bDelta_{U}\bSigma_{\star,1}\right\rangle }_{\eqqcolon \mfk{U}_{1,2}^{\ptb,4}},
\end{align*}
where in the penultimate identity we have applied the identity~\eqref{eq:alignment}
to replace $\cM_{1}(\bDelta_{\cS})\cM_{1}(\bcS)^{\top}$. Again, by
Lemma~\ref{lemma:perturb_bounds}, the perturbation term $\mfk{U}_{1,2}^{\ptb,1}$
is bounded by 
\begin{align*}
|\mfk{U}_{1,2}^{\ptb,1}| & \le\left\Vert \bU_{\star}^{\top}\bDelta_{U}\bSigma_{\star,1}\right\Vert _{\fro}\left\Vert \cM_{1}(\bDelta_{\cS})(\bW_{\star}\otimes\bV_{\star})^{\top}\right\Vert _{\fro}\big\Vert \breve{\bU}(\breve{\bU}^{\top}\breve{\bU})^{-1}\bSigma_{\star,1}-\breve{\bU}_{\star}\bSigma_{\star,1}^{-1} \big\Vert _{\op}\\
 & \le\frac{\sqrt{2}(3\epsilon+3\epsilon^{2}+\epsilon^{3})}{(1-\epsilon)^{3}}\|\bDelta_{U}\bSigma_{\star,1}\|_{\fro}\|\bDelta_{\cS}\|_{\fro}.
\end{align*}
In addition, $\mfk{U}_{1,2}^{\ptb,2}$ is bounded by 
\begin{align*}
|\mfk{U}_{1,2}^{\ptb,2}| & \le\left\Vert \bU_{\star}^{\top}\bDelta_{U}\bSigma_{\star,1}\right\Vert _{\fro}\|\cM_{1}(\bDelta_{\cS})\|_{\fro}\left\Vert \bW\otimes\bV-\bW_{\star}\otimes\bV_{\star}\right\Vert _{\op}\big\Vert \breve{\bU}(\breve{\bU}^{\top}\breve{\bU})^{-1}\bSigma_{\star,1} \big\Vert _{\op}\\
 & \le\frac{2\epsilon+\epsilon^{2}}{(1-\epsilon)^{3}}\|\bDelta_{U}\bSigma_{\star,1}\|_{\fro}\|\bDelta_{\cS}\|_{\fro},
\end{align*}
where we have used 
\begin{align*}
\left\Vert \bW\otimes\bV-\bW_{\star}\otimes\bV_{\star}\right\Vert _{\op} & \le\|\bDelta_{W}\otimes\bV_{\star}\|_{\op}+\|\bW_{\star}\otimes\bDelta_{V}\|_{\op}+\|\bDelta_{W}\otimes\bDelta_{V}\|_{\op}\\
 & \le\|\bDelta_{W}\|_{\op}+\|\bDelta_{V}\|_{\op}+\|\bDelta_{V}\|_{\op}\|\bDelta_{W}\|_{\op}\le2\epsilon+\epsilon^{2}.
\end{align*}
Following similar arguments (i.e.~repeatedly using Lemma~\ref{lemma:perturb_bounds}),
we can bound $\mfk{U}_{1,2}^{\ptb,3}$ and $\mfk{U}_{1,2}^{\ptb,4}$ as 
\begin{align*}
|\mfk{U}_{1,2}^{\ptb,3}| & \le\left\Vert \bU_{\star}^{\top}\bDelta_{U}\bSigma_{\star,1}\right\Vert _{\fro}\|\cM_{1}(\bDelta_{\cS})\|_{\fro}\left\Vert \cM_{1}(\bDelta_{\cS})^{\top}\bSigma_{\star,1}^{-1}\right\Vert _{\op}\le\epsilon\|\bDelta_{U}\bSigma_{\star,1}\|_{\fro}\|\bDelta_{\cS}\|_{\fro};\\
|\mfk{U}_{1,2}^{\ptb,4}| & \le\left\Vert \bU_{\star}^{\top}\bDelta_{U}\bSigma_{\star,1}\right\Vert _{\fro}\|\bDelta_{U}\|_{\op}\|\bDelta_{U}\bSigma_{\star,1}\|_{\fro}\le\epsilon\|\bDelta_{U}\bSigma_{\star,1}\|_{\fro}^{2}.
\end{align*}

\paragraph{Step 3: putting the bound together.} Combine these results on $\mfk{U}_{1,1}$ and $\mfk{U}_{1,2}$ to
see 
\begin{align*}
\mfk{U}_{1} & =\left\langle \bcT_{U},\bcT_{U}+\bcT_{V}+\bcT_{W}\right\rangle +\mfk{U}_{1}^{\ptb},
\end{align*}
where the perturbation term $\mfk{U}_{1}^{\ptb}:= \sum_{i=1}^{3} \mfk{U}_{1,1}^{\ptb,i} + \sum_{i=1}^4\mfk{U}_{1,2}^{\ptb,i}$ obeys 
\begin{align*}
|\mfk{U}_{1}^{\ptb}|\le\epsilon\|\bDelta_{U}\bSigma_{\star,1}\|_{\fro}\Big(\|\bDelta_{U}\bSigma_{\star,1}\|_{\fro} & +\frac{1+\sqrt{2}(3+3\epsilon+\epsilon^{2})}{(1-\epsilon)^{3}}\|\bDelta_{V}\bSigma_{\star,2}\|_{\fro}+\frac{\sqrt{2}(3+3\epsilon+\epsilon^{2})}{(1-\epsilon)^{3}}\|\bDelta_{W}\bSigma_{\star,3}\|_{\fro} \\
 & + (1+\frac{2+\epsilon+\sqrt{2}(3+3\epsilon+\epsilon^{2})}{(1-\epsilon)^{3}})\|\bDelta_{\cS}\|_{\fro}\Big).
\end{align*}
Using the Cauchy--Schwarz inequality, we can further simplify it
as $|\mfk{U}_{1}^{p}|\le C_{1}\epsilon\dist^{2}(\bF_{t},\bF_{\star})$
for some universal constant $C_{1}>1$. 

\subsection{Proof of Claim~\ref{claim:U2}}

\label{proof:claim_U2} 

Note that
\begin{align}
\mfk{U}_{2} & =\big\Vert (\breve{\bU}-\breve{\bU}_{\star})^{\top}\breve{\bU}(\breve{\bU}^{\top}\breve{\bU})^{-1}\bSigma_{\star,1}\big\Vert _{\fro}^{2}\nonumber\\
 & \le\big\Vert (\breve{\bU}-\breve{\bU}_{\star})^{\top}\breve{\bU}\bSigma_{\star,1}^{-1}\big\Vert _{\fro}^{2}\big\Vert \bSigma_{\star,1}(\breve{\bU}^{\top}\breve{\bU})^{-1}\bSigma_{\star,1}\big\Vert _{\op}^{2}\nonumber\\
 & \le\big\Vert (\breve{\bU}-\breve{\bU}_{\star})^{\top}\breve{\bU}\bSigma_{\star,1}^{-1}\big\Vert _{\fro}^{2}(1-\epsilon)^{-12}, \label{eq:U2_bound}
\end{align}
where the last relation arises from the bound~\eqref{eq:perturb_Rinv_2}
in Lemma~\ref{lemma:perturb_bounds}. We can then use the decomposition~\eqref{eq:decomp_R}
to obtain 
\begin{align*}
\big\Vert (\breve{\bU}-\breve{\bU}_{\star})^{\top}\breve{\bU}\bSigma_{\star,1}^{-1}\big\Vert _{\fro} & =\left\Vert \Big(\cM_{1}(\bcS_{\star})(\bW\otimes\bDelta_{V}+\bDelta_{W}\otimes\bV_{\star})^{\top}+\cM_{1}(\bDelta_{\cS})(\bW\otimes\bV)^{\top}\Big)(\bW\otimes\bV)\cM_{1}(\bcS)^{\top}\bSigma_{\star,1}^{-1}\right\Vert _{\fro}\\
 & \le\underbrace{\left\Vert \cM_{1}(\bcS_{\star})\left(\bI_{r_{3}}\otimes\bDelta_{V}^{\top}\bV_{\star}+\bDelta_{W}^{\top}\bW_{\star}\otimes\bI_{r_{2}}\right)\cM_{1}(\bcS_{\star})^{\top}\bSigma_{\star,1}^{-1}+\cM_{1}(\bDelta_{\cS})\cM_{1}(\bcS)^{\top}\bSigma_{\star,1}^{-1}\right\Vert _{\fro}}_{\eqqcolon\mfk{U}_{2}^{\main}}\\
 & \qquad+\underbrace{\left\Vert \cM_{1}(\bcS_{\star})\left(\bW^{\top}\bW\otimes\bDelta_{V}^{\top}\bV-\bI_{r_{3}}\otimes\bDelta_{V}^{\top}\bV_{\star}\right)\cM_{1}(\bcS_{\star})^{\top}\bSigma_{\star,1}^{-1}\right\Vert _{\fro}}_{\eqqcolon\mfk{U}_{2}^{\ptb,1}}\\
 & \qquad+\underbrace{\left\Vert \cM_{1}(\bcS_{\star})\left(\bDelta_{W}^{\top}\bW\otimes\bV_{\star}^{\top}\bV-\bDelta_{W}^{\top}\bW_{\star}\otimes\bI_{r_{2}}\right)\cM_{1}(\bcS_{\star})^{\top}\bSigma_{\star,1}^{-1}\right\Vert _{\fro}}_{\eqqcolon\mfk{U}_{2}^{\ptb,2}}\\
 & \qquad+\underbrace{\left\Vert \cM_{1}(\bcS_{\star})\left(\bW^{\top}\bW\otimes\bDelta_{V}^{\top}\bV+\bDelta_{W}^{\top}\bW\otimes\bV_{\star}^{\top}\bV\right)\cM_{1}(\bDelta_{\cS})^{\top}\bSigma_{\star,1}^{-1}\right\Vert _{\fro}}_{\eqqcolon\mfk{U}_{2}^{\ptb,3}}\\
 & \qquad+\underbrace{\left\Vert \cM_{1}(\bDelta_{\cS})\left(\bW^{\top}\bW\otimes\bV^{\top}\bV-\bI_{r_{3}}\otimes\bI_{r_{2}}\right)\cM_{1}(\bcS)^{\top}\bSigma_{\star,1}^{-1}\right\Vert _{\fro}}_{\eqqcolon\mfk{U}_{2}^{\ptb,4}}.
\end{align*}
Here, $\mfk{U}_{2}^{\main}$ is the main term while the remaining four
are perturbation terms. Use the relation~\eqref{eq:alignment} again
to replace $\cM_{1}(\bDelta_{\cS})\cM_{1}(\bcS)^{\top}$ in the main
term $\mfk{U}_{2}^{\main}$ and see 
\begin{align*}
\mfk{U}_{2}^{\main} & =\left\Vert \Big(\cM_{1}(\bcS_{\star})(\bI_{r_{3}}\otimes\bDelta_{V}^{\top}\bV_{\star}+\bDelta_{W}^{\top}\bW_{\star}\otimes\bI_{r_{2}})+\bU_{\star}^{\top}\bDelta_{U}\cM_{1}(\bcS_{\star})\Big)\cM_{1}(\bcS_{\star})^{\top}\bSigma_{\star,1}^{-1}\right\Vert _{\fro}\\
 & \le\left\Vert \cM_{1}(\bcS_{\star})(\bI_{r_{3}}\otimes\bDelta_{V}^{\top}\bV_{\star}+\bDelta_{W}^{\top}\bW_{\star}\otimes\bI_{r_{2}})+\bU_{\star}^{\top}\bDelta_{U}\cM_{1}(\bcS_{\star})\right\Vert _{\fro}\|\cM_{1}(\bcS_{\star})^{\top}\bSigma_{\star,1}^{-1}\|_{\op}\\
 & =\left\Vert \bcT_{U}+\bcT_{V}+\bcT_{W}\right\Vert _{\fro},
\end{align*}
where the last equality uses $\|\cM_{1}(\bcS_{\star})^{\top}\bSigma_{\star,1}^{-1}\|_{\op}=1$.
The perturbation terms are bounded by 
\begin{align*}
\mfk{U}_{2}^{\ptb,1} & \le((1+\epsilon)^{3}-1)\|\bDelta_{V}\bSigma_{\star,2}\|_{\fro};\\
\mfk{U}_{2}^{\ptb,2} & \le((1+\epsilon)^{2}-1)\|\bDelta_{W}\bSigma_{\star,3}\|_{\fro};\\
\mfk{U}_{2}^{\ptb,3} & \le\epsilon(1+\epsilon)^{3}\|\bDelta_{V}\bSigma_{\star,2}\|_{\fro}+\epsilon(1+\epsilon)^{2}\|\bDelta_{W}\bSigma_{\star,3}\|_{\fro};\\
\mfk{U}_{2}^{\ptb,4} & \le((1+\epsilon)^{4}-1)(1+\epsilon)\|\bDelta_{\cS}\|_{\fro}.
\end{align*}
They follow from similar calculations as those in bounding $\mfk{U}_{1}$ with the aid of Lemma~\ref{lemma:perturb_bounds};
hence we omit the details for brevity. Combine these results to see 
\begin{align*}
\big\Vert (\breve{\bU}-\breve{\bU}_{\star})^{\top}\breve{\bU}\bSigma_{\star,1}^{-1} \big\Vert _{\fro} & \le\left\Vert \bcT_{U}+\bcT_{V}+\bcT_{W}\right\Vert _{\fro}+\mfk{U}_{2}^{\ptb},
\end{align*}
with $\mfk{U}_{2}^{\ptb}: = \sum_{i=1}^4 \mfk{U}_{2}^{\ptb,i}$ obeying 
\begin{align*}
\mfk{U}_{2}^{\ptb} & \le((1+\epsilon)^{4}-1)\|\bDelta_{V}\bSigma_{\star,2}\|_{\fro}+((1+\epsilon)^{3}-1)\|\bDelta_{W}\bSigma_{\star,3}\|_{\fro}+((1+\epsilon)^{4}-1)(1+\epsilon)\|\bDelta_{\cS}\|_{\fro} \\
&  \lesssim \epsilon \big(\|\bDelta_{V}\bSigma_{\star,2}\|_{\fro}+ \|\bDelta_{W} \bSigma_{\star,3}\|_{\fro} + \|\bDelta_{\cS}\|_{\fro} \big)\lesssim \epsilon \dist(\bF_{t},\bF_{\star}).
\end{align*}
Next take the square to obtain 
\begin{align*}
\big\|(\breve{\bU}-\breve{\bU}_{\star})^{\top}\breve{\bU}\bSigma_{\star,1}^{-1}\big\|_{\fro}^{2} & \le\left\Vert \bcT_{U}+\bcT_{V}+\bcT_{W}\right\Vert _{\fro}^{2}+2\mfk{U}_{2}^{\ptb}\left\Vert \bcT_{U}+\bcT_{V}+\bcT_{W}\right\Vert _{\fro}+(\mfk{U}_{2}^{\ptb})^2.
\end{align*}
Finally plug this back into \eqref{eq:U2_bound} to conclude 
\begin{align*}
\mfk{U}_{2} & \le (1-\epsilon)^{-12}\left\Vert \bcT_{U}+\bcT_{V}+\bcT_{W}\right\Vert _{\fro}^{2}+2(1-\epsilon)^{-12}\mfk{U}_{2}^{\ptb}\left\Vert \bcT_{U}+\bcT_{V}+\bcT_{W}\right\Vert _{\fro}+(1-\epsilon)^{-12}(\mfk{U}_{2}^{\ptb})^{2} \\
&\le \left\Vert \bcT_{U}+\bcT_{V}+\bcT_{W}\right\Vert _{\fro}^{2}+\left((1-\epsilon)^{-12}-1\right)\left(\|\bDelta_{U}\bSigma_{\star,1}\|_{\fro}+\|\bDelta_{V}\bSigma_{\star,2}\|_{\fro}+\|\bDelta_{W}\bSigma_{\star,3}\|_{\fro}\right)^{2}\\
& \qquad+2(1-\epsilon)^{-12}\mfk{U}_{2}^{\ptb}\left(\|\bDelta_{U}\bSigma_{\star,1}\|_{\fro}+\|\bDelta_{V}\bSigma_{\star,2}\|_{\fro}+\|\bDelta_{W}\bSigma_{\star,3}\|_{\fro}\right)+(1-\epsilon)^{-12}(\mfk{U}_{2}^{\ptb})^{2} \\
&\le \left\Vert \bcT_{U}+\bcT_{V}+\bcT_{W}\right\Vert _{\fro}^{2} + C_{2}\epsilon \dist^{2}(\bF_{t},\bF_{\star}),
\end{align*}
for some universal constant $C_2>1$. Here in the second inequality, we use the fact that $\left\Vert \bcT_{U}\right\Vert_{\fro}\leq \|\bDelta_{U}\bSigma_{\star,1}\|_{\fro}$, $\left\Vert \bcT_{V}\right\Vert_{\fro}\leq \|\bDelta_{V}\bSigma_{\star,2}\|_{\fro}$, and $\left\Vert \bcT_{W}\right\Vert_{\fro}\leq \|\bDelta_{W}\bSigma_{\star,3}\|_{\fro}$. This finishes the proof of the claim. 

\subsection{Proof of Claim~\ref{claim:S1}}
\label{proof:claim_S1}

Use the decomposition 
\begin{align}
(\bU,\bV,\bW)\bcdot\bcS_{\star}-\bcX_{\star}=(\bDelta_{U},\bV,\bW)\bcdot\bcS_{\star}+(\bU_{\star},\bDelta_{V},\bW)\bcdot\bcS_{\star}+(\bU_{\star},\bV_{\star},\bDelta_{W})\bcdot\bcS_{\star}\label{eq:TF_decomp_S}
\end{align}
to rewrite $\mfk{S}_{1}$ as 
\begin{align*}
\mfk{S}_{1} & =\underbrace{\left\langle \bDelta_{\cS},((\bU^{\top}\bU)^{-1}\bU^{\top}\bDelta_{U},\bI_{r_{2}},\bI_{r_{3}})\bcdot\bcS_{\star}\right\rangle }_{ \eqqcolon \mfk{S}_{1,1}}+\underbrace{\left\langle \bDelta_{\cS},((\bU^{\top}\bU)^{-1}\bU^{\top}\bU_{\star},(\bV^{\top}\bV)^{-1}\bV^{\top}\bDelta_{V},\bI_{r_{3}})\bcdot\bcS_{\star}\right\rangle }_{\eqqcolon \mfk{S}_{1,2}}\\
 & \qquad+\underbrace{\left\langle \bDelta_{\cS},((\bU^{\top}\bU)^{-1}\bU^{\top}\bU_{\star},(\bV^{\top}\bV)^{-1}\bV^{\top}\bV_{\star},(\bW^{\top}\bW)^{-1}\bW^{\top}\bDelta_{W})\bcdot\bcS_{\star}\right\rangle }_{\eqqcolon \mfk{S}_{1,3}}.
\end{align*}

\paragraph{Step 1: tackling $\mfk{S}_{1,1}$.}
Translating the inner product from the tensor space to the matrix
space via the mode-1 matricization yields 
\begin{align*}
\mfk{S}_{1,1} & =\left\langle \cM_{1}(\bDelta_{\cS}),(\bU^{\top}\bU)^{-1}\bU^{\top}\bDelta_{U}\cM_{1}(\bcS_{\star})\right\rangle \\
 & =\underbrace{\left\langle \cM_{1}(\bDelta_{\cS}),(\bU^{\top}\bU)^{-1}\bU^{\top}\bDelta_{U}\cM_{1}(\bcS)\right\rangle }_{\eqqcolon \mfk{S}_{1,1}^{\main}} - \underbrace{\left\langle \cM_{1}(\bDelta_{\cS}),(\bU^{\top}\bU)^{-1}\bU^{\top}\bDelta_{U}\cM_{1}(\bDelta_{\cS})\right\rangle }_{\eqqcolon\mfk{S}_{1,1}^{\ptb}}.
\end{align*}
Again, the identity~\eqref{eq:alignment} is helpful in characterizing
the main term $\mfk{S}_{1,1}^{\main}$:
\begin{align*}
\mfk{S}_{1,1}^{\main} & =\left\langle \bU^{\top}\bDelta_{U}\bSigma_{\star,1}^{2},(\bU^{\top}\bU)^{-1}\bU^{\top}\bDelta_{U}\right\rangle =\big\Vert (\bU^{\top}\bU)^{-1/2}\bU^{\top}\bDelta_{U}\bSigma_{\star,1}\big\Vert _{\fro}^{2}.
\end{align*}
The perturbation term $\mfk{S}_{1,1}^{\ptb}$ is bounded by 
\begin{align*}
|\mfk{S}_{1,1}^{\ptb}|\le\|\cM_{1}(\bDelta_{\cS})\|_{\fro}\left\Vert \bU(\bU^{\top}\bU)^{-1}\right\Vert _{\op}\|\bDelta_{U}\|_{\op}\|\cM_{1}(\bDelta_{\cS})\|_{\fro}\le\epsilon(1-\epsilon)^{-1}\|\bDelta_{\cS}\|_{\fro}^{2},
\end{align*}
which follows directly from Lemma~\ref{lemma:perturb_bounds}. 

\paragraph{Step 2: tackling $\mfk{S}_{1,2}$.}
Following the same recipe as above, we can apply the mode-$2$ matricization
to $\mfk{S}_{1,2}$ to see 
\begin{align*}
\mfk{S}_{1,2} & =\left\langle \cM_{2}(\bDelta_{\cS}),(\bV^{\top}\bV)^{-1}\bV^{\top}\bDelta_{V}\cM_{2}(\bcS_{\star})\left(\bI_{r_{3}}\otimes\bU_{\star}^{\top}\bU(\bU^{\top}\bU)^{-1}\right)\right\rangle \\
 & =\underbrace{\left\langle \cM_{2}(\bDelta_{\cS}),(\bV^{\top}\bV)^{-1}\bV^{\top}\bDelta_{V}\cM_{2}(\bcS)\right\rangle }_{ \eqqcolon\mfk{S}_{1,2}^{\main}} - \underbrace{\left\langle \cM_{2}(\bDelta_{\cS}),(\bV^{\top}\bV)^{-1}\bV^{\top}\bDelta_{V}\cM_{2}(\bDelta_{\cS})\right\rangle }_{ \eqqcolon \mfk{S}_{1,2}^{\ptb,1}}\\
 & \quad+\underbrace{\left\langle \cM_{2}(\bDelta_{\cS}),(\bV^{\top}\bV)^{-1}\bV^{\top}\bDelta_{V}\cM_{2}(\bcS_{\star})\left(\bI_{r_{3}}\otimes(\bU_{\star}^{\top}\bU(\bU^{\top}\bU)^{-1}-\bI_{r_{1}})\right)\right\rangle }_{ \eqqcolon\mfk{S}_{1,2}^{\ptb,2}}.
\end{align*}
In view of the relation~\eqref{eq:alignment}, we can rewrite the
main term $\mfk{S}_{1,2}^{\main}$ as
\begin{align*}
\mfk{S}_{1,2}^{\main} & =\left\Vert (\bV^{\top}\bV)^{-1/2}\bV^{\top}\bDelta_{V}\bSigma_{\star,2}\right\Vert _{\fro}^{2}.
\end{align*}
In addition, for the perturbation terms, Lemma~\ref{lemma:perturb_bounds}
allows us to obtain 
\begin{align*}
|\mfk{S}_{1,2}^{\ptb,1}|\le\|\cM_{2}(\bDelta_{\cS})\|_{\fro}\left\Vert \bV(\bV^{\top}\bV)^{-1}\right\Vert _{\op}\|\bDelta_{V}\|_{\op}\|\cM_{2}(\bDelta_{\cS})\|_{\fro}\le\epsilon(1-\epsilon)^{-1}\|\bDelta_{\cS}\|_{\fro}^{2}.
\end{align*}
Moreover, we can write $\bU_{\star}^{\top}\bU(\bU^{\top}\bU)^{-1}-\bI_{r_{1}}=-\bDelta_{U}^{\top}\bU(\bU^{\top}\bU)^{-1}$, and bound $\mfk{S}_{1,2}^{\ptb,2}$ as 
\begin{align*}
|\mfk{S}_{1,2}^{\ptb,2}| & \le\|\cM_{2}(\bDelta_{\cS})\|_{\fro}\|\bV(\bV^{\top}\bV)^{-1}\|_{\op}\|\bDelta_{V}\cM_{2}(\bcS_{\star})\|_{\fro}\|\bDelta_{U}\|_{\op}\|\bU(\bU^{\top}\bU)^{-1}\|_{\op}\\
 & \le\epsilon(1-\epsilon)^{-2}\|\bDelta_{\cS}\|_{\fro}\|\bDelta_{V}\bSigma_{\star,2}\|_{\fro}.
\end{align*}

\paragraph{Step 3: tackling $\mfk{S}_{1,3}$.}

Similar to before, we rewrite $\mfk{S}_{1,3}$ by applying the mode-$3$ matricization as
\begin{align*}
\mfk{S}_{1,3} & =\left\langle \cM_{3}(\bDelta_{\cS}),(\bW^{\top}\bW)^{-1}\bW^{\top}\bDelta_{W}\cM_{3}(\bcS_{\star})\left(\bV_{\star}^{\top}\bV(\bV^{\top}\bV)^{-1}\otimes\bU_{\star}^{\top}\bU(\bU^{\top}\bU)^{-1}\right)\right\rangle \\
 & =\underbrace{\left\langle \cM_{3}(\bDelta_{\cS}),(\bW^{\top}\bW)^{-1}\bW^{\top}\bDelta_{W}\cM_{3}(\bcS)\right\rangle }_{\eqqcolon\mfk{S}_{1,3}^{\main}}-\underbrace{\left\langle \cM_{3}(\bDelta_{\cS}),(\bW^{\top}\bW)^{-1}\bW^{\top}\bDelta_{W}\cM_{3}(\bDelta_{\cS})\right\rangle }_{ \eqqcolon\mfk{S}_{1,3}^{\ptb,1}}\\
 & \qquad+\underbrace{\left\langle \cM_{3}(\bDelta_{\cS}),(\bW^{\top}\bW)^{-1}\bW^{\top}\bDelta_{W}\cM_{3}(\bcS_{\star})\left(\bV_{\star}^{\top}\bV(\bV^{\top}\bV)^{-1}\otimes\bU_{\star}^{\top}\bU(\bU^{\top}\bU)^{-1}-\bI_{r_{2}}\otimes\bI_{r_{1}}\right)\right\rangle }_{\eqqcolon \mfk{S}_{1,3}^{\ptb,2}}.
\end{align*}
The main term obeys (thanks again to the identity~\eqref{eq:alignment})
\begin{align*}
\mfk{S}_{1,3}^{\main} & =\big\Vert (\bW^{\top}\bW)^{-1/2}\bW^{\top}\bDelta_{W}\bSigma_{\star,3} \big\Vert _{\fro}^{2}.
\end{align*}
As the same time, the perturbation term $\mfk{S}_{1,3}^{\ptb,1}$ can be
bounded by 
\begin{align*}
|\mfk{S}_{1,3}^{\ptb,1}|\le\|\cM_{3}(\bDelta_{\cS})\|_{\fro}\left\Vert \bW(\bW^{\top}\bW)^{-1}\right\Vert _{\op}\|\bDelta_{W}\|_{\op}\|\cM_{3}(\bDelta_{\cS})\|_{\fro}\le\epsilon(1-\epsilon)^{-1}\|\bDelta_{\cS}\|_{\fro}^{2}.
\end{align*}
Similarly, we have 
\begin{align*}
|\mfk{S}_{1,3}^{\ptb,2}| & \le\|\cM_{3}(\bDelta_{\cS})\|_{\fro}\|\bW(\bW^{\top}\bW)^{-1}\|_{\op}\|\bDelta_{W}\cM_{3}(\bcS_{\star})\|_{\fro}\left\Vert \bV_{\star}^{\top}\bV(\bV^{\top}\bV)^{-1}\otimes\bU_{\star}^{\top}\bU(\bU^{\top}\bU)^{-1}-\bI_{r_{2}}\otimes\bI_{r_{1}}\right\Vert _{\op}\\
 & \le\frac{2\epsilon+\epsilon^2}{(1-\epsilon)^{3}}\|\bDelta_{\cS}\|_{\fro}\|\bDelta_{W}\bSigma_{\star,3}\|_{\fro},
\end{align*}
where we use the decomposition
\begin{align*}
\bV_{\star}^{\top}\bV(\bV^{\top}\bV)^{-1}\otimes\bU_{\star}^{\top}\bU(\bU^{\top}\bU)^{-1}-\bI_{r_{2}}\otimes\bI_{r_{1}} = \left(\bV_{\star}\otimes\bU_{\star}-\bV\otimes\bU\right)^{\top}\left(\bV(\bV^{\top}\bV)^{-1}\otimes\bU(\bU^{\top}\bU)^{-1}\right)
\end{align*}
and its immediate consequence 
\begin{align*}
\left\Vert \bV_{\star}^{\top}\bV(\bV^{\top}\bV)^{-1}\otimes\bU_{\star}^{\top}\bU(\bU^{\top}\bU)^{-1}-\bI_{r_{2}}\otimes\bI_{r_{1}}\right\Vert _{\op} & \le\left\Vert \bV_{\star}\otimes\bU_{\star}-\bV\otimes\bU\right\Vert _{\op}\left\Vert \bV(\bV^{\top}\bV)^{-1}\right\Vert _{\op}\left\Vert \bU(\bU^{\top}\bU)^{-1}\right\Vert _{\op} \\
 & \le \frac{2\epsilon+\epsilon^2}{(1-\epsilon)^{2}}.
\end{align*}

\paragraph{Step 4: putting all pieces together.}

Combine results of $\mfk{S}_{1,1},\mfk{S}_{1,2},\mfk{S}_{1,3}$ to
see 
\begin{align*}
\mfk{S}_{1} & =\big\Vert (\bU^{\top}\bU)^{-1/2}\bU^{\top}\bDelta_{U}\bSigma_{\star,1}\big\Vert _{\fro}^{2}+\big\Vert (\bV^{\top}\bV)^{-1/2}\bV^{\top}\bDelta_{V}\bSigma_{\star,2}\big\Vert _{\fro}^{2}+\big\Vert (\bW^{\top}\bW)^{-1/2}\bW^{\top}\bDelta_{W}\bSigma_{\star,3}\big\Vert _{\fro}^{2}+\mfk{S}_{1,p},
\end{align*}
where the aggregated perturbation term $\mfk{S}_{1}^{\ptb}$ obeys 
\begin{align*}
|\mfk{S}_{1}^{\ptb} |\le\epsilon\|\bDelta_{\cS}\|_{\fro}\left((1-\epsilon)^{-2}\|\bDelta_{V}\bSigma_{\star,2}\|_{\fro}+(2+\epsilon)(1-\epsilon)^{-3}\|\bDelta_{W}\bSigma_{\star,3}\|_{\fro}+3(1-\epsilon)^{-1}\|\bDelta_{\cS}\|_{\fro}\right).
\end{align*}
It is straightforward to check that $|\mfk{S}_{1}^{\ptb}|\le C_{1}\epsilon\dist^{2}(\bF_{t},\bF_{\star})$
for some absolute constant $C_{1}>1$.

\subsection{Proof of Claim~\ref{claim:S2}}
\label{proof:claim_S2}

Reuse the decomposition~\eqref{eq:TF_decomp_S} and the elementary
inequality $(a+b+c)^{2}\le3(a^{2}+b^{2}+c^{2})$ to obtain 
\begin{align*}
\mfk{S}_{2} & \le3\underbrace{\left\Vert ((\bU^{\top}\bU)^{-1}\bU^{\top}\bDelta_{U},\bI_{r_{2}},\bI_{r_{3}})\bcdot\bcS_{\star}\right\Vert _{\fro}^{2}}_{\eqqcolon \mfk{S}_{2,1}}+3\underbrace{\left\Vert ((\bU^{\top}\bU)^{-1}\bU^{\top}\bU_{\star},(\bV^{\top}\bV)^{-1}\bV^{\top}\bDelta_{V},\bI_{r_{3}})\bcdot\bcS_{\star}\right\Vert _{\fro}^{2}}_{\eqqcolon \mfk{S}_{2,2}}\\
 & \quad+3\underbrace{\left\Vert ((\bU^{\top}\bU)^{-1}\bU^{\top}\bU_{\star},(\bV^{\top}\bV)^{-1}\bV^{\top}\bV_{\star},(\bW^{\top}\bW)^{-1}\bW^{\top}\bDelta_{W})\bcdot\bcS_{\star}\right\Vert _{\fro}^{2}}_{ \eqqcolon \mfk{S}_{2,3}}.
\end{align*}
Apply the mode-$1$ matricization and Lemma~\ref{lemma:perturb_bounds}
to $\mfk{S}_{2,1}$ to see 
\begin{align*}
\mfk{S}_{2,1} & =\left\Vert (\bU^{\top}\bU)^{-1}\bU^{\top}\bDelta_{U}\cM_{1}(\bcS_{\star})\right\Vert _{\fro}^{2}\\
 & \le\|(\bU^{\top}\bU)^{-1}\|_{\op} \big\Vert (\bU^{\top}\bU)^{-1/2}\bU^{\top}\bDelta_{U}\cM_{1}(\bcS_{\star})\big\Vert _{\fro}^{2}\\
 & \le(1-\epsilon)^{-2}\big\Vert (\bU^{\top}\bU)^{-1/2}\bU^{\top}\bDelta_{U}\bSigma_{\star,1}\big\Vert _{\fro}^{2}.
\end{align*}
Similarly, apply the mode-$2$ (resp.~mode-3) matricization to $\mfk{S}_{2,2}$ (resp.~$\mfk{S}_{2,3}$) to see 
\begin{align*}
\mfk{S}_{2,2} & =\left\Vert (\bV^{\top}\bV)^{-1}\bV^{\top}\bDelta_{V}\cM_{2}(\bcS_{\star})\left(\bI_{r_{3}}\otimes\bU_{\star}^{\top}\bU(\bU^{\top}\bU)^{-1}\right)\right\Vert _{\fro}^{2}\\
 & \le\|(\bV^{\top}\bV)^{-1}\|_{\op} \big\Vert (\bV^{\top}\bV)^{-1/2}\bV^{\top}\bDelta_{V}\cM_{2}(\bcS_{\star})\big\Vert _{\fro}^{2}\|\bU(\bU^{\top}\bU)^{-1}\|_{\op}^{2}\\
 & \le(1-\epsilon)^{-4}\big\Vert (\bV^{\top}\bV)^{-1/2}\bV^{\top}\bDelta_{V}\bSigma_{\star,2}\big\Vert _{\fro}^{2},
\end{align*}
and
\begin{align*}
\mfk{S}_{2,3} & =\left\Vert (\bW^{\top}\bW)^{-1}\bW^{\top}\bDelta_{W}\cM_{3}(\bcS_{\star})\left(\bV_{\star}^{\top}\bV(\bV^{\top}\bV)^{-1}\otimes\bU_{\star}^{\top}\bU(\bU^{\top}\bU)^{-1}\right)\right\Vert _{\fro}^{2}\\
 & \le\|(\bW^{\top}\bW)^{-1}\|_{\op} \big\Vert (\bW^{\top}\bW)^{-1/2}\bW^{\top}\bDelta_{W}\cM_{3}(\bcS_{\star}) \big\Vert _{\fro}^{2}\|\bU(\bU^{\top}\bU)^{-1}\|_{\op}^{2}\|\bV(\bV^{\top}\bV)^{-1}\|_{\op}^{2}\\
 & \le(1-\epsilon)^{-6} \big\Vert (\bW^{\top}\bW)^{-1/2}\bW^{\top}\bDelta_{W}\bSigma_{\star,3}\big\Vert _{\fro}^{2}.
\end{align*}
Combine the bounds on $\mfk{S}_{2,1},\mfk{S}_{2,2},\mfk{S}_{2,3}$
to write $\mfk{S}_{2}$ as 
\begin{align*}
\mfk{S}_{2} & \le3(1-\epsilon)^{-2} \big\Vert (\bU^{\top}\bU)^{-1/2}\bU^{\top}\bDelta_{U}\bSigma_{\star,1}\big\Vert _{\fro}^{2}+3(1-\epsilon)^{-4}\big\Vert (\bV^{\top}\bV)^{-1/2}\bV^{\top}\bDelta_{V}\bSigma_{\star,2}\big\Vert _{\fro}^{2}\\
 & \qquad+3(1-\epsilon)^{-6}\big\Vert (\bW^{\top}\bW)^{-1/2}\bW^{\top}\bDelta_{W}\bSigma_{\star,3}\big\Vert _{\fro}^{2}.
\end{align*}
By symmetry, one can permute $\bDelta_{U},\bDelta_{V},\bDelta_{W}$,
and take the average to balance their coefficients and reach the conclusion
that  
\begin{align*}
\mfk{S}_{2} & \le3\left(\big\Vert (\bU^{\top}\bU)^{-1/2}\bU^{\top}\bDelta_{U}\bSigma_{\star,1}\big\Vert _{\fro}^{2}+\big\Vert (\bV^{\top}\bV)^{-1/2}\bV^{\top}\bDelta_{V}\bSigma_{\star,2}\big\Vert _{\fro}^{2}+\big\Vert (\bW^{\top}\bW)^{-1/2}\bW^{\top}\bDelta_{W}\bSigma_{\star,3}\big\Vert _{\fro}^{2}\right)+\mfk{S}_{2}^{\ptb},
\end{align*}
where the perturbation term $\mfk{S}_{2}^{\ptb}$ obeys 
\begin{align*}
\mfk{S}_{2}^{\ptb} & \le\left((1-\epsilon)^{-2}+(1-\epsilon)^{-4}+(1-\epsilon)^{-6}-3\right)\left(\|\bDelta_{U}\bSigma_{\star,1}\|_{\fro}^{2}+\|\bDelta_{V}\bSigma_{\star,2}\|_{\fro}^{2}+\|\bDelta_{W}\bSigma_{\star,3}\|_{\fro}^{2}\right).
\end{align*}
A bit simplification yields $\mfk{S}_{2}^{\ptb}\le C_{2}\epsilon\dist^{2}(\bF_{t},\bF_{\star})$.


\section{Proof for Tensor Completion}\label{sec:proof_TC}
This section is devoted to the proofs of claims related to tensor completion. 
To begin with, we state several bounds regarding the $\ell_{2,\infty}$ norm that will be repeatedly used throughout this section. 

\begin{lemma}\label{lemma:TC_perturb_bounds} Suppose that $\bcX_{\star}$ is $\mu$-incoherent, and that $\bF=(\bU,\bV,\bW,\bcS)$ satisfies $\dist(\bF,\bF_{\star})\le \epsilon\sigma_{\min}(\bcX_{\star})$ for $\epsilon <1$ and the incoherence condition~\eqref{eq:TC_cond_2inf}. Then one has the following bounds regarding the $\ell_{2,\infty}$ norm:
\begin{subequations}
\begin{align}
\sqrt{n_1}\|\bU\cM_{1}(\bcS)\|_{2,\infty} &\le (1-\epsilon)^{-2} C_B\sqrt{\mu r}\sigma_{\max}(\bcX_{\star}); \label{eq:TC_US_2inf} \\
\sqrt{n_1}\|\bU\cM_{1}(\bcS_{\star})\|_{2,\infty} = \sqrt{n_1}\|\bU\bSigma_{\star,1}\|_{2,\infty} &\le (1-\epsilon)^{-3}C_B\sqrt{\mu r}\sigma_{\max}(\bcX_{\star}); \label{eq:TC_USstar_2inf} \\
\sqrt{n_1}\|\bU\|_{2,\infty} &\le (1-\epsilon)^{-3}C_B\kappa\sqrt{\mu r}. \label{eq:TC_U_2inf}
\end{align}
\end{subequations}
By symmetry, a corresponding set of bounds hold for $\bV,\breve{\bV}$ and $\bW,\breve{\bW}$. 
\end{lemma}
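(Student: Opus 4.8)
The plan is to establish the three $\ell_{2,\infty}$ bounds in order, since each feeds into the next, using the incoherence hypothesis \eqref{eq:TC_cond_2inf} together with the spectral-norm perturbation bounds from Lemma~\ref{lemma:perturb_bounds}. For \eqref{eq:TC_US_2inf}, the starting observation is the identity in \eqref{eq:breve_U_top}, namely $\cM_1(\bcS) = \breve{\bU}^{\top}\big(\bW(\bW^{\top}\bW)^{-1}\otimes\bV(\bV^{\top}\bV)^{-1}\big)$. This lets me write $\bU\cM_1(\bcS) = \bU\breve{\bU}^{\top}\big(\bW(\bW^{\top}\bW)^{-1}\otimes\bV(\bV^{\top}\bV)^{-1}\big)$, so that
\begin{align*}
\sqrt{n_1}\|\bU\cM_1(\bcS)\|_{2,\infty} \le \sqrt{n_1}\|\bU\breve{\bU}^{\top}\|_{2,\infty}\,\big\|\bW(\bW^{\top}\bW)^{-1}\big\|_{\op}\big\|\bV(\bV^{\top}\bV)^{-1}\big\|_{\op},
\end{align*}
using the fact that $\|\bM\bN\|_{2,\infty}\le\|\bM\|_{2,\infty}\|\bN\|_{\op}$ and that the Kronecker product's operator norm factorizes. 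The first factor is at most $B = C_B\sqrt{\mu r}\sigma_{\max}(\bcX_{\star})$ by the incoherence condition \eqref{eq:TC_cond_2inf}, and each of the two operator norms is at most $(1-\epsilon)^{-1}$ by \eqref{eq:perturb_Uinv}; however, that gives $(1-\epsilon)^{-2}$, matching the claimed bound. (Alternatively one can absorb a tighter constant; I will just take the stated form.)

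For \eqref{eq:TC_USstar_2inf}, first note $\cM_1(\bcS_{\star})\cM_1(\bcS_{\star})^{\top} = \bSigma_{\star,1}^2$ from \eqref{eq:ground_truth_condition}, so $\|\bU\bSigma_{\star,1}\|_{2,\infty}$ and $\|\bU\cM_1(\bcS_{\star})\|_{2,\infty}$ agree up to a right orthogonal factor and hence are equal. To bound $\sqrt{n_1}\|\bU\cM_1(\bcS_{\star})\|_{2,\infty}$ I would write $\bU\cM_1(\bcS_{\star}) = \bU\cM_1(\bcS) - \bU\cM_1(\bDelta_{\cS})$ (here $\bF$ is taken aligned with $\bF_{\star}$, as throughout Section~\ref{subsec:perturbation_bounds}), bound the first term by \eqref{eq:TC_US_2inf}, and bound the second by $\sqrt{n_1}\|\bU\|_{2,\infty}\|\cM_1(\bDelta_{\cS})\|_{\op}$ — but this is circular since it involves $\|\bU\|_{2,\infty}$. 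The cleaner route is: write $\cM_1(\bcS_{\star})^{\top}\bSigma_{\star,1}^{-1}$ has orthonormal columns, so $\|\bU\cM_1(\bcS_{\star})\|_{2,\infty} = \|\bU\bSigma_{\star,1}\|_{2,\infty}$; then expand $\bU\bSigma_{\star,1} = \bU\cM_1(\bcS)\cdot\big(\cM_1(\bcS)^{\top}\bSigma_{\star,1}(\breve{\bU}^{\top}\breve{\bU})^{-1}\big)$? — cleaner still: use $\bSigma_{\star,1} = \breve{\bU}_{\star}^{\top}\breve{\bU}_{\star}\bSigma_{\star,1}^{-1}$ and relate $\breve{\bU}$ to $\breve{\bU}_{\star}$. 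The simplest correct argument uses \eqref{eq:TC_SRinv}: since $\bSigma_{\star,1}(\breve{\bU}^{\top}\breve{\bU})^{-1}\cM_1(\bcS)$ has operator norm $\le(1-\epsilon)^{-5}$, and $\breve{\bU}^{\top}\breve{\bU}\,\bSigma_{\star,1}^{-2}\cdot\bSigma_{\star,1} \cdot(\ldots)$... I would settle on writing $\bU\bSigma_{\star,1} = \bU\cM_1(\bcS)\,\cM_1(\bcS)^{\top}(\breve{\bU}^{\top}\breve{\bU})^{-1}\bSigma_{\star,1}$, valid because $\cM_1(\bcS)\cM_1(\bcS)^{\top} = \breve{\bU}^{\top}\breve{\bU}$ (from the definition of $\breve{\bU}$ and orthonormality structure — this needs a short check, or rather $\breve{\bU}^{\top}\breve{\bU} = \cM_1(\bcS)(\bW^{\top}\bW\otimes\bV^{\top}\bV)\cM_1(\bcS)^{\top}$, not quite identity, so I'd instead route through $\breve{\bU}(\breve{\bU}^{\top}\breve{\bU})^{-1}\bSigma_{\star,1}$). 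Concretely: $\bU\bSigma_{\star,1} = \bU\breve{\bU}^{\top}\cdot\breve{\bU}(\breve{\bU}^{\top}\breve{\bU})^{-1}\bSigma_{\star,1}$, so $\sqrt{n_1}\|\bU\bSigma_{\star,1}\|_{2,\infty}\le\sqrt{n_1}\|\bU\breve{\bU}^{\top}\|_{2,\infty}\|\breve{\bU}(\breve{\bU}^{\top}\breve{\bU})^{-1}\bSigma_{\star,1}\|_{\op}\le B\cdot(1-\epsilon)^{-3}$ by \eqref{eq:TC_cond_2inf} and \eqref{eq:perturb_Rinv}. This is the claimed bound.

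Finally, for \eqref{eq:TC_U_2inf}, write $\bU = \bU\bSigma_{\star,1}\bSigma_{\star,1}^{-1}$, so $\sqrt{n_1}\|\bU\|_{2,\infty}\le\sqrt{n_1}\|\bU\bSigma_{\star,1}\|_{2,\infty}\|\bSigma_{\star,1}^{-1}\|_{\op} \le (1-\epsilon)^{-3}C_B\sqrt{\mu r}\sigma_{\max}(\bcX_{\star})\cdot\sigma_{\min}(\bcX_{\star})^{-1} = (1-\epsilon)^{-3}C_B\kappa\sqrt{\mu r}$, invoking \eqref{eq:TC_USstar_2inf} and the definition of $\kappa$ in \eqref{eq:kappa} together with $\sigma_{\min}(\bSigma_{\star,1})\ge\sigma_{\min}(\bcX_{\star})$. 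The $\bV,\bW$ analogues follow by the same argument with modes $2,3$. The main obstacle, minor but worth care, is getting the chain of identities relating $\bU\bSigma_{\star,1}$ to $\bU\breve{\bU}^{\top}$ exactly right (the Kronecker-product bookkeeping in \eqref{eq:breve_uvw}–\eqref{eq:breve_U_top}) and ensuring that all invoked perturbation bounds \eqref{eq:perturb_Uinv}, \eqref{eq:perturb_Rinv}, \eqref{eq:TC_SRinv} apply under the aligned-factors convention; everything else is a routine application of $\|\cdot\|_{2,\infty}$ submultiplicativity.
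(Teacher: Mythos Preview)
Your proposal is correct and matches the paper's proof essentially line by line: for \eqref{eq:TC_US_2inf} both use the identity \eqref{eq:breve_U_top} and \eqref{eq:perturb_Uinv}; for \eqref{eq:TC_USstar_2inf} both arrive at $\bU\bSigma_{\star,1}=\bU\breve{\bU}^{\top}\,\breve{\bU}(\breve{\bU}^{\top}\breve{\bU})^{-1}\bSigma_{\star,1}$ and invoke \eqref{eq:perturb_Rinv}; and for \eqref{eq:TC_U_2inf} both write $\bU=\bU\bSigma_{\star,1}\,\bSigma_{\star,1}^{-1}$ and apply the previous bound. Your exploratory detours for the second bound (the $\bDelta_{\cS}$ splitting and the $\cM_1(\bcS)\cM_1(\bcS)^{\top}$ route) are indeed unnecessary, and the clean $\bU\breve{\bU}^{\top}$ factorization you settle on is exactly what the paper does.
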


\begin{proof} 
For \eqref{eq:TC_US_2inf}, we have  
\begin{align*}
\|\bU\cM_{1}(\bcS)\|_{2,\infty} &= \big\|\bU\breve{\bU}^{\top} \big(\bW(\bW^{\top}\bW)^{-1}\otimes\bV(\bV^{\top}\bV)^{-1}\big)\big\|_{2,\infty} \\ 
&\le \|\bU\breve{\bU}^{\top}\|_{2,\infty} \left\|\bW(\bW^{\top}\bW)^{-1}\right\|_{\op} \left\|\bV(\bV^{\top}\bV)^{-1}\right\|_{\op} \\
&\le \|\bU\breve{\bU}^{\top}\|_{2,\infty}(1-\epsilon)^{-2},
\end{align*}
where the first line uses \eqref{eq:breve_U_top}, the second line follows from $\|\bA\bB\|_{2,\infty} \le \|\bA\|_{2,\infty}\|\bB\|_{\op}$, and the last inequality uses \eqref{eq:perturb_Uinv_d}. This combined with condition~\eqref{eq:TC_cond_2inf} leads to the declared bound. 

Similarly for \eqref{eq:TC_USstar_2inf}, we have
\begin{align*}
\|\bU\bSigma_{\star,1}\|_{2,\infty} &= \big\|\bU\breve{\bU}^{\top} \breve{\bU}(\breve{\bU}^{\top}\breve{\bU})^{-1}\bSigma_{\star,1}\big\|_{2,\infty} \\ 
&\le \|\bU\breve{\bU}^{\top}\|_{2,\infty} \left\|\breve{\bU}(\breve{\bU}^{\top}\breve{\bU})^{-1}\bSigma_{\star,1}\right\|_{\op} \\
&\le \|\bU\breve{\bU}^{\top}\|_{2,\infty}(1-\epsilon)^{-3},
\end{align*}
where the last line follows from \eqref{eq:perturb_Rinv}.

Finally, observe that
\begin{align*}
\|\bU\bSigma_{\star,1}\|_{2,\infty} \ge \|\bU\|_{2,\infty}\sigma_{\min}(\bSigma_{\star,1}) \ge \|\bU\|_{2,\infty}\sigma_{\min}(\bcX_{\star}).
\end{align*}
Combining the above inequality with \eqref{eq:TC_USstar_2inf}, we reach the bound \eqref{eq:TC_U_2inf}. 
\end{proof}

\subsection{Proof of Lemma~\ref{lemma:scaled_proj}}

A crucial operation, which aims to preserve the desirable incoherence property with respect to the scaled distance, is the scaled projection $\bF=\cP_{B}(\bF_{+})$ defined in \eqref{eq:scaled_proj}. For the purpose of understanding, it is instructive to view $\bF$ as the solution to the following optimization problems:
\begin{align}
\begin{split}
\bU &= \argmin_{\bU}\; \big\|(\bU-{\bU}_{+})\breve{{\bU}}_{+}^{\top}\big\|_{\fro}^{2} \qquad\mbox{s.t.}\quad \sqrt{n_1}\|\bU\breve{{\bU}}_{+}^{\top}\|_{2,\infty} \le B, \\ 
\bV &=  \argmin_{\bV}\; \big\|(\bV-{\bV}_{+})\breve{{\bV}}_{+}^{\top}\big\|_{\fro}^{2} \qquad\mbox{s.t.}\quad \sqrt{n_2}\|\bV\breve{{\bV}}_{+}^{\top}\|_{2,\infty} \le B, \\ 
\bW &= \argmin_{\bW}\; \big\|(\bW-{\bW}_{+})\breve{{\bW}}_{+}^{\top}\big\|_{\fro}^{2} \qquad\mbox{s.t.}\quad \sqrt{n_3}\|\bW\breve{{\bW}}_{+}^{\top}\|_{2,\infty} \le B.
\end{split}\label{eq:scaled_proj_opt}
\end{align} 

The remaining proof follows similar arguments as \cite{tong2021accelerating}. To begin, we collect a useful claim as follows.
\begin{claim}[{\cite[Claim~5]{tong2021accelerating}}]\label{claim:nonexpansive} For vectors $\bu,\bu_{\star}\in\RR^{n}$ and $\lambda\ge\|\bu_{\star}\|_{2}/\|\bu\|_{2}$, it holds that 
\begin{align*}
\|(1\wedge\lambda)\bu-\bu_{\star}\|_{2}\le\|\bu-\bu_{\star}\|_{2}.
\end{align*}
\end{claim}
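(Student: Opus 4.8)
The statement to be proved (Claim~\ref{claim:nonexpansive}, quoted from \cite[Claim~5]{tong2021accelerating}) asserts that for $\bu,\bu_{\star}\in\RR^{n}$ and any scalar $\lambda\ge\|\bu_{\star}\|_{2}/\|\bu\|_{2}$, one has $\|(1\wedge\lambda)\bu-\bu_{\star}\|_{2}\le\|\bu-\bu_{\star}\|_{2}$. The plan is to treat the two cases $\lambda\ge 1$ and $\lambda<1$ separately, since the scaling $1\wedge\lambda$ is trivial in the first case.

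\textbf{Case $\lambda\ge 1$.} Here $1\wedge\lambda=1$, so the left-hand side equals $\|\bu-\bu_{\star}\|_{2}$ and the inequality holds with equality. Nothing to do.

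\textbf{Case $\lambda<1$.} Write $t\coloneqq 1\wedge\lambda=\lambda\in[0,1)$, and note the hypothesis gives $\lambda\|\bu\|_{2}\ge\|\bu_{\star}\|_{2}$. I would expand the squared norm as a quadratic in $t$:
\begin{align*}
\|t\bu-\bu_{\star}\|_{2}^{2}-\|\bu-\bu_{\star}\|_{2}^{2} = (t^{2}-1)\|\bu\|_{2}^{2} - 2(t-1)\langle\bu,\bu_{\star}\rangle = (1-t)\Big( -(1+t)\|\bu\|_{2}^{2} + 2\langle\bu,\bu_{\star}\rangle\Big).
\end{align*}
Since $1-t>0$, it suffices to show the bracketed factor is $\le 0$, i.e.\ $2\langle\bu,\bu_{\star}\rangle\le(1+t)\|\bu\|_{2}^{2}$. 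By Cauchy--Schwarz, $\langle\bu,\bu_{\star}\rangle\le\|\bu\|_{2}\|\bu_{\star}\|_{2}$, and by the hypothesis $\|\bu_{\star}\|_{2}\le\lambda\|\bu\|_{2}=t\|\bu\|_{2}$, hence $2\langle\bu,\bu_{\star}\rangle\le 2t\|\bu\|_{2}^{2}\le(1+t)\|\bu\|_{2}^{2}$ because $t\le 1$. This establishes the claim. (If $\bu=\zero$ the hypothesis forces $\bu_{\star}=\zero$ and the statement is trivial, so we may assume $\|\bu\|_{2}>0$ in the division above.)

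\textbf{Main obstacle.} There is essentially no obstacle here: the result is elementary and the only subtlety is bookkeeping the degenerate case $\bu=\zero$ and correctly handling the sign of $1-t$ so that the Cauchy--Schwarz bound is applied in the right direction. The one conceptual point worth stating clearly is \emph{why} the hypothesis $\lambda\ge\|\bu_{\star}\|_{2}/\|\bu\|_{2}$ is exactly what is needed: it guarantees that shrinking $\bu$ by the factor $t=\lambda$ does not shrink it past $\bu_{\star}$ in the relevant radial sense, which is what makes the projection non-expansive. In the intended application each of $\bu$, $\bu_{\star}$ is a row of $\bU_{+}\breve{\bU}_{+}^{\top}$ (resp.\ of the corresponding ground-truth quantity), and $\lambda$ is the thresholding ratio $B/(\sqrt{n_1}\|\bU_{+}(i_1,:)\breve{\bU}_{+}^{\top}\|_{2})$, so this row-wise inequality summed over $i_1$ (together with the analogous bounds for $\bV,\bW$ and the fact that the scaled projection leaves $\bcS$ untouched) yields the non-expansiveness of $\cP_{B}$ in Lemma~\ref{lemma:scaled_proj}; the incoherence bound~\eqref{eq:TC_cond_2inf} is immediate from the definition of $\cP_{B}$.
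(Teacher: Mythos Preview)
Your proof of the claim itself is correct and complete; the case split and the quadratic expansion followed by Cauchy--Schwarz are exactly the right moves, and you handled the degenerate case $\bu=\zero$ properly. Note that the paper does not actually supply its own proof of this claim---it simply quotes it from \cite[Claim~5]{tong2021accelerating}---so there is nothing to compare against, and your argument stands on its own.

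One minor remark on your closing paragraph about the application: in Lemma~\ref{lemma:scaled_proj} the claim is invoked with $\bu=\bU_{+}(i_1,:)\bQ_{+,1}\bSigma_{\star,1}$ and $\bu_{\star}=\bU_{\star}(i_1,:)\bSigma_{\star,1}$ (after verifying the prerequisite \eqref{eq:claim5_prereq}), not with rows of $\bU_{+}\breve{\bU}_{+}^{\top}$ directly; the scaling factor $\lambda$ is indeed the thresholding ratio you wrote, but the vectors being compared live in $\RR^{r_1}$ after right-multiplication by $\bQ_{+,1}\bSigma_{\star,1}$. This does not affect the correctness of your proof of the claim, only the accuracy of the informal commentary.
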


\paragraph{Proof of the non-expansive property.}
We begin with proving the non-expansive property. Denote the optimal alignment matrices between $\bF_{+}$ and $\bF_{\star}$ as $\{\bQ_{+,k}\}_{k=1,2,3}$, whose existence is guaranteed by Lemma~\ref{lemma:Q_existence}. 
Assume for now (which shall be established at the end of the proof) that for any $1 \le i_1 \le n_{1}$, we have
\begin{align} \label{eq:claim5_prereq}
\frac{B}{\sqrt{n_1}\big\|\bU_{+}(i_1,:)\breve{\bU}_{+}^{\top}\big\|_{2}} \ge \frac{ \big\|\bU_{\star}(i_1,:)\bSigma_{\star,1}\big\|_{2}}{\big\|\bU_{+}(i_1,:)\bQ_{+,1}\bSigma_{\star,1}\big\|_{2}}.
\end{align}
This taken together with Claim~\ref{claim:nonexpansive} immediately implies 
\begin{align*}
\big\|\bU(i_1,:)\bQ_{+,1}\bSigma_{\star,1} -  \bU_{\star}(i_1,:)\bSigma_{\star,1} \big\|_{2}  & \le \big\|\bU_{+}(i_1,:)\bQ_{+,1}\bSigma_{\star,1}-\bU_{\star}(i_1,:)\bSigma_{\star,1}\big\|_{2}, \qquad 1\le i_1\le n_1, \\
 \Longrightarrow \qquad \big\|(\bU\bQ_{+,1}-\bU_{\star})\bSigma_{\star,1} \big\|_{\fro} & \le \big\|(\bU_{+}\bQ_{+,1}-\bU_{\star})\bSigma_{\star,1} \big\|_{\fro}.
\end{align*}
Repeating similar arguments for the other two factors, we obtain
\begin{align*}
\big\|(\bV\bQ_{+,2}-\bV_{\star})\bSigma_{\star,2} \big\|_{\fro} &\le \big\|(\bV_{+}\bQ_{+,2}-\bV_{\star})\bSigma_{\star,2} \big\|_{\fro}, \quad
\big\|(\bW\bQ_{+,3}-\bW_{\star})\bSigma_{\star,3} \big\|_{\fro} \le \big\|(\bW_{+}\bQ_{+,3}-\bW_{\star})\bSigma_{\star,3} \big\|_{\fro}.
\end{align*}
Combining the above bounds, we have
\begin{align*}
\dist^{2}(\bF,\bF_{\star}) &\le  \left\|(\bU\bQ_{+,1}-\bU_{\star})\bSigma_{\star,1}\right\|_{\fro}^{2} + \left\|(\bV\bQ_{+,2}-\bV_{\star})\bSigma_{\star,2}\right\|_{\fro}^{2} \\
&\quad + \left\|(\bW\bQ_{+,3}-\bW_{\star})\bSigma_{\star,3}\right\|_{\fro}^{2} + \left\|(\bQ_{+,1}^{-1},\bQ_{+,2}^{-1},\bQ_{+,3}^{-1})\bcdot\bcS-\bcS_{\star}\right\|_{\fro}^2 =  \dist^{2}(\bF_{+},\bF_{\star}).
\end{align*}

\paragraph{Proof of the incoherence condition.}
Turning to the incoherence condition, it follows that for any $1 \le i_1 \le n_{1}$,  
\begin{align*}
& \big\|\bU(i_1,:)\breve{\bU}^{\top} \big\|_{2}^2 = \sum_{i_2=1}^{n_2}\sum_{i_3=1}^{n_3} \big\langle\bU(i_1,:)\cM_{1}(\bcS), \bW(i_3,:)\otimes\bV(i_2,:) \big\rangle^{2} \\
&\overset{\text{(i)}}{=} \sum_{i_2=1}^{n_2}\sum_{i_3=1}^{n_3} \big\langle\bU(i_1,:)\cM_{1}(\bcS), \bW_{+}(i_3,:)\otimes\bV_{+}(i_2,:)\big\rangle^{2} \left(1 \wedge \frac{B}{\sqrt{n_3}\|\bW_{+}(i_3,:)\breve{\bW}_{+}^{\top}\|_{2}}\right)^{2}\left(1 \wedge \frac{B}{\sqrt{n_2}\|\bV_{+}(i_2,:)\breve{\bV}_{+}^{\top}\|_{2}}\right)^{2} \\
&\overset{\text{(ii)}}{\le} \sum_{i_2=1}^{n_2}\sum_{i_3=1}^{n_3} \big\langle\bU(i_1,:)\cM_{1}(\bcS), \bW_{+}(i_3,:)\otimes\bV_{+}(i_2,:)\big\rangle^{2} \\
&\overset{\text{(iii)}}{=} \sum_{i_2=1}^{n_2}\sum_{i_3=1}^{n_3} \left(1 \wedge \frac{B}{\sqrt{n_1}\|\bU_{+}(i_1,:)\breve{\bU}_{+}^{\top}\|_{2}}\right)^{2} \big\langle\bU_{+}(i_1,:)\cM_{1}(\bcS_{+}), \bW_{+}(i_3,:)\otimes\bV_{+}(i_2,:) \big\rangle^{2} \\
&= \left(1 \wedge \frac{B}{\sqrt{n_1}\|\bU_{+}(i_1,:)\breve{\bU}_{+}^{\top}\|_{2}}\right)^{2} \big\|\bU_{+}(i_1,:)\breve{\bU}_{+}^{\top} \big\|_{2}^2 \overset{\text{(iv)}}{\le} \frac{B^2}{n_1}.
\end{align*}
Here, (i) and (iii) follow from the definition of the scaled projection \eqref{eq:scaled_proj}, (ii) and (iv) follow from the basic relations $a\wedge b \le a$ and $a\wedge b \le b$. By symmetry, one has
\begin{align*}
\sqrt{n_1}\|\bU\breve{\bU}^{\top}\|_{2,\infty} \vee \sqrt{n_2}\|\bV\breve{\bV}^{\top}\|_{2,\infty} \vee \sqrt{n_3}\|\bW\breve{\bW}^{\top}\|_{2,\infty} \le B.
\end{align*}
The proof is then finished once we prove inequality~\eqref{eq:claim5_prereq}.
 
\paragraph{Proof of \eqref{eq:claim5_prereq}.}
Under the condition $\dist(\bF_{+},\bF_{\star})\le \epsilon\sigma_{\min}(\bcX_{\star})$, invoke \eqref{eq:perturb} in Lemma~\ref{lemma:perturb_bounds} on the factor quadruple $\left(\bU_{+}\bQ_{+,1},\bV_{+}\bQ_{+,2},\bW_{+}\bQ_{+,3}, (\bQ_{+,1}^{-1},\bQ_{+,2}^{-1},\bQ_{+,3}^{-1})\bcdot \bcS_{+}\right)$ to see
\begin{align*}
\|\bV_{+}\bQ_{+,2}\|_{\op} \vee \|\bW_{+}\bQ_{+,3}\|_{\op} \vee \left\Vert \cM_{1}\left((\bQ_{+,1}^{-1},\bQ_{+,2}^{-1},\bQ_{+,3}^{-1})\bcdot \bcS_{+}\right)^{\top}\bSigma_{\star,1}^{-1}\right\Vert _{\op} \le 1+\epsilon,
\end{align*}
which further implies that
\begin{align} \label{eq:Uplus_spectral}
\big\|\breve{\bU}_{+}\bQ_{+,1}^{-\top}\bSigma_{\star,1}^{-1}\big\|_{\op} \le \|\bV_{+}\bQ_{+,2}\|_{\op}\|\bW_{+}\bQ_{+,3}\|_{\op}\left\Vert \cM_{1}\left((\bQ_{+,1}^{-1},\bQ_{+,2}^{-1},\bQ_{+,3}^{-1})\bcdot \bcS_{+}\right)^{\top}\bSigma_{\star,1}^{-1}\right\Vert _{\op} \le (1+\epsilon)^{3}.
\end{align}
For any $1 \le i_1 \le n_{1}$, one has
\begin{align*}
\big\|\bU_{+}(i_1,:)\breve{\bU}_{+}^{\top}\big\|_{2} &\le \left\|\bU_{+}(i_1,:)\bQ_{+,1}\bSigma_{\star,1}\right\|_{2} \big\|\breve{\bU}_{+}\bQ_{+,1}^{-\top}\bSigma_{\star,1}^{-1}\big\|_{\op} \\
&\le \left\|\bU_{+}(i_1,:)\bQ_{+,1}\bSigma_{\star,1}\right\|_{2} (1+\epsilon)^{3},
\end{align*}
where the second line follows from the bound~\eqref{eq:Uplus_spectral}.
In addition, the incoherence assumption of $\bcX_{\star}$ \eqref{eq:mu} implies that
\begin{align*}
\sqrt{n_1} \big\|\bU_{\star}(i_1,:)\bSigma_{\star,1} \big\|_{2} \le \sqrt{n_1} \big\|\bU_{\star}(i_1,:) \big\|_{2} \big\|\bSigma_{\star,1}\big\|_{\op} \le \sqrt{\mu r}\sigma_{\max}(\bcX_{\star}) \le B(1+\epsilon)^{-3},
\end{align*}
where the last inequality follows from the choice of $B$. Take the above two relations collectively to reach the advertised bound \eqref{eq:claim5_prereq}.

\subsection{Concentration inequalities}

We gather several useful concentration inequalities regarding the partial observation operator $\cP_{\Omega}(\cdot)$ for the Bernoulli observation model \eqref{eq:bernoulli_model}.


\begin{lemma}\label{lemma:P_Omega_tangent} Suppose that $\bcX_{\star}$ is $\mu$-incoherent, and that $pn_1n_2n_3 \gtrsim n\mu^2 r^2 \log n$. With overwhelming probability, one has
\begin{align*}
\left| \left\langle (p^{-1}\cP_{\Omega}-\cI)(\bcX_{A}),\bcX_{B} \right\rangle \right| \le C_T \sqrt{\frac{n\mu^2 r^2 \log n}{p n_1n_2n_3}}\|\bcX_{A}\|_{\fro}\|\bcX_{B}\|_{\fro}
\end{align*}
simultaneously for all tensors $\bcX_{A},\bcX_{B}\in \RR^{n_1\times n_2\times n_3}$ in the form of
\begin{align*}
\bcX_{A} &= (\bU_{A},\bV_{\star},\bW_{\star})\bcdot\bcS_{A,1} + (\bU_{\star},\bV_{A},\bW_{\star})\bcdot\bcS_{A,2} + (\bU_{\star},\bV_{\star},\bW_{A})\bcdot\bcS_{A,3}, \\
\bcX_{B} &= (\bU_{B},\bV_{\star},\bW_{\star})\bcdot\bcS_{B,1} + (\bU_{\star},\bV_{B},\bW_{\star})\bcdot\bcS_{B,2} + (\bU_{\star},\bV_{\star},\bW_{B})\bcdot\bcS_{B,3},
\end{align*}
where $\bU_{A},\bU_{B}\in\RR^{n_1\times r_1}$, $\bV_{A},\bV_{B}\in\RR^{n_2\times r_2}$, $\bW_{A},\bW_{B}\in\RR^{n_3\times r_3}$, and $\bcS_{A,k},\bcS_{B,k}\in\RR^{r_1\times r_2\times r_3}$ are arbitrary factors, and $C_T > 0$ is some universal constant.
\end{lemma}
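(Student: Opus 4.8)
The plan is to reduce this concentration inequality to a standard covering-net plus matrix Bernstein argument. First I would observe that the tensors $\bcX_A$ and $\bcX_B$ live in a low-dimensional model set, namely the set $\cT$ of tensors of the form $(\bM_1,\bV_\star,\bW_\star)\bcdot\bcS_1 + (\bU_\star,\bM_2,\bW_\star)\bcdot\bcS_2 + (\bU_\star,\bV_\star,\bM_3)\bcdot\bcS_3$ with arbitrary $\bM_k$ and $\bcS_k$; this is exactly the tangent space at $\bcX_\star$ for the fixed-rank Tucker manifold, of dimension at most $n_1 r_1 + n_2 r_2 + n_3 r_3 + 3 r_1 r_2 r_3 \lesssim nr$. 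Since $\langle(p^{-1}\cP_\Omega - \cI)(\cdot),\cdot\rangle$ is bilinear, by a standard polarization/symmetry argument it suffices to bound $\sup_{\bcX\in\cT,\|\bcX\|_\fro\le 1}|\langle (p^{-1}\cP_\Omega - \cI)(\bcX),\bcX\rangle|$, i.e. the operator norm of $p^{-1}\cP_\Omega - \cI$ restricted to $\cT$.

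Next I would establish a uniform incoherence-type bound on the $\ell_\infty$ norm of tensors in $\cT$: for any $\bcX\in\cT$, one has $\|\bcX\|_\infty \lesssim \sqrt{\mu^2 r^2/(n_1 n_2 n_3)}\,\|\bcX\|_\fro$. This follows because each summand, e.g. $(\bM_1,\bV_\star,\bW_\star)\bcdot\bcS_1$, has entries that factor through $\bV_\star$ and $\bW_\star$, whose rows are controlled by the incoherence parameter $\mu$ via $\|\bV_\star\|_{2,\infty}^2 \le \mu r_2/n_2$ and $\|\bW_\star\|_{2,\infty}^2 \le \mu r_3/n_3$, together with the fact that the "free" factor $\bM_1 \cM_1(\bcS_1)$ has Frobenius norm at most $\|\bcX\|_\fro$ (after aligning the decomposition so the three summands are mutually orthogonal, using orthonormality of $\bU_\star,\bV_\star,\bW_\star$). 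Combining the three pieces via triangle inequality gives the claimed $\ell_\infty$-to-$\ell_2$ ratio.

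With the $\ell_\infty$ bound in hand, the core estimate is a matrix Bernstein inequality. Write $p^{-1}\cP_\Omega(\bcX) - \bcX = \sum_{(i_1,i_2,i_3)} (p^{-1}\delta_{i_1 i_2 i_3} - 1)\,\bcX(i_1,i_2,i_3)\,\be_{i_1}\otimes\be_{i_2}\otimes\be_{i_3}$ where $\delta_{i_1i_2i_3}\sim\mathrm{Bernoulli}(p)$; restricting to $\cT$ and lifting to the appropriate matricization, this is a sum of independent zero-mean random matrices whose operator norm is bounded by $p^{-1}\|\bcX\|_\infty$ and whose variance proxy is bounded by $p^{-1}\|\bcX\|_\infty \|\bcX\|_\fro$. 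For a single fixed $\bcX$ with $\|\bcX\|_\fro\le 1$, matrix Bernstein yields a deviation of order $\sqrt{\mu^2 r^2 \log n/(p n_1 n_2 n_3)}$ plus a lower-order term $\mu^2 r^2 \log n/(p n_1 n_2 n_3)$, which is dominated by the first under the sample size condition $pn_1n_2n_3 \gtrsim n\mu^2 r^2\log n$ (indeed here we only need $\gtrsim \mu^2 r^2 \log n$ for the pointwise bound; the factor $n$ comes from the net). Then I would pass from a fixed $\bcX$ to a uniform bound over the unit ball of $\cT$ via an $\epsilon$-net of cardinality $\exp(O(nr))$, a union bound, and a continuity/perturbation argument (controlling how $\langle(p^{-1}\cP_\Omega-\cI)(\bcX),\bcX\rangle$ changes as $\bcX$ moves by $\epsilon$ in Frobenius norm, again using the $\ell_\infty$ bound for the difference tensors which also lie in $\cT$). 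The net entropy $nr$ multiplied into the $\log$ is what produces the factor $n$ in $n\mu^2 r^2 \log n$ in the hypothesis.

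The main obstacle I anticipate is the uniform-over-$\cT$ step done cleanly: one must be careful that the difference of two tensors in $\cT$ is again in $\cT$ (true, since $\cT$ is a linear space) and that its $\ell_\infty$ norm is still controlled by its Frobenius norm with the same constant, so that the net argument closes without blowing up the constant; also one must make sure the decomposition used to prove the $\ell_\infty$ bound can be taken orthogonal (mode-wise) so that the three summands' Frobenius norms are simultaneously at most $\|\bcX\|_\fro$ rather than summing to something larger. Alternatively, one can avoid the explicit net by invoking an off-the-shelf restricted-isometry-type result for $\cP_\Omega$ on the Tucker tangent space (as in the matrix-completion literature, e.g. the analogues of \cite{candes2010NearOptimalMC} adapted to tensors), but I would prefer the self-contained net argument since the statement asks for a specific constant structure $C_T$. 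The polarization reducing the bilinear form to the quadratic form is routine and I would not belabor it.
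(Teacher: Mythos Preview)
Your overall strategy---restricted-isometry on the tangent space $\cT$---is correct in spirit, but the execution as written has a genuine gap, and in fact two separate problems reinforce each other.

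First, the $\ell_\infty$ bound you claim for elements of $\cT$ is off by a factor $\sqrt{n}$. Take for instance $\bcX=(\bM_1,\bV_\star,\bW_\star)\bcdot\bcS_1$ with $\bM_1$ supported on a single row; then $\|\bcX\|_\infty\le\|\bM_1\cM_1(\bcS_1)\|_{2,\infty}\,\|\bV_\star\|_{2,\infty}\|\bW_\star\|_{2,\infty}$, and the first factor can be as large as the full Frobenius norm $\|\bcX\|_\fro$ because the ``free'' mode has no incoherence. The correct bound is $\|\bcX\|_\infty\lesssim\sqrt{n\mu^2r^2/(n_1n_2n_3)}\,\|\bcX\|_\fro$, not $\sqrt{\mu^2r^2/(n_1n_2n_3)}\,\|\bcX\|_\fro$. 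Second, even with the correct $\ell_\infty$ bound, the covering-net step is too expensive: the net over the unit ball of $\cT$ has cardinality $\exp(\Theta(nr))$, so to survive the union bound you must take the Bernstein confidence parameter $t\asymp nr$, which inflates the deviation to $\sqrt{n^2\mu^2r^3/(pn_1n_2n_3)}$---worse than the target by $\sqrt{nr/\log n}$. The factor you want in the exponent is $\log n$, not $nr$.

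The paper avoids the net entirely by a single application of matrix Bernstein at the operator level. Writing $\bcX_A=\cP_T(\bcX_A)$, $\bcX_B=\cP_T(\bcX_B)$ for the tangent-space projector $\cP_T$, one has
\[
\langle(p^{-1}\cP_\Omega-\cI)(\bcX_A),\bcX_B\rangle=\Big\langle\vc(\bcX_A),\;\sum_{i_1,i_2,i_3}(p^{-1}\delta_{i_1i_2i_3}-1)\,\vc(\cP_T(\bcE_{i_1i_2i_3}))\vc(\cP_T(\bcE_{i_1i_2i_3}))^\top\,\vc(\bcX_B)\Big\rangle,
\]
so uniformity over $\bcX_A,\bcX_B\in\cT$ is free once you bound the spectral norm of the random operator on the right. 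The key quantity is then $\max_{i_1,i_2,i_3}\|\cP_T(\bcE_{i_1i_2i_3})\|_\fro^2\lesssim n\mu^2r^2/(n_1n_2n_3)$, which plays the role your $\ell_\infty$ bound was supposed to play but does so \emph{entrywise} rather than uniformly over $\cT$; combined with $\big\|\sum\vc(\cP_T(\bcE))\vc(\cP_T(\bcE))^\top\big\|_{\op}\le 1$, matrix Bernstein delivers the bound with only a $\log n$ factor (from the ambient dimension). This is the standard $\|\cP_T(p^{-1}\cP_\Omega-\cI)\cP_T\|$ argument from matrix completion, adapted to the Tucker tangent space.
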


\begin{lemma}[{\cite[Lemma~D.2]{cai2019nonconvex}}] \label{lemma:P_Omega_fixed} For any fixed $\bcX\in\RR^{n_1\times n_2 \times n_3}$, with overwhelming probability, one has
\begin{align*}
\left\|(p^{-1}\cP_{\Omega}-\cI)(\bcX)\right\|_{\op} \le C_Y  \left(p^{-1} \log^{3} n \|\bcX\|_{\infty} + \sqrt{p^{-1} \log^{5} n} \max_{k=1,2,3}\|\cM_{k}(\bcX)^{\top}\|_{2,\infty}\right),
\end{align*}
where $C_Y > 0$ is some universal constant.
\end{lemma}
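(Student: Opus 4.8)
The final statement to prove is Lemma~\ref{lemma:P_Omega_fixed}, which is quoted directly from \cite[Lemma~D.2]{cai2019nonconvex}. Since this is an imported result rather than a novel contribution, the natural plan is to reconstruct its proof via the standard matrix Bernstein/entrywise concentration machinery, and I will sketch how I would do so.

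\textbf{Overall approach.} The plan is to view the random tensor $\bE \coloneqq (p^{-1}\cP_{\Omega} - \cI)(\bcX)$ as a sum of independent mean-zero random tensors, one per index triple, and then bound $\|\bE\|_{\op}$ by passing to a symmetrized/expanded matrix whose spectral norm dominates the tensor operator norm. Concretely, for each $(i_1,i_2,i_3)$ let $\delta_{i_1i_2i_3}\sim\mathrm{Bernoulli}(p)$ be the sampling indicator, so that $\bE = \sum_{i_1,i_2,i_3} (p^{-1}\delta_{i_1i_2i_3} - 1)\,\bcX(i_1,i_2,i_3)\,\be_{i_1}\circ\be_{i_2}\circ\be_{i_3}$, a sum of independent mean-zero terms. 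First I would record that the tensor operator norm is controlled by the operator norm of a block matrix built from the three matricizations, so it suffices to bound $\|\cM_k(\bE)\|_{\op}$ for each $k=1,2,3$; by symmetry, focus on $k=1$, where $\cM_1(\bE) = \sum_{i_1,i_2,i_3}(p^{-1}\delta_{i_1i_2i_3}-1)\bcX(i_1,i_2,i_3)\,\be_{i_1}\be_{i_2\oplus i_3}^{\top}$ is a sum of independent rank-one mean-zero matrices of dimension $n_1 \times n_2 n_3$.

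\textbf{Key steps.} Step one: compute the per-term bounds needed for matrix Bernstein. Each summand has spectral norm at most $p^{-1}\|\bcX\|_{\infty}$ (up to the $\delta$ factor), which after accounting for the heavy-tailed nature of $p^{-1}\delta$ contributes the $p^{-1}\|\bcX\|_{\infty}$ term in the stated bound, modulo logarithmic factors. Step two: bound the matrix variance statistic $\max\{\|\EE[\cM_1(\bE)\cM_1(\bE)^{\top}]\|_{\op}, \|\EE[\cM_1(\bE)^{\top}\cM_1(\bE)]\|_{\op}\}$. A direct computation gives $\EE[\cM_1(\bE)\cM_1(\bE)^{\top}] = \frac{1-p}{p}\diag(\text{row sums of }\cM_1(\bcX)^{\circ 2})$ so its operator norm is $\frac{1-p}{p}\max_{i_1}\sum_{i_2,i_3}\bcX(i_1,i_2,i_3)^2 \le p^{-1}\max_k \|\cM_k(\bcX)^{\top}\|_{2,\infty}^2$, and symmetrically for the transpose; this yields the $\sqrt{p^{-1}\log n}\,\max_k\|\cM_k(\bcX)^{\top}\|_{2,\infty}$ contribution. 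Step three: because $p^{-1}\delta - 1$ is not bounded but only subexponential/heavy-tailed, one cannot apply the bounded matrix Bernstein inequality directly; instead I would use a truncation argument (or the matrix Bernstein inequality for unbounded summands, e.g.\ the version with a tail/Orlicz-norm hypothesis), which is precisely where the extra $\log^3 n$ and $\log^5 n$ powers enter relative to the naive $\log n$ one would get from bounded Bernstein. Step four: combine the three matricization bounds and take a union bound over $k=1,2,3$ and over the truncation events to conclude the stated high-probability estimate.

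\textbf{Main obstacle.} The technical heart is Step three: handling the unboundedness of $p^{-1}\delta_{i_1i_2i_3} - 1$. The clean way is to split each indicator into a truncated part (on which bounded matrix Bernstein applies, with the variance bound from Step two) and a tail part; the tail part is controlled crudely by a union bound over the at most $O(\log n)$ triples on which truncation is violated, each contributing at most $O(p^{-1}\|\bcX\|_{\infty})$, and this crude bookkeeping is the source of the $\log^3 n$ and $\log^5 n$ factors in the statement. Since the lemma is quoted verbatim from \cite{cai2019nonconvex}, I would in practice simply cite that reference for the full argument and only reproduce the above skeleton for the reader's convenience; no genuinely new difficulty arises beyond what is already handled there.
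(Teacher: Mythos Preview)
Your sketch contains a genuine gap in Step two. You write
\[
\bigl\|\EE[\cM_1(\bE)\cM_1(\bE)^{\top}]\bigr\|_{\op} = \frac{1-p}{p}\max_{i_1}\sum_{i_2,i_3}\bcX(i_1,i_2,i_3)^2 \le p^{-1}\max_{k} \|\cM_k(\bcX)^{\top}\|_{2,\infty}^2,
\]
but the final inequality is false. The middle quantity equals $\frac{1-p}{p}\|\cM_1(\bcX)\|_{2,\infty}^2$, the maximum squared norm of a mode-$1$ \emph{slice} of $\bcX$; the right-hand side is the maximum squared norm of a \emph{fiber}. For the all-ones tensor the slice norm squared is $n_2n_3$ while every fiber norm squared is at most $n$, so your inequality is off by a factor of order $n$. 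More generally, matrix Bernstein applied to any single matricization $\cM_k(\bE)$ unavoidably produces a variance statistic of order $p^{-1}\max\{\|\cM_k(\bcX)\|_{2,\infty}^2,\|\cM_k(\bcX)^{\top}\|_{2,\infty}^2\}$, and the row-norm piece $\|\cM_k(\bcX)\|_{2,\infty}$ is a slice quantity that cannot be controlled by the fiber quantities $\|\cM_j(\bcX)^{\top}\|_{2,\infty}$ appearing in the lemma. So routing through a matricization is not merely loose in logarithmic factors---it gives the wrong dependence on $\bcX$ altogether and cannot recover the stated bound.

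The paper does not supply its own proof here; it simply imports \cite[Lemma~D.2]{cai2019nonconvex}, which is the right call. The argument there bounds the \emph{tensor} operator norm directly rather than any matrix unfolding: one discretizes the product of three unit spheres by an $\varepsilon$-net of cardinality $e^{O(n)}$, applies scalar Bernstein to $\langle\bE,(\bu_1,\bu_2,\bu_3)\bcdot 1\rangle$ for each fixed triple, and takes a union bound. The scalar variance $\frac{1-p}{p}\sum_{i_1,i_2,i_3}\bcX(i_1,i_2,i_3)^2u_{1,i_1}^2u_{2,i_2}^2u_{3,i_3}^2$ is a genuinely three-way object; controlling its supremum over the net, together with the truncation needed for the unbounded factor $p^{-1}\delta_{i_1i_2i_3}-1$, is what produces both the fiber-norm dependence $\max_k\|\cM_k(\bcX)^{\top}\|_{2,\infty}$ and the $\log^3 n$, $\log^{5/2} n$ powers. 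If you wish to include a sketch, replace the matricization step with this direct $\varepsilon$-net argument; otherwise, citing the source as the paper does is both correct and sufficient.
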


\begin{lemma} \label{lemma:P_Omega_2inf} With overwhelming probability, one has
\begin{align*}
\left|\left\langle(p^{-1}\cP_{\Omega}-\cI)((\bU_{A},\bV_{A},\bW_{A})\bcdot\bcS_{A}), (\bU_{B},\bV_{B},\bW_{B})\bcdot\bcS_{B}\right\rangle\right|\le C_Y \left(p^{-1}\log^3 n + \sqrt{p^{-1}n \log^5 n}\right) \mfk{N},
\end{align*}
simultaneously for all tensors $(\bU_{A},\bV_{A},\bW_{A})\bcdot\bcS_{A}$ and $(\bU_{B},\bV_{B},\bW_{B})\bcdot\bcS_{B}$, where the quantity $\mfk{N}$ obeys
\begin{align*}
 \mfk{N} \le & \big(\|\bU_{A}\cM_{1}(\bcS_{A})\|_{2,\infty}\|\bU_{B}\cM_{1}(\bcS_{B})\|_{\fro} \wedge \|\bU_{A}\cM_{1}(\bcS_{A})\|_{\fro}\|\bU_{B}\cM_{1}(\bcS_{B})\|_{2,\infty}\big) \\
 & \qquad \big(\|\bV_{A}\|_{2,\infty}\|\bV_{B}\|_{\fro} \wedge \|\bV_{A}\|_{\fro}\|\bV_{B}\|_{2,\infty}\big) \big(\|\bW_{A}\|_{2,\infty}\|\bW_{B}\|_{\fro} \wedge \|\bW_{A}\|_{\fro}\|\bW_{B}\|_{2,\infty}\big).
\end{align*}
By symmetry, the above bound continues to hold if permuting the occurrences of $\bU$, $\bV$, and $\bW$.
\end{lemma}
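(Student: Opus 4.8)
\textbf{Proof plan for Lemma~\ref{lemma:P_Omega_2inf}.}
The plan is to reduce the tensor-valued bilinear form to a sum of weighted rank-one matrix inner products, to which a matrix Bernstein / Bourgain--Tzafriri type argument applies, and then to establish uniformity over all choices of factors via a standard decoupling plus entrywise $\ell_{2,\infty}$ bookkeeping. First I would matricize along mode $1$ and write
\begin{align*}
\left\langle (p^{-1}\cP_{\Omega}-\cI)((\bU_{A},\bV_{A},\bW_{A})\bcdot\bcS_{A}), (\bU_{B},\bV_{B},\bW_{B})\bcdot\bcS_{B}\right\rangle = \sum_{(i_1,i_2,i_3)} \Big(\tfrac{\delta_{i_1,i_2,i_3}}{p}-1\Big) \bcX_{A}(i_1,i_2,i_3)\bcX_{B}(i_1,i_2,i_3),
\end{align*}
where $\delta_{i_1,i_2,i_3}$ are i.i.d.\ Bernoulli$(p)$ and $\bcX_{A}=(\bU_{A},\bV_{A},\bW_{A})\bcdot\bcS_{A}$, $\bcX_{B}=(\bU_{B},\bV_{B},\bW_{B})\bcdot\bcS_{B}$. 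Using the Kronecker factorization $\bcX_{A}(i_1,i_2,i_3)=\big(\bU_{A}(i_1,:)\cM_{1}(\bcS_{A})\big)\big(\bW_{A}(i_3,:)\otimes\bV_{A}(i_2,:)\big)^{\top}$ and likewise for $\bcX_{B}$, each summand becomes an inner product of the $r_1$-dimensional rows $\bU_{A}(i_1,:)\cM_{1}(\bcS_A)$, $\bU_{B}(i_1,:)\cM_{1}(\bcS_B)$ scaled by the $i_2,i_3$-dependent factors $\langle\bV_{A}(i_2,:),\bV_{B}(i_2,:)\rangle$-type quantities; more precisely I would group the sum as a trilinear form and peel off one mode at a time.

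The key technical step is a supremum-over-factors concentration bound. For this I would fix a direction and use the variational characterization: since the three factor tuples appear bilinearly, it suffices to bound the operator norm of a random tensor obtained by ``removing'' the $A$ and $B$ factors, i.e.\ to control $\|p^{-1}\cP_{\Omega}(\bcZ)-\bcZ\|$ in an appropriate mixed norm for a single test tensor $\bcZ$, and then take a union bound / $\epsilon$-net over the rank-$\br$ factor spaces. Here the natural tool is Lemma~\ref{lemma:P_Omega_fixed} applied to the rank-$(\br,\br)$ ``outer product'' structure, combined with the observation that the $\ell_{2,\infty}$ norms of the matricizations of the relevant tensor are controlled by products of $\|\bU_A\cM_1(\bcS_A)\|_{2,\infty}$, $\|\bV_A\|_{2,\infty}$, $\|\bW_A\|_{2,\infty}$ (and the analogous $B$ quantities), which is exactly the quantity $\mfk{N}$ in the statement. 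The $a\wedge b$ structure in $\mfk N$ comes from the freedom to assign the $\ell_{2,\infty}$ norm to either the $A$ or $B$ factor in each mode; I would obtain one such bound by a direct Cauchy--Schwarz split and then take the minimum over the $2^3$ assignments. The $p^{-1}\log^3 n$ term tracks the ``$\|\bcX\|_\infty$'' contribution in Lemma~\ref{lemma:P_Omega_fixed} and the $\sqrt{p^{-1}n\log^5 n}$ term tracks the ``$\ell_{2,\infty}$'' contribution after accounting for the extra dimension $n$ picked up when passing from a fixed tensor to the worst case over a net of rank-$\br$ factors.

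The main obstacle I anticipate is making the union bound over the (non-compact) factor spaces rigorous while keeping the dependence on $n$ at only $\sqrt{n}$ rather than, say, $n^{3/2}$. The standard fix is to reduce to a net over the low-dimensional \emph{column spaces} (Grassmannians of dimension $O(nr)$) rather than over the factor matrices themselves, exploiting that the bilinear form depends on $\bU_A$ only through $\mathrm{col}(\bU_A\cM_1(\bcS_A))$ up to scaling; the $\ell_{2,\infty}$ norms are continuous in these parameters with polynomial modulus, so a $\mathrm{poly}(n)^{-1}$-net of cardinality $\exp(O(nr\log n))$ suffices, and the net error is absorbed by slightly enlarging the constant $C_Y$ and the log powers. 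A secondary subtlety is symmetrizing/balancing the three modes so that the final bound is genuinely symmetric in $(\bU,\bV,\bW)$ and in $(A,B)$; I would handle this by carrying out the argument once for mode~$1$ and invoking \eqref{eq:tensor_inner} together with relabeling to get the other two cases, then taking the minimum of the resulting three (really $3\times 2^3$) bounds, which collapses to the stated $\mfk N$.
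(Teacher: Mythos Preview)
Your plan misses the key idea and the $\epsilon$-net route has a genuine obstruction.

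The paper's proof avoids any net or union bound entirely by an algebraic trick that makes the uniformity over factors automatic. After decomposing each of $(\bU_{A},\bV_{A},\bW_{A})\bcdot\bcS_{A}$ and $(\bU_{B},\bV_{B},\bW_{B})\bcdot\bcS_{B}$ into sums of $r_2r_3$ rank-one tensors, each cross term is rewritten via the Hadamard product identity
\[
\left\langle(p^{-1}\cP_{\Omega}-\cI)\big((\bu,\bv,\bw)\bcdot 1\big),\,(\bu',\bv',\bw')\bcdot 1\right\rangle
=\left\langle(p^{-1}\cP_{\Omega}-\cI)(\bcJ),\,(\bu\odot\bu',\,\bv\odot\bv',\,\bw\odot\bw')\bcdot 1\right\rangle,
\]
where $\bcJ$ is the all-ones tensor. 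This pushes all dependence on the factors into a deterministic rank-one tensor, and the only random quantity left is the \emph{fixed} scalar $\|(p^{-1}\cP_{\Omega}-\cI)(\bcJ)\|_{\op}$, to which Lemma~\ref{lemma:P_Omega_fixed} is applied once (with $\|\bcJ\|_{\infty}=1$ and $\max_k\|\cM_k(\bcJ)^{\top}\|_{2,\infty}\le\sqrt{n}$, explaining exactly the two terms $p^{-1}\log^3 n$ and $\sqrt{p^{-1}n\log^5 n}$). The quantity $\mfk{N}$ is then the deterministic sum $\sum\|\bu\odot\bu'\|_2\|\bv\odot\bv'\|_2\|\bw\odot\bw'\|_2$, bounded by Cauchy--Schwarz; the $\wedge$ structure comes out for free because one can bound e.g.\ $\sum_{i_2}\|\bV_A(i_2,:)\|_2^2\|\bV_B(i_2,:)\|_2^2$ by either $\|\bV_A\|_{2,\infty}^2\|\bV_B\|_{\fro}^2$ or $\|\bV_A\|_{\fro}^2\|\bV_B\|_{2,\infty}^2$.

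Your proposed route---apply Lemma~\ref{lemma:P_Omega_fixed} to a test tensor depending on the factors and then union-bound over a net on the Grassmannians---would not close. The failure probability in Lemma~\ref{lemma:P_Omega_fixed} is only polynomially small (``overwhelming'' here means $1-c_1 n^{-c_2}$), so it cannot be union-bounded over a net of cardinality $\exp(O(nr\log n))$. Even if you went back to the underlying concentration and pushed for sub-exponential tails, absorbing the net size would inflate the $\log$ powers and, more damagingly, introduce an extra $\sqrt{nr}$ factor that is absent from the stated bound. The Hadamard/all-ones reduction is precisely what sidesteps both issues; you should adopt it.
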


\begin{lemma}[{\cite[Lemma~3.24]{chen2021spectral},\cite[Lemma~1]{cai2019subspace}}] \label{lemma:P_Omega_quad} For any fixed $\bcX\in\RR^{n_1\times n_2\times n_3}$, $k=1,2,3$, with overwhelming probability, one has
\begin{align*}
 & \left\Vert \Poffdiag\left(p^{-2}\cM_{k}(\cP_{\Omega}(\bcX))\cM_{k}(\cP_{\Omega}(\bcX))^{\top}\right) - \cM_{k}(\bcX)\cM_{k}(\bcX)^{\top}\right\Vert _{\op} \\
 &\qquad \le C_M\left(p^{-1}\sqrt{\log n}\|\cM_{k}(\bcX)\|_{2,\infty}\|\cM_{k}(\bcX)^{\top}\|_{2,\infty} + \sqrt{p^{-1}\log n}\;\sigma_{\max}(\cM_{k}(\bcX))\|\cM_{k}(\bcX)^{\top}\|_{2,\infty} \right)\\
 &\qquad\qquad + C_M\left(p^{-1}\log n \|\bcX\|_{\infty} + \sqrt{p^{-1}\log n}\|\cM_{k}(\bcX)^{\top}\|_{2,\infty}\right)^{2}\log n + \|\cM_{k}(\bcX)\|_{2,\infty}^2, 
\end{align*}
where $C_M > 0$ is some universal constant.
\end{lemma}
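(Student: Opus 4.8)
The final statement to prove is Lemma~\ref{lemma:P_Omega_quad}, a concentration bound for the diagonally-deleted Gram matrix of the partially-observed matricization. Let me sketch how I would approach this.

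\textbf{Overall strategy.} The plan is to decompose the error operator $\Poffdiag(p^{-2}\cM_k(\cP_\Omega(\bcX))\cM_k(\cP_\Omega(\bcX))^\top) - \cM_k(\bcX)\cM_k(\bcX)^\top$ into a sum of terms that can be controlled individually. Write $M \coloneqq \cM_k(\bcX)$ for brevity, and let $\{\delta_{i}\}$ denote the Bernoulli$(p)$ indicators on the index set $\Omega$ (reshaped appropriately along mode $k$, so that the rows of $M$ are indexed by $a\in[n_k]$ and columns by $b\in[\prod_{j\neq k}n_j]$, with $\delta_{ab}$ the indicator that $(a,b)$ is observed). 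The $(a,a')$ entry of $p^{-2}\cM_k(\cP_\Omega(\bcX))\cM_k(\cP_\Omega(\bcX))^\top$ is $p^{-2}\sum_b \delta_{ab}\delta_{a'b}M_{ab}M_{a'b}$. After applying $\Poffdiag$ (which zeroes the diagonal $a=a'$, where $\delta_{ab}^2=\delta_{ab}$ creates a bias), the remaining off-diagonal entries have the convenient property that $\EE[p^{-2}\delta_{ab}\delta_{a'b}] = 1$ for $a\neq a'$, so the off-diagonal part is unbiased. Thus I would first reduce to bounding $\|\Poffdiag(E)\|_{\op}$ where $E \coloneqq p^{-2}\cM_k(\cP_\Omega(\bcX))\cM_k(\cP_\Omega(\bcX))^\top - MM^\top$ is itself centered on the off-diagonal, and separately account for the diagonal deletion: $\Poffdiag(MM^\top) - MM^\top = -\Pdiag(MM^\top)$, whose operator norm is $\max_a \|M(a,:)\|_2^2 \le \|M\|_{2,\infty}^2$, which is exactly the trailing $\|\cM_k(\bcX)\|_{2,\infty}^2$ term in the statement.

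\textbf{Key steps.} For the centered off-diagonal part, I would split $E$ (restricted off-diagonal) according to how many of the two column-sums are ``truncated''. The standard device (as in \cite{cai2019subspace,chen2021spectral}) is to write $p^{-2}\delta_{ab}\delta_{a'b} - 1 = (p^{-1}\delta_{ab}-1)(p^{-1}\delta_{a'b}-1) + (p^{-1}\delta_{ab}-1) + (p^{-1}\delta_{a'b}-1)$, yielding three families of terms. The ``cross'' terms of the form $\sum_{a\neq a'}\sum_b (p^{-1}\delta_{ab}-1)M_{ab}M_{a'b}\, e_a e_{a'}^\top$ can be written as $(p^{-1}\cP_\Omega-\cI)(\bcX)$ composed with $\cM_k(\bcX)^\top$ in a bilinear form, and bounded using Lemma~\ref{lemma:P_Omega_fixed} on $\|(p^{-1}\cP_\Omega-\cI)(\bcX)\|_{\op}$ together with $\|M\|_{\op} = \sigma_{\max}(M)$; this produces the terms $p^{-1}\sqrt{\log n}\,\|M\|_{2,\infty}\|M^\top\|_{2,\infty}$ and $\sqrt{p^{-1}\log n}\,\sigma_{\max}(M)\|M^\top\|_{2,\infty}$ after carefully tracking which side carries the $\ell_{2,\infty}$ norm. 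The ``quadratic'' term $\sum_{a\neq a'}\sum_b (p^{-1}\delta_{ab}-1)(p^{-1}\delta_{a'b}-1)M_{ab}M_{a'b}\,e_a e_{a'}^\top$ is the genuinely new piece: I would handle it via a matrix Bernstein / decoupling argument (treating it as a sum over $b$ of rank-structured random matrices, or invoking a Hanson--Wright-type bound for quadratic forms in Bernoulli variables), which after bookkeeping yields the $\big(p^{-1}\log n\,\|\bcX\|_\infty + \sqrt{p^{-1}\log n}\,\|M^\top\|_{2,\infty}\big)^2\log n$ term — note the squaring and extra $\log n$ are characteristic of such second-order concentration. Finally I would take a union bound over $k=1,2,3$ (only a constant number of events) to get ``with overwhelming probability'' simultaneously.

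\textbf{Main obstacle.} The hard part will be the quadratic (second-order) term: unlike the cross terms, it cannot be reduced to the already-stated Lemma~\ref{lemma:P_Omega_fixed}, and requires either a direct matrix-Bernstein application with a delicate variance computation (the variance proxy involves both $\|\bcX\|_\infty$ and $\|M^\top\|_{2,\infty}$ entering nonlinearly) or an $\varepsilon$-net plus concentration-of-quadratic-forms argument; getting the exponents of $\log n$ and $p^{-1}$ to match the stated bound is the fiddly bookkeeping. Since the excerpt attributes this lemma to \cite[Lemma~3.24]{chen2021spectral} and \cite[Lemma~1]{cai2019subspace}, the cleanest route is actually to quote those results essentially verbatim (after matching notation via the mode-$k$ matricization) rather than reprove them; I would state the reduction to those references explicitly and only verify that the diagonal-deletion bias term $\|\Pdiag(MM^\top)\|_{\op} \le \|M\|_{2,\infty}^2$ is correctly incorporated, since that is the one piece specific to the HOSVD-style spectral initialization used here.
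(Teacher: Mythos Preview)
Your proposal is correct and in fact goes beyond what the paper does: the paper simply states this lemma with citations to \cite[Lemma~3.24]{chen2021spectral} and \cite[Lemma~1]{cai2019subspace} and provides no proof of its own. Your recognition that the cleanest route is to quote those references verbatim (after matching notation via the mode-$k$ matricization) is exactly what the paper does, and your additional sketch of the decomposition into cross and quadratic terms, plus the diagonal-deletion bias $\|\Pdiag(MM^\top)\|_{\op}\le\|M\|_{2,\infty}^2$, accurately captures how those cited results are established.
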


\subsubsection{Proof of Lemma~\ref{lemma:P_Omega_tangent} }

This lemma is essentially \cite[Lemma~5]{yuan2016tensor} under the Bernoulli observation model. Here, we provide a simpler proof based on the matrix Bernstein inequality. Let $\bcE_{i_1,i_2,i_3}$ be the tensor with only the $(i_1,i_2,i_3)$-th entry as $1$ and all the other entries as $0$, and let $\delta_{i_1,i_2,i_3}\sim\text{Bernoulli}(p)$ be an i.i.d.~Bernoulli random variable for $1\le i_k \le n_k$, $k=1,2,3$.  Define an operator $\cP_{T}: \RR^{n_1\times n_2 \times n_3} \mapsto \RR^{n_1\times n_2 \times n_3}$ as
\begin{align*}
\cP_{T}(\bcX) = (\bI_{n_1},\bV_{\star}\bV_{\star}^{\top},\bW_{\star}\bW_{\star}^{\top})\bcdot\bcX + (\bU_{\star}\bU_{\star}^{\top},\bV_{\star\perp}\bV_{\star\perp}^{\top},\bW_{\star}\bW_{\star}^{\top})\bcdot\bcX + (\bU_{\star}\bU_{\star}^{\top},\bV_{\star}\bV_{\star}^{\top},\bW_{\star\perp}\bW_{\star\perp}^{\top})\bcdot\bcX,
\end{align*}
where $\bV_{\star\perp},\bW_{\star\perp}$ denote the orthogonal complements of $\bV_{\star},\bW_{\star}$. It is straightforward to verify that $\cP_{T}(\cdot)$ defines a projection, and that
\begin{align*}
\bcX_{A}  & = (\bU_{A},\bV_{\star},\bW_{\star})\bcdot\bcS_{A,1}  +   (\bU_{\star},\bV_{A},\bW_{\star})\bcdot\bcS_{A,2}  +  (\bU_{\star},\bV_{\star},\bW_{A})\bcdot\bcS_{A,3}    \\
&=\cP_{T}( (\bU_{A},\bV_{\star},\bW_{\star})\bcdot\bcS_{A,1} ) + \cP_{T}( (\bU_{\star},\bV_{A},\bW_{\star})\bcdot\bcS_{A,2} ) + \cP_{T}( (\bU_{\star},\bV_{\star},\bW_{A})\bcdot\bcS_{A,3}) \\
& =\cP_{T}(\bcX_{A}) = \sum_{i_1,i_2,i_3} \langle\cP_{T}(\bcX_{A}), \bcE_{i_1,i_2,i_3}\rangle \bcE_{i_1,i_2,i_3} = \sum_{i_1,i_2,i_3} \langle \bcX_{A}, \cP_{T}(\bcE_{i_1,i_2,i_3})\rangle \bcE_{i_1,i_2,i_3}.
\end{align*}
A similar expression holds for $\bcX_{B}$. Hence, we have
\begin{align*}
& \left| \left\langle (p^{-1}\cP_{\Omega}-\cI)(\bcX_{A}), \bcX_{B} \right\rangle \right| = \left| \sum_{i_1,i_2,i_3} \left(p^{-1}\delta_{i_1,i_2,i_3}-1 \right) \left\langle \bcX_{A}, \cP_{T}(\bcE_{i_1,i_2,i_3}) \right\rangle \left\langle \bcX_{B}, \cP_{T}(\bcE_{i_1,i_2,i_3}) \right\rangle \right |\\
&\qquad\qquad= \left| \Big\langle \vc(\bcX_{A}), \sum_{i_1,i_2,i_3} (p^{-1}\delta_{i_1,i_2,i_3}-1) \vc\left(\cP_T(\bcE_{i_1,i_2,i_3})\right)\vc\left(\cP_T(\bcE_{i_1,i_2,i_3})\right)^{\top} \vc(\bcX_{B})\Big\rangle \right | \\
&\qquad\qquad \le \|\bcX_{A}\|_{\fro}\|\bcX_{B}\|_{\fro}\left\|\sum_{i_1,i_2,i_3} (p^{-1}\delta_{i_1,i_2,i_3}-1) \vc\left(\cP_T(\bcE_{i_1,i_2,i_3})\right)\vc\left(\cP_T(\bcE_{i_1,i_2,i_3})\right)^{\top}\right\|_{\op}.
\end{align*}
Therefore it suffices to bound the last term in the above inequality, which we resort to the matrix Bernstein inequality: with overwhelming probability, one has
\begin{align} \label{eq:matrix_bernstein_tbd}
\left\|\sum_{i_1,i_2,i_3} (p^{-1}\delta_{i_1,i_2,i_3}-1) \vc\left(\cP_T(\bcE_{i_1,i_2,i_3})\right)\vc\left(\cP_T(\bcE_{i_1,i_2,i_3})\right)^{\top}\right\|_{\op} & \lesssim \left(\frac{n \mu^2 r^2 \log n}{p n_1n_2n_3} + \sqrt{\frac{n \mu^2 r^2 \log n}{p n_1n_2n_3}}\right)  \\
& \lesssim \sqrt{\frac{n\mu^2 r^2 \log n}{p n_1n_2n_3}}, \nonumber
\end{align}
where the second line holds as long as $pn_1n_2n_3\gtrsim n\mu^2r^2 \log n$. Plugging the above bound (which will be proved at the end) in the previous one, we immediately arrive at the desired result:
\begin{align*}
\left|\left\langle(p^{-1}\cP_{\Omega}-\cI)(\bcX_{A}),\bcX_{B}\right\rangle\right| \lesssim \sqrt{\frac{n\mu^2 r^2 \log n}{p n_1n_2n_3}}\|\bcX_{A}\|_{\fro}\|\bcX_{B}\|_{\fro}.
\end{align*}
 
\paragraph{Proof of \eqref{eq:matrix_bernstein_tbd}.} By standard matrix Bernstein inequality, we have
\begin{align*}
\left\|\sum_{i_1,i_2,i_3} (p^{-1}\delta_{i_1,i_2,i_3}-1) \vc\left(\cP_T(\bcE_{i_1,i_2,i_3})\right)\vc\left(\cP_T(\bcE_{i_1,i_2,i_3})\right)^{\top}\right\|_{\op} \lesssim L \log n + \sigma\sqrt{\log n},
\end{align*}
where 
\begin{align*}
L &= \max_{i_1,i_2,i_3} \left\|(p^{-1}\delta_{i_1,i_2,i_3}-1) \vc\left(\cP_T(\bcE_{i_1,i_2,i_3})\right)\vc\left(\cP_T(\bcE_{i_1,i_2,i_3})\right)^{\top}\right\|_{\op},\\
\sigma^2 &= \left\|\sum_{i_1,i_2,i_3}\EE(p^{-1}\delta_{i_1,i_2,i_3}-1)^{2} \vc\left(\cP_T(\bcE_{i_1,i_2,i_3})\right)\vc\left(\cP_T(\bcE_{i_1,i_2,i_3})\right)^{\top}\vc\left(\cP_T(\bcE_{i_1,i_2,i_3})\right)\vc\left(\cP_T(\bcE_{i_1,i_2,i_3})\right)^{\top}\right\|_{\op} .
\end{align*} 
\begin{itemize}
\item Here, $L$ obeys
\begin{align*}
L &= \max_{i_1,i_2,i_3} \left\|(p^{-1}\delta_{i_1,i_2,i_3}-1) \vc\left(\cP_T(\bcE_{i_1,i_2,i_3})\right)\vc\left(\cP_T(\bcE_{i_1,i_2,i_3})\right)^{\top}\right\|_{\op} \le p^{-1} \max_{i_1,i_2,i_3} \left\|\cP_T(\bcE_{i_1,i_2,i_3})\right\|_{\fro}^2,
\end{align*}
where the last inequality uses $|(p^{-1}\delta_{i_1,i_2,i_3}-1)| \le p^{-1}$. To proceed, first notice that the three terms in $\cP_T(\bcE_{i_1,i_2,i_3})$ are mutually orthogonal, which allows 
\begin{align*}
\left\|\cP_T(\bcE_{i_1,i_2,i_3})\right\|_{\fro}^2 &= \left\|(\bI_{n_1},\bV_{\star}\bV_{\star}^{\top},\bW_{\star}\bW_{\star}^{\top})\bcdot\bcE_{i_1,i_2,i_3}\right\|_{\fro}^{2} + \left\|(\bU_{\star}\bU_{\star}^{\top},\bV_{\star\perp}\bV_{\star\perp}^{\top},\bW_{\star}\bW_{\star}^{\top})\bcdot\bcE_{i_1,i_2,i_3}\right\|_{\fro}^{2} \\
&\quad + \left\|(\bU_{\star}\bU_{\star}^{\top},\bV_{\star}\bV_{\star}^{\top},\bW_{\star\perp}\bW_{\star\perp}^{\top})\bcdot\bcE_{i_1,i_2,i_3}\right\|_{\fro}^{2}.
\end{align*}
Since $\bU_{\star},\bV_{\star},\bW_{\star}$ have orthonormal columns, it is straightforward to see
\begin{align*}
\left\|(\bI_{n_1},\bV_{\star}\bV_{\star}^{\top},\bW_{\star}\bW_{\star}^{\top})\bcdot\bcE_{i_1,i_2,i_3}\right\|_{\fro}^{2} &= \left\|\bI_{n_1}(i_1,:)\right\|_{2}^{2} \left\|\bV_{\star}(i_2,:)\bV_{\star}^{\top}\right\|_{2}^{2}\left\|\bW_{\star}(i_3,:)\bW_{\star}^{\top}\right\|_{2}^{2}  \\
 & \le \|\bV_{\star}\|_{2,\infty}^{2}\|\bW_{\star}\|_{2,\infty}^{2}; \\ 
\left\|(\bU_{\star}\bU_{\star}^{\top},\bV_{\star\perp}\bV_{\star\perp}^{\top},\bW_{\star}\bW_{\star}^{\top})\bcdot\bcE_{i_1,i_2,i_3}\right\|_{\fro}^{2} &= \left\|\bU_{\star}(i_1,:)\bU_{\star}^{\top}\right\|_{2}^{2} \left\|\bV_{\star\perp}(i_2,:)\bV_{\star\perp}^{\top}\right\|_{2}^{2}\left\|\bW_{\star}(i_3,:)\bW_{\star}^{\top}\right\|_{2}^{2} \\
 & \le \|\bU_{\star}\|_{2,\infty}^{2}\|\bW_{\star}\|_{2,\infty}^{2}; \\   
\left\|(\bU_{\star}\bU_{\star}^{\top},\bV_{\star}\bV_{\star}^{\top},\bW_{\star\perp}\bW_{\star\perp}^{\top})\bcdot\bcE_{i_1,i_2,i_3}\right\|_{\fro}^{2} &= \left\|\bU_{\star}(i_1,:)\bU_{\star}^{\top}\right\|_{2}^{2} \left\|\bV_{\star}(i_2,:)\bV_{\star}^{\top}\right\|_{2}^{2}\left\|\bW_{\star\perp}(i_3,:)\bW_{\star\perp}^{\top}\right\|_{2}^{2} \\
 & \le \|\bU_{\star}\|_{2,\infty}^{2}\|\bV_{\star}\|_{2,\infty}^{2}.
\end{align*}
Finally use the definition of incoherence (cf.~Definition~\ref{def:mu}) to conclude
\begin{align*}
L&\le p^{-1}\left(\|\bV_{\star}\|_{2,\infty}^{2}\|\bW_{\star}\|_{2,\infty}^{2} + \|\bU_{\star}\|_{2,\infty}^{2}\|\bW_{\star}\|_{2,\infty}^{2} + \|\bU_{\star}\|_{2,\infty}^{2}\|\bV_{\star}\|_{2,\infty}^{2}\right) \le \frac{3 n\mu^2 r^2}{p n_1n_2n_3}.
\end{align*}
\item In addition, $\sigma^2$ obeys
\begin{align*}
\sigma^2 &\le p^{-1}\max_{i_1,i_2,i_3} \left\|\cP_T(\bcE_{i_1,i_2,i_3})\right\|_{\fro}^2 \left\|\sum_{i_1,i_2,i_3} \vc\left(\cP_T(\bcE_{i_1,i_2,i_3})\right)\vc\left(\cP_T(\bcE_{i_1,i_2,i_3})\right)^{\top} \right\|_{\op} \le \frac{3n\mu^2 r^2}{p n_1n_2n_3},
\end{align*}
where we have used the variational representation to conclude
\begin{align*}
\left\|\sum_{i_1,i_2,i_3} \vc\left(\cP_T(\bcE_{i_1,i_2,i_3})\right)\vc\left(\cP_T(\bcE_{i_1,i_2,i_3})\right)^{\top} \right\|_{\op} &= \sup_{\widetilde{\bcX}: \|\widetilde{\bcX}\|_{\fro} \le 1} \sum_{i_1,i_2,i_3}\langle\widetilde{\bcX}, \cP_{T}(\bcE_{i_1,i_2,i_3})\rangle^2 \\
&= \sup_{\widetilde{\bcX}: \|\widetilde{\bcX}\|_{\fro} \le 1} \|\cP_{T}(\widetilde{\bcX})\|_{\fro}^{2} \le 1.
\end{align*}
\end{itemize}
Plugging the expressions of $L$ and $\sigma$ leads to the advertised bound \eqref{eq:matrix_bernstein_tbd}. 

\subsubsection{Proof of Lemma~\ref{lemma:P_Omega_2inf} }

This lemma generalizes \cite[Lemma~8]{chen2019model} to the tensor setting, which is a powerful tool in the analysis of matrix completion \cite{chen2019nonconvex,tong2021accelerating}. 
We begin by decomposing $(\bU_{A},\bV_{A},\bW_{A})\bcdot\bcS_{A}$ into a sum of $r_2r_3$ rank-$\one$ tensors:
\begin{align*}
(\bU_{A},\bV_{A},\bW_{A})\bcdot\bcS_{A} &= \sum_{a_{2}=1}^{r_2}\sum_{a_{3}=1}^{r_3} (\bu_{a_{2},a_{3}}, \bv_{a_{2}}, \bw_{a_{3}})\bcdot 1,
\end{align*}
where we denote the column vectors $\bu_{a_2,a_3}\coloneqq [\bU_{A}\cM_{1}(\bcS_{A})](:,(r_3-1)a_2 + a_3)$, $\bv_{a_2} \coloneqq \bV_{A}(:,a_2)$, and $\bw_{a_3}\coloneqq\bW_{A}(:,a_3)$ for notational convenience. Similarly, we can decompose $(\bU_{B},\bV_{B},\bW_{B})\bcdot\bcS_{B}$ as
\begin{align*}
(\bU_{B},\bV_{B},\bW_{B})\bcdot\bcS_{B} = \sum_{b_{2}=1}^{r_2}\sum_{b_{3}=1}^{r_3} (\bu_{b_{2},b_{3}}, \bv_{b_{2}}, \bw_{b_{3}})\bcdot 1,
\end{align*}
with $\bu_{b_2,b_3}$, $\bv_{b_2}$ and $\bw_{b_3}$ defined analogously. We further denote $\bcJ \in \RR^{n_1\times n_2\times n_3}$ as the tensor with all-one entries, i.e.~$\bcJ(i_1,i_2,i_3)=1$ for all $1\le i_k \le n_k$, $k=1,2,3$.  With these preparation in hand, by the triangle inequality we have
\begin{align*}
& \left|\left\langle(p^{-1}\cP_{\Omega}-\cI)((\bU_{A},\bV_{A}, \bW_{A})\bcdot\bcS_{A}), (\bU_{B},\bV_{B},\bW_{B})\bcdot\bcS_{B}\right\rangle\right| \\
&\qquad \le  \sum_{a_{2},b_{2}=1}^{r_2}\sum_{a_{3},b_{3}=1}^{r_3}\left|\left\langle(p^{-1}\cP_{\Omega}-\cI)((\bu_{a_2,a_3},\bv_{a_2}, \bw_{a_3})\bcdot 1), (\bu_{b_2,b_3},\bv_{b_2},\bw_{b_3})\bcdot 1\right\rangle\right|  \\
& \qquad = \sum_{a_{2},b_{2}=1}^{r_2}\sum_{a_{3},b_{3}=1}^{r_3} \left|\left\langle (p^{-1}\cP_{\Omega}-\cI)(\bcJ), (\bu_{a_2,a_3}\odot\bu_{b_2,b_3}, \bv_{a_2}\odot\bv_{b_2},\bw_{a_3}\odot\bw_{b_3}) \bcdot 1 \right\rangle\right|  \\
& \qquad \le  \sum_{a_{2},b_{2}=1}^{r_2}\sum_{a_{3},b_{3}=1}^{r_3} \|(p^{-1}\cP_{\Omega}-\cI)(\bcJ)\|_{\op} \|\bu_{a_2,a_3}\odot\bu_{b_2,b_3}\|_{2} \|\bv_{a_2}\odot\bv_{b_2}\|_{2} \|\bw_{a_3}\odot\bw_{b_3}\|_{2} \\
& \qquad = \|(p^{-1}\cP_{\Omega}-\cI)(\bcJ)\|_{\op} \mfk{N} ,
\end{align*}
where $\odot$ denotes the Hadamard (entrywise) product, and
$$ \mfk{N} \coloneqq \sum_{a_{2},b_{2}=1}^{r_2}\sum_{a_{3},b_{3}=1}^{r_3}\|\bu_{a_2,a_3}\odot\bu_{b_2,b_3}\|_{2} \|\bv_{a_2}\odot\bv_{b_2}\|_{2} \|\bw_{a_3}\odot\bw_{b_3}\|_{2}.$$
Therefore, it boils down to controlling $\|(p^{-1}\cP_{\Omega}-\cI)(\bcJ)\|_{\op} $ and $\mfk{N}$.
\begin{itemize}
\item Regarding $\|(p^{-1}\cP_{\Omega}-\cI)(\bcJ)\|_{\op} $, Lemma~\ref{lemma:P_Omega_fixed} tells that, with overwhelming probability, it is bounded by 
\begin{align*}
\|(p^{-1}\cP_{\Omega}-\cI)(\bcJ)\|_{\op} \le C_Y \left(p^{-1}\log^3 n + \sqrt{p^{-1}n \log^5 n}\right),
\end{align*}
where we use the fact $\|\bcJ\|_{\infty}=1$ and $\max_{k=1,2,3}\| \mathcal{M}_k(\bcJ)^{\top}\|_{2,\infty} \le \sqrt{n}$. 

\item Turning to $\mfk{N}$, applying the Cauchy-Schwarz inequality  we have
 \begin{align*}
\mfk{N} &\le \sqrt{\sum_{a_{2},b_{2}=1}^{r_2}\sum_{a_{3},b_{3}=1}^{r_3} \|\bu_{a_2,a_3}\odot\bu_{b_2,b_3}\|_{2}^2} \sqrt{\sum_{a_2,b_2=1}^{r_2} \|\bv_{a_2}\odot\bv_{b_2}\|_{2}^2\sum_{a_3,b_3=1}^{r_3}\|\bw_{a_3}\odot\bw_{b_3}\|_{2}^2}\\
&= \sqrt{\sum_{i_1=1}^{n_1} \|\bU_{A}(i_1,:)\cM_{1}(\bcS_{A})\|_{2}^2 \|\bU_{B}(i_1,:)\cM_{1}(\bcS_{B})\|_{2}^2}  \\
& \qquad\qquad\qquad \sqrt{\sum_{i_2=1}^{n_2} \|\bV_{A}(i_2,:)\|_{2}^2 \|\bV_{B}(i_2,:)\|_{2}^2} \sqrt{\sum_{i_3=1}^{n_3} \|\bW_{A}(i_3,:)\|_{2}^2 \|\bW_{B}(i_3,:)\|_{2}^2} \\
&\le \big(\|\bU_{A}\cM_{1}(\bcS_{A})\|_{2,\infty}\|\bU_{B}\cM_{1}(\bcS_{B})\|_{\fro} \wedge \|\bU_{A}\cM_{1}(\bcS_{A})\|_{\fro}\|\bU_{B}\cM_{1}(\bcS_{B})\|_{2,\infty} \big) \\
&\qquad\qquad \big(\|\bV_{A}\|_{2,\infty}\|\bV_{B}\|_{\fro} \wedge \|\bV_{A}\|_{\fro}\|\bV_{B}\|_{2,\infty} \big) \big(\|\bW_{A}\|_{2,\infty}\|\bW_{B}\|_{\fro} \wedge \|\bW_{A}\|_{\fro}\|\bW_{B}\|_{2,\infty} \big).
\end{align*}
\end{itemize}
The proof is complete by combining the above two bounds.

\subsection{Proof of spectral initialization (Lemma~\ref{lemma:init_TC})}
In view of Lemma~\ref{lemma:Procrustes}, we start by relating $\dist(\bF_{+},\bF_{\star})$ to $\|(\bU_{+},\bV_{+},\bW_{+})\bcdot\bcS_{+}-\bcX_{\star}\|_{\fro}$ as
\begin{align*}
\dist(\bF_{+},\bF_{\star}) \le (\sqrt{2}+1)^{3/2}\left\|(\bU_{+},\bV_{+},\bW_{+})\bcdot\bcS_{+}-\bcX_{\star}\right\|_{\fro}.
\end{align*}
With this bound in mind, it suffices to control $\left\Vert (\bU_{+},\bV_{+},\bW_{+})\bcdot\bcS_{+}-\bcX_{\star}\right\Vert _{\fro}$. 
To proceed, define $\bP_{U}\coloneqq\bU_{+}\bU_{+}^{\top}$ as the projection matrix onto the column space of $\bU_{+}$, $\bP_{U_\perp}\coloneqq\bI_{n_{1}}-\bP_{U}$ as its orthogonal complement, and define $\bP_{V},\bP_{V_\perp}, \bP_{W},\bP_{W_\perp}$ analogously. We have the decomposition
\begin{align*}
\bcX_{\star} &= (\bP_{U},\bP_{V},\bP_{W})\bcdot\bcX_{\star} + (\bP_{U_\perp},\bP_{V},\bP_{W})\bcdot\bcX_{\star} + (\bI_{n_1}, \bP_{V_\perp},\bP_{W})\bcdot\bcX_{\star} + (\bI_{n_1}, \bI_{n_2},\bP_{W_\perp})\bcdot\bcX_{\star}.
\end{align*}
Expand the following squared norm and use that the four terms are mutually orthogonal to see
\begin{align}
& \left\|(\bU_{+},\bV_{+},\bW_{+})\bcdot\bcS_{+}-\bcX_{\star}\right\|_{\fro}^{2} = \left\|(\bP_{U},\bP_{V},\bP_{W})\bcdot(p^{-1}\bcY)-\bcX_{\star}\right\|_{\fro}^{2} \nonumber\\
&\quad = \big\|(\bP_{U},\bP_{V},\bP_{W})\bcdot(p^{-1}\bcY-\bcX_{\star}) - (\bP_{U_\perp},\bP_{V},\bP_{W})\bcdot\bcX_{\star} - (\bI_{n_1},\bP_{V_\perp},\bP_{W})\bcdot\bcX_{\star} - (\bI_{n_1},\bI_{n_2},\bP_{W_\perp})\bcdot\bcX_{\star}\big\|_{\fro}^{2} \nonumber\\
&\quad = \left\|(\bP_{U},\bP_{V},\bP_{W})\bcdot(p^{-1}\bcY-\bcX_{\star})\right\|_{\fro}^{2} + \left\|(\bP_{U_\perp},\bP_{V},\bP_{W})\bcdot\bcX_{\star}\right\|_{\fro}^{2} + \left\|(\bI_{n_1}, \bP_{V_\perp},\bP_{W})\bcdot\bcX_{\star}\right\|_{\fro}^{2} \nonumber \\
&\quad\qquad + \left\|(\bI_{n_1},\bI_{n_2},\bP_{W_\perp})\bcdot\bcX_{\star}\right\|_{\fro}^{2} \nonumber\\
&\quad \le \left\|(\bP_{U},\bP_{V},\bP_{W})\bcdot(p^{-1}\bcY-\bcX_{\star})\right\|_{\fro}^{2} + \left\|\bP_{U_\perp}\cM_{1}(\bcX_{\star})\right\|_{\fro}^{2} + \left\|\bP_{V_\perp}\cM_{2}(\bcX_{\star})\right\|_{\fro}^{2} + \left\|\bP_{W_\perp}\cM_{3}(\bcX_{\star})\right\|_{\fro}^{2}.\label{eq:TC_init_expand}
\end{align}
We next control the terms in \eqref{eq:TC_init_expand} one by one.

\paragraph{Bounding $\left\|(\bP_{U},\bP_{V},\bP_{W})\bcdot(\bcY-\bcX_{\star})\right\|_{\fro}$.} For the first term in \eqref{eq:TC_init_expand}, since $(\bP_{U},\bP_{V},\bP_{W})\bcdot(p^{-1}\bcY-\bcX_{\star})$ has a multilinear rank of at most $\br$, applying the relation \eqref{eq:tensor_spec_frob} leads to
\begin{align*}
\left\|(\bP_{U},\bP_{V},\bP_{W})\bcdot(p^{-1}\bcY-\bcX_{\star})\right\|_{\fro} \le r \left\|(\bP_{U},\bP_{V},\bP_{W})\bcdot(p^{-1}\bcY-\bcX_{\star})\right\|_{\op} \le r \left\|(p^{-1}\cP_{\Omega}-\cI)(\bcX_{\star})\right\|_{\op}.
\end{align*}
Therefore, it comes down to control $\left\|(p^{-1}\cP_{\Omega}-\cI)(\bcX_{\star})\right\|_{\op}$. Lemma~\ref{lemma:P_Omega_fixed} tells with overwhelming probability that
\begin{align*}
\left\|(p^{-1}\cP_{\Omega}-\cI)(\bcX_{\star})\right\|_{\op}  & \lesssim \left(p^{-1} \log^{3} n \|\bcX_{\star}\|_{\infty} + \sqrt{p^{-1} \log^{5} n} \max_{k=1,2,3}\|\cM_{k}(\bcX_{\star})^{\top}\|_{2,\infty}\right) \\
& \lesssim \left(\frac{\mu^{3/2} r^{3/2} \log^{3} n }{p\sqrt{n_1n_2n_3}} + \sqrt{\frac{n\mu^2 r^2\log^{5} n}{p n_1n_2n_3}}\right)\sigma_{\max}(\bX_{\star}),
\end{align*}
where the second line follows from the following relations in view of the incoherence property of $\bcX_{\star}$ (cf.~Definition~\ref{def:mu}):
\begin{align}
\begin{split}
\|\bcX_{\star}\|_{\infty} &\le \sigma_{\max}(\bcX_{\star})\|\bU_{\star}\|_{2,\infty}\|\bV_{\star}\|_{2,\infty}\|\bW_{\star}\|_{2,\infty}  \le \sigma_{\max}(\bcX_{\star})  \sqrt{\frac{\mu^3 r^{3}}{n_1n_2n_3}}; \\ 
\|\cM_{1}(\bcX_{\star})^{\top}\|_{2,\infty} & \le \|\bU_{\star}\cM_{1}(\bcS_{\star})\|_{\op}\|\bW_{\star}\|_{2,\infty}\|\bV_{\star}\|_{2,\infty} \le \sigma_{\max}(\bcX_{\star}) \sqrt{\frac{\mu^2 r^2}{n_2n_3}} ; \\
\|\cM_{2}(\bcX_{\star})^{\top}\|_{2,\infty} & \le \|\bV_{\star}\cM_{2}(\bcS_{\star})\|_{\op}\|\bW_{\star}\|_{2,\infty}\|\bU_{\star}\|_{2,\infty} \le \sigma_{\max}(\bcX_{\star}) \sqrt{\frac{\mu^2 r^2}{n_1n_3}}; \\
\|\cM_{3}(\bcX_{\star})^{\top}\|_{2,\infty} & \le \|\bW_{\star}\cM_{3}(\bcS_{\star})\|_{\op}\|\bV_{\star}\|_{2,\infty}\|\bU_{\star}\|_{2,\infty} \le \sigma_{\max}(\bcX_{\star}) \sqrt{\frac{\mu^2 r^2}{n_1n_2}}.
\end{split}\label{eq:X_star_incoherence}
\end{align}
In total, the first term in \eqref{eq:TC_init_expand} is bounded by
\begin{align*}
\left\|(\bP_{U},\bP_{V},\bP_{W})\bcdot(p^{-1}\bcY-\bcX_{\star})\right\|_{\fro} &\lesssim \left(\frac{\mu^{3/2} r^{3/2} \log^{3} n}{p\sqrt{n_1n_2n_3}} + \sqrt{\frac{n\mu^2 r^2\log^{5} n}{p n_1n_2n_3}}\right) r\kappa\sigma_{\min}(\bX_{\star}).
\end{align*}

\paragraph{Bounding $\left\|\bP_{U_\perp}\cM_{1}(\bcX_{\star})\right\|_{\fro}$. } For the second term in \eqref{eq:TC_init_expand}, first bound it by 
\begin{align*}
\left\|\bP_{U_\perp}\cM_{1}(\bcX_{\star})\right\|_{\fro} \le \frac{\sqrt{r_1}}{\sigma_{\min}(\bcX_{\star})} \left\|\bP_{U_\perp}\cM_{1}(\bcX_{\star})\cM_{1}(\bcX_{\star})^{\top}\right\|_{\op},
\end{align*}
where we use the facts that $\bP_{U_\perp}\cM_{1}(\bcX_{\star})$ has rank at most $r_1$ and $\|\bA\bB\|_{\op}\ge\|\bA\|_{\op}\sigma_{\min}(\bB)$. For notation simplicity, we abbreviate
\begin{align*}
\bG\coloneqq \Poffdiag(p^{-2}\cM_{1}(\bcY)\cM_{1}(\bcY)^{\top}),\quad\mbox{and}\quad\bG_{\star}\coloneqq \cM_{1}(\bcX_{\star})\cM_{1}(\bcX_{\star})^{\top}.
\end{align*}
Invoke Lemma~\ref{lemma:P_Omega_quad} together with incoherence conditions \eqref{eq:X_star_incoherence} as well as 
\begin{align*}
\|\cM_{1}(\bcX_{\star})\|_{2,\infty} \le \|\bU_{\star}\|_{2,\infty}\left\|\cM_{1}(\bcS_{\star})(\bW_{\star}\otimes\bV_{\star})^{\top}\right\|_{\op} \le \sigma_{\max}(\bcX_{\star})\sqrt{\frac{\mu r_1}{n_1}}
\end{align*}
to conclude with overwhelming probability that
\begin{align*}
\left\|\bG - \bG_{\star}\right\|_{\op} \lesssim \left(\frac{\mu^{3/2} r^{3/2} \sqrt{\log n}}{p\sqrt{n_1n_2n_3}} + \sqrt{\frac{n\mu^2 r^{2}\log n}{p n_1n_2n_3}} + \frac{\mu^{3} r^{3} \log^{3} n}{p^2 n_1n_2n_3} + \frac{n\mu^2 r^{2}\log^{2} n}{p n_1n_2n_3} + \frac{\mu r_1}{n_1} \right) \sigma_{\max}^{2}(\bcX_{\star}).
\end{align*}
Under the conditions $n_1\gtrsim\epsilon_{0}^{-1}\mu r_1^{3/2}\kappa^2$ and 
\begin{align*}
pn_1n_2n_3 \gtrsim \epsilon_{0}^{-1}\sqrt{n_1n_2n_3}\mu^{3/2}r^{5/2}\kappa^{2}\log^{3} n + \epsilon_{0}^{-2}n\mu^{2} r^{4}\kappa^{4}\log^{5} n
\end{align*}
for some small constant $\epsilon_0 > 0$, we have $\|\bG-\bG_{\star}\|_{\op}\le \epsilon_{0}\sigma_{\min}^{2}(\bcX_{\star})$, which implies that $\bG$ is positive semi-definite, and therefore $\|\bP_{U_\perp}\bG\|_{\op}=\sigma_{r_1+1}(\bG)$. By the triangle inequality, we obtain
\begin{align*}
\left\|\bP_{U_\perp}\bG_{\star}\right\|_{\op} &\le \left\|\bP_{U_\perp}\left(\bG - \bG_{\star}\right)\right\|_{\op} + \left\|\bP_{U_\perp}\bG\right\|_{\op}  \le \left\|\bG - \bG_{\star}\right\|_{\op} + \sigma_{r_1+1}\left(\bG\right) \\
&\le \left\|\bG - \bG_{\star}\right\|_{\op} + \sigma_{r_1+1}\left(\bG_{\star}\right) + \left\|\bG - \bG_{\star}\right\|_{\op} = 2\left\|\bG - \bG_{\star}\right\|_{\op},
\end{align*}
where the second line follows from Weyl's inequality and that $\bG_{\star}$ has rank $r_1$. In total, the second term of~\eqref{eq:TC_init_expand} is bounded by
\begin{align*}
\left\|\bP_{U_\perp}\cM_{1}(\bcX_{\star})\right\|_{\fro} &\le \frac{2\sqrt{r_1}}{\sigma_{\min}(\bcX_{\star})}\|\bG-\bG_{\star}\|_{\op} \\ 
 & \lesssim \left(\frac{\mu^{3/2} r^{2} \sqrt{\log n}}{p\sqrt{n_1n_2n_3}} + \sqrt{\frac{n\mu^2 r^{3}\log n}{p n_1n_2n_3}} + \frac{\mu^{3} r^{7/2} \log^{3} n}{p^2 n_1n_2n_3} + \frac{n\mu^2 r^{5/2} \log^{2} n}{p n_1n_2n_3} + \frac{\mu r_1^{3/2}}{n_1} \right)\kappa^{2}\sigma_{\min}(\bcX_{\star}).
\end{align*}

\paragraph{Completing the proof.} The third and fourth terms in \eqref{eq:TC_init_expand} can be bounded similarly. In all, we conclude that
\begin{align*}
\dist(\bF_{+},\bF_{\star}) \le (\sqrt{2}+1)^{3/2}\left\|(\bU_{+},\bV_{+},\bW_{+})\bcdot\bcS_{+}-\bcX_{\star}\right\|_{\fro} \le \epsilon_{0}\sigma_{\min}(\bcX_{\star}).
\end{align*}

\subsection{Proof of local convergence (Lemma~\ref{lemma:contraction_TC})}

Define the event $\cE$ as the intersection of the events that Lemmas~\ref{lemma:P_Omega_tangent} and \ref{lemma:P_Omega_2inf} hold, which happens with overwhelming probability. The rest of the proof is then performed under the event that $\cE$ holds. 

Given that $\dist(\bF_{t},\bF_{\star})\le\epsilon\sigma_{\min}(\bcX_{\star})$, the conclusion $\|(\bU_{t},\bV_{t},\bW_{t})\bcdot\bcS_{t}-\bcX_{\star}\|_{\fro}\le3\dist(\bF_{t},\bF_{\star})$ follows from the relation~\eqref{eq:tensor2factors} in Lemma~\ref{lemma:perturb_bounds}. 
As in the proof of Theorem~\ref{thm:TF}, we reuse the notations in \eqref{eq:short_notations} and \eqref{eq:additional_notation_aligned}. By the definition of $\dist(\bF_{t+},\bF_{\star})$, where $\bF_{t+}$ is the update before projection, one has
\begin{align}
\dist^2(\bF_{t+},\bF_{\star}) &\le \left\|(\bU_{t+}\bQ_{t,1}-\bU_{\star})\bSigma_{\star,1}\right\|_{\fro}^{2}+\left\|(\bV_{t+}\bQ_{t,2}-\bV_{\star})\bSigma_{\star,2}\right\|_{\fro}^{2}+\left\|(\bW_{t+}\bQ_{t,3}-\bW_{\star})\bSigma_{\star,3}\right\|_{\fro}^{2} \nonumber\\
&\qquad + \left\|(\bQ_{t,1}^{-1},\bQ_{t,2}^{-1},\bQ_{t,3}^{-1})\bcdot\bcS_{t+}-\bcS_{\star}\right\|_{\fro}^2. \label{eq:TC_expand}
\end{align}
In the sequel, we shall bound each square on the right hand side of equation~\eqref{eq:TC_expand} separately. After a long journey of computation, the final result is
\begin{align}
\dist^{2}(\bF_{t+},\bF_{\star}) &\le (1-\eta)^2\left(\left\|\bDelta_{U}\bSigma_{\star,1}\right\|_{\fro}^{2}+\left\|\bDelta_{V}\bSigma_{\star,2}\right\|_{\fro}^{2}+\left\|\bDelta_{W}\bSigma_{\star,3}\right\|_{\fro}^{2}+\|\bDelta_{\cS}\|_{\fro}^2\right) \nonumber\\
 &\qquad - \eta(2-5\eta)\left\|\bcT_{U} + \bcT_{V} + \bcT_{W}\right\|_{\fro}^2 - \eta(2-5\eta)\left(\left\|\bD_{U}\right\|_{\fro}^{2} + \left\|\bD_{V}\right\|_{\fro}^{2} + \left\|\bD_{W}\right\|_{\fro}^{2}\right) \nonumber\\
 &\qquad + 2\eta(1-\eta) C(\epsilon+\delta + \delta^{2}) \dist^2(\bF_{t},\bF_{\star}) + \eta^2 C (\epsilon+\delta+\delta^{2})\dist^2(\bF_{t},\bF_{\star}), \label{eq:TC_bound}
\end{align}
where $C>1$ is some universal constant, and $\delta$ is defined as
\begin{align} \label{eq:delta_Def}
\delta \coloneqq C_T \sqrt{\frac{n\mu^2 r^2 \log n}{p n_1 n_2 n_3}} + C_Y\left(p^{-1}\log^3 n + \sqrt{p^{-1} n \log^5 n}\right) \sqrt{\frac{\mu^3 r^4}{n_1n_2n_3}} C_B^{3}\kappa^{3}.
\end{align}
Under the condition 
\begin{align*}
pn_1n_2n_3 \gtrsim \sqrt{n_1n_2n_3}\mu^{3/2} r^{2} \kappa^{3} \log^{3} n + n\mu^{3} r^{4} \kappa^{6} \log^{5} n,
\end{align*}
$\delta$ is a sufficiently small constant. As long as $\eta \le 2/5$ and $\epsilon$ is small, one has 
$\dist(\bF_{t+},\bF_{\star}) \le (1-0.6\eta)\dist(\bF_{t},\bF_{\star})$. Finally Lemma~\ref{lemma:scaled_proj} implies $\dist(\bF_{t+1},\bF_{\star})\le \dist(\bF_{t+},\bF_{\star}) \le (1-0.6\eta)\dist(\bF_{t},\bF_{\star})$ and the incoherence condition.

It then boils down to expanding and bounding the four terms in \eqref{eq:TC_expand}. As before, we omit the control
of the terms pertaining to $\bm{V}$ and $\bm{W}$.

\subsubsection{Bounding the term related to $\bU$}
The first term in \eqref{eq:TC_expand} is related to
\begin{align*}
 (\bU_{t+}\bQ_{t,1}-\bU_{\star})\bSigma_{\star,1} &= \Big(\bU-\eta\cM_{1}\left(p^{-1}\cP_{\Omega}((\bU,\bV,\bW)\bcdot\bcS-\bcX_{\star})\right)\breve{\bU}(\breve{\bU}^{\top}\breve{\bU})^{-1}-\bU_{\star}\Big)\bSigma_{\star,1} \\
 &= (1-\eta)\bDelta_{U}\bSigma_{\star,1} - \eta\bU_{\star}(\breve{\bU}-\breve{\bU}_{\star})^{\top}\breve{\bU}(\breve{\bU}^{\top}\breve{\bU})^{-1}\bSigma_{\star,1} \\
 &\qquad - \eta\cM_{1}\left((p^{-1}\cP_{\Omega}-\cI)((\bU,\bV,\bW)\bcdot\bcS-\bcX_{\star})\right)\breve{\bU}(\breve{\bU}^{\top}\breve{\bU})^{-1}\bSigma_{\star,1}.
\end{align*}
Take the squared norm of both sides to reach
\begin{align*}
 & \left\|(\bU_{t+}\bQ_{t,1}-\bU_{\star})\bSigma_{\star,1}\right\|_{\fro}^{2} = \underbrace{\left\|(1-\eta)\bDelta_{U}\bSigma_{\star,1} - \eta\bU_{\star}(\breve{\bU}-\breve{\bU}_{\star})^{\top}\breve{\bU}(\breve{\bU}^{\top}\breve{\bU})^{-1}\bSigma_{\star,1}\right\|_{\fro}^{2}}_{\eqqcolon\mfk{P}_{U}^{\main}} \\
 &\qquad - 2\eta(1-\eta)\underbrace{\left\langle\bDelta_{U}\bSigma_{\star,1}, \cM_{1}\left((p^{-1}\cP_{\Omega}-\cI)((\bU,\bV,\bW)\bcdot\bcS-\bcX_{\star})\right)\breve{\bU}(\breve{\bU}^{\top}\breve{\bU})^{-1}\bSigma_{\star,1}\right\rangle}_{\eqqcolon\mfk{P}_{U}^{\ptb,1}} \\
 &\qquad + 2\eta^{2}\underbrace{\left\langle\bU_{\star}(\breve{\bU}-\breve{\bU}_{\star})^{\top}\breve{\bU}(\breve{\bU}^{\top}\breve{\bU})^{-1}\bSigma_{\star,1}, \cM_{1}\left((p^{-1}\cP_{\Omega}-\cI)((\bU,\bV,\bW)\bcdot\bcS-\bcX_{\star})\right)\breve{\bU}(\breve{\bU}^{\top}\breve{\bU})^{-1}\bSigma_{\star,1}\right\rangle}_{\eqqcolon\mfk{P}_{U}^{\ptb,2}} \\
 &\qquad + \eta^{2}\underbrace{\left\|\cM_{1}\left((p^{-1}\cP_{\Omega}-\cI)((\bU,\bV,\bW)\bcdot\bcS-\bcX_{\star})\right)\breve{\bU}(\breve{\bU}^{\top}\breve{\bU})^{-1}\bSigma_{\star,1}\right\|_{\fro}^{2}}_{\eqqcolon\mfk{P}_{U}^{\ptb,3}}.
\end{align*}
As before, the main term $\mfk{P}_{U}^{\main}$ has been handled in the tensor factorization problem in Section~\ref{sec:proof_TF}; see \eqref{eq:first_square} and the bound~\eqref{eq:U-bound}. 
Hence we shall focus on the perturbation terms.
\paragraph{Step 1: bounding $\mfk{P}_{U}^{\ptb,1}$.}
First, rewrite $\mfk{P}_{U}^{\ptb,1}$ as the inner product in the tensor space:
\begin{align*}
\mfk{P}_{U}^{\ptb,1} = \left\langle\left(\bDelta_{U}\bSigma_{\star,1}^2(\breve{\bU}^{\top}\breve{\bU})^{-1}, \bV,\bW\right)\bcdot\bcS, (p^{-1}\cP_{\Omega}-\cI)((\bU,\bV,\bW)\bcdot\bcS-\bcX_{\star})\right\rangle.
\end{align*}
Apply the decomposition 
\begin{align}
(\bU,\bV,\bW)\bcdot\bcS-\bcX_{\star} &= (\bU,\bDelta_{V},\bW)\bcdot\bcS + (\bU,\bV_{\star},\bDelta_{W})\bcdot\bcS + (\bU,\bV_{\star},\bW_{\star})\bcdot\bcS - (\bU_{\star},\bV_{\star},\bW_{\star})\bcdot\bcS_{\star} \nonumber \\
&= (\bU,\bDelta_{V},\bW)\bcdot\bcS + (\bU,\bV_{\star},\bDelta_{W})\bcdot\bcS + (\bU,\bV_{\star},\bW_{\star})\bcdot\bDelta_{\cS} + (\bDelta_{U},\bV_{\star},\bW_{\star})\bcdot\bcS_{\star} \label{eq:decomp_T_tangent}
\end{align} 
to further expand $\mfk{P}_{U}^{\ptb,1}$ as
\begin{align*}
\mfk{P}_{U}^{\ptb,1} &= \underbrace{\left\langle\left(\bDelta_{U}\bSigma_{\star,1}^2(\breve{\bU}^{\top}\breve{\bU})^{-1}, \bV_{\star},\bW_{\star}\right)\bcdot\bcS, (p^{-1}\cP_{\Omega}-\cI)\left((\bU,\bV_{\star},\bW_{\star})\bcdot\bDelta_{\cS} + (\bDelta_{U},\bV_{\star},\bW_{\star})\bcdot\bcS_{\star}\right)\right\rangle}_{\eqqcolon \mfk{P}_{U}^{\ptb,1,1}} \\
&\quad + \underbrace{\begin{aligned}\Big\langle\left(\bDelta_{U}\bSigma_{\star,1}^2(\breve{\bU}^{\top}\breve{\bU})^{-1}, \bDelta_{V},\bW\right)\bcdot\bcS  + \left(\bDelta_{U}\bSigma_{\star,1}^2(\breve{\bU}^{\top}\breve{\bU})^{-1}, \bV_{\star},\bDelta_{W}\right)\bcdot\bcS, \\ 
(p^{-1}\cP_{\Omega}-\cI)\left((\bU,\bV_{\star},\bW_{\star})\bcdot\bcS - (\bU_{\star},\bV_{\star},\bW_{\star})\bcdot\bcS_{\star}\right)\Big\rangle\end{aligned}}_{\eqqcolon\mfk{P}_{U}^{\ptb,1,2}} \\
&\quad + \underbrace{\left\langle\left(\bDelta_{U}\bSigma_{\star,1}^2(\breve{\bU}^{\top}\breve{\bU})^{-1}, \bV,\bW\right)\bcdot\bcS, (p^{-1}\cP_{\Omega}-\cI) \left((\bU,\bDelta_{V},\bW)\bcdot\bcS + (\bU,\bV_{\star},\bDelta_{W})\bcdot\bcS\right)\right\rangle}_{\eqqcolon\mfk{P}_{U}^{\ptb,1,3}}.
\end{align*}
We shall bound each term in the sequel.
\begin{itemize}
\item For the first term $\mfk{P}_{U}^{\ptb,1,1}$, we resort to Lemma~\ref{lemma:P_Omega_tangent}, which leads to
\begin{align*}
|\mfk{P}_{U}^{\ptb,1,1}| &\le C_T\sqrt{\frac{n\mu^2 r^2 \log n}{p n_1 n_2 n_3}}\left\|\left(\bDelta_{U}\bSigma_{\star,1}^2(\breve{\bU}^{\top}\breve{\bU})^{-1}, \bV_{\star},\bW_{\star}\right)\bcdot\bcS\right\|_{\fro} \left\|(\bU,\bV_{\star},\bW_{\star})\bcdot\bDelta_{\cS} + (\bDelta_{U},\bV_{\star},\bW_{\star})\bcdot\bcS_{\star}\right\|_{\fro}. 
\end{align*}
Further use \eqref{eq:TC_SRinv} to bound that
\begin{align*}
\left\|\left(\bDelta_{U}\bSigma_{\star,1}^2(\breve{\bU}^{\top}\breve{\bU})^{-1}, \bV_{\star},\bW_{\star}\right)\bcdot\bcS\right\|_{\fro} &= \left\|\bDelta_{U}\bSigma_{\star,1}^2(\breve{\bU}^{\top}\breve{\bU})^{-1}\cM_{1}(\bcS)(\bW_{\star}\otimes\bV_{\star})^{\top}\right\|_{\fro} \\
&\le \|\bDelta_{U}\bSigma_{\star,1}\|_{\fro}\left\|\bSigma_{\star,1}(\breve{\bU}^{\top}\breve{\bU})^{-1}\cM_{1}(\bcS)\right\|_{\op}  \\
&\le \|\bDelta_{U}\bSigma_{\star,1}\|_{\fro}(1-\epsilon)^{-5},
\end{align*}
and that
\begin{align*}
\|(\bU,\bV_{\star},\bW_{\star})\bcdot\bDelta_{\cS}\|_{\fro} &\le \|\bU\cM_{1}(\bDelta_{\cS})\|_{\fro} \le (1+\epsilon)\|\bDelta_{\cS}\|_{\fro}; \\
\|(\bDelta_{U},\bV_{\star},\bW_{\star})\bcdot\bcS_{\star}\|_{\fro} &\le \|\bDelta_{U}\bSigma_{\star,1}\|_{\fro}. 
\end{align*}
Combine the preceding bounds to see
\begin{align*}
|\mfk{P}_{U}^{\ptb,1,1}| &\le C_T\sqrt{\frac{n\mu^2 r^2 \log n}{p n_1 n_2 n_3}} \frac{\|\bDelta_{U}\bSigma_{\star,1}\|_{\fro}}{(1-\epsilon)^{5}}\left(\|\bDelta_{U}\bSigma_{\star,1}\|_{\fro} + (1+\epsilon)\|\bDelta_{\cS}\|_{\fro}\right).
\end{align*}

\item For the second term $\mfk{P}_{U}^{\ptb,1,2}$, our main hammer is Lemma~\ref{lemma:P_Omega_2inf}, which implies
\begin{align*}
|\mfk{P}_{U}^{\ptb,1,2}| &\le C_Y \left(p^{-1}\log^3 n + \sqrt{p^{-1} n \log^5 n}\right)\left\|\bDelta_{U}\bSigma_{\star,1}^{2}(\breve{\bU}^{\top}\breve{\bU})^{-1}\cM_{1}(\bcS)\right\|_{\fro} \\
&\qquad \left(\left\|\bU\cM_{1}(\bcS)\right\|_{2,\infty} + \left\|\bU_{\star}\cM_{1}(\bcS_{\star})\right\|_{2,\infty}\right)\left(\|\bDelta_{V}\|_{\fro}\|\bW\|_{\fro} + \|\bV_{\star}\|_{\fro}\|\bDelta_{W}\|_{\fro}\right)\|\bV_{\star}\|_{2,\infty}\|\bW_{\star}\|_{2,\infty}.
\end{align*}
Use results in Lemma~\ref{lemma:TC_perturb_bounds}, together with the bounds
\begin{align*}
\|\bDelta_{V}\|_{\fro} &\le \frac{\|\bDelta_{V}\bSigma_{\star,2}\|_{\fro}}{\sigma_{\min}(\bSigma_{\star,2})} \le \frac{\|\bDelta_{V}\bSigma_{\star,2}\|_{\fro}}{\sigma_{\min}(\bcX_{\star})}; &\qquad \|\bDelta_{W}\|_{\fro} &\le \frac{\|\bDelta_{W}\bSigma_{\star,3}\|_{\fro}}{\sigma_{\min}(\bcX_{\star})}; \\
\|\bW\|_{\fro} &\le \sqrt{r_3}\|\bW\|_{\op} \le \sqrt{r_3}(1+\epsilon); \qquad& \|\bV_{\star}\|_{\fro} &= \sqrt{r_2}; \\
\|\bU_{\star}\cM_{1}(\bS_{\star})\|_{2,\infty} &\le \|\bU_{\star}\|_{2,\infty}\|\cM_{1}(\bS_{\star})\|_{\op} \le \sqrt{\frac{\mu r}{n_1}}\sigma_{\max}(\bcX_{\star}); & \|\bV_{\star}\|_{2,\infty} &\le \sqrt{\frac{\mu r}{n_2}}; \qquad \|\bW_{\star}\|_{2,\infty} \le \sqrt{\frac{\mu r}{n_3}},
\end{align*}
to arrive at the conclusion that 
\begin{align*}
|\mfk{P}_{U}^{\ptb,1,2}| &\le C_Y \left(p^{-1}\log^3 n + \sqrt{p^{-1} n \log^5 n}\right) \frac{\|\bDelta_{U}\bSigma_{\star,1}\|_{\fro}}{(1-\epsilon)^{5}} \left((1-\epsilon)^{-2}C_B+1\right)\sqrt{\frac{\mu r}{n_1}}\sigma_{\max}(\bcX_{\star}) \\
&\qquad \left(\frac{\|\bDelta_{V}\bSigma_{\star,2}\|_{\fro}}{\sigma_{\min}(\bcX_{\star})}\sqrt{r}(1+\epsilon) + \sqrt{r}\frac{\|\bDelta_{W}\bSigma_{\star,3}\|_{\fro}}{\sigma_{\min}(\bcX_{\star})}\right)\sqrt{\frac{\mu r}{n_2}} \sqrt{\frac{\mu r}{n_3}} \\
&= C_Y \left(p^{-1}\log^3 n + \sqrt{p^{-1} n \log^5 n}\right)\sqrt{\frac{\mu^3 r^4}{n_1n_2n_3}}\frac{(1-\epsilon)^{-2}C_B+1}{(1-\epsilon)^{5}}\kappa \\
&\qquad \|\bDelta_{U}\bSigma_{\star,1}\|_{\fro}\left((1+\epsilon)\|\bDelta_{V}\bSigma_{\star,2}\|_{\fro} + \|\bDelta_{W}\bSigma_{\star,3}\|_{\fro}\right).
\end{align*}

\item Repeat similar arguments, we can obtain the bound on $\mfk{P}_{U}^{\ptb,1,3}$:
\begin{align*}
|\mfk{P}_{U}^{\ptb,1,3}| &\le C_Y \left(p^{-1}\log^3 n + \sqrt{p^{-1} n \log^5 n}\right)\left\|\bDelta_{U}\bSigma_{\star,1}^{2}(\breve{\bU}^{\top}\breve{\bU})^{-1}\cM_{1}(\bcS)\right\|_{\fro} \|\bU\cM_{1}(\bcS)\|_{2,\infty} \\
&\qquad \|\bV\|_{2,\infty}\|\bW\|_{2,\infty} (\|\bDelta_{V}\|_{\fro}\|\bW\|_{\fro} + \|\bV_{\star}\|_{\fro}\|\bDelta_{W}\|_{\fro}) \\
&\le C_Y \left(p^{-1}\log^3 n + \sqrt{p^{-1} n \log^5 n}\right) \frac{\|\bDelta_{U}\bSigma_{\star,1}\|_{\fro}}{(1-\epsilon)^{5}} \frac{C_B}{(1-\epsilon)^{2}}\sqrt{\frac{\mu r}{n_1}}\sigma_{\max}(\bcX_{\star}) \\
&\qquad \frac{C_B\kappa}{(1-\epsilon)^3}\sqrt{\frac{\mu r}{n_2}} \frac{C_B\kappa}{(1-\epsilon)^3}\sqrt{\frac{\mu r}{n_3}} \left(\frac{\|\bDelta_{V}\bSigma_{\star,2}\|_{\fro}}{\sigma_{\min}(\bcX_{\star})}\sqrt{r}(1+\epsilon) + \sqrt{r}\frac{\|\bDelta_{W}\bSigma_{\star,3}\|_{\fro}}{\sigma_{\min}(\bcX_{\star})}\right) \\
&\le C_Y \left(p^{-1}\log^3 n + \sqrt{p^{-1} n \log^5 n}\right)\sqrt{\frac{\mu^3 r^4}{n_1n_2n_3}}\frac{C_B^{3}\kappa^{3}}{(1-\epsilon)^{13}} \\
&\qquad \|\bDelta_{U}\bSigma_{\star,1}\|_{\fro}\left((1+\epsilon)\|\bDelta_{V}\bSigma_{\star,2}\|_{\fro} + \|\bDelta_{W}\bSigma_{\star,3}\|_{\fro}\right).
\end{align*}
\end{itemize}
In total, we have
\begin{align*}
|\mfk{P}_{U}^{\ptb,1}| \le |\mfk{P}_{U}^{\ptb,1,1}| + |\mfk{P}_{U}^{\ptb,1,2}| + |\mfk{P}_{U}^{\ptb,1,3}| \lesssim \delta \dist^2(\bF_{t},\bF_{\star}),
\end{align*}
where we recall the definition of $\delta$ in \eqref{eq:delta_Def}.

\paragraph{Step 2: bounding $\mfk{P}_{U}^{\ptb,2}$.}
We begin by rewriting $\mfk{P}_{U}^{\ptb,2}$  as
\begin{align*}
\mfk{P}_{U}^{\ptb,2} &= \left\langle\left(\bU_{\star}(\breve{\bU}-\breve{\bU}_{\star})^{\top}\breve{\bU}(\breve{\bU}^{\top}\breve{\bU})^{-1}\bSigma_{\star,1}^{2}(\breve{\bU}^{\top}\breve{\bU})^{-1},\bV,\bW\right)\bcdot\bcS, (p^{-1}\cP_{\Omega}-\cI)((\bU,\bV,\bW)\bcdot\bcS-\bcX_{\star})\right\rangle.
\end{align*}
Compared to $\mfk{P}_{U}^{\ptb,1}$, the only difference is that the leading term $\bDelta_{U}\bSigma_{\star,1}$ in the first argument of the inner product is replaced by $\bU_{\star}(\breve{\bU}-\breve{\bU}_{\star})^{\top}\breve{\bU}(\breve{\bU}^{\top}\breve{\bU})^{-1}\bSigma_{\star,1}$. 
Note that 
\begin{align*}
\left\|\bU_{\star}(\breve{\bU}-\breve{\bU}_{\star})^{\top}\breve{\bU}(\breve{\bU}^{\top}\breve{\bU})^{-1}\bSigma_{\star,1}\right\|_{\fro} & \le \left\|\breve{\bU}-\breve{\bU}_{\star}\right\|_{\fro}\left\|\breve{\bU}(\breve{\bU}^{\top}\breve{\bU})^{-1}\bSigma_{\star,1}\right\|_{\fro} \\
 & \le \frac{1+\epsilon+\frac{1}{3}\epsilon^2}{(1-\epsilon)^3}\left(\|\bDelta_{V}\bSigma_{\star,2}\|_{\fro}+\|\bDelta_{W}\bSigma_{\star,3}\|_{\fro}+\|\bDelta_{\cS}\|_{\fro}\right).
\end{align*}
Omitting the somewhat tedious details, we can go through the same argument as bounding $\mfk{P}_{U}^{\ptb,1}$ and arrive at
\begin{align*}
|\mfk{P}_{U}^{\ptb,2}| &\le C_T\sqrt{\frac{n\mu^2 r^2 \log n}{p n_1 n_2 n_3}} \frac{1+\epsilon+\frac{1}{3}\epsilon^2}{(1-\epsilon)^{8}} \left(\|\bDelta_{V}\bSigma_{\star,2}\|_{\fro}+\|\bDelta_{W}\bSigma_{\star,3}\|_{\fro}+\|\bDelta_{\cS}\|_{\fro}\right)\left(\|\bDelta_{U}\bSigma_{\star,1}\|_{\fro} + (1+\epsilon)\|\bDelta_{\cS}\|_{\fro}\right) \\
&\qquad + C_Y \left(p^{-1}\log^3 n + \sqrt{p^{-1} n \log^5 n}\right)\sqrt{\frac{\mu^3 r^4}{n_1n_2n_3}}\frac{(1+\epsilon+\frac{1}{3}\epsilon^2)((1-\epsilon)^{-2}C_B+1)}{(1-\epsilon)^{8}}\kappa \\
&\qquad\qquad \left(\|\bDelta_{V}\bSigma_{\star,2}\|_{\fro}+\|\bDelta_{W}\bSigma_{\star,3}\|_{\fro}+\|\bDelta_{\cS}\|_{\fro}\right)\left((1+\epsilon)\|\bDelta_{V}\bSigma_{\star,2}\|_{\fro} + \|\bDelta_{W}\bSigma_{\star,3}\|_{\fro}\right) \\
&\qquad + C_Y \left(p^{-1}\log^3 n + \sqrt{p^{-1} n \log^5 n}\right)\sqrt{\frac{\mu^3 r^4}{n_1n_2n_3}}\frac{(1+\epsilon+\frac{1}{3}\epsilon^2)C_B^{3}\kappa^{3}}{(1-\epsilon)^{16}} \\
&\qquad\qquad  \left(\|\bDelta_{V}\bSigma_{\star,2}\|_{\fro}+\|\bDelta_{W}\bSigma_{\star,3}\|_{\fro}+\|\bDelta_{\cS}\|_{\fro}\right)\left((1+\epsilon)\|\bDelta_{V}\bSigma_{\star,2}\|_{\fro} + \|\bDelta_{W}\bSigma_{\star,3}\|_{\fro}\right) \\
&\lesssim \delta \dist^2(\bF_{t},\bF_{\star}).
\end{align*}

\paragraph{Step 3: bounding $\mfk{P}_{U}^{\ptb,3}$.}
Use the variational representation of the Frobenius norm to write
\begin{align*}
\sqrt{\mfk{P}_{U}^{\ptb,3}} = \left\langle\left(\widetilde{\bU}\bSigma_{\star,1}(\breve{\bU}^{\top}\breve{\bU})^{-1},\bV,\bW\right)\bcdot\bcS, (p^{-1}\cP_{\Omega}-\cI)((\bU,\bV,\bW)\bcdot\bcS-\bcX_{\star})\right\rangle
\end{align*}
for some $\widetilde{\bU}\in\RR^{n_1\times r_1}$ obeying $\|\widetilde{\bU}\|_{\fro}=1$. Repeat the same argument as bounding $\mfk{P}_{U}^{\ptb,1}$ with proper modifications to yield 
\begin{align*}
\sqrt{\mfk{P}_{U}^{\ptb,3}} &\le C_T\sqrt{\frac{n\mu^2 r^2 \log n}{p n_1 n_2 n_3}} (1-\epsilon)^{-5} \left(\|\bDelta_{U}\bSigma_{\star,1}\|_{\fro} + (1+\epsilon)\|\bDelta_{\cS}\|_{\fro}\right) \\
&\quad + C_Y \left(p^{-1}\log^3 n + \sqrt{p^{-1} n \log^5 n}\right)\sqrt{\frac{\mu^3 r^4}{n_1n_2n_3}}\frac{(1-\epsilon)^{-2}C_B+1}{(1-\epsilon)^{5}}\kappa \left((1+\epsilon)\|\bDelta_{V}\bSigma_{\star,2}\|_{\fro} + \|\bDelta_{W}\bSigma_{\star,3}\|_{\fro}\right) \\
&\quad + C_Y \left(p^{-1}\log^3 n + \sqrt{p^{-1} n \log^5 n}\right)\sqrt{\frac{\mu^3 r^4}{n_1n_2n_3}}\frac{C_B^{3}\kappa^{3}}{(1-\epsilon)^{13}} \left((1+\epsilon)\|\bDelta_{V}\bSigma_{\star,2}\|_{\fro} + \|\bDelta_{W}\bSigma_{\star,3}\|_{\fro}\right) \\
& \lesssim \delta \dist(\bF_{t},\bF_{\star}).
\end{align*}
Then take the square of both sides to see
\begin{align*}
\mfk{P}_{U}^{\ptb,3} \lesssim \delta^{2} \dist^{2}(\bF_{t},\bF_{\star}).
\end{align*}

\subsubsection{Bounding the term related to $\bcS$}
The last term of \eqref{eq:TC_expand} is related to
\begin{align*}
 &(\bQ_{t,1}^{-1},\bQ_{t,2}^{-1},\bQ_{t,3}^{-1})\bcdot\bcS_{t+}-\bcS_{\star}\\
 &\quad = \bcS - \eta\left((\bU^{\top}\bU)^{-1}\bU^{\top}, (\bV^{\top}\bV)^{-1}\bV^{\top}, (\bW^{\top}\bW)^{-1}\bW^{\top}\right)\bcdot p^{-1}\cP_{\Omega}\left((\bU,\bV,\bW)\bcdot\bcS-\bcX_{\star}\right) - \bcS_{\star} \\
 &\quad = (1-\eta)\bDelta_{\cS} - \eta \left((\bU^{\top}\bU)^{-1}\bU^{\top}, (\bV^{\top}\bV)^{-1}\bV^{\top}, (\bW^{\top}\bW)^{-1}\bW^{\top}\right)\bcdot\left((\bU,\bV,\bW)\bcdot\bcS_{\star}-\bcX_{\star}\right) \\
 &\qquad - \eta\left((\bU^{\top}\bU)^{-1}\bU^{\top}, (\bV^{\top}\bV)^{-1}\bV^{\top}, (\bW^{\top}\bW)^{-1}\bW^{\top}\right)\bcdot(p^{-1}\cP_{\Omega}-\cI)((\bU,\bV,\bW)\bcdot\bcS-\bcX_{\star}).
\end{align*}
Expand its squared norm to obtain
\begin{align*}
 & \left\|(\bQ_{t,1}^{-1},\bQ_{t,2}^{-1},\bQ_{t,3}^{-1})\bcdot\bcS_{t+}-\bcS_{\star}\right\|_{\fro}^2 \\
 &\quad = \underbrace{\left\|(1-\eta)\bDelta_{\cS} - \eta \left((\bU^{\top}\bU)^{-1}\bU^{\top}, (\bV^{\top}\bV)^{-1}\bV^{\top}, (\bW^{\top}\bW)^{-1}\bW^{\top}\right)\bcdot\left((\bU,\bV,\bW)\bcdot\bcS_{\star}-\bcX_{\star}\right)\right\|_{\fro}^2}_{ \eqqcolon\mfk{P}_{\cS}^{\main}} \\
 &\quad\quad - 2\eta(1-\eta)\underbrace{\left\langle\bDelta_{\cS}, \left((\bU^{\top}\bU)^{-1}\bU^{\top}, (\bV^{\top}\bV)^{-1}\bV^{\top}, (\bW^{\top}\bW)^{-1}\bW^{\top}\right)\bcdot(p^{-1}\cP_{\Omega}-\cI)((\bU,\bV,\bW)\bcdot\bcS-\bcX_{\star})\right\rangle}_{ \eqqcolon \mfk{P}_{\cS}^{\ptb,1}} \\
 &\quad\quad + 2\eta^2\underbrace{\begin{aligned} &\Big\langle\left((\bU^{\top}\bU)^{-1}\bU^{\top}, (\bV^{\top}\bV)^{-1}\bV^{\top}, (\bW^{\top}\bW)^{-1}\bW^{\top}\right)\bcdot\left((\bU,\bV,\bW)\bcdot\bcS_{\star}-\bcX_{\star}\right), \\
 &\quad\quad \left((\bU^{\top}\bU)^{-1}\bU^{\top}, (\bV^{\top}\bV)^{-1}\bV^{\top}, (\bW^{\top}\bW)^{-1}\bW^{\top}\right)\bcdot(p^{-1}\cP_{\Omega}-\cI)((\bU,\bV,\bW)\bcdot\bcS-\bcX_{\star}\Big\rangle \end{aligned}}_{ \eqqcolon \mfk{P}_{\cS}^{\ptb,2}} \\
 &\quad\quad + \eta^{2}\underbrace{\left\|\left((\bU^{\top}\bU)^{-1}\bU^{\top}, (\bV^{\top}\bV)^{-1}\bV^{\top}, (\bW^{\top}\bW)^{-1}\bW^{\top}\right)\bcdot(p^{-1}\cP_{\Omega}-\cI)((\bU,\bV,\bW)\bcdot\bcS-\bcX_{\star})\right\|_{\fro}^{2}}_{\eqqcolon \mfk{P}_{\cS}^{\ptb,3}}.
\end{align*}
Recall that the main term $\mfk{P}_{\cS}^{\main}$ has been controlled in Section~\ref{sec:proof_TF}; see \eqref{eq:last_square} and the bound~\eqref{eq:S-bound}. We therefore concentrate on the remaining perturbation terms.

\paragraph{Step 1: bounding $\mfk{P}_{\cS}^{\ptb,1}$.}
Write $\mfk{P}_{\cS}^{\ptb,1}$ as
\begin{align*}
\mfk{P}_{\cS}^{\ptb,1} = \left\langle\left(\bU(\bU^{\top}\bU)^{-1}, \bV(\bV^{\top}\bV)^{-1}, \bW(\bW^{\top}\bW)^{-1}\right)\bcdot\bDelta_{\cS}, (p^{-1}\cP_{\Omega}-\cI)((\bU,\bV,\bW)\bcdot\bcS-\bcX_{\star})\right\rangle.
\end{align*}
Use the decomposition \eqref{eq:decomp_T_tangent} to further obtain
\begin{align*}
\mfk{P}_{\cS}^{\ptb,1} &= \underbrace{\left\langle\left(\bU(\bU^{\top}\bU)^{-1}, \bV_{\star}(\bV^{\top}\bV)^{-1},\bW_{\star}(\bW^{\top}\bW)^{-1}\right)\bcdot\bDelta_{\cS}, (p^{-1}\cP_{\Omega}-\cI)\left((\bU,\bV_{\star},\bW_{\star})\bcdot\bDelta_{\cS} + (\bDelta_{U},\bV_{\star},\bW_{\star})\bcdot\bcS_{\star}\right)\right\rangle}_{ \eqqcolon\mfk{P}_{\cS}^{\ptb,1,1}} \\
&\quad + \underbrace{
\begin{aligned}\Big\langle\left(\bU(\bU^{\top}\bU)^{-1}, \bDelta_{V}(\bV^{\top}\bV)^{-1},\bW(\bW^{\top}\bW)^{-1}\right)\bcdot\bDelta_{\cS} + \left(\bU(\bU^{\top}\bU)^{-1}, \bV_{\star}(\bV^{\top}\bV)^{-1},\bDelta_{W}(\bW^{\top}\bW)^{-1}\right)\bcdot\bDelta_{\cS}, \\
(p^{-1}\cP_{\Omega}-\cI)\left((\bU,\bV_{\star},\bW_{\star})\bcdot\bcS - (\bU_{\star},\bV_{\star},\bW_{\star})\bcdot\bcS_{\star}\right)\Big\rangle\end{aligned}}_{\eqqcolon\mfk{P}_{\cS}^{\ptb,1,2}} \\
&\quad + \underbrace{\left\langle\left(\bU(\bU^{\top}\bU)^{-1}, \bV(\bV^{\top}\bV)^{-1}, \bW(\bW^{\top}\bW)^{-1}\right)\bcdot\bDelta_{\cS}, (p^{-1}\cP_{\Omega}-\cI) \left((\bU,\bDelta_{V},\bW)\bcdot\bcS + (\bU,\bV_{\star},\bDelta_{W})\bcdot\bcS\right)\right\rangle}_{\eqqcolon \mfk{P}_{\cS}^{\ptb,1,3}}.
\end{align*}
We then bound each term in sequel.
\begin{itemize}
\item Regarding the first term $\mfk{P}_{\cS}^{\ptb,1,1}$, we can apply Lemma~\ref{lemma:P_Omega_tangent} to see
\begin{align*}
|\mfk{P}_{\cS}^{\ptb,1,1}| &\le C_T\sqrt{\frac{n\mu^2 r^2 \log n}{p n_1 n_2 n_3}}\left\|\left(\bU(\bU^{\top}\bU)^{-1}, \bV_{\star}(\bV^{\top}\bV)^{-1},\bW_{\star}(\bW^{\top}\bW)^{-1}\right)\bcdot\bDelta_{\cS}\right\|_{\fro} \\
&\quad \left\|(\bU,\bV_{\star},\bW_{\star})\bcdot\bDelta_{\cS} + (\bDelta_{U},\bV_{\star},\bW_{\star})\bcdot\bcS_{\star}\right\|_{\fro}. 
\end{align*}
In addition, notice that
\begin{align*}
\left\|\left(\bU(\bU^{\top}\bU)^{-1}, \bV_{\star}(\bV^{\top}\bV)^{-1},\bW_{\star}(\bW^{\top}\bW)^{-1}\right)\bcdot\bDelta_{\cS}\right\|_{\fro} 
&\le \left\|\bU(\bU^{\top}\bU)^{-1}\right\|_{\op}\left\|(\bV^{\top}\bV)^{-1}\right\|_{\op}\left\|(\bW^{\top}\bW)^{-1}\right\|_{\op}\|\bDelta_{\cS}\|_{\fro} \\
&\le (1-\epsilon)^{-5}\|\bDelta_{\cS}\|_{\fro},
\end{align*}
which further implies 
\begin{align*}
|\mfk{P}_{\cS}^{\ptb,1,1}| &\le C_T\sqrt{\frac{n\mu^2 r^2 \log n}{p n_1 n_2 n_3}}(1-\epsilon)^{-5}\|\bDelta_{\cS}\|_{\fro}\left(\|\bDelta_{U}\bSigma_{\star,1}\|_{\fro} + (1+\epsilon)\|\bDelta_{\cS}\|_{\fro}\right).
\end{align*}
\item Now we turn to the second term $\mfk{P}_{\cS}^{\ptb,1,2}$, for which  Lemma~\ref{lemma:P_Omega_2inf} yields
\begin{align*}
|\mfk{P}_{\cS}^{\ptb,1,2}| &\le C_Y \left(p^{-1}\log^3 n + \sqrt{p^{-1} n \log^5 n}\right)\left\|\bU(\bU^{\top}\bU)^{-1}\cM_{1}(\bDelta_{\cS})\right\|_{\fro}\left(\left\|\bU\cM_{1}(\bcS)\right\|_{2,\infty} + \left\|\bU_{\star}\cM_{1}(\bcS_{\star})\right\|_{2,\infty}\right) \\
&\qquad \left(\left\|\bDelta_{V}(\bV^{\top}\bV)^{-1}\right\|_{\fro}\left\|\bW(\bW^{\top}\bW)^{-1}\right\|_{\fro} + \left\|\bV_{\star}(\bV^{\top}\bV)^{-1}\right\|_{\fro}\left\|\bDelta_{W}(\bW^{\top}\bW)^{-1}\right\|_{\fro}\right)\|\bV_{\star}\|_{2,\infty}\|\bW_{\star}\|_{2,\infty}.
\end{align*}
The results in Lemma~\ref{lemma:TC_perturb_bounds} together with the bounds
\begin{align*}
\left\|\bDelta_{V}(\bV^{\top}\bV)^{-1}\right\|_{\fro} &\le \|\bDelta_{V}\|_{\fro}\left\|(\bV^{\top}\bV)^{-1}\right\|_{\op} \le (1-\epsilon)^{-2}\|\bDelta_{V}\|_{\fro} \le \frac{\|\bDelta_{V}\bSigma_{\star,2}\|_{\fro}}{(1-\epsilon)^{2}\sigma_{\min}(\bcX_{\star})}; \\
\left\|\bW(\bW^{\top}\bW)^{-1}\right\|_{\fro} &\le \sqrt{r_3}\left\|\bW(\bW^{\top}\bW)^{-1}\right\|_{\op} \le \sqrt{r_3}(1-\epsilon)^{-1}; \\
\left\|\bV_{\star}(\bV^{\top}\bV)^{-1}\right\|_{\fro} &\le \|\bV_{\star}\|_{\fro}\left\|(\bV^{\top}\bV)^{-1}\right\|_{\op} \le \sqrt{r_2}(1-\epsilon)^{-2}; \\
\left\|\bDelta_{W}(\bW^{\top}\bW)^{-1}\right\|_{\fro} &\le \|\bDelta_{W}\|_{\fro}\left\|(\bW^{\top}\bW)^{-1}\right\|_{\op} \le \|\bDelta_{W}\|_{\fro}(1-\epsilon)^{-2} \le \frac{\|\bDelta_{W}\bSigma_{\star,3}\|_{\fro}}{(1-\epsilon)^{2}\sigma_{\min}(\bcX_{\star})},
\end{align*}
allow us to continue the bound 
\begin{align*}
|\mfk{P}_{\cS}^{\ptb,1,2}| &\le C_Y \left(p^{-1}\log^3 n + \sqrt{p^{-1} n \log^5 n}\right) \sqrt{\frac{\mu^3 r^4}{n_1n_2n_3}}\frac{(1-\epsilon)^{-2}C_B+1}{(1-\epsilon)^{5}}\kappa \|\bDelta_{\cS}\|_{\fro}\\
&\qquad\left((1-\epsilon)\|\bDelta_{V}\bSigma_{\star,2}\|_{\fro} + \|\bDelta_{W}\bSigma_{\star,3}\|_{\fro}\right).
\end{align*}

\item A similar strategy bounds $\mfk{P}_{\cS}^{\ptb,1,3}$ as
\begin{align*}
|\mfk{P}_{\cS}^{\ptb,1,3}| &\le C_Y \left(p^{-1}\log^3 n + \sqrt{p^{-1} n \log^5 n}\right)\left\|\bU(\bU^{\top}\bU)^{-1}\cM_{1}(\bDelta_{\cS})\right\|_{\fro} \left\|\bU\cM_{1}(\bcS)\right\|_{2,\infty} \\
&\qquad \left\|\bV(\bV^{\top}\bV)^{-1}\right\|_{2,\infty}\left\|\bW(\bW^{\top}\bW)^{-1}\right\|_{2,\infty}\left(\|\bDelta_{V}\|_{\fro}\|\bW\|_{\fro} + \|\bV_{\star}\|_{\fro}\|\bDelta_{W}\|_{\fro}\right).
\end{align*}
Further combine \eqref{eq:TC_U_2inf} and \eqref{eq:perturb_Uinv_2} to see
\begin{align*}
\left\|\bV(\bV^{\top}\bV)^{-1}\right\|_{2,\infty} \le \|\bV\|_{2,\infty}\left\|(\bV^{\top}\bV)^{-1}\right\|_{\op} \le (1-\epsilon)^{-5}C_B\sqrt{\frac{\mu r}{n_2}}\kappa; \\
\left\|\bW(\bW^{\top}\bW)^{-1}\right\|_{2,\infty} \le \|\bW\|_{2,\infty}\left\|(\bW^{\top}\bW)^{-1}\right\|_{\op} \le (1-\epsilon)^{-5}C_B\sqrt{\frac{\mu r}{n_3}}\kappa. 
\end{align*}
These taken collectively with the results in Lemma~\ref{lemma:TC_perturb_bounds} yield 
\begin{align*}
|\mfk{P}_{\cS}^{\ptb,1,3}| &\le C_Y \left(p^{-1}\log^3 n + \sqrt{p^{-1} n \log^5 n}\right) \sqrt{\frac{\mu^3 r^4}{n_1n_2n_3}}\frac{C_B^3\kappa^3}{(1-\epsilon)^{13}} \|\bDelta_{\cS}\|_{\fro}\left((1+\epsilon)\|\bDelta_{V}\bSigma_{\star,2}\|_{\fro} + \|\bDelta_{W}\bSigma_{\star,3}\|_{\fro}\right).
\end{align*}
\end{itemize}

In the end, we conclude that
\begin{align*}
|\mfk{P}_{\cS}^{\ptb,1}| \le |\mfk{P}_{\cS}^{\ptb,1,1}| + |\mfk{P}_{\cS}^{\ptb,1,2}| + |\mfk{P}_{\cS}^{\ptb,1,3}| \lesssim \delta \dist^2(\bF_{t},\bF_{\star}),
\end{align*}
where we recall the definition of $\delta$ in \eqref{eq:delta_Def}.

\paragraph{Step 2: bounding $\mfk{P}_{\cS}^{\ptb,2}$.}
Write $\mfk{P}_{\cS}^{\ptb,2}$ as
\begin{multline*}
\mfk{P}_{\cS}^{\ptb,2} = \Big\langle\left(\bU(\bU^{\top}\bU)^{-2}\bU^{\top}, \bV(\bV^{\top}\bV)^{-2}\bV^{\top}, \bW(\bW^{\top}\bW)^{-2}\bW^{\top}\right)\bcdot\left((\bU,\bV,\bW)\bcdot\bcS_{\star}-\bcX_{\star}\right),  \\
(p^{-1}\cP_{\Omega}-\cI)((\bU,\bV,\bW)\bcdot\bcS-\bcX_{\star})\Big\rangle.
\end{multline*}
Compared to $\mfk{P}_{\cS}^{\ptb,1}$, the only difference is that the quantity $\bDelta_{\cS}$ in the first argument of the inner product is replaced by 
\begin{align*}
\left((\bU^{\top}\bU)^{-1}\bU^{\top},(\bV^{\top}\bV)^{-1}\bV^{\top},(\bW^{\top}\bW)^{-1}\bW^{\top}\right)\bcdot\left((\bU,\bV,\bW)\bcdot\bcS_{\star}-\bcX_{\star}\right),
\end{align*}
whose Frobenius norm can be bounded by
\begin{align*}
&\left\|\left((\bU^{\top}\bU)^{-1}\bU^{\top},(\bV^{\top}\bV)^{-1}\bV^{\top},(\bW^{\top}\bW)^{-1}\bW^{\top}\right)\bcdot\left((\bU,\bV,\bW)\bcdot\bcS_{\star}-\bcX_{\star}\right)\right\|_{\fro} \\
&\qquad\le \left\|\bU(\bU^{\top}\bU)^{-1}\right\|_{\fro}\left\|\bV(\bV^{\top}\bV)^{-1}\right\|_{\fro}\left\|\bW(\bW^{\top}\bW)^{-1}\right\|_{\fro}\left\|(\bU,\bV,\bW)\bcdot\bcS_{\star}-\bcX_{\star}\right\|_{\fro} \\
&\qquad\le\frac{1+\epsilon+\frac{1}{3}\epsilon^{2}}{(1-\epsilon)^{3}}\left(\|\bDelta_{U}\bSigma_{\star,1}\|_{\fro}+\|\bDelta_{V}\bSigma_{\star,2}\|_{\fro}+\|\bDelta_{W}\bSigma_{\star,3}\|_{\fro}\right).
\end{align*}
We can then repeat the same argument as bounding $\mfk{P}_{\cS}^{\ptb,1}$ to obtain
\begin{align*}
|\mfk{P}_{\cS}^{\ptb,2}| \lesssim \delta \dist^2(\bF_{t},\bF_{\star}).
\end{align*}
For the sake of space, we omit the details.

\paragraph{Step 3: bounding $\mfk{P}_{\cS}^{\ptb,3}$.}
Use the variational representation of the Frobenius norm to write
\begin{align*}
\sqrt{\mfk{P}_{\cS}^{\ptb,3}} = \left\langle\left(\bU(\bU^{\top}\bU)^{-1},\bV(\bV^{\top}\bV)^{-1},\bW(\bW^{\top}\bW)^{-1}\right)\bcdot\widetilde{\bcS}, (p^{-1}\cP_{\Omega}-\cI)((\bU,\bV,\bW)\bcdot\bcS-\bcX_{\star})\right\rangle
\end{align*}
for some $\widetilde{\bcS}\in\RR^{n_1\times n_2\times n_3}$ obeying $\|\widetilde{\bcS}\|_{\fro}=1$. Repeating the same argument as bounding $\mfk{P}_{\cS}^{\ptb,1}$ with proper modifications to yield the bound 
\begin{align*}
\mfk{P}_{\cS}^{\ptb,3} \lesssim \delta^{2} \dist^2(\bF_{t},\bF_{\star})
\end{align*}
then complete the proof.


\section{Proof for Tensor Regression} \label{sec:proof_TR}

Before embarking on the proof, we state a useful lemma regarding TRIP
(cf.~Definition~\ref{def:TRIP}).

\begin{lemma}[{\cite[Lemma~E.7]{han2020optimal}}] \label{lemma:2r-TRIP}

Suppose that $\cA(\cdot)$ obeys the $2\br$-TRIP with a constant
$\delta_{2\br}$. Then for all $\bcX_{1},\bcX_{2}\in\RR^{n_{1}\times n_{2}\times n_{3}}$
of multilinear rank at most $\br$, one has 
\begin{align*}
\big|\langle\cA(\bcX_{1}),\cA(\bcX_{2})\rangle-\langle\bcX_{1},\bcX_{2}\rangle\big|\le\delta_{2\br}\|\bcX_{1}\|_{\fro}\|\bcX_{2}\|_{\fro},
\end{align*}
or equivalently, 
\begin{align*}
\big|\langle(\cA^{*}\cA-\cI)(\bcX_{1}),\bcX_{2}\rangle\big|\le\delta_{2\br}\|\bcX_{1}\|_{\fro}\|\bcX_{2}\|_{\fro}.
\end{align*}
\end{lemma}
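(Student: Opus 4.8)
The statement is a standard polarization argument, and since the inequality is bilinear in $(\bcX_1,\bcX_2)$, the plan is to first reduce to the normalized case $\|\bcX_1\|_{\fro}=\|\bcX_2\|_{\fro}=1$ (the case where either tensor vanishes being trivial), prove the bound $\delta_{2\br}$ there, and then scale back: writing $\widehat{\bcX}_i \coloneqq \bcX_i/\|\bcX_i\|_{\fro}$, which has the same multilinear rank as $\bcX_i$, one has $|\langle\cA(\bcX_1),\cA(\bcX_2)\rangle-\langle\bcX_1,\bcX_2\rangle| = \|\bcX_1\|_{\fro}\|\bcX_2\|_{\fro}\,|\langle\cA(\widehat{\bcX}_1),\cA(\widehat{\bcX}_2)\rangle-\langle\widehat{\bcX}_1,\widehat{\bcX}_2\rangle|$.

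The first key step is the rank bookkeeping: for each mode $k$, $\cM_k(\bcX_1\pm\bcX_2)=\cM_k(\bcX_1)\pm\cM_k(\bcX_2)$ has rank at most $r_k+r_k=2r_k$, so $\bcX_1+\bcX_2$ and $\bcX_1-\bcX_2$ both have multilinear rank at most $2\br=(2r_1,2r_2,2r_3)$. Hence the $2\br$-TRIP (Definition~\ref{def:TRIP}) applies to each of them, giving
\begin{align*}
\bigl|\|\cA(\bcX_1\pm\bcX_2)\|_{\fro}^2-\|\bcX_1\pm\bcX_2\|_{\fro}^2\bigr| \le \delta_{2\br}\|\bcX_1\pm\bcX_2\|_{\fro}^2.
\end{align*}
The second step is the polarization identity applied to both inner products: $\langle\cA(\bcX_1),\cA(\bcX_2)\rangle=\tfrac14\bigl(\|\cA(\bcX_1+\bcX_2)\|_{\fro}^2-\|\cA(\bcX_1-\bcX_2)\|_{\fro}^2\bigr)$ and likewise $\langle\bcX_1,\bcX_2\rangle=\tfrac14\bigl(\|\bcX_1+\bcX_2\|_{\fro}^2-\|\bcX_1-\bcX_2\|_{\fro}^2\bigr)$. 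Subtracting, applying the triangle inequality together with the two TRIP bounds above, and then invoking the parallelogram law $\|\bcX_1+\bcX_2\|_{\fro}^2+\|\bcX_1-\bcX_2\|_{\fro}^2=2\|\bcX_1\|_{\fro}^2+2\|\bcX_2\|_{\fro}^2$ yields
\begin{align*}
\bigl|\langle\cA(\bcX_1),\cA(\bcX_2)\rangle-\langle\bcX_1,\bcX_2\rangle\bigr| \le \tfrac{\delta_{2\br}}{4}\bigl(\|\bcX_1+\bcX_2\|_{\fro}^2+\|\bcX_1-\bcX_2\|_{\fro}^2\bigr) = \tfrac{\delta_{2\br}}{2}\bigl(\|\bcX_1\|_{\fro}^2+\|\bcX_2\|_{\fro}^2\bigr),
\end{align*}
which equals $\delta_{2\br}$ in the normalized case, completing the reduction. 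Finally, the equivalent form is immediate from the definition of the adjoint $\cA^{*}$: since $\langle\cA(\bcX_1),\cA(\bcX_2)\rangle=\langle\cA^{*}\cA(\bcX_1),\bcX_2\rangle$ and $\langle\bcX_1,\bcX_2\rangle=\langle\cI(\bcX_1),\bcX_2\rangle$, the left-hand side above is exactly $\langle(\cA^{*}\cA-\cI)(\bcX_1),\bcX_2\rangle$.

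There is no genuine obstacle here; the only points requiring care are (i) verifying that the sum and difference of two multilinear-rank-$\br$ tensors stay within multilinear rank $2\br$ (so that $2\br$-TRIP is legitimately applicable), and (ii) tracking constants so that the parallelogram law delivers exactly $\delta_{2\br}$ rather than a larger multiple, for which the normalization $\|\bcX_1\|_{\fro}=\|\bcX_2\|_{\fro}=1$ is the clean route.
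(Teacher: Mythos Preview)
Your proof is correct and is the standard polarization argument for deriving an off-diagonal RIP-type bound from the diagonal one. Note that the paper does not actually supply a proof of this lemma; it simply cites it from \cite[Lemma~E.7]{han2020optimal}, so there is no in-paper argument to compare against. Your write-up would serve as a self-contained justification.
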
 

\subsection{Proof of local convergence (Lemma~\ref{lemma:contraction_TR})}

Given that $\dist(\bF_{t},\bF_{\star})\le\epsilon\sigma_{\min}(\bcX_{\star})$,
the conclusion $\|(\bU_{t},\bV_{t},\bW_{t})\bcdot\bcS_{t}-\bcX_{\star}\|_{\fro}\le3\dist(\bF_{t},\bF_{\star})$
directly follows from the relation~\eqref{eq:tensor2factors} in
Lemma~\ref{lemma:perturb_bounds}. Hence we will focus on controlling
$\dist(\bF_{t},\bF_{\star})$. 

As in the proof of Theorem~\ref{thm:TF},
we reuse the notations in \eqref{eq:short_notations} and \eqref{eq:additional_notation_aligned}, and the definition of $\dist(\bF_{t+1},\bF_{\star})$ to obtain 
\begin{align}
\dist^{2}(\bF_{t+1},\bF_{\star}) & \le\left\Vert (\bU_{t+1}\bQ_{t,1}-\bU_{\star})\bSigma_{\star,1}\right\Vert _{\fro}^{2}+\left\Vert (\bV_{t+1}\bQ_{t,2}-\bV_{\star})\bSigma_{\star,2}\right\Vert _{\fro}^{2}+\left\Vert (\bW_{t+1}\bQ_{t,3}-\bW_{\star})\bSigma_{\star,3}\right\Vert _{\fro}^{2}\nonumber \\
 & \qquad+\left\Vert (\bQ_{t,1}^{-1},\bQ_{t,2}^{-1},\bQ_{t,3}^{-1})\bcdot\bcS_{t+1}-\bcS_{\star}\right\Vert _{\fro}^{2}.\label{eq:TR_expand}
\end{align}
We shall bound each square in the right hand side of the bound~\eqref{eq:TR_expand} separately. The final result is 
\begin{align}
\dist^{2}(\bF_{t+1},\bF_{\star}) & \le(1-\eta)^{2}\left(\left\Vert \bDelta_{U}\bSigma_{\star,1}\right\Vert _{\fro}^{2}+\left\Vert \bDelta_{V}\bSigma_{\star,2}\right\Vert _{\fro}^{2}+\left\Vert \bDelta_{W}\bSigma_{\star,3}\right\Vert _{\fro}^{2}+\|\bDelta_{\cS}\|_{\fro}^{2}\right)\nonumber \\
 & \qquad-\eta(2-5\eta)\left\Vert \bcT_{U}+\bcT_{V}+\bcT_{W}\right\Vert _{\fro}^{2} -\eta(2-5\eta)\left(\left\Vert \bD_{U}\right\Vert _{\fro}^{2}+\left\Vert \bD_{V}\right\Vert _{\fro}^{2}+\left\Vert \bD_{W}\right\Vert _{\fro}^{2}\right)\nonumber \\
 & \qquad+2\eta(1-\eta)C(\epsilon+\delta_{2\br}+\delta_{2\br}^{2})\dist^{2}(\bF_{t},\bF_{\star})+\eta^{2}C(\epsilon+\delta_{2\br}+\delta_{2\br}^{2})\dist^{2}(\bF_{t},\bF_{\star}),\label{eq:TR_bound}
\end{align}
where $C>1$ is some universal constant. As long as $\eta\le2/5$,
and $\epsilon$, $\delta_{2\br}$ are sufficiently small constants, one reaches the desired
conclusion $\dist(\bF_{t+1},\bF_{\star})\le(1-0.6\eta)\dist(\bF_{t},\bF_{\star})$.

In the following subsections, we provide bounds on the four
terms in the right hand side of \eqref{eq:TR_expand}. In a nutshell, the bounds that are sought after are reminiscent of those established in \eqref{eq:final-bounds}, with additional perturbation terms introduced due to incomplete measurements, manifested via the TRIP parameter $\delta_{2\br}$. Once established, the claimed bound~\eqref{eq:TR_bound} easily follows. In light of the
symmetry among $\bU,\bV$, and $\bW$, we omit the control
of the terms pertaining to $\bV$ and $\bW$.

\subsubsection{Bounding the term pertaining to $\bU$}

The first term in \eqref{eq:TR_expand} is given by
\begin{align*}
(\bU_{t+1}\bQ_{t,1}-\bU_{\star})\bSigma_{\star,1} & =\Big(\bU-\eta\cM_{1}\left(\cA^{*}\cA((\bU,\bV,\bW)\bcdot\bcS-\bcX_{\star})\right)\breve{\bU}(\breve{\bU}^{\top}\breve{\bU})^{-1}-\bU_{\star}\Big)\bSigma_{\star,1}\\
 & =(1-\eta)\bDelta_{U}\bSigma_{\star,1}-\eta\bU_{\star}(\breve{\bU}-\breve{\bU}_{\star})^{\top}\breve{\bU}(\breve{\bU}^{\top}\breve{\bU})^{-1}\bSigma_{\star,1}\\
 & \qquad-\eta\cM_{1}\left((\cA^{*}\cA-\cI)((\bU,\bV,\bW)\bcdot\bcS-\bcX_{\star})\right)\breve{\bU}(\breve{\bU}^{\top}\breve{\bU})^{-1}\bSigma_{\star,1},
\end{align*}
where we separate the population term from the perturbation term.
Take the squared norm of both sides to see 
\begin{align*}
 & \left\Vert (\bU_{t+1}\bQ_{t,1}-\bU_{\star})\bSigma_{\star,1}\right\Vert _{\fro}^{2}=\underbrace{\left\Vert (1-\eta)\bDelta_{U}\bSigma_{\star,1}-\eta\bU_{\star}(\breve{\bU}-\breve{\bU}_{\star})^{\top}\breve{\bU}(\breve{\bU}^{\top}\breve{\bU})^{-1}\bSigma_{\star,1}\right\Vert _{\fro}^{2}}_{\eqqcolon\mfk{R}_{U}^{\main}}\\
 & \qquad-2\eta(1-\eta)\underbrace{\left\langle \bDelta_{U}\bSigma_{\star,1},\cM_{1}\left((\cA^{*}\cA-\cI)((\bU,\bV,\bW)\bcdot\bcS-\bcX_{\star})\right)\breve{\bU}(\breve{\bU}^{\top}\breve{\bU})^{-1}\bSigma_{\star,1}\right\rangle }_{\eqqcolon\mfk{R}_{U}^{\ptb,1}}\\
 & \qquad+2\eta^{2}\underbrace{\left\langle \bU_{\star}(\breve{\bU}-\breve{\bU}_{\star})^{\top}\breve{\bU}(\breve{\bU}^{\top}\breve{\bU})^{-1}\bSigma_{\star,1},\cM_{1}\left((\cA^{*}\cA-\cI)((\bU,\bV,\bW)\bcdot\bcS-\bcX_{\star})\right)\breve{\bU}(\breve{\bU}^{\top}\breve{\bU})^{-1}\bSigma_{\star,1}\right\rangle }_{\eqqcolon\mfk{R}_{U}^{\ptb,2}}\\
 & \qquad+\eta^{2}\underbrace{\left\Vert \cM_{1}\left((\cA^{*}\cA-\cI)((\bU,\bV,\bW)\bcdot\bcS-\bcX_{\star})\right)\breve{\bU}(\breve{\bU}^{\top}\breve{\bU})^{-1}\bSigma_{\star,1}\right\Vert _{\fro}^{2}}_{\eqqcolon\mfk{R}_{U}^{\ptb,3}}.
\end{align*}
The main term $\mfk{R}_{U}^{\main}$ has been handled in Section~\ref{sec:proof_TF}; see \eqref{eq:first_square} and the bound~\eqref{eq:U-bound}.
In the sequel, we shall bound the three perturbation terms.

\paragraph{Step 1: bounding $\mfk{R}_{U}^{\ptb,1}$.}

Use the definition of $\breve{\bU}$, we can translate the inner product in the matrix space to that in the tensor space
\begin{align*}
\mfk{R}_{U}^{\ptb,1} & =\left\langle \left(\bDelta_{U}\bSigma_{\star,1}^{2}(\breve{\bU}^{\top}\breve{\bU})^{-1},\bV,\bW\right)\bcdot\bcS,(\cA^{*}\cA-\cI)((\bU,\bV,\bW)\bcdot\bcS-\bcX_{\star})\right\rangle \\
 & =\left\langle \left(\bDelta_{U}\bSigma_{\star,1}^{2}(\breve{\bU}^{\top}\breve{\bU})^{-1},\bV,\bW\right)\bcdot\bcS,(\cA^{*}\cA-\cI)((\bU,\bV,\bW)\bcdot\bDelta_{\cS})\right\rangle \\
 & \qquad+\left\langle \left(\bDelta_{U}\bSigma_{\star,1}^{2}(\breve{\bU}^{\top}\breve{\bU})^{-1},\bV,\bW\right)\bcdot\bcS,(\cA^{*}\cA-\cI)((\bDelta_{U},\bV,\bW)\bcdot\bcS_{\star})\right\rangle \\
 & \qquad+\left\langle \left(\bDelta_{U}\bSigma_{\star,1}^{2}(\breve{\bU}^{\top}\breve{\bU})^{-1},\bV,\bW\right)\bcdot\bcS,(\cA^{*}\cA-\cI)((\bU_{\star},\bDelta_{V},\bW)\bcdot\bcS_{\star})\right\rangle \\
 & \qquad+\left\langle \left(\bDelta_{U}\bSigma_{\star,1}^{2}(\breve{\bU}^{\top}\breve{\bU})^{-1},\bV,\bW\right)\bcdot\bcS,(\cA^{*}\cA-\cI)((\bU_{\star},\bV_{\star},\bDelta_{W})\bcdot\bcS_{\star})\right\rangle ,
\end{align*}
where the second relation uses the decomposition \eqref{eq:decomp_T}.
Apply Lemma~\ref{lemma:2r-TRIP} to each of the four terms to obtain 
\begin{align*}
|\mfk{R}_{U}^{\ptb,1}| & \le\delta_{2\br}\left\Vert \left(\bDelta_{U}\bSigma_{\star,1}^{2}(\breve{\bU}^{\top}\breve{\bU})^{-1},\bV,\bW\right)\bcdot\bcS\right\Vert _{\fro}\\
 & \qquad\left(\left\Vert (\bU,\bV,\bW)\bcdot\bDelta_{\cS})\right\Vert _{\fro}+\left\Vert (\bDelta_{U},\bV,\bW)\bcdot\bcS_{\star})\right\Vert _{\fro}+\left\Vert (\bU_{\star},\bDelta_{V},\bW)\bcdot\bcS_{\star})\right\Vert _{\fro}+\left\Vert (\bU_{\star},\bV_{\star},\bDelta_{W})\bcdot\bcS_{\star})\right\Vert _{\fro}\right).
\end{align*}
For the prefactor, we have 
\begin{align*}
\left\Vert \left(\bDelta_{U}\bSigma_{\star,1}^{2}(\breve{\bU}^{\top}\breve{\bU})^{-1},\bV,\bW\right)\bcdot\bcS\right\Vert _{\fro} & =\left\Vert \bDelta_{U}\bSigma_{\star,1}^{2}(\breve{\bU}^{\top}\breve{\bU})^{-1}\breve{\bU}^{\top}\right\Vert _{\fro}\\
 & \le\|\bDelta_{U}\bSigma_{\star,1}\|_{\fro}\left\Vert \breve{\bU}(\breve{\bU}^{\top}\breve{\bU})^{-1}\bSigma_{\star,1}\right\Vert _{\op}\\
 & \le\|\bDelta_{U}\bSigma_{\star,1}\|_{\fro}(1-\epsilon)^{-3},
\end{align*}
where the last step arises from Lemma~\ref{lemma:perturb_bounds}.
In addition, the same argument as in \eqref{eq:perturb_T_fro} yields
\begin{align*}
 & \left\Vert (\bU,\bV,\bW)\bcdot\bDelta_{\cS})\right\Vert _{\fro}+\left\Vert (\bDelta_{U},\bV,\bW)\bcdot\bcS_{\star})\right\Vert _{\fro}+\left\Vert (\bU_{\star},\bDelta_{V},\bW)\bcdot\bcS_{\star})\right\Vert _{\fro}+\left\Vert (\bU_{\star},\bV_{\star},\bDelta_{W})\bcdot\bcS_{\star})\right\Vert _{\fro}\\
 & \qquad\le(1+\frac{3}{2}\epsilon+\epsilon^{2}+\frac{1}{4}\epsilon^{3})\left(\|\bDelta_{U}\bSigma_{\star,1}\|_{\fro}+\|\bDelta_{V}\bSigma_{\star,2}\|_{\fro}+\|\bDelta_{W}\bSigma_{\star,3}\|_{\fro}+\|\bDelta_{\cS}\|_{\fro}\right).
\end{align*}
Take the previous two bounds collectively to arrive at 
\begin{align*}
|\mfk{R}_{U,p1}| & \le\delta_{2\br}\frac{1+\frac{3}{2}\epsilon+\epsilon^{2}+\frac{1}{4}\epsilon^{3}}{(1-\epsilon)^{3}}\|\bDelta_{U}\bSigma_{\star,1}\|_{\fro}\left(\|\bDelta_{U}\bSigma_{\star,1}\|_{\fro}+\|\bDelta_{V}\bSigma_{\star,2}\|_{\fro}+\|\bDelta_{W}\bSigma_{\star,3}\|_{\fro}+\|\bDelta_{\cS}\|_{\fro}\right) \\
&\lesssim \delta_{2\br} \dist^2(\bF_{t},\bF_{\star}), 
\end{align*}
with the proviso that $\epsilon$ is small enough.

\paragraph{Step 2: bounding $\mfk{R}_{U}^{\ptb,2}$.}

Rewrite the inner product in the tensor space to see 
\begin{align*}
\mfk{R}_{U}^{\ptb,2}=\left\langle \left(\bU_{\star}(\breve{\bU}-\breve{\bU}_{\star})^{\top}\breve{\bU}(\breve{\bU}^{\top}\breve{\bU})^{-1}\bSigma_{\star,1}^{2}(\breve{\bU}^{\top}\breve{\bU})^{-1},\bV,\bW\right)\bcdot\bcS,(\cA^{*}\cA-\cI)((\bU,\bV,\bW)\cdot\bcS-\bcX_{\star})\right\rangle .
\end{align*}
Similar to the control of $\mfk{R}_{U}^{\ptb,1}$, we have 
\begin{align*}
|\mfk{R}_{U}^{\ptb,2}| & \le\delta_{2\br}\left\Vert \bU_{\star}(\breve{\bU}-\breve{\bU}_{\star})^{\top}\breve{\bU}(\breve{\bU}^{\top}\breve{\bU})^{-1}\bSigma_{\star,1}^{2}(\breve{\bU}^{\top}\breve{\bU})^{-1}\breve{\bU}^{\top}\right\Vert _{\fro}\\
 & \qquad(1+\frac{3}{2}\epsilon+\epsilon^{2}+\frac{1}{4}\epsilon^{3})\left(\|\bDelta_{U}\bSigma_{\star,1}\|_{\fro}+\|\bDelta_{V}\bSigma_{\star,2}\|_{\fro}+\|\bDelta_{W}\bSigma_{\star,3}\|_{\fro}+\|\bDelta_{\cS}\|_{\fro}\right).
\end{align*}
For the prefactor, we can use \eqref{eq:perturb_Rinv} and \eqref{eq:perturb_R_fro}
to obtain 
\begin{align*}
 \left\Vert \bU_{\star}(\breve{\bU}-\breve{\bU}_{\star})^{\top}\breve{\bU}(\breve{\bU}^{\top}\breve{\bU})^{-1}\bSigma_{\star,1}^{2}(\breve{\bU}^{\top}\breve{\bU})^{-1}\breve{\bU}^{\top}\right\Vert _{\fro}  & \le\|\breve{\bU}-\breve{\bU}_{\star}\|_{\fro}\left\Vert \breve{\bU}(\breve{\bU}^{\top}\breve{\bU})^{-1}\bSigma_{\star,1}\right\Vert _{\op}^{2}\\
 & \le\frac{1+\epsilon+\frac{1}{3}\epsilon^{2}}{(1-\epsilon)^{6}}\left(\|\bDelta_{V}\bSigma_{\star,2}\|_{\fro}+\|\bDelta_{W}\bSigma_{\star,3}\|_{\fro}+\|\bDelta_{\cS}\|_{\fro}\right),
\end{align*}
which further implies
\begin{align*}
|\mfk{R}_{U}^{\ptb,2}| & \le\delta_{2\br}\frac{(1+\frac{3}{2}\epsilon+\epsilon^{2}+\frac{1}{4}\epsilon^{3})(1+\epsilon+\frac{1}{3}\epsilon^{2})}{(1-\epsilon)^{6}}\left(\|\bDelta_{V}\bSigma_{\star,2}\|_{\fro}+\|\bDelta_{W}\bSigma_{\star,3}\|_{\fro}+\|\bDelta_{\cS}\|_{\fro}\right)\\
 & \qquad\left(\|\bDelta_{U}\bSigma_{\star,1}\|_{\fro}+\|\bDelta_{V}\bSigma_{\star,2}\|_{\fro}+\|\bDelta_{W}\bSigma_{\star,3}\|_{\fro}+\|\bDelta_{\cS}\|_{\fro}\right) \\
 &\lesssim \delta_{2\br} \dist^2(\bF_{t},\bF_{\star}),
\end{align*}
as long as $\epsilon$ is sufficiently small.

\paragraph{Step 3: bounding $\mfk{R}_{U}^{\ptb,3}$.}

The last perturbation term needs special care. We first use the variational
representation of the Frobenius norm to write 
\begin{align*}
\sqrt{\mfk{R}_{U}^{\ptb,3}}=\left\langle \left(\widetilde{\bU}\bSigma_{\star,1}(\breve{\bU}^{\top}\breve{\bU})^{-1},\bV,\bW\right)\bcdot\bcS,(\cA^{*}\cA-\cI)((\bU,\bV,\bW)\bcdot\bcS-\bcX_{\star})\right\rangle 
\end{align*}
for some $\widetilde{\bU}\in\RR^{n_{1}\times r_{1}}$ obeying $\|\widetilde{\bU}\|_{\fro}=1$.
Repeat the same argument as used in controlling $\mfk{R}_{U}^{\ptb,1}$
to see 
\begin{align*}
\sqrt{\mfk{R}_{U}^{\ptb,3}} & \le\delta_{2\br}\left\Vert \widetilde{\bU}\bSigma_{\star,1}(\breve{\bU}^{\top}\breve{\bU})^{-1}\breve{\bU}^{\top}\right\Vert _{\fro}(1+\frac{3}{2}\epsilon+\epsilon^{2}+\frac{1}{4}\epsilon^{3})\left(\|\bDelta_{U}\bSigma_{\star,1}\|_{\fro}+\|\bDelta_{V}\bSigma_{\star,2}\|_{\fro}+\|\bDelta_{W}\bSigma_{\star,3}\|_{\fro}+\|\bDelta_{\cS}\|_{\fro}\right)\\
 & \le\delta_{2\br}\frac{1+\frac{3}{2}\epsilon+\epsilon^{2}+\frac{1}{4}\epsilon^{3}}{(1-\epsilon)^{3}}\left(\|\bDelta_{U}\bSigma_{\star,1}\|_{\fro}+\|\bDelta_{V}\bSigma_{\star,2}\|_{\fro}+\|\bDelta_{W}\bSigma_{\star,3}\|_{\fro}+\|\bDelta_{\cS}\|_{\fro}\right),
\end{align*}
where the last line uses the bound~\eqref{eq:perturb_Rinv} in Lemma~\ref{lemma:perturb_bounds}. Then take the square on both sides to conclude 
\begin{align*}
\mfk{R}_{U}^{\ptb,3}&\le\delta_{2\br}^{2}\frac{(1+\frac{3}{2}\epsilon+\epsilon^{2}+\frac{1}{4}\epsilon^{3})^{2}}{(1-\epsilon)^{6}}\left(\|\bDelta_{U}\bSigma_{\star,1}\|_{\fro}+\|\bDelta_{V}\bSigma_{\star,2}\|_{\fro}+\|\bDelta_{W}\bSigma_{\star,3}\|_{\fro}+\|\bDelta_{\cS}\|_{\fro}\right)^{2} \\
&\lesssim \delta_{2\br}^{2} \dist^2(\bF_{t},\bF_{\star})
\end{align*}
as long as $\epsilon$ is sufficiently small.

\subsubsection{Bounding the term pertaining to $\bcS$}

The last term of \eqref{eq:TR_expand} can be rewritten as 
\begin{align*}
 & (\bQ_{t,1}^{-1},\bQ_{t,2}^{-1},\bQ_{t,3}^{-1})\bcdot\bcS_{t+1}-\bcS_{\star}\\
 & \quad=\bcS-\eta\left((\bU^{\top}\bU)^{-1}\bU^{\top},(\bV^{\top}\bV)^{-1}\bV^{\top},(\bW^{\top}\bW)^{-1}\bW^{\top}\right)\bcdot\cA^{*}\cA\left((\bU,\bV,\bW)\bcdot\bcS-\bcX_{\star}\right)-\bcS_{\star}\\
 & \quad=(1-\eta)\bDelta_{\cS}-\eta\left((\bU^{\top}\bU)^{-1}\bU^{\top},(\bV^{\top}\bV)^{-1}\bV^{\top},(\bW^{\top}\bW)^{-1}\bW^{\top}\right)\bcdot\left((\bU,\bV,\bW)\bcdot\bcS_{\star}-\bcX_{\star}\right)\\
 & \qquad-\eta\left((\bU^{\top}\bU)^{-1}\bU^{\top},(\bV^{\top}\bV)^{-1}\bV^{\top},(\bW^{\top}\bW)^{-1}\bW^{\top}\right)\bcdot(\cA^{*}\cA-\cI)((\bU,\bV,\bW)\bcdot\bcS-\bcX_{\star}),
\end{align*}
which further gives 
\begin{align*}
 & \left\Vert (\bQ_{t,1}^{-1},\bQ_{t,2}^{-1},\bQ_{t,3}^{-1})\bcdot\bcS_{t+1}-\bcS_{\star}\right\Vert _{\fro}^{2}\\
 & \quad=\underbrace{\left\Vert (1-\eta)\bDelta_{\cS}-\eta\left((\bU^{\top}\bU)^{-1}\bU^{\top},(\bV^{\top}\bV)^{-1}\bV^{\top},(\bW^{\top}\bW)^{-1}\bW^{\top}\right)\bcdot\left((\bU,\bV,\bW)\bcdot\bcS_{\star}-\bcX_{\star}\right)\right\Vert _{\fro}^{2}}_{\eqqcolon \mfk{R}_{\cS}^{\main}}\\
 & \qquad-2\eta(1-\eta)\underbrace{\left\langle \bDelta_{\cS},\left((\bU^{\top}\bU)^{-1}\bU^{\top},(\bV^{\top}\bV)^{-1}\bV^{\top},(\bW^{\top}\bW)^{-1}\bW^{\top}\right)\bcdot(\cA^{*}\cA-\cI)((\bU,\bV,\bW)\bcdot\bcS-\bcX_{\star})\right\rangle }_{\eqqcolon\mfk{R}_{\cS}^{\ptb,1}}\\
 & \qquad+2\eta^{2}\underbrace{\begin{aligned} & \Big\langle\left((\bU^{\top}\bU)^{-1}\bU^{\top},(\bV^{\top}\bV)^{-1}\bV^{\top},(\bW^{\top}\bW)^{-1}\bW^{\top}\right)\bcdot\left((\bU,\bV,\bW)\bcdot\bcS_{\star}-\bcX_{\star}\right),\\
 & \qquad\left((\bU^{\top}\bU)^{-1}\bU^{\top},(\bV^{\top}\bV)^{-1}\bV^{\top},(\bW^{\top}\bW)^{-1}\bW^{\top}\right)\bcdot(\cA^{*}\cA-\cI)((\bU,\bV,\bW)\bcdot\bcS-\bcX_{\star} )\Big\rangle
\end{aligned}
}_{\eqqcolon\mfk{R}_{\cS}^{\ptb,2}}\\
 & \qquad+\eta^{2}\underbrace{\left\Vert \left((\bU^{\top}\bU)^{-1}\bU^{\top},(\bV^{\top}\bV)^{-1}\bV^{\top},(\bW^{\top}\bW)^{-1}\bW^{\top}\right)\bcdot(\cA^{*}\cA-\cI)((\bU,\bV,\bW)\bcdot\bcS-\bcX_{\star})\right\Vert _{\fro}^{2}}_{\eqqcolon\mfk{R}_{\cS}^{\ptb,3}}.
\end{align*}
Note that the main term $\mfk{R}_{\cS}^{\main}$ has already been characterized in Section~\ref{sec:proof_TF}; see \eqref{eq:last_square} and the bound~\eqref{eq:S-bound}. Therefore we concentrate on the
remaining perturbation terms. 

\paragraph{Step 1: bounding $\mfk{R}_{\cS}^{\ptb,1}$.}

Use the property \eqref{eq:tensor_properties_d} to write $\mfk{R}_{\cS}^{\ptb,1}$ as 
\begin{align*}
\mfk{R}_{\cS}^{\ptb,1}=\left\langle \left(\bU(\bU^{\top}\bU)^{-1},\bV(\bV^{\top}\bV)^{-1},\bW(\bW^{\top}\bW)^{-1}\right)\bcdot\bDelta_{\cS},(\cA^{*}\cA-\cI)((\bU,\bV,\bW)\bcdot\bcS-\bcX_{\star})\right\rangle .
\end{align*}
We can use the decomposition \eqref{eq:decomp_T} and Lemma~\ref{lemma:2r-TRIP}
to derive 
\begin{align*}
|\mfk{R}_{\cS}^{\ptb,1}| & \le\delta_{2\br}\left\Vert \left(\bU(\bU^{\top}\bU)^{-1},\bV(\bV^{\top}\bV)^{-1},\bW(\bW^{\top}\bW)^{-1}\right)\bcdot\bDelta_{\cS}\right\Vert _{\fro}\\
 & \qquad(1+\frac{3}{2}\epsilon+\epsilon^{2}+\frac{1}{4}\epsilon^{3})\left(\|\bDelta_{U}\bSigma_{\star,1}\|_{\fro}+\|\bDelta_{V}\bSigma_{\star,2}\|_{\fro}+\|\bDelta_{W}\bSigma_{\star,3}\|_{\fro}+\|\bDelta_{\cS}\|_{\fro}\right).
\end{align*}
In addition, Lemma \ref{lemma:perturb_bounds} tells us that 
\begin{align*}
 & \left\Vert \left(\bU(\bU^{\top}\bU)^{-1},\bV(\bV^{\top}\bV)^{-1},\bW(\bW^{\top}\bW)^{-1}\right)\bcdot\bDelta_{\cS}\right\Vert _{\fro}\\
 & \qquad\le\left\Vert \bU(\bU^{\top}\bU)^{-1}\right\Vert _{\op}\left\Vert \bV(\bV^{\top}\bV)^{-1}\right\Vert _{\op}\left\Vert \bW(\bW^{\top}\bW)^{-1}\right\Vert _{\op}\|\bDelta_{\cS}\|_{\fro} \le(1-\epsilon)^{-3}\|\bDelta_{\cS}\|_{\fro}.
\end{align*}
Combine the above two bounds to reach
\begin{align*}
|\mfk{R}_{\cS}^{\ptb,1}| &\le \delta_{2\br}\frac{1+\frac{3}{2}\epsilon+\epsilon^{2}+\frac{1}{4}\epsilon^{3}}{(1-\epsilon)^{3}}\|\bDelta_{\cS}\|_{\fro}\left(\|\bDelta_{U}\bSigma_{\star,1}\|_{\fro}+\|\bDelta_{V}\bSigma_{\star,2}\|_{\fro}+\|\bDelta_{W}\bSigma_{\star,3}\|_{\fro}+\|\bDelta_{\cS}\|_{\fro}\right) \\
&\lesssim \delta_{2\br} \dist^2(\bF_{t},\bF_{\star})
\end{align*}
as long as $\epsilon$ is a sufficiently small constant.

\paragraph{Step 2: bounding $\mfk{R}_{\cS}^{\ptb,2}$.}

Similarly, we can bound $\mfk{R}_{\cS}^{\ptb,2}$ by 
\begin{align*}
|\mfk{R}_{\cS}^{\ptb,2}| & \le\delta_{2\br}\left\Vert \left(\bU(\bU^{\top}\bU)^{-2}\bU^{\top},\bV(\bV^{\top}\bV)^{-2}\bV^{\top},\bW(\bW^{\top}\bW)^{-2}\bW^{\top}\right)\bcdot\left((\bU,\bV,\bW)\bcdot\bcS_{\star}-\bcX_{\star}\right)\right\Vert _{\fro}\\
 & \qquad(1+\frac{3}{2}\epsilon+\epsilon^{2}+\frac{1}{4}\epsilon^{3})\left(\|\bDelta_{U}\bSigma_{\star,1}\|_{\fro}+\|\bDelta_{V}\bSigma_{\star,2}\|_{\fro}+\|\bDelta_{W}\bSigma_{\star,3}\|_{\fro}+\|\bDelta_{\cS}\|_{\fro}\right)\\
 & \le\delta_{2\br}\frac{(1+\epsilon+\frac{1}{3}\epsilon^{2})(1+\frac{3}{2}\epsilon+\epsilon^{2}+\frac{1}{4}\epsilon^{3})}{(1-\epsilon)^{6}}\left(\|\bDelta_{U}\bSigma_{\star,1}\|_{\fro}+\|\bDelta_{V}\bSigma_{\star,2}\|_{\fro}+\|\bDelta_{W}\bSigma_{\star,3}\|_{\fro}\right)\\
 & \qquad\left(\|\bDelta_{U}\bSigma_{\star,1}\|_{\fro}+\|\bDelta_{V}\bSigma_{\star,2}\|_{\fro}+\|\bDelta_{W}\bSigma_{\star,3}\|_{\fro}+\|\bDelta_{\cS}\|_{\fro}\right) \\
 &\lesssim \delta_{2\br} \dist^2(\bF_{t},\bF_{\star}).
\end{align*}

\paragraph{Step 3: bounding $\mfk{R}_{\cS}^{\ptb,3}$.}

Apply the variational representation of the Frobenius norm to write
\begin{align*}
\sqrt{\mfk{R}_{\cS}^{\ptb,3}}=\left\langle \left(\bU(\bU^{\top}\bU)^{-1},\bV(\bV^{\top}\bV)^{-1},\bW(\bW^{\top}\bW)^{-1}\right)\bcdot\widetilde{\bcS},(\cA^{*}\cA-\cI)((\bU,\bV,\bW)\bcdot\bcS-\bcX_{\star})\right\rangle 
\end{align*}
for some $\widetilde{\bcS}\in\RR^{r_{1}\times r_{2}\times r_{3}}$
obeying $\|\widetilde{\bcS}\|_{\fro}=1$. Repeat the same argument as in bounding $\mfk{R}_{\bU}^{\ptb,3}$ 
to see 
\begin{align*}
\sqrt{\mfk{R}_{\cS}^{\ptb,3}} & \le\delta_{2\br}\left\Vert \left(\bU(\bU^{\top}\bU)^{-1},\bV(\bV^{\top}\bV)^{-1},\bW(\bW^{\top}\bW)^{-1}\right)\bcdot\widetilde{\bcS}\right\Vert _{\fro}\\
 & \qquad(1+\frac{3}{2}\epsilon+\epsilon^{2}+\frac{1}{4}\epsilon^{3})\left(\|\bDelta_{U}\bSigma_{\star,1}\|_{\fro}+\|\bDelta_{V}\bSigma_{\star,2}\|_{\fro}+\|\bDelta_{W}\bSigma_{\star,3}\|_{\fro}+\|\bDelta_{\cS}\|_{\fro}\right)\\
 & \le\delta_{2\br}\frac{1+\frac{3}{2}\epsilon+\epsilon^{2}+\frac{1}{4}\epsilon^{3}}{(1-\epsilon)^{3}}\left(\|\bDelta_{U}\bSigma_{\star,1}\|_{\fro}+\|\bDelta_{V}\bSigma_{\star,2}\|_{\fro}+\|\bDelta_{W}\bSigma_{\star,3}\|_{\fro}+\|\bDelta_{\cS}\|_{\fro}\right).
\end{align*}
Then take the square on both sides to conclude 
\begin{align*}
\mfk{R}_{\cS}^{\ptb,3}& \le\delta_{2\br}^{2}\frac{(1+\frac{3}{2}\epsilon+\epsilon^{2}+\frac{1}{4}\epsilon^{3})^{2}}{(1-\epsilon)^{6}}\left(\|\bDelta_{U}\bSigma_{\star,1}\|_{\fro}+\|\bDelta_{V}\bSigma_{\star,2}\|_{\fro}+\|\bDelta_{W}\bSigma_{\star,3}\|_{\fro}+\|\bDelta_{\cS}\|_{\fro}\right)^{2} \\
&\lesssim \delta_{2\br}^2 \dist^2(\bF_{t},\bF_{\star}).
\end{align*}

\subsection{Proof of spectral initialization (Lemma~\ref{lemma:init_TR}) }
\label{proof:lemma_init_TR}
In view of Lemma~\ref{lemma:Procrustes}, we can relate $\dist(\bF_{0},\bF_{\star})$
to $\|(\bU_{0},\bV_{0},\bW_{0})\bcdot\bcS_{0}-\bcX_{\star}\|_{\fro}$
as 
\begin{align*}
\dist(\bF_{0},\bF_{\star})\le(\sqrt{2}+1)^{3/2}\left\Vert (\bU_{0},\bV_{0},\bW_{0})\bcdot\bcS_{0}-\bcX_{\star}\right\Vert _{\fro}.
\end{align*}
To proceed, we need to control $\left\Vert (\bU_{0},\bV_{0},\bW_{0})\bcdot\bcS_{0}-\bcX_{\star}\right\Vert _{\fro}$,
where $(\bU_{0},\bV_{0},\bW_{0})\bcdot\bcS_{0}$ is the output of
HOSVD. Similar results have been established in \cite{zhang2021low,han2020optimal,zhang2020islet}, which involve sophisticated subspace perturbation bounds. For conciseness and completeness, we provide an alternative proof directly tackling the distance. 

Define $\bP_{U}\coloneqq\bU_{0}\bU_{0}^{\top}$ as the projection matrix onto the column space of $\bU_{0}$, $\bP_{U_\perp}\coloneqq\bI_{n_{1}}-\bP_{U}$ as the projection onto its orthogonal complement, and define $\bP_{V},\bP_{V_\perp}, \bP_{W},\bP_{W_\perp}$ analogously. Similar to \eqref{eq:TC_init_expand}, we have the decomposition
\begin{align}
 & \left\|(\bU_{0},\bV_{0},\bW_{0})\bcdot\bcS_{0}-\bcX_{\star}\right\|_{\fro}^{2} \nonumber \\
 &\qquad \le \left\|(\bP_{U},\bP_{V},\bP_{W})\bcdot(\bcY-\bcX_{\star})\right\|_{\fro}^{2} + \left\|\bP_{U_\perp}\cM_{1}(\bcX_{\star})\right\|_{\fro}^{2} + \left\|\bP_{V_\perp}\cM_{2}(\bcX_{\star})\right\|_{\fro}^{2} + \left\|\bP_{W_\perp}\cM_{3}(\bcX_{\star})\right\|_{\fro}^{2}.\label{eq:TR_init_expand}
\end{align}
Below we bound the terms on the right hand side of \eqref{eq:TR_init_expand} in order. 

\paragraph{Bounding $\left\|(\bP_{U},\bP_{V},\bP_{W})\bcdot(\bcY-\bcX_{\star})\right\|_{\fro}$.}
For the first term in the upper bound~\eqref{eq:TR_init_expand}, apply the variational representation of the Frobenius norm to write
\begin{align*}
\left\|(\bP_{U},\bP_{V},\bP_{W})\bcdot(\bcY-\bcX_{\star})\right\|_{\fro} = \left\langle (\bP_{U},\bP_{V},\bP_{W})\bcdot(\bcY-\bcX_{\star}), \widetilde{\bcT}\right\rangle =  \left\langle (\cA^{*}\cA-\cI)\bcX_{\star}, (\bP_{U},\bP_{V},\bP_{W})\bcdot\widetilde{\bcT}\right\rangle,
\end{align*}
for some $\widetilde{\bcT}\in\RR^{n_1\times n_3\times n_3}$ obeying $\big\|\widetilde{\bcT} \big\|_{\fro}=1$, where the last equality follows from \eqref{eq:tensor_properties_d}. Under the Gaussian design, we know from \cite[Theorem~2]{rauhut2017low} that $\cA(\cdot)$ obeys $2\br$-TRIP with a constant $\delta_{2\br} \asymp \sqrt{\frac{nr + r^3}{m}}$. Therefore we can apply Lemma~\ref{lemma:2r-TRIP} to obtain
\begin{align*}
\left\|(\bP_{U},\bP_{V},\bP_{W})\bcdot(\bcY-\bcX_{\star})\right\|_{\fro} &\le \delta_{2\br} \|\bcX_{\star}\|_{\fro} \big\|(\bP_{U},\bP_{V},\bP_{W})\bcdot\widetilde{\bcT}\big\|_{\fro} \le \delta_{2\br} \|\bcX_{\star}\|_{\fro} \\
&\lesssim \sqrt{\frac{nr + r^3}{m}}\|\bcX_{\star}\|_{\fro} \le \sqrt{\frac{nr^2 + r^4}{m}}\kappa\sigma_{\min}(\bcX_{\star}).
\end{align*}

\paragraph{Bounding $\left\|\bP_{U_\perp}\cM_{1}(\bcX_{\star})\right\|_{\fro}$. }

For the second term in \eqref{eq:TR_init_expand}, first bound it by 
\begin{align*}
\left\|\bP_{U_\perp}\cM_{1}(\bcX_{\star})\right\|_{\fro} \le \frac{\sqrt{r_1}}{\sigma_{\min}(\bcX_{\star})} \left\|\bP_{U_\perp}\cM_{1}(\bcX_{\star})\cM_{1}(\bcX_{\star})^{\top}\right\|_{\op},
\end{align*}
where we use the facts that $\bP_{U_\perp}\cM_{1}(\bcX_{\star})$ has rank at most $r_1$ and $\|\bA\bB\|_{\op}\ge\|\bA\|_{\op}\sigma_{\min}(\bB)$. For notation simplicity, we abbreviate 
\begin{align*}
\bG \coloneqq \cM_{1}(\cA^*(\by))\cM_{1}(\cA^*(\by))^{\top} - \frac{\|\by\|_{2}^2}{m}(n_2n_3-r_1)\bI_{n_1}, \quad\mbox{and}\quad \bG_{\star} \coloneqq \cM_{1}(\bcX_{\star})\cM_{1}(\bcX_{\star})^{\top}.
\end{align*} 
We claim for the moment that with overwhelming probability that 
\begin{align}
\|\bG-\bG_{\star}\|_{\op} \lesssim \frac{\sqrt{n_1n_2n_3} + n\log n}{m}\|\bcX_{\star}\|_{\fro}^2 + \sqrt{\frac{n\log n}{m}}\|\bcX_{\star}\|_{\fro}\sigma_{\max}(\bcX_{\star}), \label{eq:perturb_G}
\end{align}
whose proof is deferred to Appendix~\ref{proof:perturb_G}. Under the sample size condition
\begin{align*}
m \gtrsim \epsilon_{0}^{-1}\sqrt{n_1n_2n_3}r^{3/2}\kappa^2 + \epsilon_{0}^{-2}(nr^2\kappa^4\log n + r^{4}\kappa^{2})
\end{align*}
for some small constant $\epsilon_{0}$, we have $\|\bG-\bG_{\star}\|_{\op}\le \epsilon_{0}\sigma_{\min}^{2}(\bcX_{\star})$, which implies that $\bG$ is positive semi-definite. Therefore, the top-$r_1$ eigenvectors of $\bG$ coincide with $\bU_{0}$, the top-$r_1$ left singular vectors of $\cM_{1}(\cA^*(\by))$, which implies $\|\bP_{U_\perp}\bG\|_{\op}=\sigma_{r_1+1}(\bG)$. By the triangle inequality, we obtain
\begin{align*}
\left\|\bP_{U_\perp}\bG_{\star}\right\|_{\op} &\le \left\|\bP_{U_\perp}\left(\bG - \bG_{\star}\right)\right\|_{\op} + \left\|\bP_{U_\perp}\bG\right\|_{\op}  \le \left\|\bG - \bG_{\star}\right\|_{\op} + \sigma_{r_1+1}(\bG) \\
&\le \left\|\bG - \bG_{\star}\right\|_{\op} + \sigma_{r_1+1}(\bG_{\star}) + \left\|\bG - \bG_{\star}\right\|_{\op} = 2\left\|\bG - \bG_{\star}\right\|_{\op},
\end{align*}
where the second line follows from Weyl's inequality and that $\bG_{\star}$ has rank $r_1$. In total, the second term of~\eqref{eq:TR_init_expand} is bounded by
\begin{align*}
\left\|\bP_{U_\perp}\cM_{1}(\bcX_{\star})\right\|_{\fro} \le \frac{2\sqrt{r_1}}{\sigma_{\min}(\bcX_{\star})}\|\bG-\bG_{\star}\|_{\op} \lesssim \left(\frac{(\sqrt{n_1n_2n_3} + n\log n)r^{3/2}}{m} + \sqrt{\frac{nr^2\log n}{m}}\right)\kappa^2\sigma_{\min}(\bcX_{\star}).
\end{align*}

\paragraph{Completing the proof.} The third and fourth terms of \eqref{eq:TR_init_expand} can be bounded similarly. In all, we conclude that
\begin{align*}
\dist(\bF_{0},\bF_{\star}) \le (\sqrt{2}+1)^{3/2}\left\|(\bU_{0},\bV_{0},\bW_{0})\bcdot\bcS_{0}-\bcX_{\star}\right\|_{\fro} \le \epsilon_{0}\sigma_{\min}(\bcX_{\star})
\end{align*}
under the assumed sample size.

\subsubsection{Proof of \eqref{eq:perturb_G}}
\label{proof:perturb_G}
We start with stating a few useful concentration inequalities.
\begin{lemma}\label{lemma:Gaussian_bounds} Suppose that $\bA_i\in\RR^{n_1\times n_2}$ has i.i.d.~$\cN(0,1/m)$ entries, and $\by_i=\langle\bA_i,\bX\rangle$ for a fixed $\bX\in\RR^{n_1\times n_2}$, $i=1,\dots,m$. Further suppose that $\bB\in\RR^{n_1\times n_2}$ has i.i.d.~$\cN(0,\sigma^2)$ entries. Then there exists a universal constant $C > 0$ such that for any $t > 0$, the following concentration inequalities hold:
\begin{enumerate}
\item Gaussian ensemble \cite[Lemma~4]{zhang2020islet}:
\begin{align}
\PP\left(\Big\|\sum_{i=1}^m y_i\bA_i - \bX\Big\|_{\op} \ge C \|\bX\|_{\fro}\sqrt{n_1+n_2}\left(\sqrt{\frac{\log(n_1+n_2) + t}{m}} + \frac{\log(n_1+n_2) + t}{m}\right)\right) \le \exp(-t).\label{eq:Gaussian_ensemble}
\end{align}
\item Chi-square upper tail \cite[Lemma~1]{laurent2000adaptive}:
\begin{align}
\PP\left(\|\by\|_2^2 \ge \|\bX\|_{\fro}^2\frac{m+2\sqrt{mt}+2t}{m}\right) \le \exp(-t).\label{eq:Gaussian_chi}
\end{align}
\item Gaussian covariance \cite[Theorem~5]{cai2020non}:
\begin{align}
\PP\left(\left\|\bB\bB^{\top}-\EE[\bB\bB^{\top}]\right\|_{\op} \ge C\sigma^2\left((\sqrt{n_1} + \sqrt{n_2} + \sqrt{\log(n_1\wedge n_2)} + \sqrt{t})^2 - n_2\right)\right) \le \exp(-t).\label{eq:Gaussian_cov}
\end{align}
\end{enumerate}
\end{lemma}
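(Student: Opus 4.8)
The three inequalities are non-asymptotic concentration bounds for Gaussian objects, and the plan is to reduce each to a standard result after pinning down the relevant mean and distribution. For part~\eqref{eq:Gaussian_ensemble}, a direct moment computation using $\EE[A_{i,jk}A_{i,lp}]=\tfrac1m\delta_{jl}\delta_{kp}$ gives $\EE\big[\sum_{i=1}^m y_i\bA_i\big]=\sum_{i=1}^m\EE[\langle\bA_i,\bX\rangle\bA_i]=\bX$, so that $\sum_{i=1}^m y_i\bA_i-\bX=\sum_{i=1}^m\big(\langle\bA_i,\bX\rangle\bA_i-\tfrac1m\bX\big)$ is a sum of i.i.d.\ mean-zero random matrices. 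For part~\eqref{eq:Gaussian_chi}, since the entries of $\bA_i$ are i.i.d.\ $\cN(0,1/m)$ we have $y_i=\langle\bA_i,\bX\rangle\sim\cN(0,\|\bX\|_{\fro}^2/m)$, independently across $i$. For part~\eqref{eq:Gaussian_cov}, $\EE[\bB\bB^\top]=n_2\sigma^2\bI_{n_1}$.

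For part~\eqref{eq:Gaussian_ensemble}, I would control $\big\|\sum_i y_i\bA_i-\bX\big\|_{\op}$ via an $\varepsilon$-net argument over pairs of unit vectors: for fixed $\bu\in\RR^{n_1}$, $\bv\in\RR^{n_2}$ with $\|\bu\|_2=\|\bv\|_2=1$,
\begin{align*}
\Big\langle\Big(\sum_{i=1}^m y_i\bA_i-\bX\Big)\bv,\,\bu\Big\rangle=\sum_{i=1}^m\langle\bA_i,\bX\rangle\langle\bA_i,\bu\bv^\top\rangle-\langle\bX,\bu\bv^\top\rangle
\end{align*}
is a centered quadratic form in the i.i.d.\ Gaussian entries of $(\bA_1,\dots,\bA_m)$, whose Hanson--Wright tail is sub-exponential with the appropriate variance and scale proxies; a union bound over a $\tfrac14$-net of cardinality $e^{O(n_1+n_2)}$ then upgrades this to the operator norm and produces the stated rate with its two regimes, $\sqrt{(\log(n_1+n_2)+t)/m}$ and $(\log(n_1+n_2)+t)/m$. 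This is exactly \cite[Lemma~4]{zhang2020islet}, which I would invoke directly rather than re-derive.

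Part~\eqref{eq:Gaussian_chi} then follows immediately: with $Z\coloneqq\tfrac{m}{\|\bX\|_{\fro}^2}\|\by\|_2^2=\sum_{i=1}^m\big(\sqrt{m}\,y_i/\|\bX\|_{\fro}\big)^2$ a sum of $m$ squared independent standard normals, $Z\sim\chi^2_m$, and the Laurent--Massart upper-tail bound \cite[Lemma~1]{laurent2000adaptive} gives $\PP(Z\ge m+2\sqrt{mt}+2t)\le e^{-t}$; rescaling by $\|\bX\|_{\fro}^2/m$ yields \eqref{eq:Gaussian_chi}. Part~\eqref{eq:Gaussian_cov} is the two-sided deviation of a Wishart-type matrix from its mean $n_2\sigma^2\bI_{n_1}$: sharp bounds on the extreme singular values of a Gaussian matrix (Gordon's comparison inequality together with Gaussian concentration of the operator norm, refined by the $\sqrt{\log(n_1\wedge n_2)}$ correction) control $\|\bB\|_{\op}^2-n_2\sigma^2$ and $n_2\sigma^2-\sigma_{\min}(\bB)^2$ simultaneously, which is precisely the content of \cite[Theorem~5]{cai2020non}, again invoked directly.

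The only mild subtlety — and the step I would spend the most care on — is part~\eqref{eq:Gaussian_ensemble}: unlike parts~\eqref{eq:Gaussian_chi} and~\eqref{eq:Gaussian_cov}, the summands $y_i\bA_i$ are products of correlated Gaussians and are therefore only sub-exponential, so a plain sub-Gaussian/matrix-Chernoff estimate does not suffice and one genuinely needs the Hanson--Wright (or a truncated matrix Bernstein) route to recover both tail regimes. With the reference \cite{zhang2020islet} available this becomes routine, and no new argument beyond verifying that the hypotheses of the cited statements match ours is required.
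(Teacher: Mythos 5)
Your proposal is correct and matches the paper, which offers no proof of this lemma beyond the citations themselves: each part is a known result invoked directly from \cite{zhang2020islet}, \cite{laurent2000adaptive}, and \cite{cai2020non}, and your verifications of the hypotheses (that $\sum_i y_i\bA_i$ has mean $\bX$, that $y_i\sim\cN(0,\|\bX\|_{\fro}^2/m)$ i.i.d.\ so $\tfrac{m}{\|\bX\|_{\fro}^2}\|\by\|_2^2\sim\chi^2_m$, and that $\EE[\bB\bB^\top]=n_2\sigma^2\bI_{n_1}$) are exactly what is needed to apply the cited statements. The additional Hanson--Wright/net sketch for part~1 is a correct account of how the cited lemma is proved but is not required here.
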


We now proceed to prove \eqref{eq:perturb_G}. In what follows, we take $t \asymp \log n$, and assume $m \gtrsim \log n$ to keep only the dominant terms when invoking the concentration inequalities in Lemma~\ref{lemma:Gaussian_bounds}. 

Let $\cM_1(\bcX_{\star})=\bU_{\star}\bSigma_{\star,1}\bR_{\star}^{\top}$ be its rank-$r_1$ SVD, with $\bR_{\star}\in\RR^{n_2n_3\times r_1}$ containing right singular vectors. Denote $\bR_{\star\perp}$ as the orthogonal complement of $\bR_{\star}$. We have the following decomposition
\begin{align*}
\cM_{1}(\cA^*(\by))\cM_{1}(\cA^*(\by))^{\top} = \cM_{1}(\cA^*(\by))\bR_{\star}\bR_{\star}^{\top}\cM_{1}(\cA^*(\by))^{\top} + \cM_{1}(\cA^*(\by))\bR_{\star\perp}\bR_{\star\perp}^{\top}\cM_{1}(\cA^*(\by))^{\top}.
\end{align*}
By the triangle inequality, we bound
\begin{align}
\|\bG-\bG_{\star}\|_{\op} &\le \left\|\cM_{1}(\cA^*(\by))\bR_{\star}\bR_{\star}^{\top}\cM_{1}(\cA^*(\by))^{\top}-\cM_1(\bcX_{\star})\cM_1(\bcX_{\star})^{\top}\right\|_{\op} \nonumber\\
&\quad + \underbrace{\left\|\cM_{1}(\cA^*(\by))\bR_{\star\perp}\bR_{\star\perp}^{\top}\cM_{1}(\cA^*(\by))^{\top} - \frac{\|\by\|_{2}^2}{m}(n_2n_3-r_1)\bI_{n_1}\right\|_{\op}}_{\eqqcolon \mfk{A}_2} \nonumber\\
&\le \underbrace{\left\|\cM_{1}(\cA^*(\by))\bR_{\star}-\bU_{\star}\bSigma_{\star,1}\right\|_{\op}^2}_{=(\mfk{A}_1)^2} + 2\underbrace{\left\|\cM_{1}(\cA^*(\by))\bR_{\star}-\bU_{\star}\bSigma_{\star,1}\right\|_{\op}}_{\eqqcolon \mfk{A}_1}\sigma_{\max}(\bcX_{\star}) + \mfk{A}_{2}. \label{eq:G_perturb_expand}
\end{align}
Here, the second line follows by applying the triangle inequality to the relation
\begin{align*}
& \cM_{1}(\cA^*(\by))\bR_{\star}\bR_{\star}^{\top}\cM_{1}(\cA^*(\by))^{\top} -\cM_1(\bcX_{\star})\cM_1(\bcX_{\star})^{\top}  = \cM_{1}(\cA^*(\by))\bR_{\star}\bR_{\star}^{\top}\cM_{1}(\cA^*(\by))^{\top} -\bU_{\star}\bSigma_{\star,1}^2 \bU_{\star}^{\top} \\
&\quad =\left(  \cM_{1}(\cA^*(\by))\bR_{\star}  - \bU_{\star}\bSigma_{\star,1} \right)\left(  \cM_{1}(\cA^*(\by))\bR_{\star}  - \bU_{\star}\bSigma_{\star,1} \right)^{\top}  +  \bU_{\star}\bSigma_{\star,1} \left(  \cM_{1}(\cA^*(\by))\bR_{\star}  - \bU_{\star}\bSigma_{\star,1} \right)^{\top}   \\
& \qquad\qquad\qquad + \left(  \cM_{1}(\cA^*(\by))\bR_{\star}  - \bU_{\star}\bSigma_{\star,1} \right) \left( \bU_{\star}\bSigma_{\star,1}\right)^{\top}.
\end{align*}
We proceed to bound the terms in \eqref{eq:G_perturb_expand} separately.
\begin{itemize} 
\item For the first term $\mfk{A}_1$, we can expand
\begin{align*}
\cM_{1}(\cA^*(\by))\bR_{\star}=\sum_{i=1}^{m}y_i\cM_{1}(\bcA_i)\bR_{\star},
\end{align*}
where $\cM_1(\bcA_i)\bR_{\star}\in\RR^{n_1\times r_1}$ has i.i.d.~$\cN(0,1/m)$ entries, and 
\begin{align*}
y_i=\langle\cM_1(\bcA_i)\bR_{\star}, \bU_{\star}\bSigma_{\star,1}\rangle\sim\cN(0,\|\bcX_{\star}\|_{\fro}^2/m).
\end{align*}
Apply inequality~\eqref{eq:Gaussian_ensemble} in Lemma~\ref{lemma:Gaussian_bounds} to obtain with overwhelming probability that
\begin{align}
\mfk{A}_1 = \left\|\sum_{i=1}^{m}y_i\cM_1(\bcA_i)\bR_{\star} - \bU_{\star}\bSigma_{\star,1}\right\|_{\op} \lesssim \sqrt{\frac{n \log n}{m}}\|\bcX_{\star}\|_{\fro}. \label{eq:A1_bound}
\end{align}
\item Regarding the second term $\mfk{A}_2$, one has
\begin{align*}
\cM_{1}(\cA^*(\by))\bR_{\star\perp}=\sum_{i=1}^{m}y_i\cM_{1}(\bcA_i)\bR_{\star\perp}.
\end{align*}
By construction, $y_i$ is independent of $\cM_{1}(\bcA_i)\bR_{\star\perp}$. Therefore, conditioned on $\by$, $\cM_{1}(\cA^*(\by))\bR_{\star\perp}\in\RR^{n_1\times (n_2n_3-r_1)}$ is a random matrix with i.i.d.~$\cN(0,\|\by\|_2^2/m)$ entries. We can apply inequality~\eqref{eq:Gaussian_cov} in Lemma~\ref{lemma:Gaussian_bounds} to obtain with overwhelming probability that 
\begin{align*}
\mfk{A}_2 &\lesssim \frac{\|\by\|_{2}^2}{m} \left((\sqrt{n_1} + \sqrt{n_2n_3-r_1} + c\sqrt{\log n})^2 - (n_2n_3 - r_1)\right) \\
&\lesssim \frac{\|\by\|_{2}^2}{m} \left(\sqrt{n_1n_2n_3} + n\sqrt{\log n}\right).
\end{align*}
Inequality~\eqref{eq:Gaussian_chi} in Lemma~\ref{lemma:Gaussian_bounds} tells that $\|\by\|_2^2 \lesssim \|\bcX_{\star}\|_{\fro}^2$ with overwhelming probability, which implies
\begin{align}
\mfk{A}_2 \lesssim \frac{\sqrt{n_1n_2n_3} + n\sqrt{\log n}}{m}\|\bcX_{\star}\|_{\fro}^2. \label{eq:A2_bound}
\end{align}
\end{itemize}

Finally, plug the bounds \eqref{eq:A1_bound} and \eqref{eq:A2_bound} into \eqref{eq:G_perturb_expand} to conclude
\begin{align*}
\|\bG-\bG_{\star}\|_{\op} \lesssim \frac{\sqrt{n_1n_2n_3} + n\log n}{m}\|\bcX_{\star}\|_{\fro}^2 + \sqrt{\frac{n\log n}{m}}\|\bcX_{\star}\|_{\fro}\sigma_{\max}(\bcX_{\star}).
\end{align*}

\end{document}